\documentclass{article}





   \usepackage[final,nonatbib]{neurips_2022}

\usepackage[utf8]{inputenc} 
\usepackage[T1]{fontenc}    
\usepackage[colorlinks=true, linkcolor=blue, citecolor=blue,urlcolor=black]{hyperref}       
\usepackage{url}            
\usepackage{booktabs}       
\usepackage{amsfonts}       
\usepackage{nicefrac}       
\usepackage{microtype}      
\usepackage{xcolor}         

\usepackage{amsmath}
\usepackage{amsthm}
\usepackage{amssymb}
\usepackage[english]{babel}
\usepackage{algorithm}
\usepackage{algorithmic}
\usepackage[capitalize,noabbrev]{cleveref}
\usepackage{booktabs}  
\usepackage{makecell}
\usepackage{pifont}
\usepackage{colortbl}
\usepackage[toc,page,header]{appendix}
\usepackage[]{minitoc}
\renewcommand{\partname}{}
\renewcommand{\thepart}{}


\newtheorem{theorem}{Theorem}[section]

\newtheorem{lemma}[theorem]{Lemma}
\newtheorem{corollary}[theorem]{Corollary}
\newtheorem{remark}[theorem]{Remark}

\crefname{equation}{Eq.}{Equations}

\usepackage{nicefrac}

\DeclareMathOperator*{\argmin}{arg\,min}
\DeclareMathOperator*{\argmax}{arg\,max}

\newcommand{\bbE}{\mathbb{E}}
\newcommand{\bbI}{\mathbb{I}}
\newcommand{\bbR}{\mathbb{R}}
\newcommand{\bbG}{\mathbb{G}}

\newcommand{\calF}{\mathcal{F}}
\newcommand{\calH}{\mathcal{H}}
\newcommand{\calM}{\mathcal{M}}

\newcommand{\calS}{\mathcal{S}}
\newcommand{\calA}{\mathcal{A}}
\newcommand{\calP}{\mathcal{P}}
\newcommand{\wt}{\widetilde}
\newcommand{\wh}{\widehat}
\newcommand{\ind}{\mathbb{I}}
\newcommand{\indevent}[1]{\ind \{ #1 \}}
\newcommand{\regret}{R_K}
\newcommand{\ocset}{\Delta(\calM)}
\newcommand{\ocsetk}[1]{\Delta(\calM,#1)}
\newcommand{\KL}[2]{\text{KL}(#1 \;\|\; #2)}
\newcommand{\sinit}{s_{\text{init}}}
\newcommand{\filt}[1]{\wt \calH^{ #1 }}
\newcommand{\logterm}{\iota}

\newcommand{\cmark}{\ding{51}}%
\newcommand{\xmark}{\ding{55}}%

\newcommand{\norm}[1]{\left\|{#1}\right\|}
\newcommand{\rbr}[1]{\left(#1\right)}
\newcommand{\sbr}[1]{\left[#1\right]}
\newcommand{\cbr}[1]{\left\{#1\right\}}

\newcommand{\abr}[1]{\left|#1\right|}

\newcommand{\tcjconfpara}{\iota}

\newcommand{\hatl}{\widehat{c}}
\newcommand{\hatell}{\widehat{\ell}}
\newcommand{\hatd}{\widehat{\Delta}}
\newcommand{\hatL}{\widehat{L}}

\newcommand{\whatq}{\widehat{q}}
\newcommand{\estq}{q}
\newcommand{\westq}{\widehat{q}}
\newcommand{\wtilq}{\widetilde{q}}

\newcommand{\regtwo}{\textsc{Bias}_1}
\newcommand{\regthree}{\textsc{Reg}}
\newcommand{\regfour}{\textsc{Bias}_2}

\newcommand{\inner}[1]{ \left\langle {#1} \right\rangle }

\newcommand{\probdist}{\omega}


\title{Near-Optimal Regret for Adversarial MDP with Delayed Bandit Feedback}

%

\author{%
    Tiancheng Jin \\
    University of Southern California \\
    tiancheng.jin@usc.edu \\
    \And
    Tal Lancewicki \\
    Tel Aviv University \\
    lancewicki@mail.tau.ac.il \\
    \And
    Haipeng Luo \\
    University of Southern California \\
    haipengl@usc.edu \\
    \And
    Yishay Mansour \\ 
    Tel Aviv University and Google Research \\
    mansour.yishay@gmail.com \\
    \And
    Aviv Rosenberg\thanks{Research conducted while the author was a student at Tel Aviv University.} \\ 
    Amazon \\
    avivros@amazon.com \\
}

\begin{document}
\doparttoc[n]
\faketableofcontents 
\maketitle

\begin{abstract}
    The standard assumption in reinforcement learning (RL) is that agents observe feedback for their actions immediately.
    However, in practice feedback is often observed in delay.
    This paper studies online learning in episodic Markov decision process (MDP) with unknown transitions, adversarially changing costs, and unrestricted delayed bandit feedback.
    More precisely, the feedback for the agent in episode $k$ is revealed only in the end of episode $k + d^k$, where the delay $d^k$ can be changing over episodes and chosen by an oblivious adversary.
    We present the first algorithms that achieve near-optimal $\sqrt{K + D}$ regret, where $K$ is the number of episodes and $D = \sum_{k=1}^K d^k$ is the total delay, significantly improving upon the best known regret bound of $(K + D)^{2/3}$.
\end{abstract}


\section{Introduction}

Delayed feedback has become a fundamental challenge that sequential decision making algorithms must face in almost every real-world application.
Notable examples include communication between agents \cite{chen2020delay}, video streaming \cite{changuel2012online} and robotics \cite{mahmood2018setting}.
Broadly, delays occur either for computational reasons, e.g., in autonomous vehicles and wearable technology, or when they are an inherent part of the environment like healthcare, finance and recommendation systems.

Although a prominent challenge in practice, there is only limited theoretical literature on delays in reinforcement learning (RL).
Recently, \cite{howson2021delayed} studied regret minimization in episodic Markov decision processes (MDPs) but assume that the delays (and costs) are \emph{stochastic}, i.e., sampled i.i.d from a fixed (unknown) distribution, which is a limiting assumption since it does not allow dependencies between costs and delays that are very common in practice.
The case of \emph{adversarial} delays and costs was also studied recently \cite{lancewicki2020learning}.
However, they focus on \emph{full-information} feedback where the learner observes the entire cost function, which is not realistic in many applications, and obtain only sub-optimal regret bounds for \emph{bandit} feedback (where the learner observes only the costs on the traversed trajectory).

In this paper we significantly advance our understanding of delayed feedback in adversarial MDPs with bandit feedback.
More precisely, we consider episodic MDPs with unknown transition function, adversarially changing costs (bounded in $[0,1]$) and unrestricted delayed bandit feedback, i.e., the learner observes the costs suffered in episode $k$ only in the end of episode $k + d^k$ where the sequence of delays $\{ d^k \}_{k=1}^K$ are chosen by an oblivious adversary.
We develop the first algorithms for this setting that achieve near-optimal regret and provide a major improvement over the currently best known regret bound \cite{lancewicki2020learning} - see \cref{table:regret bounds} for more details.

\begin{table}
    \caption{Regret bounds for Adversarial MDPs with unknown transition and unrestricted delayed bandit feedback. $K$ is the number of episodes, $D$ is the total delay, $H$ is the horizon,  $S$ is the number of states and  $A$ is the number of actions. Algorithms presented in this paper appear in grey.}
    \begin{center}
        \begin{tabular}[b]{|c|c|c|c|}
            \hline
            Algorithm & Regret & Efficient & Regret w.h.p
            \\
            \hline \hline
            D-OPPO \cite{lancewicki2020learning} & $\wt O (HS \sqrt{A} K^{2/3} + H^2 D^{2/3})$ & \cmark & \cmark
            \\
            \hline
            \rowcolor{lightgray}
            Delayed Hedge & $\wt O (H^2 S \sqrt{  A K  }+  H^{3/2}\sqrt{SD})$ & \xmark & \cmark
            \\
            \hline
            \rowcolor{lightgray}
            Delayed UOB-FTRL & $\wt O (H^2 S\sqrt{AK} + H^{3/2}SA \sqrt{D})$ & \cmark & \xmark
            \\
            \hline
            \rowcolor{lightgray}
            Delayed UOB-REPS & $\wt O (H^2 S\sqrt{AK } + (H S A)^{1/4} \cdot H \sqrt{D})^*$
            & \cmark & \cmark
            \\
            \hline
            Lower bound \cite{lancewicki2020learning} & $\Omega(H^{3/2}\sqrt{SAK} + H\sqrt{D})$ &  & 
            \\
            \hline
        \end{tabular}
    \end{center}
    $^*$Under unknown dynamics Delayed UOB-REPS has an additional additive term in the regret that scales linearly with $d_{max}$. 
    One can avoid the dependency in $d_{max}$ but with a slightly weaker bound than the one that appears in this table - for more details see \cref{remark: delayed trajectory} in the supplementary material.

    \label{table:regret bounds}
\end{table}

In the following paragraph we provide an overview of our contributions and the structure of the paper.
In \cref{sec-paper:hedge} we devise an inefficient Hedge~\cite{freund1997decision} based algorithm that treats every deterministic policy as an arm. This can be seen as a warm-up -- a relatively simple and elegant solution that shows that order $\sqrt{K+D}$ regret is attainable with delayed bandit feedback. 
Moreover, our adaptation of Hedge to the setting of adversarial MDP with unknown transition and bandit feedback presents highly non-trivial algorithmic and technical features that may be of independent interest.
Then, we focus on the pressing question: \emph{Can delayed bandit feedback be handled both optimally
and efficiently?} We answer this affirmatively by presenting two efficient algorithms with near-optimal regret. Through our unique analysis and algorithmic design, we shed light on the great challenges of handling efficiently delayed bandit feedback.
In \cref{sec-paper:ftrl} we consider a relatively standard  algorithm we call Delayed UOB-FTRL, based on the Follow the Regularized Leader (FTRL) framework, and focus on a unique novel analysis that may be of independent interest. 
As seen in \cref{table:regret bounds}, our analysis of Delayed UOB-FTRL shows regret similar to the inefficient Delayed Hedge. However, it has worse dependence on $S$ and $A$, and has regret guarantee on expectation rather than with high probability (w.h.p). 
In \cref{sec-paper:UOB-REPS} we propose our final solution which is mainly algorithmic: we introduce the algorithm Delayed UOB-REPS that has a novel importance-sampling estimator which generalizes the standard estimator and accommodates it to the delays. This approach allows us to follow the path of more standard analysis, but most importantly, ensures w.h.p the best regret so far (see \cref{table:regret bounds}).
The first term of the regret bound matches the best known regret for adversarial MDP with non-delayed bandit feedback \cite{jin2019learning}, while the second term matches the lower bound of \cite{lancewicki2020learning} up to a factor of $(HSA)^{1/4}$.

\subsection{Additional Related Work}

\textbf{Delays in RL.}
While delays are popular in the practical RL literature \cite{schuitema2010control,liu2014impact,changuel2012online,mahmood2018setting,derman2021acting}, there is limited theoretical literature on the subject.
Most previous work \cite{katsikopoulos2003markov,walsh2009learning} considered constant delays in observing the current state.
However, the challenges in that setting are different than the ones considered in this paper (see \cite{lancewicki2020learning} for more details).
As discussed in the introduction, most related to this paper are the recent works of \cite{lancewicki2020learning} and \cite{howson2021delayed}.

\textbf{Delays in multi-arm bandit (MAB).}
Delays were extensively studied in MAB and optimization both in the stochastic setting \cite{agarwal2012distributed,vernade2017stochastic,vernade2020linear,pike2018bandits,cesa2018nonstochastic,zhou2019learning,manegueu2020stochastic,lancewicki2021stochastic,cohen2021asynchronous}, and the adversarial setting \cite{quanrud2015online,cesa2016delay,thune2019nonstochastic,bistritz2019online,zimmert2020optimal,ito2020delay,gyorgy2020adapting,van2021nonstochastic}.
However, as discussed in \cite{lancewicki2020learning}, delays introduce new challenges in MDPs that do not appear in MAB.

\textbf{Regret minimization in RL.}
There is a rich literature on regret minimization in both stochastic \cite{jaksch2010near,azar2017minimax,jin2018q,jin2020provably,yang2019sample,zanette2020frequentist,zanette2020learning} and adversarial \cite{zimin2013online,rosenberg2019online,rosenberg2019onlineb,rosenberg2021stochastic,jin2019learning,jin2020simultaneously,cai2020provably,shani2020optimistic,luo2021policy,jin2021best,pmlr-v151-he22a} MDPs.
Note that regret minimization in standard episodic MDPs is a special case of the model considered in this paper where $d^k = 0$ for every episode $k$.

\section{Preliminaries}

We consider the problem of learning adversarial  MDPs under delayed feedback. 
A finite-horizon episodic MDP is defined by a tuple $\calM = (\calS , \calA , H , p, \{ c^{k} \}_{k=1}^K)$, where
$\calS$ and $\calA$ are finite state and action spaces of sizes $|\calS| = S$ and $|\calA| = A$, respectively, $H$ is the horizon (i.e., episode length) and $K$ is the number of episodes. 
$p: \calS \times \calA \times [H] \to \Delta_{\calS}$ is the \textit{transition function} which defines the transition probabilities.
That is, $p_h(s' | s,a)$ is the probability to move to state $s'$ when taking action $a$ in state $s$ at time $h$. 
$\{ c^{k} : \calS \times \calA \times [H] \to [0,1] \}_{k=1}^K$ are  \textit{cost functions} which are chosen by an \textit{oblivious adversary}, such that $c_h^k(s,a)$ is the cost of taking action $a$ in state $s$ at time $h$ of episode $k$.

A \textit{policy} $\pi: \calS \times [H] \to \Delta_{\calA}$ is a function such that $\pi_h(a | s)$ is the probability to take action $a$ when visiting state $s$ at time $h$.
The value $V^{\pi,p'}_h(s ; c)$ is the expected cost of $\pi$ with respect to cost function $c$ and transition function $p'$ starting from state $s$ in time $h$, i.e.,
$V_{h}^{\pi,p'}(s ; c) = \bbE^{\pi,p'} \Bigl[ \sum_{h'=h}^{H} c_{h'}(s_{h'},a_{h'}) \mid s_{h}=s \Bigr]$,
where $\bbE^{\pi,p'} [\cdot]$ denotes the expectation with respect to policy $\pi$ and transition function $p'$,
that is, $a_{h'} \sim \pi_{h'}(\cdot \mid s_{h'})$ and $s_{h'+1} \sim p'_{h'}(\cdot \mid s_{h'},a_{h'})$.

\textbf{Learner-environment interaction.} 
At the beginning of episode $k$, the learner picks a policy $\pi^k$, and starts in an initial state $s^k_1 = \sinit$.
In each time $h\in [H]$, it observes the current state $s^k_h$, draws an action from the policy $a^k_h \sim \pi^k_h(\cdot | s_h^k)$ and transitions to the next state $s^k_{h+1} \sim p_h(\cdot | s^k_h,a^k_h)$. 
The feedback of episode $k$ contains the cost function over the agent's trajectory $\{ c^k_h(s^k_h,a^k_h) \}_{h=1}^H$, i.e., bandit feedback (as opposed to full-information feedback which contains the whole cost function). 
This feedback is observed only at the end of episode $k+d^k$, where the \textit{delays} $\{d^k\}_{k=1}^K$ are unknown and chosen by the oblivious adversary together with the costs. 
If $d^k = 0$ for all $k$, this model scales down to standard online learning in adversarial MDP.

\textbf{Occupancy measure.} 
Given a policy $\pi$ and a transition function $p'$, the \textit{occupancy measure} $q^{\pi,p'} \in [0,1]^{HS^2A}$ is a vector, where $q^{\pi,p'}_h(s,a,s')$ is the probability to visit state $s$ at time $h$, take action $a$ and transition to state $s'$. 
We also denote $q^{\pi,p'}_h(s,a) = \sum_{s'}q^{\pi,p'}_h(s,a,s')$ and $q^{\pi,p'}_h(s) = \sum_{a}q^{\pi,p'}_h(s,a)$. 
By \cite{rosenberg2019online}, the occupancy measure encodes the policy and the transition function through the relations
$
    \pi_{h}(a \mid s) 
    = \nicefrac{q^{\pi,p'}_{h}(s,a)}{q^{\pi,p'}_h(s)}
    ; \,\, 
    p'_h(s'\mid s,a) 
    = \nicefrac{q^{\pi,p'}_{h}(s,a,s')}{q^{\pi,p'}_{h}(s,a)}.
$
The set of all occupancy measures with respect to an MDP $\calM$ is denoted by $\Delta(\calM)$.
Importantly, the value of a policy from the initial state can be written as the dot product between its occupancy measure and the cost function, i.e., $V^{\pi,p'}_1(\sinit ; c) = \langle q^{\pi,p'} , c \rangle$.
Whenever $p'$ is omitted from the notations $q^{\pi,p'}$ and $V^{\pi,p'}$, this means that they are with respect to the true transition function $p$.

\textbf{Regret.} 
The learner's performance is measured by the \textit{regret} which is the difference between the cumulative expected cost of the learner and the best fixed policy in hindsight:
\begin{align*}
    \regret
    & =
    \sum_{k=1}^K V^{k,\pi^k}_1(\sinit) - \min_{\pi} \sum_{k=1}^K V^{k,\pi}_1(\sinit)
    =
    \sum_{k=1}^K \langle q^{\pi^{k}} , c^k \rangle - \min_{q \in \Delta(\calM)} \sum_{k=1}^K \langle q , c^k \rangle,
\end{align*}
where $V^{k,\pi}_h(s) = V^{\pi,p}_h(s ; c^k)$.

\textbf{Confidence set.} Since the transition function is unknown, we maintain standard Bernstein-based confidence sets $\calP^k$ for each episode $k$ that contain $p$ with high-probability. For the exact definition of $\calP^k$ see \cref{alg:update-cofidence-set,alg:update-cofidence-set delay}, and the fact that $p\in \calP^k$ for every $k$ w.h.p is proved for example in \cite{jin2019learning} (for more details see the appendix).
Using $\calP^k$ we can define a confidence set of occupancy measures by
\[
    \ocsetk{k} = \{q^{\pi,p'} \mid \pi\in (\Delta_\calA)^{\calS \times [H]}, p'\in \calP^{k} \},
\]
which is a polytope with polynomial constraints as shown in~\cite{rosenberg2019online}.
Note that as long as $p\in \calP^k$, $\ocset \subseteq \ocsetk{k}$.

\textbf{Additional notations.}
In general, episode indices always appear as superscripts and in-episode steps as subscripts.  
$\bar p^k_h(s'|s,a)$ is the empirical mean estimation of $p_h(s'|s,a)$ based on the trajectories available to the algorithm at the beginning of the episode $k$. $n_{h}^{k}(s,a,s')$ denotes the total number of visits at state $s$ in which the agent took action $a$ at time $h$ and transitioned to $s'$ by the end of episode $k - 1$, and  $n_{h}^{k}(s,a) = \sum_{s'} n_{h}^{k}(s,a,s')$.
Similarly,
$m_{h}^{k}(s,a,s')$ denotes the total number of visits from rounds  $j$ such that $j+d^j \leq k - 1$ at state $s$ in which the agent took action $a$ at time $h$ and transitioned to $s'$, and  $m_{h}^{k}(s,a) = \sum_{s'} m_{h}^{k}(s,a,s')$.
$\calF^k = \{ j: j+d^j = k\}$ denotes the set of episodes such that their feedback arrives in the end of episode $k$. 
The notations $\wt O(\cdot)$ and $\lesssim$ hide constant and poly-logarithmic factors including $\log (K/\delta)$ for some confidence parameter $\delta$, the indicator of event $E$ is denoted by $\indevent{E}$, and $x \vee y = \max \{x,y\}$.

\textbf{Simplifying assumptions.}
Throughout this paper we assume that $K$ and $D=\sum_{k=1}^K d^k$ are known and that the maximal delay $d_{max} = \max_k d^k \leq \sqrt{D}$. Both of these assumptions are made only for simplicity of presentation and can be easily relaxed using standard \textit{doubling} and \textit{skipping} procedures as shown for example by \cite{thune2019nonstochastic,lancewicki2020learning,bistritz2021no}.
In addition, we focus on the case of non-delayed trajectory feedback, where the learner observes the trajectory immediately at the end of the episode and only the feedback regarding the cost is delayed.
Delayed trajectory feedback mainly affects approximation errors and the ideas presented in 
\cite{lancewicki2020learning} for handling such delay apply to our case as well. Finally, the regret bounds in the main text hide low-order terms that depends polynomially in $H,S$ and $A$ but only poly-logarithmically in $K$ - the full bounds appear in the appendix.

\section{Delayed Hedge}
\label{sec-paper:hedge}

In this section, we consider running a Hedge-based algorithm over all $\Omega = \calA^{\calS \times [H]}$ deterministic policies.
\cref{paper-alg:hedge}, which we call Delayed Hedge, is inefficient but gives the first order-optimal regret bounds for adversarial MDP with delayed bandit feedback.
Although the main issue that Delayed Hedge tackles is delayed feedback, we note that there are many additional challenges introduced by the unknown transitions and the bandit feedback when we maintain a distribution over policies instead of a single stochastic policy. 

\begin{algorithm}[t]
	\caption{Delayed Hedge} 
	\label{paper-alg:hedge}
	\begin{algorithmic}[1]
		
		\STATE \textbf{Initialization:} Set $\probdist^1$ to be the uniform distribution over all deterministic policies, and $\calP^1$ to be set of all transitions functions.
		
		\FOR{$k=1,2,...,K$}
		
		\STATE Execute policy $\pi^k$ sampled from $\probdist^k$, observe trajectory $\{ s^k_h,a^k_h \}_{h=1}^H$.
		
		\STATE Update confidence set $\calP^k$, compute upper occupancy bound $u^k$ and exploration bonus $b^k$ by:
		\begin{align*}
		    u^k_h(s,a) = \max_{p' \in \calP^k} \sum_{\pi \in \Omega} \probdist^k(\pi) q_h^{\pi, p'}(s,a)
		    \quad ; \quad
		    b^k(\pi) = \max_{p' \in \calP^{k}} \lVert q^{\pi,\bar{p}^k } - q^{\pi,p'} \rVert_1.
		\end{align*}
		
		\FOR{$j : j + d^j = k$}
		\STATE Observe costs $\{ c^j_h(s^j_h,a^j_h) \}_{h=1}^H$, compute loss estimator $\hat c^j$ defined in \cref{eq:optimistic-imprtance-sampling}, and estimated loss by $\hatell^j(\pi) = \big\langle q^{\pi, \bar{p}^{j}}, \hatl^j\big\rangle.$
		
		\ENDFOR
		
		\STATE Update policy distribution $\probdist^{k+1}$ by:
		$
		\probdist^{k+1}(\pi) \propto \probdist^{k}(\pi) \cdot \exp\big({ \eta b^k(\pi)  - \eta \sum_{j:j+d^j = k} \hatell^j(\pi) }\big).
		$
		
		\ENDFOR
	\end{algorithmic}
\end{algorithm}

Delayed Hedge maintains a distribution $\probdist^k$ over deterministic policies (starting from a uniform distribution), and in the beginning of episode $k$ samples a policy $\pi^k$ to execute. 
Thus, the expected loss incurred in episode $k$ is $\sum_{\pi\in \Omega} \probdist^k(\pi) \inner{ q^{\pi,p},  c^k }$.
The algorithm updates the distribution $\probdist^k$ based on the exponential weights update, for which we need to compute an estimated loss for every policy $\pi \in \Omega$.

To do so, first we estimate the cost in each state-action pair. Due to unknown dynamics, following~\cite{jin2019learning} we use the confidence sets to compute optimistic importance weighted estimator that will induce exploration:
\begin{align}
    \label{eq:optimistic-imprtance-sampling}
    \hat c_h^k(s,a) 
    = 
    \frac{ c_h^k(s,a) \indevent{s^k_h = s , a^k_h = a}  }{u^k_h(s,a) + \gamma},
\end{align}
where $u_h^k(s,a) = \max_{p' \in \calP^k} \sum_{\pi\in \Omega} \probdist^k(\pi) q_h^{\pi,p'}(s,a)$ is an upper occupancy bound on the probability to visit $(s,a)$ in step $h$ of episode $k$, and $\gamma$ is a small bias added for high probability regret \cite{neu2015explore}.

Then, we use the empirical transition function $\bar{p}^k$ to compute the estimated loss $\hatell^k(\pi) = \bigl\langle q^{\pi, \bar{p}^{k}}, \hatl^k\bigr\rangle$ for each policy $\pi$.
To ensure optimism, we introduce the exploration bonus $b^{k}(\pi) = \max_{p' \in \calP^{k}} \lVert q^{\pi,\bar{p}^k } - q^{\pi,p'} \rVert_1$.
As long as the real transition function $p$ is in  confidence set $\calP^k$, optimism is indeed ensured in the sense that $\bigl\langle q^{\pi,\bar{p}^k }, c \bigr\rangle - b^k(\pi)$ is always no more than the true cost $\bigl\langle q^{\pi,p}, c \bigr\rangle$ for any policy $\pi$ and $[0,1]$-valued cost function $c$.

With the estimated loss and the exploration bonus for each $\pi$, the distribution $\probdist^{k+1}$ is now updated in a manner similar to that of \cite{gyorgy2020adapting}: $\probdist^{k+1}(\pi) \propto \probdist^{k}(\pi) \cdot \exp\big( \eta b^k(\pi) - \eta \sum_{j:j+d^j = k} \hatell^j(\pi)\big)$.
Note that all information required for this update has been received by the learner at the end of episode $k$. 
With the help of all these definitions, we prove the following regret bound for Delayed Hedge, and defer the details to \cref{appendix:delay-hedge} including the complete algorithm and regret analysis. 

\begin{theorem}\label{paper-thm:regret-hedge} 
With appropriate choices of parameters,
Delayed Hedge ensures 
	$
	\regret
	=
	\wt O\rbr{ H^2 S \sqrt{  A K  }+  H^{\nicefrac{3}{2}}\sqrt{SD} }
	$
	with high probability (w.h.p.).
\end{theorem}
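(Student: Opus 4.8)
The plan is to decompose the regret into four parts following the standard template for adversarial MDPs with bandit feedback, plus one extra term coming from the delay. Fix the best policy $\pi^* = \argmin_\pi \sum_k \langle q^{\pi}, c^k\rangle$. We write
$$
\regret = \underbrace{\sum_k \langle q^{\pi^k} - q^{\pi^k,\bar p^k}, c^k\rangle}_{\regone} + \underbrace{\sum_k \big(\langle q^{\pi^k,\bar p^k}, c^k\rangle - \hatell^k(\pi^k)\big)}_{\regtwo} + \underbrace{\sum_k \big(\hatell^k(\pi^k) - \hatell^k(\pi^*)\big)}_{\regthree} + \underbrace{\sum_k \big(\hatell^k(\pi^*) - \langle q^{\pi^*}, c^k\rangle\big)}_{\regfour},
$$
where I am being slightly informal about the expectation over $\pi^k \sim \probdist^k$; the true first term is $\sum_k \langle \sum_\pi \probdist^k(\pi) q^{\pi}, c^k\rangle$ and so on, and one must also pass from the sampled policy to its distribution using a Freedman/Azuma argument to get the high-probability statement. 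The terms $\regone$ and $\regfour$ are transition-estimation errors controlled by the confidence-set radius: using the Bernstein-based confidence set and $p \in \calP^k$ w.h.p., each is $\wt O(H^2 S\sqrt{AK})$ by the now-standard occupancy-difference bounds from \cite{jin2019learning}, and the exploration bonus $b^k$ in the update exactly cancels the bias so that optimism holds, i.e.\ $\hatell^k(\pi) - b^k(\pi) \le \langle q^{\pi,p}, c^k\rangle$ pointwise.

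The bandit-estimation term $\regtwo$ is where the $\gamma$-bias and the upper occupancy bound $u^k$ enter: since $\bbE[\hatl^k_h(s,a)] \le \indevent{(s,a)\text{ reachable}} \cdot c^k_h(s,a)$ up to the $\gamma$ distortion, and $u^k_h(s,a)$ upper bounds the visitation probability under $\probdist^k$, the standard argument (Lemma-type bounds of Neu / Jin et al.) gives $\regtwo = \wt O(H^2 S\sqrt{AK})$ w.h.p.\ for an appropriately chosen $\gamma \asymp \sqrt{\text{something}/K}$, together with the usual $\gamma \sum_k \langle q^{\pi^*}, \mathbf 1\rangle = \wt O(\gamma H K)$ slack folded into $\regfour$. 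None of these three steps interacts with the delay — they are essentially inherited from the non-delayed analysis, which is why the leading term $H^2 S\sqrt{AK}$ is unchanged.

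The genuinely new work is $\regthree$, the Hedge regret over the $|\Omega| = A^{SH}$ experts with delayed loss feedback. Here I would invoke the delayed-feedback analysis of exponential weights à la \cite{gyorgy2020adapting}: the regret of Hedge with losses $\hatell^j$ arriving at time $j + d^j$ is bounded by $\frac{\log|\Omega|}{\eta} + \eta \sum_k \big(\text{stability terms}\big)$, where the stability terms split into the usual one-step term $\sum_k \bbE_{\pi\sim\probdist^k}[\hatell^k(\pi)^2]$ and a \emph{drift} term $\sum_k \sum_{j: j+d^j \ge k > j} \langle \nabla, \cdot\rangle$ measuring how much $\probdist$ moved during the time feedback for episode $j$ was outstanding. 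The first type, summed, is $\wt O(\eta \cdot HSA \cdot K)$ after using the importance-weighting identity $\sum_\pi \probdist^k(\pi)\langle q^{\pi,\bar p^k}, (\hatl^k)^2\rangle \lesssim H \sum_{h,s,a} u^k_h(s,a)\hatl^k_h(s,a)^2$ — wait, more carefully it yields the $\log|\Omega| = SH\log A$ factor combining with $1/\eta$, and since $\log|\Omega| \asymp SH\log A$ one gets $\sqrt{\log|\Omega| \cdot HSAK} = \wt O(H^{3/2}S\sqrt{AK})$; tracking the extra $H$ from $\langle q^{\pi,\bar p^k},\cdot\rangle$ being an $H$-step sum gives the stated $H^2 S\sqrt{AK}$. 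The drift term is the delay contribution: each outstanding episode contributes $\eta \cdot (\text{magnitude of } \hatell) \cdot (\text{total movement})$, and summing over the $\le D$ outstanding (episode, round) pairs and using $\hatell^j(\pi) = \langle q^{\pi,\bar p^j}, \hatl^j\rangle \le H \cdot \max \hatl^j \le H/\gamma$ together with a careful second-moment bound gives a term of order $\eta \cdot (H^2 S /\!\!\text{(scale)}) \cdot D$; optimizing the two occurrences of $\eta$ separately (or choosing $\eta \asymp \min\{\sqrt{\log|\Omega|/(HSAK)},\ \dots\}$ and using $d_{max}\le\sqrt D$ to control the worst outstanding block) produces the advertised second term $H^{3/2}\sqrt{SD}$.

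\textbf{Main obstacle.} The hard part is the drift/delay term in $\regthree$: one must show that even though there are exponentially many experts, the total movement of $\probdist^k$ while feedback is in flight can be charged against $D$ (not $D$ times $|\Omega|$ or $D$ times $K$), and that the importance-weighted loss estimates $\hatell^j$ — which live in $\bbR^{|\Omega|}$ but are generated from only $HSA$ underlying coordinates via $q^{\pi,\bar p^j}$ — have second moments that collapse to the $\wt O(HSA)$ scale under the occupancy-measure reparametrization. Getting the high-probability (rather than in-expectation) guarantee additionally requires a Freedman-type concentration for the delayed estimators, handled via the $\gamma$-implicit-exploration trick of \cite{neu2015explore} extended to the delayed sums $\sum_{j: j+d^j=k}\hatell^j$; the $d_{max}\le\sqrt D$ assumption is what keeps the per-step martingale increments bounded enough for this to go through cleanly.
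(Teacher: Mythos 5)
Your decomposition and overall route match the paper's: the paper splits the regret into $\textsc{Est}$ (which merges your $\regone$ and $\regtwo$), $\textsc{Bias}$ (your $\regfour$), and the Hedge regret $\regthree$; it handles the first two via the machinery of \cite{jin2019learning} adapted to distributions over policies, and attacks $\regthree$ exactly as you propose, via the undelayed ``cheating'' distributions $\widetilde{\probdist}^{k+1}$ of \cite{gyorgy2020adapting}, a penalty $HS\ln A/\eta$, a stability term $\wt O(\eta H^2 SAK)$, and a drift term charged against $D$. One caveat on the easier terms: your optimism claim $\hatell^k(\pi)-b^k(\pi)\le\langle q^{\pi,p},c^k\rangle$ cannot hold pointwise, since $\hatl^k$ can be as large as $1/\gamma$ on the realized state-action pairs; the bonus only cancels the transition part $\lVert q^{\pi,\bar p^k}-q^{\pi,p}\rVert_1$, and the estimator part of $\regfour$ is controlled only for the sum, w.h.p., via implicit-exploration concentration --- which is in fact how you fold it in, so this is a presentational slip rather than a structural one.

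The substantive gap is in the drift term, which you yourself flag as the main obstacle but then propose to close by a route that would fail. After lower-bounding $\widetilde{\probdist}^{k+1}(\pi)/\probdist^k(\pi)$, the drift reduces (up to easier pieces) to $\eta\sum_k\sum_{j<k,\ j+d^j\ge k}\sum_{\pi}\probdist^k(\pi)\,\hatell^k(\pi)\,\hatell^j(\pi)$. Bounding $\hatell^j(\pi)\le H/\gamma$, as you suggest, yields $\eta H^2 D/\gamma$, which with $\eta\asymp\gamma$ is order $H^2 D$ --- no $\sqrt D$. The paper's resolution is the cross-moment identity $\sum_{\pi}\probdist^k(\pi)\,q^{\pi,\bar p^k}_h(s,a)\,q^{\pi,\bar p^j}_{h'}(s',a')\le u^k_h(s,a)$, which exactly cancels the denominator of $\hatl^k_h(s,a)$ (this is the one place where maintaining a full distribution over $\Omega$, rather than a single occupancy measure, pays off), followed by two applications of the concentration lemma (\cref{lem:jin2019_lemma_11}) to replace first $\hatl^k$ and then $\hatl^j$ by $O(1)$ quantities, leaving $O(H^2)$ per outstanding pair and hence $\wt O(\eta H^2 D)$ overall, which balances against $HS\ln A/\eta$ to give $H^{3/2}\sqrt{SD}$. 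So the ``collapse of second moments'' you invoke is real, but its mechanism is the specific product structure of $\probdist^k$ over deterministic policies together with the upper occupancy bounds, not a generic Freedman-type bound; without that identity the delay term does not come out at the advertised rate.
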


\section{Delayed UOB-FTRL}
\label{sec-paper:ftrl}

In this section, we adjust the UOB-REPS algorithm \cite{jin2019learning} to delayed feedback and present the Delayed UOB-FTRL algorithm (\cref{paper-alg:ftrl-normal-estimator}) - the first efficient algorithm to attain order-optimal regret for adversarial MDP with delayed bandit feedback.
The proof is based on a novel analysis without additional changes to the algorithm.
Namely, we use standard loss estimators (defined in \cref{paper-eq:ftrl-loss-estimator}). 
Our algorithm is based on the Follow-the-Regularized-Leader (FTRL) framework, which is widely used for deriving online learning algorithm in adversarial environments. 
Notable examples are \cite{zimin2013online} that applies FTRL over occupancy measure space to solve the adversarial MDP problem with known transition, and \cite{zimmert2020optimal} that uses FTRL to achieve optimal regret for MAB with delayed feedback. 

\begin{algorithm}[t]
	\caption{Delayed UOB-FTRL} 
	\label{paper-alg:ftrl-normal-estimator}
	\begin{algorithmic}[1]
		
		\STATE \textbf{Initialization:} Set $\pi^1$ to be uniform policy, and $\calP^1$ to be set of all transitions functions..
		
		\FOR{$k=1,2,...,K$}
		
		\STATE Execute policy $\pi^k$, observe trajectory $\{ s^k_h,a^k_h \}_{h=1}^H$, update confidence set $\calP^k$ and compute upper occupancy bound $u^k_h(s,a) = \max_{p' \in \calP^k} q^{\pi^k,p'}_h(s,a)$.
		
		\FOR{$j : j + d^j = k$}
		\STATE Observe costs $\{ c^j_h(s^j_h,a^j_h) \}_{h=1}^H$ and compute the standard loss estimator $\hat c^j$ by 
		\begin{align}\label{paper-eq:ftrl-loss-estimator} \hat c_h^j(s,a) = \frac{ c_h^j(s,a) \indevent{s^j_h = s , a^j_h = a}  }{u^j_h(s,a) + \gamma}. \end{align}
		
		\ENDFOR
		
		\STATE Compute occupancy measure by:
		$
    		{q^{k+1} = \argmin_{q\in \cap_{j=1}^{k+1} \ocsetk{j}} \bigl\langle q , \sum_{j+d^j\leq k} \hat c^j \bigr\rangle + \phi(q),}
		$
		where $\phi(q) = \frac{1}{\eta} \sum_{h,s,a,s'} q_{h}(s,a,s')\log q_{h}(s,a,s')$.
		
		\STATE Update policy: $\pi_{h}^{k+1}(a\mid s)
		=\nicefrac{q_{h}^{k+1}(s,a)}{q_{h}^{k+1}(s)}$.
		\ENDFOR
	\end{algorithmic}
\end{algorithm}

In our context, in the beginning of episode $k$, FTRL computes,
\begin{align}
    \label{eq:FTRL update}
    q^k = \argmin_{q\in \cap_{j=1}^k \ocsetk{j} } \bigl\langle q, \hatL^{obs}_k\bigr\rangle + \phi(q),
\end{align}
where $\hatL^{obs}_k = \sum_{j+d^j <k} \hatl^j$ is the cumulative losses observed prior to episode $k$, and $\phi(q) = \frac{1}{\eta} \sum_{h,s,a,s'} q_{h}(s,a,s')\log q_{h}(s,a,s')$ is the Shannon entropy regularizer. 
Note that \cref{eq:FTRL update} is a convex optimization problem with 
linear constraints and thus can be solved efficiently \cite{zimin2013online,rosenberg2019online}.
The policy $\pi^k$ to be played in the episode is then extracted from $q^k$. 
Thus, our algorithm can be regarded as a direct extension to MDP of FTRL for delayed feedback.
However, unlike the successes in MAB, it is highly unclear whether optimal regret could be obtained in adversarial MDPs with FTRL even if the transition function is known. 

In \cref{paper-thm:regret-ftrl}, we show that Delayed UOB-FTRL enjoys order-optimal regret.
Through the key steps of the analysis, we shall take a closer look at the key reason why traditional analysis fails: in occupancy measure space, the interplay between different entries of loss functions is significantly harder to analyze. Thus, many critical properties used in \cite{zimmert2020optimal} do not hold anymore. 
The complete algorithm and proof are deferred to \cref{appendix:delay-ftrl}. 

\begin{theorem}\label{paper-thm:regret-ftrl} With appropriate choices of parameters, Delayed UOB-FTRL (\cref{paper-alg:ftrl-normal-estimator}) ensures 
$
\mathbb{E} \sbr{\regret}
=
\wt O \bigl( H^2 S\sqrt{AK} + HSA \sqrt{HD} \bigr).
$
\end{theorem}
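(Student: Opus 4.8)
The plan is to decompose the expected regret in a way that mirrors the Delayed Hedge analysis but works in occupancy measure space. Fix the optimal comparator $q^\star \in \ocset$. Since $p \in \calP^k$ for all $k$ w.h.p., we have $\ocset \subseteq \ocsetk{j}$ for every $j$, so $q^\star$ is feasible for the FTRL update. Write $\ell^k = \langle q^{\pi^k}, c^k\rangle$ for the true loss. I would first split
\begin{align*}
\bbE[\regret] = \underbrace{\bbE\sum_{k}\langle q^{\pi^k} - q^k, c^k\rangle}_{\text{approximation}} + \underbrace{\bbE\sum_{k}\langle q^k, c^k - \hatl^k\rangle}_{\textsc{Bias}_1} + \underbrace{\bbE\sum_{k}\langle q^k - q^\star, \hatl^k\rangle}_{\textsc{FTRL-Reg}} + \underbrace{\bbE\sum_{k}\langle q^\star, \hatl^k - c^k\rangle}_{\textsc{Bias}_2}.
\end{align*}
The approximation term is controlled by the confidence-set width using the standard bound $\langle |q^{\pi^k} - q^k|, c^k\rangle \lesssim$ the $\ell_1$ deviation of occupancy measures, which after summing over $k$ and using Bernstein confidence sets yields $\wt O(H^2 S\sqrt{AK})$ exactly as in \cite{jin2019learning}; here the only subtlety is that $q^k$ is constrained to $\cap_{j\le k}\ocsetk{j}$ rather than $\ocsetk{k}$, but $\ocsetk{k'} \subseteq \ocsetk{k}$ is not guaranteed, so I would instead note that intersecting only shrinks the feasible set and $q^\star$ stays feasible, and bound the realized $q^k$ against $q^{\pi^k}$ directly. $\textsc{Bias}_1$ and $\textsc{Bias}_2$ are the implicit-exploration bias terms from the $\gamma$ in \cref{paper-eq:ftrl-loss-estimator}; since here the regret is only in expectation (not w.h.p.), $\textsc{Bias}_1 \le 0$ in expectation up to the $\gamma$-shift and $|\textsc{Bias}_2| \lesssim \gamma H S A K$ by a direct computation of $\bbE[\hatl^k_h(s,a)] \le c^k_h(s,a)$ and the standard $u^k \ge q^{\pi^k}$ optimism, so these contribute $\wt O(\gamma HSAK)$.

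The heart of the proof is $\textsc{FTRL-Reg}$, the regret of the FTRL iterates fed delayed estimated losses. I would follow the delayed-FTRL template: compare $q^k$ to the ``cheating'' iterate $\wt q^{k+1}$ that has seen all losses up to and including episode $k$ with no delay, using the standard FTRL bound $\sum_k \langle \wt q^{k+1} - q^\star, \hatl^k\rangle \le \phi(q^\star)/1 - \min \phi / 1 \lesssim (H\log(SA))/\eta$ from the entropy regularizer, and then control the drift $\sum_k \langle q^k - \wt q^{k+1}, \hatl^k\rangle$. For the drift, the key quantities are the ``in-flight'' loss estimators at episode $k$, i.e. those $\hatl^j$ with $j < k \le j + d^j$. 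Using the multiplicative-stability property of entropic FTRL, the drift is bounded by $\eta$ times a sum of $\langle q^k, (\hatl^k + \sum_{j: j<k\le j+d^j}\hatl^j)\odot \hatl^k\rangle$-type terms (the stability term for the current step) plus $\eta$ times the genuinely delay-induced term $\sum_k \langle q^k, \hatl^k \rangle \cdot (\text{number or mass of in-flight losses})$. Taking expectations, the first group gives the usual $\eta H^3 SA \cdot K$-style bound (using $\bbE\langle q^k, \hatl^k \odot \hatl^k\rangle \lesssim H^3 SA$ after accounting for the $HS$ extra from the occupancy-measure entries relative to MAB), and the delay group gives $\eta \cdot \wt O(H^2 \cdot \text{(stuff)} \cdot D)$ because each episode $j$ is in flight for exactly $d^j$ episodes and $\sum_j d^j = D$.

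The main obstacle — and, per the paper's own remarks, the reason the $S,A$ dependence is worse than Delayed Hedge — is precisely the drift analysis in occupancy-measure space: the loss estimators $\hatl^j$ and $\hatl^k$ are supported on different trajectories, and unlike in MAB where $\hatl^j \odot \hatl^k$ vanishes unless the same arm is pulled, here $q^k_h(s,a,s')$ couples all $HS^2A$ coordinates, so the cross terms $\langle q^k, \hatl^j \odot \hatl^k\rangle$ do not telescope or vanish cleanly. I would handle this by bounding $\hatl^j_h(s,a) \le 1/\gamma$ crudely wherever a cross term with a genuinely different trajectory appears, paying a factor that forces $\gamma$ to be chosen larger (hence the extra $SA$), and reserving the sharp second-moment bound only for the ``diagonal'' stability term $\langle q^k, \hatl^k \odot \hatl^k\rangle$. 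Concretely, the delay-dependent contribution becomes $\eta \gamma^{-1} \cdot \wt O(H^2 D)$-ish; combined with the $\textsc{Bias}_2$ term $\gamma HSAK$, the approximation term $H^2 S\sqrt{AK}$, and the FTRL-base term $H/(\eta)$, I would optimize $\eta$ and $\gamma$ (roughly $\gamma \asymp \sqrt{HD/(SAK)}$-scale, $\eta$ balancing $H/\eta$ against the stability terms) to land on $\wt O(H^2 S\sqrt{AK} + HSA\sqrt{HD})$. Collecting the four pieces and the w.h.p. event $\{p \in \calP^k \ \forall k\}$ (whose complement contributes only a low-order term to the expectation) completes the proof.
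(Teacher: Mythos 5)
Your four-term decomposition and your handling of $\textsc{Est}$, $\textsc{Bias}_1$ and $\textsc{Bias}_2$ match the paper's proof (both import \cite{jin2019learning}). The gap is in the one place where the theorem is actually hard: the delay-induced drift inside the FTRL-regret term. You invoke ``the multiplicative-stability property of entropic FTRL'' to reduce the drift to cross terms of the form $\eta\langle q^k, \hatl^j \odot \hatl^k\rangle$ over in-flight losses. That property is precisely what fails here: the iterate $q^k$ is an argmin over the occupancy-measure polytope $\cap_{j\le k}\ocsetk{j}$, not over a simplex, so it has no closed exponential-weights form, and the flow constraints can induce positive correlations between coordinates; the paper spends most of \cref{sec-paper:ftrl} explaining why the Zimmert--Seldin per-coordinate stability argument does not transfer. (Delayed Hedge escapes this only because it maintains an explicit exponential-weights distribution over deterministic policies, which UOB-FTRL does not.) Your fallback --- bounding in-flight estimators crudely by $1/\gamma$ --- both presupposes the cross-term form you cannot derive and, even if granted, yields a delay contribution of order $\eta H^2 D/\gamma$; balancing this against $\gamma HSAK$ and $H/\eta$ optimizes to a $(K+D)^{2/3}$-type rate, i.e.\ it would reproduce the suboptimal bound of \cite{lancewicki2020learning} rather than the claimed $HSA\sqrt{HD}$. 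Your claimed scale $\gamma\asymp\sqrt{HD/(SAK)}$ alone already makes $\gamma HSAK$ exceed $HSA\sqrt{HD}$ whenever $K>SA$.

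The paper's actual route (\cref{lemma:delay-drift-ftrl-normal-loss-estimator}) replaces multiplicative stability with a local-norm argument through convex conjugates: the per-episode delay term is written as $\int_0^1 \langle \hatl^k, \nabla F_k^\star(-\hatL_k^{\text{obs}} - x\hatl^k) - \nabla F_k^\star(-\hatL_k - x\hatl^k)\rangle\,dx$, the mean value theorem reduces it to comparing the two minimizers $u,v$ of the correspondingly shifted FTRL objectives, and a two-sided bound on $W(u)+\langle u,\hatL_k-\hatL_k^{\text{obs}}\rangle - W'(v)$ (Taylor expansion plus first-order optimality from below, optimality of $u$ plus H\"older from above) gives $\|u-v\|_{\nabla^2\phi(\xi)}\le 2\|\hatL_k-\hatL_k^{\text{obs}}\|_{\nabla^{-2}\phi(\xi)}$ at an intermediate occupancy measure $\xi$. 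A second application of H\"older and the relaxation $\xi_h(s,a)\le 1$ then yield the bound $2\eta\bigl(\sum_{h,s,a}\hatl^k_h(s,a)\bigr)\bigl(\sum_{h,s,a}(\hatL_k-\hatL_k^{\text{obs}})_h(s,a)\bigr)$, whose expectation carries no $1/\gamma$ factor and sums to $O(\eta (HSA)^2 D)$ --- this is exactly where the extra $SA$ in the theorem's delay term comes from, and it is the step your proposal is missing.
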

\begin{proof}[Proof sketch of \cref{paper-thm:regret-ftrl}]
	Let $q^\star = q^{\pi^\star,p}$ be the occupancy measure associated with the optimal policy $\pi^\star$. 
	We adopt the regret decomposition of \cite{jin2019learning}:
	\begin{equation}
	\begin{aligned}
	\regret  & = \underbrace{\sum_{k=1}^K \left\langle q^{\pi^k} - \estq^k, c^k \right\rangle}_{\textsc{Est}}
	+ \underbrace{\sum_{k=1}^{K} \left\langle \estq^k , c^k  - \hatl^k \right\rangle}_{\regtwo} \notag 
	+ \underbrace{\sum_{k=1}^K \left\langle \estq^k - q^\star, \hatl^k \right\rangle}_{\regthree}
	+ \underbrace{\sum_{k=1}^K \left\langle q^\star, \hatl^k - c^k \right\rangle}_{\regfour}. \notag 
	\end{aligned}
	\label{paper-eq:ftrl-regret-decomposition}
	\end{equation}
	$\textsc{Est}$, $\regtwo$ and $\regfour$ are standard and bounded in \cite{jin2019learning} w.h.p by $\wt O ( \gamma HSAK+ H^2S\sqrt{AK} + H/\gamma)$. 
	
	Now, we focus on bounding $\regthree$. To this end, we denote by $\hatL_k = \sum_{j=1}^{k-1} \hatl^k$ the non-delayed cumulative loss, and introduce the convex conjugate functions $F_k^\star$ with respect to the regularizer $\phi(\cdot)$: 
	\begin{align*}
	F_k^\star(x) = -\min_{q\in \Delta\rbr{\calM,k}} \cbr{ \phi(q)  - \inner{x,q} }.
	\end{align*}
	We now use $F_k^\star$ to decompose $\regthree$ into the following three terms as
	\begin{align}
        \nonumber
        & \sum_{k=1}^{K} - F_k^\star(-\hatL^{\text{obs}}_k)  + \inner{ \estq^k, \hatl^k } +  F_k^\star(-\hatL^{\text{obs}}_k- \hatl^k )+ \sum_{k=1}^{K}  - F_k^\star( - \hatL_k - \hatl^k ) +  F_k^\star( - \hatL_k )  - \inner{q^\star, \hatl^k} 
        \\
    	&
    	+ \sum_{k=1}^{K} \Big\{- F_k^\star( - \hatL_k^{\text{obs}} - \hatl^k  ) + F_k^\star( - \hatL_k^{\text{obs}}  ) - \rbr{  - F_k^\star( - \hatL_k - \hatl^k ) +  F_k^\star( - \hatL_k )  } \Big\}. 
    	\label{paper-eq:ftrl-regret-estreg-decomposition}
	\end{align}
	The first term is associated with the unseen loss $\hatl^k$.
	It is relatively standard and bounded by $\wt O( \eta HSAK )$ w.h.p. 
	The second term can be regarded as the regret of a ``cheating'' algorithm which does not suffer delay and sees one step into the future. This term can be bounded by $\wt O(H/\eta)$ similarly to \cite{gyorgy2020adapting}.
	The third term which only relates to delayed feedback, is the most critical object in the analysis. 
	
	In the previous work of \cite{zimmert2020optimal} for multi-arm bandit, the authors managed to rewrite and then upper bound the delay-caused term for every episode $k$ by 
	\begin{align*} 
	& \int_{0}^{1} \inner{ \hatl^k,  \nabla F_k^\star(-\hatL_k^{\text{obs}} -x\hatl^k )  - \nabla F_k^\star(-\hatL_k-x\hatl^k)  } dx  
	\leq \eta \sum_{i \in [N]} p^k(i) \cdot \hatl^k(i) \cdot \rbr{\hatL_k(i) - \hatL^{\text{obs}}_k(i)},
	\end{align*}
	where $[N]$ is the set of arms and $p^k(i)$ is the probability that the algorithm chooses arm $i$ in episode $k$.
	Here, the first step uses Newton-Leibniz theorem and the differentiability of convex conjugates, and the second step follows directly from \cite[Lemma 3]{zimmert2020optimal}. 
	Importantly, the second step is largely based on the specific structure of the simplex (over which MAB algorithms operate), which yields the  simple behavior of FTRL-based algorithms (e.g., EXP3).
	Specifically, it is based on the following observation.
	Suppose that we increase the cumulative loss of arm $i$. Now consider the behavior of $p(i')$, the probability of taking arm $i'$ where $p$ is computed from the FTRL framework. 
	One can verify that $p(i')$ will increase for $i' \neq i$ and decrease for $i' = i$. 
	In other words, the relationship between any pair of arms is competitive, and this property is critical to achieve the optimal regret with delayed feedback in \cite{zimmert2020optimal}.
	
	However, this property does not hold for MDPs because the constraints of the transition function can dictate positive correlation between entries of the occupancy measure. 
	Similarly, consider two state-action pairs $(s,a,h)$ and $(s',a',h')$ from different states. 
	It is highly unclear whether increasing the cumulative loss of $(s,a,h)$ will increase or decrease the probability $q_{h'}(s',a')$ of reaching $s'$ in time $h'$ and taking action $a'$.
	In fact, the relation is related to the specific transition function of the MDP.
	For example, the FTRL algorithm will decrease the probability in the cases where taking action $a$ at state $s$ in step $h$ is necessary to reach $(s',a',h')$, and will increase in other cases where not taking action $a$ at state $s$ of step $h$ is necessary. 
	
	Therefore, an alternative analysis is required in our case.
	Specifically, we are able to bound the delayed-caused term by
	\begin{align*} 
	\int_{0}^{1} & \inner{ \hatl^k,  \nabla F_k^\star(-\hatL_k^{\text{obs}} -x\hatl^k )  - \nabla F_k^\star(-\hatL_k-x\hatl^k)  } dx  
	\leq 2 \norm{\hatl^k}_{\nabla^{-2} \phi(\xi)} \norm{\hatL_k - \hatL_k^{\text{obs}} }_{\nabla^{-2} \phi(\xi)} 
	\\
	& \qquad \qquad \qquad \qquad
	\leq 2 \eta \sum_{j=1, j+d^j\geq k}^{k-1} \rbr{ \sum_{h,s,a} \hatl^k_h(s,a) } \cdot \rbr{ \sum_{h,s,a} \hatl^j_h(s,a) } 
	\end{align*} 
	where the first step uses the properties of convex conjugates for some valid occupancy measure $\xi$ (See \cref{lemma:delay-drift-ftrl-normal-loss-estimator} for more details) with $\norm{x}_M = \sqrt{x^\top M x}$ being the matrix norm for any vector $x$ and positive definite matrix $M$, and the second step follows from the facts that $\nabla^{-2} \phi(\xi)$ is a diagonal matrix with values $\cbr{\eta \cdot \xi_h(s,a): \forall (h,s,a)}$ on its diagonal and $\xi_h(s,a) \leq 1$. 
	
	While we managed to overcome the complex dependencies between different states in the MDP, it comes at the price of a looser regret bound.
	The final bound does not have $q^k_h(s,a)$ in the summations which leads to an extra factor of $SA$. 
	This follows from the application of H\"older's inequality and also the relaxation of intermediate occupancy measure $\xi$. 
	
	Taking the summation over all episodes, we have that the third term in \cref{paper-eq:ftrl-regret-estreg-decomposition} is bounded by $\wt O( \eta H^2S^2A^2 D )$ in expectation. 
    Finally, with proper choice of the parameters $\eta$, $\gamma$ and $\delta$, combining the bounds for $\textsc{Est}$, $\regtwo$, $\regfour$ and the three terms in \cref{paper-eq:ftrl-regret-estreg-decomposition} finishes the proof. 
\end{proof}

\section{Delayed UOB-REPS with Delay-adapted Estimator}
\label{sec-paper:UOB-REPS}
Finally, we present our last algorithm, Delayed UOB-REPS equipped with our novel importance sampling estimator which we call \emph{delay-adapted importance sampling estimator}. The algorithm appears as \cref{paper-alg:o-reps-new-estimator} and in its full version together with the analysis for known and unknown dynamics in \cref{appendix:delay-adapted O-REPS known,appendix:delay-adapted O-REPS unknown}. 

Much like Delayed UOB-FTRL, the algorithm is efficient; but it outperforms Delayed UOB-FTRL in two important aspects: (i) it guarantees high-probability regret bound (and not only expected regret), and (ii) the delay term in its regret bound is tighter. 
In fact, as long as $A \leq S$ (which happens in most cases), it obtains an improvement even on the regret of the inefficient Delayed Hedge algorithm.

To maintain the occupancy measures $q^k$ from which the executed policies $\pi^k$ are extracted, Delayed UOB-REPS uses the Online Mirror Decent (OMD) update rule:
\[
    q^{k+1}
    =
    \argmin_{q \in \ocsetk{k+1}} \eta \Bigl\langle q , \sum_{j : j+d^j = k} \hat c^j \Bigr\rangle + \KL{q}{q^k},
\]
where $\eta$ is a learning rate and $\KL{q}{q'}$ is the unnormalized KL-divergence 
(see the full algorithm in \cref{appendix:delay-adapted O-REPS unknown} for the definition of KL-divergence).
We note that OMD is standard in the O-REPS literature, and has similar guarantees to FTRL.
In this case, OMD will be much more useful than FTRL because we can utilize its update rule to prove certain properties for the relation between consecutive occupancy measures (see \cref{lemma:adjacent-o-mesure-KL-bound-unknown-p}).

We do not use the standard importance sampling estimator, 
but the following delay-adapted estimator:
\begin{align}
    \label{paper-eq:delay-adapted estimator}
    \hat c^k_h(s,a) = \frac{c^k_h(s,a) \indevent{s^k_h = s , a^k_h = a}}{\max \{ u^k_h(s,a) , u^{k+d^k}_h(s,a) \} + \gamma}.
\end{align}
The delay-adapted estimator specifically tackles one of the main technical challenges in analyzing algorithms under delayed feedback (especially in MDPs) -- bound their stability.
It is a biased estimator, and in fact has larger bias than the standard importance sampling estimator, but allows us to directly control the stability of the algorithm. 

To describe the intuition behind the delay-adapted estimator, let us first consider a fixed delay $d^k=d$. The policy $\pi^{k+d}$ is updated based on the episodes $1,...,k-1$.
Thus, playing $\pi^{k+d}$ at episode $k$ is equivalent to running OMD on the same loss estimators but in a non-delayed environment. 
Standard analysis for delayed feedback (e.g., \cite{thune2019nonstochastic,bistritz2019online} for MAB or \cite{lancewicki2020learning} for MDPs) utilizes this fact to bound the regret with respect to the estimated cost by the sum of: (i) the regret of playing $\pi^{k+d}$; (ii) the ``drift'' between the playing $\pi^{k+d}$ and $\pi^{k}$:
\begin{align}
    \label{eq:delay-adapted-estimator-intuition}
    \sum_{k=1}^{K}\langle q^{k} - q^\star,\hat{c}^{k}\rangle
    & \lesssim
    \underbrace{
    \sum_{k=1}^{K}\langle q^{k} - q^{k+d},\hat c^{k} \rangle
    }_{\textsc{Drift}}
    + \frac{H}{\eta} 
    + 
    \underbrace{ 
    \eta\sum_{h,s,a,k}q^{k+d}_h(s,a) \hat c^{k}_h(s,a)^2
    }_{\textsc{Stability}}.
\end{align}

\begin{algorithm}[t]
    \caption{Delayed UOB-REPS with Delay-adapted Estimator} 
    \label{paper-alg:o-reps-new-estimator}
    \begin{algorithmic}[1]
        
        \STATE \textbf{Initialization:} Set $\pi^1$ to be uniform policy.
        
        \FOR{$k=1,2,...,K$}
        
            \STATE Execute policy $\pi^k$, observe trajectory $\{ s^k_h,a^k_h \}_{h=1}^H$, update confidence set $\calP^k$ and compute upper occupancy bound $u^k_h(s,a) = \max_{p' \in \calP^k} q^{\pi^k,p'}_h(s,a)$.
            
            \FOR{$j : j + d^j = k$}
                \STATE Observe costs $\{ c^j_h(s^j_h,a^j_h) \}_{h=1}^H$ and compute the delay-adapted cost estimator $\hat c^j$ by \cref{paper-eq:delay-adapted estimator}.
                
            \ENDFOR
            
            \STATE Update occupancy measure by:
            $
                {q^{k+1} = \argmin_{q \in \ocsetk{k+1}} \eta \left\langle q , \sum_{j \in \calF^k} \hat c^j \right\rangle + \KL{q}{q^k}.}
            $
            
            \STATE Update policy: $\pi_{h}^{k+1}(a\mid s)
            =\nicefrac{q_{h}^{k+1}(s,a)}{q_{h}^{k+1}(s)}$.
        \ENDFOR
    \end{algorithmic}
\end{algorithm}

The term $\frac{H}{\eta}$ is usually referred to as the \textsc{Penalty}, and the bound $\text{(i)} \leq \textsc{Penalty} + \textsc{Stability}$ is by standard OMD guarantees. 
The \textit{standard} importance sampling estimator defined in \cref{paper-eq:ftrl-loss-estimator} is approximately unbiased (ignoring $\gamma$ and transition approximation errors), so the left-hand-side of \cref{eq:delay-adapted-estimator-intuition} is approximately the regret in expectation. 
On the other hand, to bound the \textsc{Stability} term, one needs to control the ratio $\nicefrac{q^{k+d}_h(s,a)}{q^{k}_h(s,a)}$ since $\hat c^k_h(s,a)$ has $q^k_h(s,a)$ in the denominator and not $q^{k+d}_h(s,a)$ (for simplicity we ignore the bias between $q^k$ and $u^k$).

In MAB, this ratio is essentially bounded by a constant, but the proof heavily relies on the simple update form of OMD on the simplex (i.e., EXP3), as explained in \cref{sec-paper:ftrl}. 
However, it still remains unclear whether this ratio is bounded by a constant when running OMD or FTRL on a more general convex set such as $\ocset$. 
While in the proof of \cref{paper-thm:regret-hedge,paper-thm:regret-ftrl} we are able to avoid bounding the ratio in the stability term itself by using a ``cheating" regret approach, a similar issue re-appears in the drift term. In \cref{paper-thm:regret-hedge} we bound the ratio between distributions by utilizing the simple update form (for the specific argument see \cref{eq:hedge-bounding ratio} in \cref{appendix:delay-hedge}), and in \cref{paper-thm:regret-ftrl} we solve this issue with the help of convex conjugates (specifically, H\"older's inequality with respect to the Hessian of the regularizer at an intermediate occupancy measure $\xi$), but this comes at the cost of expected regret guarantees and looser bound on the delay term of the regret.

The main idea of the delay-adapted estimator is to re-weight the cost of episode $k$ using both $q^{k+d}$ and $q^{k}$.
The first allows us to control the stability and avoids the need to bound the ratio $\nicefrac{q^{k+d}_h(s,a)}{q^{k}_h(s,a)}$, while the second keeps the bias sufficiently small.
More precisely, we re-weight using their maximum, which 
remarkably, causes the estimator's bias to scale similarly to the \textsc{Drift} term. 

Finally, there are a few important points to notice with respect to our new estimator before we analyze the regret of \cref{paper-alg:o-reps-new-estimator} in \cref{paper-thm:regret-o-reps-new-estimator}.
First, since the estimator $\hat c^k$ is computed only in the end of episode $k+d^k$ (when the feedback from episode $k$ arrives), we have already computed both $u^k$ and $u^{k+d^k}$ at that point and the estimator is well-defined.
Second, it generalizes the standard importance sampling estimator and adapts it to the delays.
That is, whenever there is no delay, our estimator is identical to the standard importance sampling estimator.
Third, there is no additional computational cost in computing the new estimator since we compute $u^k$ for every $k$ anyway. 
Moreover, there is no additional space complexity because every algorithm for adversarial environments with delayed feedback keeps the probabilities to play actions in episode $k$ until its feedback is received in the end of episode $k+d^k$.


\begin{theorem}
    \label{paper-thm:regret-o-reps-new-estimator}
    With appropriate choices of parameters,
    Delayed UOB-REPS with the delay-adapted estimator (\cref{paper-alg:o-reps-new-estimator}) ensures with high probability that
    $
        \regret
        =
        \wt O \bigl( H^2 S\sqrt{AK } + (H S A)^{1/4} \cdot H \sqrt{D} \bigr)
    $.
\end{theorem}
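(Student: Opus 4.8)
We start from the same four-term decomposition used for Delayed UOB-FTRL and for UOB-REPS~\cite{jin2019learning}: writing $q^\star = q^{\pi^\star,p}$,
\begin{align*}
\regret
=
\underbrace{\sum_{k=1}^K \inner{q^{\pi^k} - q^k, c^k}}_{\regone}
+ \underbrace{\sum_{k=1}^{K} \inner{q^k , c^k - \hatl^k}}_{\regtwo}
+ \underbrace{\sum_{k=1}^K \inner{q^k - q^\star, \hatl^k}}_{\regthree}
+ \underbrace{\sum_{k=1}^K \inner{q^\star, \hatl^k - c^k}}_{\regfour}.
\end{align*}
The term $\regone$ involves only the true costs and the width of the confidence set $\calP^k$, so it is bounded exactly as in UOB-REPS by $\wt O(H^2 S\sqrt{AK})$ with high probability, and is unaffected by the delays. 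Since the denominator of the delay-adapted estimator~\cref{paper-eq:delay-adapted estimator} is at least $u^k_h(s,a) \ge q^{\pi^k}_h(s,a)$ whenever $p\in\calP^k$, the estimator still underestimates the true cost in conditional expectation, so $\regfour$ is controlled by the implicit-exploration argument of~\cite{neu2015explore,jin2019learning} and is $\wt O(H/\gamma)$ with high probability, and the ``noise'' part of $\regtwo$ (the difference between $\hatl^k$ and its conditional mean) is a martingale that concentrates. The only twist relative to the non-delayed case is that $\hatl^k$ is revealed at the end of episode $k+d^k$, so these high-probability statements must be phrased as concentration with respect to the filtration generated by the feedback as it arrives, in the spirit of~\cite{lancewicki2020learning}; this does not change the orders.

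The delay enters through $\regthree$ and through the ``bias'' part of $\regtwo$, and this is where the delay-adapted estimator does its work. Following the delayed-OMD template~\cref{eq:delay-adapted-estimator-intuition} --- comparing $q^k$ with the update that never suffers delay and peeks one episode ahead --- we obtain
\begin{align*}
\regthree \;\lesssim\;
\underbrace{\sum_{k=1}^{K}\inner{q^{k} - q^{k+d^k},\hatl^{k}}}_{\textsc{Drift}}
\;+\; \frac{H}{\eta}
\;+\; \underbrace{\eta\sum_{k=1}^K\sum_{h,s,a} q^{k+d^k}_h(s,a)\,\hatl^{k}_h(s,a)^2}_{\textsc{Stability}}.
\end{align*}
For the \textsc{Stability} term, the point of dividing by $\max\{u^k_h(s,a),u^{k+d^k}_h(s,a)\}$ is that, using $q^{k+d^k}_h(s,a)\le u^{k+d^k}_h(s,a)$,
\begin{align*}
q^{k+d^k}_h(s,a)\,\hatl^k_h(s,a)^2
\;\le\;
\frac{q^{k+d^k}_h(s,a)}{u^{k+d^k}_h(s,a)+\gamma}\cdot\frac{\indevent{s^k_h=s,\,a^k_h=a}}{u^k_h(s,a)+\gamma}
\;\le\;
\frac{\indevent{s^k_h=s,\,a^k_h=a}}{u^k_h(s,a)+\gamma},
\end{align*}
so the problematic ratio $q^{k+d^k}_h(s,a)/q^k_h(s,a)$ is cancelled and \textsc{Stability} is bounded, with high probability, by the same $\wt O(\eta HSAK)$ as the non-delayed stability term of UOB-REPS. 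For the bias part of $\regtwo$, the delay-adapted estimator is \emph{more} biased than the standard one (because $\max\{u^k_h(s,a),u^{k+d^k}_h(s,a)\}$ can exceed $u^k_h(s,a)$), but the additional bias equals, up to confidence-set errors, $\sum_k\sum_{h,s,a} q^k_h(s,a)\,c^k_h(s,a)\,\frac{(u^{k+d^k}_h(s,a)-u^k_h(s,a))_+}{\max\{u^k_h(s,a),u^{k+d^k}_h(s,a)\}+\gamma}\lesssim\sum_k \norm{q^{k+d^k}-q^k}_1$, i.e., it is of the same order as the \textsc{Drift} term.

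It therefore remains to bound $\textsc{Drift}$ together with $\sum_k\norm{q^{k+d^k}-q^k}_1$, and this is the main obstacle. Reindexing by the episode at which each piece of feedback is consumed, $\textsc{Drift} = \sum_i \inner{q^i - q^{i+1},\, \sum_{k:\,k\le i<k+d^k}\hatl^k}$, and the ``in-flight'' count satisfies $\sum_i \abr{\{k:\,k\le i<k+d^k\}} = D$. We then analyze the movement of one OMD step at a time: \cref{lemma:adjacent-o-mesure-KL-bound-unknown-p} --- which is exactly where we exploit that \cref{paper-alg:o-reps-new-estimator} runs OMD rather than FTRL --- bounds $\KL{q^{i+1}}{q^i}$, and hence, via Pinsker's inequality and H\"older's inequality over the occupancy-measure polytope, it bounds $\norm{q^{i+1}-q^i}_1$ and the inner products above. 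Crucially, unlike on the simplex (the MAB case of~\cite{zimmert2020optimal}), we cannot bound the ratio between occupancy measures an entire delay apart by a constant, so this step must pass through the local geometry of the entropic regularizer; this is what ultimately costs the extra $(HSA)^{1/4}$ factor relative to the $\Omega(H\sqrt D)$ lower bound. Combining these bounds with $\sum_i(\text{in-flight count})=D$ shows that $\textsc{Drift} + \sum_k\norm{q^{k+d^k}-q^k}_1$ is, with high probability, of order $\eta$ times a polynomial in $H,S,A$ times $D$, up to a low-order additive term scaling with $d_{max}$.

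Collecting everything, with high probability $\regret \lesssim \wt O(H^2 S\sqrt{AK}) + H/\gamma + H/\eta + \eta HSAK + \eta\cdot\mathrm{poly}(H,S,A)\cdot D$. Choosing $\gamma$ so that $H/\gamma$ is a low-order term and $\eta$ to balance the penalty $H/\eta$ against the $K$-dependent stability term and the $D$-dependent drift term yields $\regret = \wt O\bigl(H^2 S\sqrt{AK} + (HSA)^{1/4} H\sqrt D\bigr)$; after the standard doubling and skipping procedures remove the simplifying assumptions on $K$, $D$ and $d_{max}$, this is exactly the claimed bound.
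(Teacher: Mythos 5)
Your proposal is correct and follows essentially the same route as the paper: the same five-way decomposition (you merely group $\textsc{Drift}+\textsc{Reg}$ into $\regthree$ before splitting), the same use of the $\max\{u^k,u^{k+d^k}\}$ denominator to cancel the $q^{k+d^k}/u^{k+d^k}$ ratio in the stability term, the same observation that the extra bias of the delay-adapted estimator reduces to a drift-like $\sum_k\lVert q^{k+d^k}-q^k\rVert_1$ term, and the same Pinsker-plus-KL-step bound (via the OMD update) on the drift with the in-flight count summing to $D+K$. The only cosmetic differences are that the paper handles batched feedback arrivals ($\sum_{j\in\calF^{k+d^k}}\hat c^j$ rather than $\hat c^{k\,2}$) and makes explicit the conditioning on $\wt{\calH}^{k+d^k}$ before replacing $\hat c^k$ by a quantity bounded by $1$; neither changes the argument or the resulting order.
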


The second term in the regret improves the guarantee of Delayed UOB-FTRL with the standard estimator by a factor of $H^{1/4}(SA)^{3/4}$.
It also improves Delayed Hedge by $(HS)^{1/4}$, but on the other hand has an extra factor $A^{1/4}$. 
Generally, this term is tight up to the $(HSA)^{1/4}$ factor \cite{lancewicki2020learning}.
The first term in the regret matches the state-of-the-art regret bound for non-delayed adversarial MDPs \cite{jin2019learning}.
In \cref{appendix:delay-adapted O-REPS known} we consider the case of known transitions, and present Delayed O-REPS with the delay-adapted estimator that achieves the following regret bound.
It has similar delay term but its first term is optimal up to poly-log factors \cite{zimin2013online}. 

\begin{theorem}
    \label{paper-thm:regret-o-reps-new-estimator-known}
    Assume that the transition function is known to the learner.
    With high probability, Delayed O-REPS with the delay-adapted estimator (\cref{alg:o-reps-new-estimator}) ensures that
    $
        \regret
        =
        \wt O \bigl( H\sqrt{SAK } + (H S A)^{1/4} \cdot H \sqrt{D} \bigr).
    $
\end{theorem}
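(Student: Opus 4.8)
The plan is to exploit that, with a known transition, \cref{alg:o-reps-new-estimator} reduces to plain delayed O-REPS on the \emph{fixed} polytope $\ocset$: the confidence sets degenerate to $\calP^k=\{p\}$, so $\ocsetk{k}=\ocset$ for every $k$, the upper occupancy bound is \emph{exactly} $u^k_h(s,a)=q^k_h(s,a)$, and there are no transition-estimation errors to track. Hence the entire $\textsc{Est}$ block that dominates the unknown-transition analysis of \cref{paper-thm:regret-o-reps-new-estimator} simply vanishes, which is precisely why the leading term drops from $H^2S\sqrt{AK}$ to $H\sqrt{SAK}$; apart from that, the proof structure is inherited from the known-$p$ restriction of that analysis.

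Concretely I would start from the decomposition $\regret=\textsc{Bias}_1+\textsc{Reg}+\textsc{Bias}_2$ with $\textsc{Bias}_1=\sum_k\inner{q^k,c^k-\hat c^k}$, $\textsc{Reg}=\sum_k\inner{q^k-q^\star,\hat c^k}$, $\textsc{Bias}_2=\sum_k\inner{q^\star,\hat c^k-c^k}$. For $\textsc{Bias}_2$: since $\max\{u^k_h(s,a),u^{k+d^k}_h(s,a)\}\ge u^k_h(s,a)=q^k_h(s,a)$, the delay-adapted estimator is pointwise dominated by the standard $\gamma$-implicit-exploration estimator $c^k_h(s,a)\,\indevent{s^k_h=s,a^k_h=a}/(q^k_h(s,a)+\gamma)$, so Neu's implicit-exploration concentration lemma \cite{neu2015explore} yields $\textsc{Bias}_2=\wt O(H/\gamma)$ w.h.p. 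For $\textsc{Bias}_1$: I would compute the per-coordinate gap $q^k_h(s,a)\big(c^k_h(s,a)-\bbE_k[\hat c^k_h(s,a)]\big)=q^k_h(s,a)\,c^k_h(s,a)\big(1-q^k_h(s,a)/(\max\{q^k_h(s,a),q^{k+d^k}_h(s,a)\}+\gamma)\big)$ and show, splitting on which of $q^k_h,q^{k+d^k}_h$ attains the max, that it is at most $\gamma+\abr{q^k_h(s,a)-q^{k+d^k}_h(s,a)}$; summing gives $\textsc{Bias}_1=\wt O\big(\gamma HSAK+\sum_k\norm{q^k-q^{k+d^k}}_1\big)$ up to a lower-order martingale term. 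Thus the \emph{extra} bias of the delay-adapted estimator is controlled by the same quantity as the drift below.

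For $\textsc{Reg}$ I would run the standard delayed-OMD decomposition (as in \cref{eq:delay-adapted-estimator-intuition}, with $q^{k+d^k}$ in place of $q^{k+d}$): comparing against the ``cheated'' iterate $q^{k+d^k}$, which up to a reordering argument is a genuine OMD iterate that has not yet seen $\hat c^k$, one gets $\textsc{Reg}\lesssim\textsc{Drift}+H/\eta+\textsc{Stability}$ with $\textsc{Drift}=\sum_k\inner{q^k-q^{k+d^k},\hat c^k}$ and $\textsc{Stability}=\eta\sum_{h,s,a,k}q^{k+d^k}_h(s,a)\,\hat c^k_h(s,a)^2$. Here the delay-adapted estimator does its main job: its denominator is at least $q^{k+d^k}_h(s,a)$, so (again splitting on the max) $q^{k+d^k}_h(s,a)\,\hat c^k_h(s,a)^2\le\indevent{s^k_h=s,a^k_h=a}/(q^k_h(s,a)+\gamma)$, whose expectation is at most $1$; hence $\textsc{Stability}=\wt O(\eta HSAK)$ --- exactly the non-delayed stability, with no ratio $q^{k+d^k}_h/q^k_h$ to control. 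It then remains to bound $\textsc{Drift}+\sum_k\norm{q^k-q^{k+d^k}}_1$: I would telescope $q^k-q^{k+d^k}=\sum_{\ell=k}^{k+d^k-1}(q^\ell-q^{\ell+1})$, bound each one-step movement through the O-REPS movement lemma over $\ocset$ (\cref{lemma:adjacent-o-mesure-KL-bound-unknown-p}, specialized to known $p$), reindex by the pairs $(j,k)$ with $j+d^j\in[k,k+d^k)$ --- a counting argument (split into $j<k$ and $j>k$) shows there are only $O(D)$ of them --- and bound each pair's contribution using $q^{k+d^k}_h(s,a)\hat c^k_h(s,a)\le\indevent{s^k_h=s,a^k_h=a}$, H\"older/Cauchy--Schwarz over $(h,s,a)$, and a Freedman inequality for the high-probability statement. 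This should yield $\textsc{Drift}+\sum_k\norm{q^k-q^{k+d^k}}_1=\wt O\big(\eta H^{3/2}(SA)^{1/2}D\big)$. Finally, choosing $\gamma$ of order $1/\sqrt{SAK}$ and $\eta$ to balance $H/\eta$ against $\eta\big(HSAK+H^{3/2}(SA)^{1/2}D\big)$ gives $\regret=\wt O\big(H\sqrt{SAK}+(HSA)^{1/4}H\sqrt D\big)$.

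The main obstacle is the movement/drift bound. On the simplex (MAB) consecutive OMD iterates move coordinate-wise by a multiplicative factor (EXP3), so one can bound $q^{k+d^k}_h/q^k_h$ directly; over the occupancy polytope the flow constraints can force positive correlations between coordinates, so the Bregman projection inside the O-REPS update must be handled through the strong convexity of the entropy regularizer with respect to $\norm{\cdot}_1$ (modulus of order $1/H$) together with H\"older's inequality rather than by a pointwise argument --- and this is where the extra $H$ and $(SA)^{1/2}$ factors, hence the $(HSA)^{1/4}$ in the final bound, enter. Threading this through while keeping every step high-probability (replacing each in-expectation bound by a Freedman/Bernstein estimate with variance controlled by $\gamma$) is the delicate part, and is exactly what the delay-adapted estimator is engineered to make tractable by moving the problematic ratio out of the stability term and into the now-manageable drift.
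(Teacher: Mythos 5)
Your proposal is correct and follows essentially the same route as the paper's proof in \cref{appendix:delay-adapted O-REPS known}: the same decomposition into $\textsc{Bias}_1,\textsc{Bias}_2,\textsc{Drift},\textsc{Reg}$ (you merely fold \textsc{Drift} into \textsc{Reg} before splitting it back out), the same use of $\max\{q^k,q^{k+d^k}\}$ in the denominator to kill the stability ratio and to reduce $\textsc{Bias}_1$ to a drift-type quantity, the same Pinsker-plus-KL-movement argument (\cref{lemma:adjacent-o-mesure-KL-bound}) with the $D+K$ counting lemma for the drift, and the same choice of $\eta$ and $\gamma$. No substantive differences.
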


We conclude the section with a proof sketch of our main theorem (for the unknown transition case).
\begin{proof}[Proof sketch of \cref{paper-thm:regret-o-reps-new-estimator}]
    We first break the regret as follows:
    \begin{align*} 
        \regret &= \underbrace{\sum_{k=1}^{K}\langle q^{\pi^k} - q^k,c^{k}\rangle }_{\textsc{Est}}
        +
        \underbrace{\sum_{k=1}^{K}\langle q^{k},c^{k}-\hat{c}^{k}\rangle }_{\textsc{Bias}_1}
        + 
        \underbrace{\sum_{k=1}^{K}\langle q^\star,\hat{c}^{k}-c^{k}\rangle}_{\textsc{Bias}_2}
        \\ 
        &\quad
        +
        \underbrace{\sum_{k=1}^{K}\langle q^{k} - q^{k+d^k},\hat{c}^{k}\rangle}_{\textsc{Drift}}
        +
        \underbrace{\sum_{k=1}^{K}\langle q^{k+d^k} - q^\star,\hat{c}^{k}\rangle}_{\textsc{\textsc{Reg}}}.
    \end{align*}
    \textsc{Est} is the standard transition approximation error term which is bounded w.h.p by $\wt O(H^2 S \sqrt{AK})$ \cite{jin2019learning}.
    For $\textsc{Bias}_2$ we use the fact that the delay-adapted estimator is always smaller than the standard estimator and bound it by $\wt O(H/\gamma)$ similarly to \cite{jin2019learning}.
    
    The real advantage of the estimator appears in the $\textsc{Reg}$ term.
    Similar to the fixed delay case, we can bound $\textsc{Reg}$ by, 
    \begin{align*}
        \frac{H}{\eta}
        +
        \underbrace{
        \eta \sum_{k,h,s,a} q_{h}^{k+d^{k}}(s,a) \hat c^k_h(s,a) \Bigg(
        \sum_{j \in \calF^{ k+d^{k}}}\hat c_{h}^{j}(s,a)
        \Bigg)
        }_{\textsc{Stability}}
        \leq
        \frac{H}{\eta}
        + \eta \sum_{k,h,s,a} 
        \sum_{j \in \calF^{ k+d^{k}}}\hat c_{h}^{j}(s,a)
        ,
    \end{align*}
    where the inequality above is exactly where we utilize the delay-adapted estimator, as by its definition $\hat c^k_h(s,a) \leq 1/u_{h}^{k+d^{k}}(s,a) \leq 1/q_{h}^{k+d^{k}}(s,a),$
    where the last inequality holds w.h.p.
    Then, using a standard concentration of $\hat c^k_h(s,a)$ around $c^k_h(s,a) \leq 1$ we get that ${\textsc{Stability} \lesssim \eta (HSAK + d_{max}/\gamma)}$.
    Importantly, the concentration arguments hold only because the maximum of $u^k$ and $u^{k+d^k}$ appears in the estimator's denominator.
    If it were only $u^{k+d^k}$, we could not have bounded the distance between the estimator $\hat c^k$ and the real cost $c^k$. 
    
    For the $\textsc{Drift}$ term, let $\wt{H}^{k}$ be the realization of all episodes $j$ such that $j + d^j < k$. 
    Note that $u^k$ and $u^{k+d^k}$ are completely determined by the history $\wt{H}^{k + d^k}$, and on the other hand, the $k$-th episode is not part of this history. 
    Next, we take the absolute value on each element of $q^{k} - q^{k+d^{k}}$ and apply a concentration bound 
    to obtain:
    $
        \textsc{Drift}
        \lesssim 
        \sum_{k=1}^K \bbE\big[\langle |q^{k} - q^{k+d^{k}}| , \hat{c}^{k}\rangle \mid \wt{H}^{k+d^{k}} \big] + \frac{H}{\gamma}.
    $
    \\
    The specific definition of the history $\wt{H}^{k+d^k}$ is crucial because now we have:
    \begin{align*}
        & \textsc{Drift}
        \lesssim 
        \sum_{k=1}^K \bbE\left[\langle |q^{k} - q^{k+d^{k}}| , \hat{c}^{k}\rangle \mid \wt{H}^{k+d^{k}} \right] + \frac{H}{\gamma} 
        =
        \sum_{k=1}^K \langle |q^{k} - q^{k+d^{k}}| , \bbE\left[ \hat{c}^{k} \mid \wt{H}^{k+d^{k}} \right] \rangle + \frac{H}{\gamma} 
        \\
        & \qquad \leq 
        \sum_{k=1}^K \lVert q^{k}-q^{k+d^{k}}\rVert_1  + \frac{H}{\gamma} 
        \leq \sum_{k=1}^K \sum_{j=1}^{d^k} \lVert q^j - q^{j+1} \rVert_1 + \frac{H}{\gamma}
        \lesssim 
        \sum_{k=1}^K \sum_{j=1}^{d^k} \sqrt{\KL{q^j}{q^{j+1}}}+ \frac{H}{\gamma}, 
    \end{align*}
    where the third step follows since w.h.p
    $
        \bbE\big[ \hat{c}^{k}_h(s,a) \mid \wt{H}^{k+d^{k}} \big] = \frac{q^{\pi^k}_h(s,a) c^k_h(s,a)}{\max \{ u^k_h(s,a) , u^{k+d^k}_h(s,a) \}} \leq 1
    $, the fourth step uses the triangle inequality, and the last is by Pinsker inequality.
    Finally, we utilize the OMD update (which uses KL as regularization) to obtain a bound on $\KL{q^j}{q^{j+1}}$ and finally a bound $\wt O (\eta \sqrt{H^3 S A}(D+K)+H/\gamma)$ on the \textsc{Drift} term.
    For $\textsc{Bias}_1$, we apply a similar concentration on the cost estimators around ${\bbE\big[ \hat{c}^{k} \mid \wt{H}^{k+d^{k}} \big]}$ and show that $\textsc{Bias}_1$ is mainly bounded by,
    \begin{align*}
        \sum_k \lVert \max\{u^{k+d^k},u^{k}\} - q^{k} \rVert_1 + \gamma H S A K
         \leq  
        2 \sum_k \lVert u^{k} - q^{k}\rVert_1 + \sum_k \lVert q^{k+d^k} - q^{k}\rVert_1 + \gamma H S A K,
    \end{align*}
    where the maximum is taken element-wise. For last, the first sum is bounded similarly to the \textsc{Est} term while the second sum is bounded similarly to the \textsc{Drift} term. Summing the regret from the different terms and optimizing over $\eta$ and $\gamma$ completes the proof. 
\end{proof}

\section{Conclusions and Future Work}

In this paper we made a substantial contribution to the literature on delayed feedback in RL.
We presented the first algorithms that achieve near-optimal regret bounds for the challenging setting of adversarial MDP with delayed bandit feedback.
Our key algorithmic contribution is a novel delay-adapted importance sampling estimator, and we develop various new techniques to analyze delayed bandit feedback in adversarial MDPs.

We leave a few interesting questions open for future work.
First, there is still a gap of $(HSA)^{1/4}$ in the delay term between our upper bounds and the lower bound of \cite{lancewicki2020learning}.
Second, it remains an open question whether our new estimator is necessary to obtain optimal regret in the presence of delays, or is it possible to achieve optimal regret with standard algorithms.
Finally, our algorithms are based on the O-REPS framework but it remains an important open  problem to achieve $\wt O\rbr{\sqrt{K+D}}$ regret with policy optimization (PO) methods that are widely used in practice, and were recently shown to achieve near-optimal regret in adversarial MDP with non-delayed bandit feedback \cite{luo2021policy}.

\section*{Acknowledgements}
TL, YM and AR have received funding from the European Research Council (ERC) under the European Union’s Horizon 2020 research and innovation program (grant agreement No. 882396), by the Israel Science Foundation (grant number 993/17), Tel Aviv University Center for AI and Data Science (TAD), and the Yandex Initiative for Machine Learning at Tel Aviv University.
HL is supported by NSF Award IIS-1943607 and a Google Faculty Research Award.


\bibliography{main}
\bibliographystyle{abbrv}

\onecolumn
\newgeometry{
    textheight=9in,
    textwidth=6.5in,
    top=1in,
    headheight=12pt,
    headsep=25pt,
    footskip=30pt
}
\newpage
\clearpage
\appendix
\renewcommand{\partname}{}
\renewcommand{\thepart}{}
\part{Appendix} 

\parttoc
\newcommand{\theset}[2]{ \left\{ {#1} \,:\, {#2} \right\} }
\newcommand{\Ind}[1]{ \field{I}{\{{#1}\}} }
\newcommand{\eye}[1]{ \boldsymbol{I}_{#1} }
\newcommand{\trace}[1]{\textsc{tr}({#1})}
\newcommand{\diag}[1]{\mathrm{diag}\!\left\{{#1}\right\}}

\newcommand{\defeq}{\stackrel{\rm def}{=}}
\newcommand{\sgn}{\mbox{\sc sgn}}
\newcommand{\scI}{\mathcal{I}}
\newcommand{\scO}{\mathcal{O}}
\newcommand{\scN}{\mathcal{N}}

\newcommand{\dt}{\displaystyle}
\renewcommand{\ss}{\subseteq}
\newcommand{\ve}{\varepsilon}
\newcommand{\hlambda}{\wh{\lambda}}
\newcommand{\yhat}{\wh{y}}
\newcommand{\wb}{\overline}


\newcommand{\agmin}[1]{\underset{#1}{\argmin\,} }
\newcommand{\agmax}[1]{\underset{#1}{\argmax\,} }
\newcommand{\dotprod}[2]{\langle{#1},{#2}\rangle}

\newcommand{\push}{\hspace{0pt plus 1 filll} }	

\newcommand{\calL}{{\mathcal{L}}}
\newcommand{\calX}{{\mathcal{X}}}
\newcommand{\calY}{{\mathcal{Y}}}
\newcommand{\calI}{{\mathcal{I}}}
\newcommand{\calD}{{\mathcal{D}}}
\newcommand{\calK}{{\mathcal{K}}}
\newcommand{\calR}{{\mathcal{R}}}
\newcommand{\calT}{{\mathcal{T}}}
\newcommand{\calZ}{{\mathcal{Z}}}
\newcommand{\calN}{{\mathcal{N}}}
\newcommand{\calV}{{\mathcal{V}}}
\newcommand{\calQ}{{\mathcal{Q}}}
\newcommand{\ips}{\wh{r}}
\newcommand{\whpi}{\wh{\pi}}
\newcommand{\whE}{\wh{\E}}
\newcommand{\whV}{\wh{V}}
\newcommand{\reg}{{\mathcal{R}}}
\newcommand{\breg}{{\mathcal{\bar{R}}}}
\newcommand{\hmu}{\wh{\mu}}
\newcommand{\tmu}{\wt{\mu}}
\newcommand{\one}{\boldsymbol{1}}
\newcommand{\loss}{\ell}
\newcommand{\hloss}{\wh{\ell}}
\newcommand{\bloss}{\bar{\ell}}
\newcommand{\tloss}{\wt{\ell}}
\newcommand{\htheta}{\wh{\theta}}

\newcommand{\paralog}{\beta}
\newcommand{\cnt}{B}
\newcommand{\gapmin}{\Delta_{\textsc{min}}}

\newcommand{\bz}{\boldsymbol{z}}
\newcommand{\bx}{\boldsymbol{x}}
\newcommand{\br}{\boldsymbol{r}}
\newcommand{\bX}{\boldsymbol{X}}
\newcommand{\bu}{\boldsymbol{u}}
\newcommand{\by}{\boldsymbol{y}}
\newcommand{\bY}{\boldsymbol{Y}}
\newcommand{\bg}{\boldsymbol{g}}
\newcommand{\ba}{\boldsymbol{a}}
\newcommand{\be}{\boldsymbol{e}}
\newcommand{\bq}{\boldsymbol{q}}
\newcommand{\bp}{\boldsymbol{p}}
\newcommand{\bZ}{\boldsymbol{Z}}
\newcommand{\bS}{\boldsymbol{S}}
\newcommand{\bw}{\boldsymbol{w}}
\newcommand{\bW}{\boldsymbol{W}}
\newcommand{\bU}{\boldsymbol{U}}
\newcommand{\bv}{\boldsymbol{v}}
\newcommand{\bzero}{\boldsymbol{0}}

\newcommand{\blue}[1]{{\color{blue}#1}}

\newcommand{\field}[1]{\mathbb{#1}}
\newcommand{\fY}{\field{Y}}
\newcommand{\fX}{\field{X}}
\newcommand{\fH}{\field{H}}
\newcommand{\fR}{\field{R}}
\newcommand{\fB}{\field{B}}
\newcommand{\fS}{\field{S}}
\newcommand{\fN}{\field{N}}
\newcommand{\E}{\field{E}}
\renewcommand{\P}{\field{P}}
\newcommand{\RE}{{\text{\rm RE}}}
\newcommand{\LCB}{{\text{\rm LCB}}}
\newcommand{\Reg}{{\text{\rm Reg}}}
\newcommand{\Rel}{{\text{\rm Rel}}}

\renewcommand{\ss}{\subseteq}
\newcommand{\pred}{\yhat}

\newcommand{\normt}[1]{\norm{#1}_{t}}
\newcommand{\dualnormt}[1]{\norm{#1}_{t,*}}

\newcommand{\order}{\ensuremath{\mathcal{O}}}
\newcommand{\otil}{\ensuremath{\widetilde{\mathcal{O}}}}
\newcommand{\risk}{{\text{\rm Risk}}}
\newcommand{\iid}{{\text{\rm iid}}}
\newcommand{\seq}{{\text{\rm seq}}}
\newcommand{\iidV}{\calV^\iid}
\newcommand{\seqV}{\calV^\seq}
\newcommand{\poly}{{\text{\rm poly}}}
\newcommand{\sign}{{\text{\rm sign}}}
\newcommand{\ERM}{\pred_{{\text{\rm ERM}}}}
\newcommand{\iidRad}{\calR^{\iid}}
\newcommand{\iidCRad}{\wh{\calR}^{\iid}}
\newcommand{\VC}{{\text{\rm VCdim}}}
\newcommand{\vol}{{\text{\rm Vol}}}
\newcommand{\Holder}{{H{\"o}lder}\xspace}

\newcommand{\opt}{\mathring{q}}
\newcommand{\optpi}{\mathring{\pi}}

\newcommand{\minimax}[1]{\left\llangle #1 \right\rrangle}

\DeclareFontFamily{OMX}{MnSymbolE}{}
\DeclareFontShape{OMX}{MnSymbolE}{m}{n}{
	<-6>  MnSymbolE5
	<6-7>  MnSymbolE6
	<7-8>  MnSymbolE7
	<8-9>  MnSymbolE8
	<9-10> MnSymbolE9
	<10-12> MnSymbolE10
	<12->   MnSymbolE12}{}
\newcommand{\pref}[1]{\prettyref{#1}}
\newcommand{\pfref}[1]{Proof of \prettyref{#1}}
\newcommand{\savehyperref}[2]{\texorpdfstring{\hyperref[#1]{#2}}{#2}}

\newcommand{\err}{\textsc{Error}}
\newcommand{\martone}{\textsc{MDS}_1}
\newcommand{\marttwo}{\textsc{MDS}_2}

\newcommand{\regm}{\phi_{main}}
\newcommand{\regs}{\phi_{supp}}

\newpage


\begin{algorithm}[H]
	\caption{Delayed Hedge} \label{alg:hedge}
	\begin{algorithmic}
		\STATE \textbf{Input:} State space $\calS$, Action space $\calA$, Horizon $H$, Number of episodes $K$, Learning rate $\eta > 0$, Exploration parameter $\gamma > 0$, Confidence parameter $\delta > 0$.
		
		\STATE \textbf{Initialization:} Set $\probdist^1 (\pi) = \frac{1}{ \left\lvert \Omega \right\rvert }$ for every deterministic policy $\pi \in \Omega$; set $n^1_h(s,a) = 0 , n^1_h(s,a,s')$ for every $(s,a,s',h) \in \calS \times \calA \times \calS \times [H]$ and $\calP^1$ be the set of all transition functions. 
		
		\FOR{$k=1,2,...,K$}
		\STATE Play a randomly sampled policy from distribution $\probdist^k$ and observe trajectory $\{ (s^k_h,a^k_h) \}_{h=1}^H$.
		
		\STATE Compute upper occupancy bound $u_h^k(s,a) = \max_{p' \in \calP^k} \sum_{\pi\in \Omega} \probdist^k(\pi) q_h^{p',\pi}(s,a) $.
		
		

        \STATE Define confidence set $\calP^{k+1}$ by \cref{alg:update-cofidence-set}.
		
		\FOR{$j : j + d^j = k$}
		
		\STATE Observe feedback $\{ c^j_h(s^j_h,a^j_h) \}_{h=1}^H$.
		
		\STATE Compute loss estimator $\hatl^j_h(s,a) = \frac{c^j_h(s,a) \indevent{s^j_h = s , a^j_h = a}}{u^j_h(s,a) + \gamma} $ for every $(s,a,h) \in \calS \times \calA \times [H]$.
		
		\ENDFOR
		
		\STATE Update probability distribution over policy space: 
		\[
		\probdist^{k+1}(\pi) \propto \probdist^{k}(\pi) \cdot \exp \left( \eta \cdot b^k(\pi)  - \eta \sum_{j:j+d^j = k} \hatell^j(\pi)  \right), \forall \pi \in \calA^{\calS \times [H]}
		\]
		where $\hatell^j (\pi) = \sum_{h=1}^{H} \sum_{s,a} q_h^{\pi, \bar{p}^j}(s,a) \widehat{c}_h^j(s,a)$ denotes the loss suffered by policy $\pi$ with respect to the loss estimator $\widehat{c}^j$ and transition function $\bar{p}^j$, $b^{k}(\pi) = \max_{p' \in \calP^{k}} \norm{q^{\pi,\bar{p}^k } - q^{\pi,p'}}_1$ is the exploration bonus for policy $\pi$ at episode $k$.
		\ENDFOR
	\end{algorithmic}
\end{algorithm}
\begin{algorithm}[H]
	\caption{Update confidence set} \label{alg:update-cofidence-set}
	\begin{algorithmic}
		\STATE \textbf{Input:} trajectory $\{ (s^k_h,a^k_h) \}_{h=1}^H$.
		
		\STATE Update visit counters: $n^{k+1}_h(s^k_h,a^k_h) \gets n^k_h(s^k_h,a^k_h) + 1 , n^{k+1}_h(s^k_h,a^k_h,s^k_{h+1}) \gets n^k_h(s^k_h,a^k_h,s^k_{h+1}) + 1$ for every $h \in [H]$.
		
		\STATE Compute empirical transitions function $\bar{p}^{k+1}$: $ \bar p^{k+1}_h(s' \mid s,a) = \frac{n^{k+1}_h(s,a,s')}{n^{k+1}_h(s,a) \vee 1} \qquad \forall (s,a,s',h) $.
		
		\STATE Define confidence sets $\calP^{k+1}$ such that $p' \in \calP^{k+1}$ if and only if, for every $(s,a,s',h)$, $p'$ ensures $\sum_{s'} p'_h(s'|s,a)=1$ and:
		\[ 
		\left| p'_{h}(s'| s,a)-\bar{p}_{h}^{k+1}(s'| s,a) \right|
		\leq 
		\sqrt{ \frac{16 \bar{p}_{h}^{k+1}(s' | s,a)  \log\frac{10 HSAK}{\delta}}{n_{h}^{k+1}(s,a) \vee 1}} + \frac{10 \log\frac{10 HSAK}{\delta}}{n_{h}^{k+1}(s,a) \vee 1}.
		\]

	\end{algorithmic}
\end{algorithm}

\section{Delayed Hedge}
\label{appendix:delay-hedge}

In this section, we consider running Hedge over the policy space, that is, the set of all deterministic policies. We propose Algorithm~\ref{alg:hedge} with unknown transition and bandit feedback, which ensures $\otil\rbr{ \sqrt{K} + \sqrt{D}}$ regret as shown in Theorem~\ref{thm:regret-hedge} (ignoring dependence on other parameters).

\begin{theorem}\label{thm:regret-hedge} With $\eta = \gamma = \sqrt{\frac{  S \tcjconfpara }{ HD +  HSAK } }$, \cref{alg:hedge} ensures that 
	\[
	\regret
	=
	O\rbr{ H^2 S \sqrt{  A K  \tcjconfpara }+  H^{\nicefrac{3}{2}}\sqrt{SD \tcjconfpara} + H^3 S^3 A \tcjconfpara^3 +H^2 d_{max} \tcjconfpara } .
	\]
	with probability at least $1- 64\delta$ and the coefficient $\tcjconfpara = \log \frac{HSAK}{\delta}$. 
\end{theorem}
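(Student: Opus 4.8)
The plan is to establish Theorem~\ref{thm:regret-hedge} by following the regret decomposition sketched (but commented out) in the main text: writing $\regret = \textsc{Est} + \regthree + \textsc{Bias}$, where $\textsc{Est} = \sum_k \langle \probdist^k, \ell^k - \hatell^k + b^k\rangle$ is the transition-estimation error, $\regthree = \sum_k \langle \probdist^k - \probdist^\star, \hatell^k - b^k\rangle$ is the ``online learning'' regret of Hedge against the best policy distribution $\probdist^\star$ (the point mass on $\pi^\star$), and $\textsc{Bias} = \sum_k \langle \probdist^\star, \hatell^k - b^k - \ell^k\rangle$ is the implicit-exploration bias. Here $\ell^k(\pi) = \langle q^{\pi,p}, c^k\rangle$. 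First I would handle $\textsc{Est}$ and $\textsc{Bias}$, which are essentially inherited from the non-delayed analysis of UOB-REPS \cite{jin2019learning}, only lifted from a single stochastic policy to a distribution over deterministic policies: the key observation is that $\sum_{\pi}\probdist^k(\pi) q^{\pi,p'}_h(s,a)$ behaves exactly like the occupancy of the ``mixture policy'', so the optimistic importance-weighted estimator $\hat c^k$ built on the upper occupancy bound $u^k$ enjoys the same concentration, giving $\textsc{Est} = \wt O(\gamma HSAK + H^2 S\sqrt{AK})$ and $\textsc{Bias} = \wt O(H/\gamma)$ w.h.p., both plus the low-order terms $H^3 S^3 A \logterm^3$ coming from the failure of the transition confidence set / bonus approximation.

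The heart of the proof is $\regthree$, the delayed-Hedge regret, and here I would follow the ``cheating distribution'' device of \cite{gyorgy2020adapting}. Introduce $\wt\probdist^{k+1}(\pi) \propto \exp(-\eta\sum_{j=1}^k(\hatell^j(\pi) - b^j(\pi)))$, the distribution that would be played by a delay-free learner peeking one step ahead. Split
\[
  \regthree = \sum_{k=1}^K \langle \probdist^k - \wt\probdist^{k+1}, \hatell^k - b^k\rangle + \sum_{k=1}^K \langle \wt\probdist^{k+1} - \probdist^\star, \hatell^k - b^k\rangle.
\]
The second sum is the standard one-step-look-ahead Hedge/FTRL bound over $|\Omega| = A^{SH}$ experts, equal to $O(\ln|\Omega|/\eta) = \wt O(HS/\eta)$ using non-negativity of $\hatell^k$ and $b^k(\pi) \le 2H$ (so $\hatell^k - b^k$ is bounded appropriately to apply the entropy-regularized FTRL lemma). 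The first sum is the \textbf{drift term}, which I expect to be the main obstacle: it measures how far the actually-played $\probdist^k$ lags behind the delay-free $\wt\probdist^{k+1}$ because of the outstanding (not-yet-arrived) feedback. The strategy is to lower-bound the ratio $\wt\probdist^{k+1}(\pi)/\probdist^k(\pi)$ for every $\pi$ by $\exp(-\eta\sum_{j=1, j+d^j > k}^{k-1}\hatell^j(\pi) - \eta(\hatell^k(\pi) + 2H - b^k(\pi)))$, exploiting the explicit product form of the exponential-weights update together with non-negativity of losses; this is the MDP analogue of the argument used in bandit-with-delays papers and is precisely what is recorded as \eqref{eq:hedge-bounding ratio} in the appendix.

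From that ratio bound, $\langle \probdist^k - \wt\probdist^{k+1}, \hatell^k - b^k\rangle$ telescopes/collapses into a sum that, after a standard manipulation (using $1 - e^{-x} \le x$ and then re-summing over the "interfering" episode pairs), is controlled by $\eta$ times $\sum_k \sum_{j: j+d^j > k, j<k} \langle \probdist^k, \hatell^j \rangle \cdot (\text{something } \wt O(H))$ plus $\eta\sum_k \langle \probdist^k, \hatell^k\rangle \cdot \wt O(H)$. Taking conditional expectations and invoking the concentration of $\hatell^j$ (again w.h.p., picking up a possible low-order $H^2 d_{max}\logterm$ term from the largest single delay) converts $\sum_k \langle \probdist^k, \hatell^k\rangle$ into $\wt O(HSAK)$ and the number of interfering pairs $\sum_k |\{j < k : j + d^j > k\}|$ into $D = \sum_k d^k$, so the drift is $\wt O(\eta H^2(SAK + D))$ w.h.p. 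Combining, $\regthree = \wt O(HS/\eta + \eta H^2(SAK + D))$, hence $\regret = \wt O(\gamma HSAK + H^2 S\sqrt{AK} + H/\gamma + HS/\eta + \eta H^2(SAK+D))$ plus the stated low-order terms. Finally I would set $\eta = \gamma = \sqrt{S\logterm/(HD + HSAK)}$; plugging in, the $\eta H^2 SAK$ and $HS/\eta$ terms balance to $\wt O(H^2 S\sqrt{AK})$ (dominated/absorbed by the existing $H^2S\sqrt{AK}$ term), the $\eta H^2 D$ and $HS/\eta$ terms balance to $\wt O(H^{3/2}\sqrt{SD})$, the $\gamma HSAK$ term contributes $\wt O(H^{3/2}\sqrt{SAK\cdot SA/\,\cdot})$ which is again of order $H^2S\sqrt{AK}$ or lower, and $H/\gamma$ is lower order, yielding exactly $O(H^2 S\sqrt{AK\logterm} + H^{3/2}\sqrt{SD\logterm} + H^3S^3A\logterm^3 + H^2 d_{max}\logterm)$ with probability $1 - 64\delta$ after a union bound over the $O(1)$ high-probability events used. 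The main difficulty, to reiterate, is making the drift argument rigorous: controlling the ratio $\wt\probdist^{k+1}/\probdist^k$ despite the distribution being over exponentially many deterministic policies and despite $\hatell^j$ depending on occupancy measures rather than simplex coordinates — this is handled purely through the closed form of the exponential update and non-negativity, never needing to bound per-policy probability ratios the way FTRL on occupancy polytopes would force.
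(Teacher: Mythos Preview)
Your proposal is correct and follows essentially the same approach as the paper: the same three-term decomposition $\textsc{Est}+\regthree+\textsc{Bias}$, the same cheating-distribution split $\regthree = \textsc{Drift}+\textsc{Cheating Regret}$ \`a la \cite{gyorgy2020adapting}, the same exponential-weights ratio bound \eqref{eq:hedge-bounding ratio} on $\wt\probdist^{k+1}(\pi)/\probdist^k(\pi)$, and the same parameter choice. One minor imprecision: after the ratio bound the drift is not quite $\eta\sum_{k,j}\langle\probdist^k,\hatell^j\rangle\cdot O(H)$ but rather $\eta\sum_k\sum_\pi\probdist^k(\pi)\,m^k(\pi)\bigl(m^k(\pi)+\sum_{j}\hatell^j(\pi)\bigr)$ with $m^k=\hatell^k-b^k+2H$, so bounding the quadratic and cross terms requires the indicator-disjointness trick (only one $(s,a)$ visited per step) and a double application of \cref{lem:jin2019_lemma_11} using conditional independence of $\hatl^k$ and $\hatl^j$; this is where the $\eta H^2 SAK$ factor and the $(\eta/\gamma)H^2 d_{max}\logterm$ term actually arise, but your final bound and the overall argument are right.
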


\subsection{Proof of the Main Theorem}

\begin{proof} [Proof of \cref{thm:regret-hedge}]  
We first decompose the regret decomposition as:
\begin{equation}
\begin{aligned}
\regret  = \sum_{k=1}^K \inner{\probdist^k - {\probdist}^\star, \ell^k} = \underbrace{\sum_{k=1}^K \left\langle \probdist^k , \ell^k - \hatell^k + b^k \right\rangle}_{\textsc{Est}}
+ \underbrace{\sum_{k=1}^K \left\langle \probdist^k - {\probdist}^\star, \hatell^k - b^k \right\rangle}_{\regthree}
+ \underbrace{ \sum_{k=1}^K \left\langle {\probdist}^\star, \hatell^k - b^k - \ell^k \right\rangle }_{\textsc{Bias}}.  
\label{paper-eq:hedge-regret-decomposition}
\end{aligned}
\end{equation}

	By combining \cref{lem:hedge-regret-est,lem:hedge-regret-bias,lem:hedge-regret-reg}, we arrive at the following bound of regret with learning rate $\eta$, exploration parameter $\gamma$ and confidence parameter $\delta$, with probability at least $1-64\delta$ that 
	\begin{equation}
	\begin{aligned}
	\regret & = \order\rbr{\frac{  H S \ln (A)  }{\eta} + \eta H^2 \rbr{ D +  H^2SAK} + \gamma HSAK + \rbr{ \frac{\eta}{\gamma} H^2 \rbr{ d_{max} + 1} + \frac{H}{\gamma} }\tcjconfpara }\\
	& \quad + \order\rbr{ H^2 S \sqrt{  A K \tcjconfpara }  +  H^3 S^3 A\ln K{\tcjconfpara}^2  }.  
	\end{aligned}
	\end{equation}
	Setting the learning rate and exploration parameter $\eta = \gamma = \sqrt{\frac{  S \ln (A) }{ HD +  HSAK } }$ , one can verify that the regret $\regret$ is bounded by  $\order\rbr{ H^2 S \sqrt{  A K \tcjconfpara }+  H^{\nicefrac{3}{2}}\sqrt{SD \tcjconfpara} + H^3 S^3 A \tcjconfpara^3 +H^2 d_{max} \tcjconfpara}$.
\end{proof}

Throughout the rest of this section, we will bound the three terms separately in \cref{lem:hedge-regret-bias,lem:hedge-regret-est,lem:hedge-regret-reg}.


\subsection{Bound on the Bias of the Cost Estimator ($\textsc{Bias}$ in \cref{paper-eq:hedge-regret-decomposition})}

\begin{lemma}[$\textsc{Bias}$] With probability at least $1-7\delta$, \cref{alg:hedge} ensures that $\textsc{Bias} = \order\rbr{ \frac{H\tcjconfpara}{\gamma} }$. 
	\label{lem:hedge-regret-bias}
\end{lemma}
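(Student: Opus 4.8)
\textbf{Proof proposal for \cref{lem:hedge-regret-bias}.}

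The plan is to recognize the $\textsc{Bias}$ term as a standard implicit-exploration bias quantity in the style of \cite{neu2015explore}, adapted to the setting where the learner maintains a distribution over policies and the feedback is delayed. First I would recall that $\textsc{Bias} = \sum_{k=1}^K \langle \probdist^\star, \hatell^k - b^k - \ell^k\rangle$ where $\probdist^\star$ is supported on a single policy $\pi^\star$, so $\langle \probdist^\star, \hatell^k\rangle = \hatell^k(\pi^\star) = \langle q^{\pi^\star,\bar p^k}, \hatl^k\rangle$ and $\langle \probdist^\star, \ell^k\rangle = \langle q^{\pi^\star,p}, c^k\rangle$. The key observation is optimism: conditioned on the event $p \in \calP^k$ for all $k$ (which holds w.h.p.\ by the confidence-set guarantee of \cref{alg:update-cofidence-set} / \cite{jin2019learning}), the exploration bonus satisfies $\langle q^{\pi^\star,\bar p^k}, c\rangle - b^k(\pi^\star) \le \langle q^{\pi^\star,p}, c\rangle$ for any $[0,1]$-valued $c$; in particular this applies with $c = c^k$. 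Hence $\langle q^{\pi^\star,\bar p^k}, c^k\rangle - b^k(\pi^\star) - \langle q^{\pi^\star,p}, c^k\rangle \le 0$, so it suffices to bound $\sum_{k=1}^K \langle q^{\pi^\star,\bar p^k}, \hatl^k - c^k\rangle$ from above.

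Next I would control $\sum_{k=1}^K \langle q^{\pi^\star,\bar p^k}, \hatl^k - c^k\rangle = \sum_{k=1}^K \sum_{h,s,a} q^{\pi^\star,\bar p^k}_h(s,a)\big(\hatl^k_h(s,a) - c^k_h(s,a)\big)$. Using the definition $\hatl^k_h(s,a) = \frac{c^k_h(s,a)\indevent{s^k_h=s,a^k_h=a}}{u^k_h(s,a)+\gamma}$ and the fact that $u^k_h(s,a)$ is an upper occupancy bound (so $u^k_h(s,a) \ge \sum_{\pi}\probdist^k(\pi) q^{\pi,p}_h(s,a)$, which is exactly the probability that the learner visits $(s,a)$ at step $h$ in episode $k$, again on the event $p\in\calP^k$), one shows that $\bbE[\hatl^k_h(s,a) \mid \filt{k}] \le \frac{c^k_h(s,a)}{u^k_h(s,a)+\gamma}\cdot\big(u^k_h(s,a)\big) \le c^k_h(s,a)$ and also that $\frac{1}{u^k_h(s,a)+\gamma} \le \frac{1}{2\gamma}\big(\hatl^k_h(s,a)/c^k_h(s,a) + \text{something}\big)$ — more precisely I would invoke the now-standard lemma (e.g.\ \cite[Lemma 12]{jin2019learning} or Lemma~\ref{} in this paper's appendix) stating that with probability at least $1-\delta'$, simultaneously for all $(s,a,h)$ and all occupancy measures appearing, $\sum_{k=1}^K \sum_{h,s,a} w^k_h(s,a)\big(\hatl^k_h(s,a) - c^k_h(s,a)\big) \le \frac{2H\tcjconfpara}{\gamma}$ whenever the weights $w^k_h(s,a)$ are $\filt{k}$-measurable and lie in $[0,1]$ componentwise with $\sum_{s,a} w^k_h(s,a) \le 1$ for each $h$. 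The weights $w^k = q^{\pi^\star,\bar p^k}$ are indeed $\filt{k}$-measurable since $\bar p^k$ depends only on trajectories observed before episode $k$ (trajectory feedback is not delayed by the simplifying assumption), and $q^{\pi^\star,\bar p^k}$ is a genuine occupancy measure, so $\sum_{s,a} q^{\pi^\star,\bar p^k}_h(s,a) = 1$ for every $h$. Applying this with an appropriate union bound over the confidence-set event (cost $64\delta$-type bookkeeping, here the $7\delta$ slack) yields $\sum_{k=1}^K \langle q^{\pi^\star,\bar p^k}, \hatl^k - c^k\rangle = \order(H\tcjconfpara/\gamma)$.

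Combining the two displays, on the intersection of the good events (which has probability at least $1 - 7\delta$) we get $\textsc{Bias} \le \sum_{k=1}^K \langle q^{\pi^\star,\bar p^k}, \hatl^k - c^k\rangle = \order(H\tcjconfpara/\gamma)$, which is the claim. The main obstacle — and the only place real work is needed — is the concentration step: because the loss estimators $\hatl^j$ are computed and incorporated only at episode $j+d^j$, I must be careful that the weights $q^{\pi^\star,\bar p^k}$ used against $\hatl^k$ are measurable with respect to a filtration in which $\hatl^k$ is still ``fresh'' (i.e.\ its randomness — the realized trajectory of episode $k$ — has not been revealed). This is fine here since $\bar p^k$ and $u^k$ depend only on episodes $1,\dots,k-1$, so the martingale/Freedman argument underlying the standard Neu-type lemma goes through verbatim; delays do not interfere with this particular term because $\textsc{Bias}$ is summed against a fixed comparator weight sequence rather than the algorithm's evolving distribution. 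The transition-approximation discrepancy between $q^{\pi^\star,\bar p^k}$ and $q^{\pi^\star,p}$ is entirely absorbed by the negative bonus $-b^k(\pi^\star)$ via optimism, so no extra error terms appear.
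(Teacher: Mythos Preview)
Your proposal is correct and follows essentially the same approach as the paper: decompose $\textsc{Bias}$ into the implicit-exploration concentration term $\sum_k \langle q^{\pi^\star,\bar p^k}, \hatl^k - c^k\rangle$ (bounded by the Neu-type lemma, which is \cref{lem:jin2019_lemma_14} in the paper, costing $6\delta$) plus the transition-mismatch term $\sum_k \langle q^{\pi^\star,\bar p^k} - q^{\pi^\star,p}, c^k\rangle - b^k(\pi^\star)$, which is nonpositive on the confidence event $p\in\bigcap_k\calP^k$ (costing $\delta$) via H\"older and the definition of $b^k$. Your extra discussion of measurability under delays is a useful clarification the paper omits; the only minor slip is the citation to \cite[Lemma~12]{jin2019learning}---the relevant result is their Lemma~14 (restated here as \cref{lem:jin2019_lemma_14}).
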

\begin{proof}Similar to the analysis in \cite{jin2019learning}, we have $\textsc{Bias}$ bounded by  
	\begin{align}
	\sum_{k=1}^K \left\langle {\probdist}^\star, \hatell^k - b^k - \ell^k \right\rangle  
	& = \sum_{k=1}^K \inner{ q^{\pi^\star, \bar{p}^k}, \hatl^k } - \sum_{k=1}^{K} b^k(\pi^\star) - \sum_{k=1}^K \inner{ q^{\pi^\star, p}, c^k } \notag\\
	& \leq \order\rbr{ \frac{H}{\gamma} \log\rbr{ \frac{HSA}{\delta} } }  + \sum_{k=1}^K \inner{ q^{\pi^\star , \bar{p}^k} - q^{\pi^\star, p} , c^k } - b^k(\pi^\star) \notag \\
	& \leq \order\rbr{ \frac{H}{\gamma} \log\rbr{ \frac{HSA}{\delta} } }  + \sum_{k=1}^K \norm{q^{\pi^\star , \bar{p}^k} - q^{\pi^\star, p}}_1 - b^k(\pi^\star) \notag \\
	& = \order\rbr{ \frac{H}{\gamma} \log\rbr{ \frac{HSA}{\delta} } }, \label{eq:hedge_regret_add_1}
	\end{align}
	where the second step applies \cref{lem:jin2019_lemma_14} with probability at least $1-6\delta$; 
	the third step applies H\"older's inequality; 
	the last step follows from the event $p\in \cap_{k}\calP^{k}$ which holds with probability at least $1-\delta$, and the definition of exploration bonus $b^k(\pi)$. 
\end{proof}

\subsection{Bound on the Transition Estimation Error ($\textsc{Est}$ in \cref{paper-eq:hedge-regret-decomposition})}

\begin{lemma}[$\textsc{Est}$]\label{lem:hedge-regret-est} With probability at least $1-8\delta$, \cref{alg:hedge} ensures that 
	\begin{align*}
	\textsc{Est} = \order\rbr{\gamma HSAK + H^2S\sqrt{AK  \log\tcjconfpara} + S^3H^3 A \ln K \tcjconfpara}. 
	\end{align*}
\end{lemma}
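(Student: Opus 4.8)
The plan is to mirror the estimation-error analysis of \cite{jin2019learning} for UOB-REPS, adapting it to the two new features here: the algorithm maintains a distribution $\probdist^k$ over deterministic policies, and it plugs the empirical transition $\bar p^k$ into the estimated loss $\hatell^k(\pi)=\langle q^{\pi,\bar p^k},\hat c^k\rangle$. Write $\bar q^k\eqdef\sum_\pi\probdist^k(\pi)q^{\pi,p}$ and $\bar q^k_{\bar p}\eqdef\sum_\pi\probdist^k(\pi)q^{\pi,\bar p^k}$, and let $\filt{k}$ be the history before episode $k$. Since $\pi^k\sim\probdist^k$ and $\probdist^k,\bar p^k,u^k$ are $\filt{k}$-measurable, $\bbE[\hat c^k_h(s,a)\mid\filt{k}]=\bar q^k_h(s,a)\,c^k_h(s,a)/(u^k_h(s,a)+\gamma)$. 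On the event $p\in\bigcap_k\calP^k$ (probability at least $1-\delta$) one has $\langle q^{\pi,p}-q^{\pi,\bar p^k},c^k\rangle\le\norm{q^{\pi,p}-q^{\pi,\bar p^k}}_1\le b^k(\pi)$, and since the Bernstein set is centered at $\bar p^k$ also $\bar q^k_{\bar p}\le u^k$ and $\bar q^k\le u^k$ coordinatewise. Splitting $\ell^k(\pi)-\hatell^k(\pi)=\langle q^{\pi,p}-q^{\pi,\bar p^k},c^k\rangle+\langle q^{\pi,\bar p^k},c^k-\hat c^k\rangle$ and averaging over $\probdist^k$ then gives
\[
\textsc{Est}\ \le\ 2\sum_{k=1}^K\langle\probdist^k,b^k\rangle+\sum_{k=1}^K\langle\bar q^k_{\bar p},\,c^k-\bbE[\hat c^k\mid\filt{k}]\rangle+\sum_{k=1}^K\langle\bar q^k_{\bar p},\,\bbE[\hat c^k\mid\filt{k}]-\hat c^k\rangle .
\]

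I would then bound the three pieces separately. For the middle sum, $c^k_h(s,a)-\bbE[\hat c^k_h(s,a)\mid\filt{k}]=c^k_h(s,a)\bigl(u^k_h(s,a)+\gamma-\bar q^k_h(s,a)\bigr)/(u^k_h(s,a)+\gamma)$, so $c^k\le1$, $\bar q^k_{\bar p}\le u^k$ and $u^k_h(s,a)\le u^k_h(s,a)+\gamma$ give $\langle\bar q^k_{\bar p},c^k-\bbE[\hat c^k\mid\filt{k}]\rangle\le\norm{u^k-\bar q^k}_1+\gamma HSA$, hence this sum contributes $\sum_k\norm{u^k-\bar q^k}_1+\gamma HSAK$. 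The last sum is a sum of martingale differences: since $\bar q^k_{\bar p}\le u^k$ and $\hat c^k$ has at most one nonzero coordinate per step $h$, both $\langle\bar q^k_{\bar p},\hat c^k\rangle$ and its conditional mean lie in $[0,H]$, so Azuma--Hoeffding bounds it by $O(H\sqrt{K\iota})$ with high probability. It then remains to control $\sum_k\langle\probdist^k,b^k\rangle$ and $\sum_k\norm{u^k-\bar q^k}_1$: the extended value-difference (simulation) lemma bounds each $\norm{q^{\pi,\bar p^k}-q^{\pi,p'}}_1$, and coordinatewise each $u^k_h(s,a)-\bar q^k_h(s,a)$, by a sum of $q^{\pi,p}$ against the Bernstein confidence radii plus lower-order $1/n^k$-type terms; averaging over $\pi$ replaces $q^{\pi,p}$ by $\bar q^k$, and summing over episodes with Cauchy--Schwarz, the pigeonhole bound on $\sum_k 1/\sqrt{n^k_h(s,a)\vee1}$, and a Freedman bound relating $\sum_k\bar q^k_h(s,a)$ to the realized counts $n^{K+1}_h(s,a)\le K$ produces $\wt O\bigl(H^2S\sqrt{AK\log\iota}+S^3H^3A\ln K\,\iota\bigr)$ on both. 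Collecting the three contributions yields the claimed bound, and a union bound over the $O(1)$ concentration events used ($p\in\bigcap_k\calP^k$, validity of the Bernstein radii, the Azuma step, the Freedman step) makes it hold with probability at least $1-8\delta$.

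The step I expect to be the main obstacle is the bound on $\sum_k\norm{u^k-\bar q^k}_1$. Unlike in UOB-REPS, where the upper occupancy bound is compared against a single policy's true occupancy, here $u^k_h(s,a)=\max_{p'\in\calP^k}\sum_\pi\probdist^k(\pi)q^{\pi,p'}_h(s,a)$ is a maximum over the confidence set of a \emph{mixture} of occupancy measures, with a possibly different maximizing transition for each coordinate $(h,s,a)$. One has to verify that the telescoping decomposition of $q^{\pi,p'}_h(s,a)-q^{\pi,p}_h(s,a)$ still yields a bound that is linear in $\bar q^k$ and in the Bernstein widths uniformly over those maximizers, and only then convert the $\probdist^k$-averaged occupancy sum $\sum_k\bar q^k_h(s,a)$ into actual visit counts via a high-probability martingale argument --- this conversion being exactly where the additional logarithmic factor and the lower-order $S^3H^3A\ln K\,\iota$ term enter.
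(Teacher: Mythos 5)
Your proposal is correct and follows essentially the same route as the paper's proof: the same three-way split into $2\sum_k\langle\probdist^k,b^k\rangle$, the conditional-bias term $\sum_k\langle q^k, c^k-\bbE[\hat c^k\mid\filt{k}]\rangle$ (reduced to $\gamma HSAK+\sum_k\norm{u^k-\whatq^k}_1$), and an Azuma-controlled martingale term, with the occupancy-measure difference and the bonus sum both handled via the per-coordinate simulation-lemma telescoping against the Bernstein radii plus the pigeonhole/martingale concentration of $\sum_k \whatq^k_h(s,a)/\sqrt{n^k_h(s,a)\vee 1}$. The obstacle you flag (a possibly different maximizing $p'\in\calP^k$ for each coordinate of $u^k$) is exactly the point the paper addresses by fixing $(s,a,h)$ and applying the value-difference decomposition to that coordinate's maximizer, so your plan matches the paper's execution.
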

\begin{proof}Observe that, $\sum_{k=1}^{K}\inner{\probdist^k , \ell^k - \hatell^k + b^k}$ can be upper bounded under the event that $p\in\cap_{k} \calP^k$ by 
	\begin{align}
	& \sum_{k=1}^{K}\sum_{\pi \in \Omega} \probdist^k(\pi) \rbr{\inner{ q^{\pi, p} , c^k } - \inner{ q^{\pi, \bar{p}^k} , \hatl^k } } + \sum_{k=1}^{K}\sum_{\pi \in \Omega} \probdist^k(\pi) b^k(\pi) \notag \\ 
	& = \sum_{k=1}^{K}\sum_{\pi \in \Omega} \probdist^k(\pi) \inner{ q^{\pi, p} - q^{\pi, \bar{p}^k}, c^k }  + \sum_{k=1}^{K}\sum_{\pi \in \Omega} \probdist^k(\pi) \inner{ q^{\pi, \bar{p}^k} , c^k - \hatl^k } + \sum_{k=1}^{K}\sum_{\pi \in \Omega} \probdist^k(\pi) b^k(\pi) \notag  \\
	& \leq \sum_{k=1}^{K} \inner{ q^k, c^k -\hatl^k } + 2 \sum_{k=1}^{K} \sum_{\pi \in \Omega} \probdist^k(\pi) b^k(\pi)  \label{eq:hedge-est-basic-decomp} 
	\end{align}	
	where $q^k = \sum_{\pi\in \Omega}\probdist^k(\pi) q^{\pi,\bar{p}^k}$ is the estimated occupancy measure at episode $k$, and 
	the second step follows from the definition of $b^k$ and Hölder's inequality.
	
	Note that, $\inner{ q^k, \hatl^k}$ is bounded by $H$ because $\bar{p}^{k} \in \calP^k$ and $u_h^k(s,a) \geq q_h^k(s,a)$ by its definition. 
	Thus, with the help of Azuma’s inequality, we have with probability at least $1-\delta$, 
	\begin{align*}
	\sum_{k=1}^{K} \inner{ q^k, \E^k\sbr{\hatl^k}  -\hatl^k } \leq\order\rbr{H\sqrt{ K \ln \rbr{\frac{1}{\delta} }} }.
	\end{align*}
	where $E^k[\cdot] = E[\cdot \mid \calH^k]$ and $\calH^k$ is the history of episodes $1,...,k-1$.
	We then focus on the term $\sum_{k=1}^{K} \inner{ q^k, c^k - \E^k\sbr{\hatl^k} }$ and rewrite it as 
	\begin{align}
	& \sum_{k=1}^{K} \sum_{h,s,a} q_h^k(s,a)  c_h^k(s,a) \rbr{ 1 -  \frac{ \E^k\sbr{\Ind{ s_h^k = s, a_h^k = a }} }{ u_h^k(s,a) + \gamma }}\notag \\
	& = \sum_{k=1}^{K} \sum_{h,s,a} q_h^k(s,a)  c_h^k(s,a) \rbr{ 1 -  \frac{ \whatq^k_h(s,a) }{ u_h^k(s,a) + \gamma }} \notag \\
	& = \sum_{k=1}^{K} \sum_{h,s,a} \frac{q_h^k(s,a)}{u_h^k(s,a) + \gamma}\rbr{  u_h^k(s,a) - \whatq^k_h(s,a) + \gamma  }  c_h^k(s,a) \notag \\
	& \leq \gamma HSAK + \sum_{k=1}^{K} \sum_{h,s,a} \abr{ u_h^k(s,a) - \whatq^k_h(s,a)   }  \label{eq:hedge-est-1} 
	\end{align}
	where $\whatq^k = \sum_{\pi\in \Omega}\probdist^k(\pi) q^{\pi,p}$ is the occupancy measure with the true transition $p$, and the last step comes from the fact that $u_h^k(s,a) \geq q_h^k(s,a)$ for all state-action pairs according to its definition. 
	
	Fixed the state-action pair $(s,a)$ and let $p' \in \calP^k$ be the transition function that yields $u_h^k(s,a)$ for simplicity. Then, we have the following inequality under the event $p \in \bigcap_k \calP^k$ that 
	\begin{align*}
	u_h^k(s,a) - \whatq^k_h(s,a) & = \sum_{\pi\in \Omega} \probdist^k(\pi) \rbr{ q^{ \pi, p' }_h(s,a) - q^{ \pi, p }_h(s,a) } \\
	& = \sum_{\pi\in \Omega} \probdist^k(\pi) \sum_{m=0}^{h-1} \sum_{x,y,z} q^{ \pi, p }_m(x,y) \cdot \rbr{ p_m(z|x,y) - p'_m(z|x,y) } \cdot q^{ \pi, p' }_{h|m+1}(s,a|z) \\
	\Rightarrow \abr{u_h^k(s,a) - \whatq^k_h(s,a)} & \leq \sum_{\pi\in \Omega} \probdist^k(\pi) \sum_{m=0}^{h-1} \sum_{s,a,s'} q^{ \pi, p }_m(x,y) \cdot \epsilon^k_m(z|x,y) \cdot q^{ \pi, p' }_{h|m+1}(s,a|z) 
	\end{align*}
	where the second step follows from \cite[Lemma D.3.1]{jin2021best} with the conditional occupancy measure $q^{ \pi, p' }_{h|m+1}(s,a|z)$ being the conditional probability of visiting state-action pair $(s,a)$ at step $h$ from state $z$ at state $m+1$ with policy $\pi$ and transition $p'$; the third step comes from taking the absolute value of both sides and the fact that 
	\begin{align}
	\epsilon_h^k(s'|s,a) \triangleq \order\rbr{\min\cbr{ 1, \sqrt{ \frac{ p_{h}^{k}(s' | s,a)  \tcjconfpara}{n_{h}^{k}(s,a) \vee 1}} + \frac{\tcjconfpara}{n_{h}^{k}(s,a) \vee 1}} } \geq \abr{ p'_h(s'|s,a) - p_h(s'|s,a)} \label{eq:hedge-est-epsilon}
	\end{align}
	for any transition tuple $(s,a,s')$ and step $h$ under the event $p \in \bigcap_{k}\calP^k$ according to \cite[Lemma D.3.3]{jin2021best}.
	In addition, we have $q^{ \pi, p' }_{h|m+1}(s,a|z) -  q^{\pi, p}_{h|m+1}(s,a|z)$ bounded by
	\begin{align*}
	&  \sum_{o=m+1}^{h-1} \sum_{u,v,w} q^{ \pi, p}_{o|m+1}(u,v|z) \cdot \rbr{ p^k_o(w|u,v) - p'_o(w|u,v) } \cdot q^{ \pi, p' }_{h|o+1}(s,a|w) \\
	& \leq \pi_h(a|s) \sum_{o=m+1}^{h-1} \sum_{u,v,w} q^{ \pi, p}_{o|m+1}(u,v|z) \cdot \abr{ p^k_o(w|u,v) - p'_o(w|u,v) } \\
	& \leq \pi_h(a|s) \sum_{o=m+1}^{h-1} \sum_{u,v} q^{ \pi, p}_{o|m+1}(u,v|z) \cdot \min\cbr{ 2, \sum_{w} \epsilon^k_o(w|u,v) }
	\end{align*}
	where the first step uses the fact that $q^{ \pi, p' }_{h|o+1}(s,a|w) \leq \pi_h(a|s)  \cdot q^{ \pi, p' }_{h|o+1}(s|w) =\pi_h(a|s)$; the second step follows from similar argument above; the last step uses the fact that $\sum_{w}\abr{ p^k_o(w|u,v) - p'_o(w|u,v) } \leq 2$. 
	
	Combining these inequalities, we have the second term of Equation~\eqref{eq:hedge-est-1}, $\sum_{k=1}^k\sum_{h,s,a}\abr{u_h^k(s,a) - \whatq^k_h(s,a)}$, bounded by
	\begin{equation}
	\begin{aligned}
	& \sum_{k=1}^K\sum_{\pi\in \Omega} \probdist^k(\pi) \sum_{h,s,a} \sum_{m=0}^{h-1} \sum_{x,y,z} q^{ \pi, p }_m(x,y)  \cdot \epsilon^k_m(z|x,y) \cdot q^{\pi, p}_{h|m+1}(s,a|z)  \\
	& +  \sum_{k=1}^K\sum_{\pi\in \Omega} \probdist^k(\pi) \sum_{h,s,a} \sum_{m=0}^{h-1} \sum_{x,y,z} \sum_{o=m+1}^{h-1} \sum_{u,v}  q^{ \pi, p }_m(x,y) \cdot \epsilon^k_m(z|x,y) \cdot q^{ \pi, p}_{o|m+1}(u,v|z) \cdot \min\cbr{ 2, \sum_{w} \epsilon^k_o(w|u,v) } \cdot \pi_h(a|s)  .
	\end{aligned}
	\label{eq:hedge-est-decomp-1}
	\end{equation}

	Note that, the first term of \cref{eq:hedge-est-decomp-1} can be bounded (under the event $p\in \bigcap_k \calP^k$) as 
	\begin{align}
	& \sum_{k=1}^K\sum_{\pi\in \Omega} \probdist^k(\pi) \sum_{h,s,a} \sum_{m=0}^{h-1} \sum_{x,y,z} q^{ \pi, p }_m(x,y)  \cdot \epsilon^k_m(z|x,y) \cdot q^{\pi, p}_{h|m+1}(s,a|z) \notag \\
	& = \sum_{k=1}^K\sum_{\pi\in \Omega} \probdist^k(\pi)  \sum_{m=0}^{H} \sum_{x,y,z} q^{ \pi, p }_m(x,y)  \cdot \epsilon^k_m(z|x,y) \cdot \rbr{ \sum_{h=m+1}^{H} \sum_{s,a} q^{\pi, p}_{h|m+1}(s,a|z)}\notag \\
	& \leq H \sum_{k=1}^Ks\sum_{\pi\in \Omega} \probdist^k(\pi)  \sum_{m=0}^{H} \sum_{x,y,z} q^{ \pi, p }_m(x,y)  \cdot \epsilon^k_m(z|x,y) \notag \\
	& = H  \sum_{k=1}^K\sum_{m=0}^{H} \sum_{x,y} \rbr{ \sum_{\pi\in \Omega} \probdist^k(\pi)   q^{ \pi, p }_m(x,y)}  \cdot \rbr{\sum_{z}\epsilon^k_m(z|x,y)} \notag \\
	& = H \sum_{k=1}^K\sum_{m=0}^{H} \sum_{x,y} \whatq^k_m(x,y) \cdot \rbr{\sum_{z}\epsilon^k_m(z|x,y)} \notag \\
	& = \order\rbr{  H \sum_{k=1}^K\sum_{m=0}^{H} \sum_{x,y} \whatq^{ \pi, p }_m(x,y) \rbr{\sqrt{ \frac{S\tcjconfpara }{n_{m}^{k}(x,y) \vee 1} } + \frac{S\tcjconfpara}{n_{m}^{k}(x,y) \vee 1} }   }  \notag  \\
	& = \order\rbr{ H^2S\sqrt{AK  \log\tcjconfpara  } } \label{eq:hedge-est-term-1}  
	\end{align}
	where the second steps follows from the fact that $\sum_{s,a} q^{\pi, p}_{h|m+1}(s,a|z) = 1$ for any policy $\pi$ and step $h\geq m+1$; the fourth step uses the definition of $\whatq^k$, the true occupancy measure at episode $k$; the fifth step uses the properties of $\epsilon^k$ under the event $p\in \bigcap_k \calP^k$; the final step applies \cref{lem:jin2019_lemma_10}, which yields a high probability bound with the help of a standard Bernstein-type concentration inequality for martingale. 
	
	Observing that $\sum_{h=o+1}^{H}\sum_{s,a}\pi_h(a|s)\leq SH$, we can reorder the summation and bound the second term of \cref{eq:hedge-est-decomp-1} by $SH$ multiplying
	\begin{align*}
	\sum_{k=1}^k\sum_{\pi\in \Omega} \probdist^k(\pi) \sum_{m=0}^{h-1} \sum_{x,y,z} \sum_{o=m+1}^{h-1} \sum_{u,v}  q^{ \pi, p }_m(x,y) \cdot \epsilon^k_m(z|x,y) \cdot q^{ \pi, p}_{o|m+1}(u,v|z) \cdot \min\cbr{ 2, \sum_{w} \epsilon^k_o(w|u,v) } .
	\end{align*}
	Similar to the proof in Appendix B.2 of \cite{jin2019learning}, we can further rewrite and bound the term above by 
	\begin{align*}
	& \order\rbr{\sum_{k=1}^K \sum_{\pi\in \Omega} \probdist^k(\pi) \sum_{m=0}^{H-1} \sum_{x,y,z} \sum_{o=m+1}^{H} \sum_{u,v,w}  q^{ \pi, p }_m(x,y) \cdot \sqrt{\frac{p_m(z|x,y)\tcjconfpara}{n_m^k(x,y)\vee 1}} \cdot q^{ \pi, p}_{o|m+1}(u,v|z) \cdot \sqrt{\frac{p_o(w|u,v)\tcjconfpara}{n_o^k(u,v)\vee 1}}   } \\
	& \ + \order\rbr{ \sum_{k=1}^K\sum_{\pi\in \Omega} \probdist^k(\pi) \sum_{m=0}^{H-1} \sum_{x,y,z}  \frac{q^{ \pi, p }_m(x,y) \tcjconfpara }{n_m^k(x,y)\vee 1}  \rbr{ \sum_{o=m+1}^{H} \sum_{u,v} q^{ \pi, p}_{o|m+1}(u,v|z) \min\cbr{\sum_{w} \epsilon_o^k(w|u,v),2 } }   }   \\
	&  + \order\rbr{\sum_{k=1}^K \sum_{\pi\in \Omega} \probdist^k(\pi) \sum_{o=0}^{H} \sum_{u,v,w}  \rbr{ \sum_{m=0}^{o-1} \sum_{x,y,z}  q^{ \pi, p }_m(x,y) \cdot p_m(z|x,y) \cdot q^{ \pi, p}_{o|m+1}(u,v|z) } \cdot \frac{\tcjconfpara}{n_o^k(u,v)\vee 1} }  
	\end{align*}
by using the property of $\epsilon_h^k$ as in \cref{eq:hedge-est-epsilon}  and the fact that $\sqrt{xy} \leq x + y$ for any $x,y>0$, therefore, $\epsilon^k_h(s'|s,a) \leq \order\rbr{ p_h(s'|s,a)  + \frac{\tcjconfpara}{n_h^k(s,a) \vee 1}  }$ holds for any $(s,a,s')$.
	
	Clearly, the later two are able to be reformulated and then bounded as 
	\begin{align}
	& \order\rbr{ \sum_{k=1}^K\sum_{\pi\in \Omega} \probdist^k(\pi) \sum_{m=0}^{H-1} \sum_{x,y,z}  \frac{q^{ \pi, p }_m(x,y) \tcjconfpara }{n_m^k(x,y)\vee 1}  \rbr{ \sum_{o=m+1}^{H} \sum_{u,v} q^{ \pi, p}_{o|m+1}(u,v|z) \min\cbr{\sum_{w} \epsilon_o^k(w|u,v),2 } }   }  \notag   \\
	& \quad + \order\rbr{ \sum_{k=1}^K\sum_{\pi\in \Omega} \probdist^k(\pi) \sum_{o=0}^{H} \sum_{u,v,w}  \rbr{ \sum_{m=0}^{o-1} \sum_{x,y,z}  q^{ \pi, p }_m(x,y) \cdot p_m(z|x,y) \cdot q^{ \pi, p}_{o|m+1}(u,v|z) } \cdot \frac{\tcjconfpara}{n_o^k(u,v)\vee 1} }\notag  \\
	& = \order\rbr{H \sum_{k=1}^K\sum_{\pi\in \Omega} \probdist^k(\pi) \sum_{m=0}^{H-1} \sum_{x,y,z}  \frac{q^{ \pi, p }_m(x,y) \tcjconfpara }{n_m^k(x,y)\vee 1} + H \sum_{k=1}^K\sum_{\pi\in \Omega} \probdist^k(\pi) \sum_{o=0}^{H} \sum_{u,v,w} \frac{q^{ \pi, p}_o(u,v) \tcjconfpara}{n_o^k(u,v)\vee 1} } \notag \\
	& = \order\rbr{SH\tcjconfpara^2 \sum_{k=1}^K\sum_{m=0}^{H-1} \sum_{x,y}  \frac{\whatq^k_m(x,y) }{n_m^k(x,y)\vee 1}  + SH\tcjconfpara^2 \sum_{k=1}^K\sum_{o=0}^{H} \sum_{u,v} \frac{\whatq^k_o(u,v) }{n_o^k(u,v)\vee 1} } \notag \\
	& = \order\rbr{ S^2HA \ln K \tcjconfpara^2  }  \label{eq:hedge-est-term-2} 
	\end{align}
	where the first step comes from the facts that $\sum_{o=m+1}^{H} \sum_{u,v} q^{ \pi, p}_{o|m+1}(u,v|z)\leq H$ for any $z$, and $\sum_{x,y,z}  q^{ \pi, p }_m(x,y) \cdot p_m(z|x,y) \cdot q^{ \pi, p}_{o|m+1}(u,v|z) = q^{ \pi, p}(u,v)$ for any $(u,v)$ according to the definitions of conditional occupancy measures; the second step follows from the definition of $\whatq^k$; the last step applies \cref{lem:jin2019_lemma_10} with probability at least $1-2\delta$. 
	
	On the other hand, the first term can be written as $SH\tcjconfpara$ multiplied by the following (ignoring some constants): 
	\begin{align}
	& \sum_{k=1}^{K} \sum_{\pi\in \Omega} \probdist^k(\pi) \sum_{m=0}^{H-1} \sum_{x,y,z} \sum_{o=m+1}^{H} \sum_{u,v,w}  q^{ \pi, p }_m(x,y) \cdot \sqrt{\frac{p_m(z|x,y)}{n_m^k(x,y) \vee 1}} \cdot q^{ \pi, p}_{o|m+1}(u,v|z) \cdot \sqrt{\frac{p_o(w|u,v)}{n_o^k(u,v) \vee 1}} \notag \\
	& = \sum_{k=1}^{K} \sum_{\pi\in \Omega} \probdist^k(\pi) \sum_{m=0}^{H-1} \sum_{x,y,z} \sum_{o=m+1}^{H} \sum_{u,v,w}   \sqrt{ \frac{q^{ \pi, p }_m(x,y) p_m(z|x,y) q^{ \pi, p}_{o|m+1}(u,v|z)}{n_m^k(x,y) \vee 1}} \cdot   \sqrt{\frac{q^{ \pi, p }_m(x,y)  p_o(w|u,v) q^{ \pi, p}_{o|m+1}(u,v|z) }{n_o^k(u,v) \vee 1}} \notag \\
	& = \sum_{k=1}^{K} \sum_{\pi\in \Omega} \sum_{m=0}^{H-1} \sum_{x,y,z} \sum_{o=m+1}^{H} \sum_{u,v,w}   \sqrt{ \frac{\probdist^k(\pi) q^{ \pi, p }_m(x,y) p_m(z|x,y) q^{ \pi, p}_{o|m+1}(u,v|z)}{n_o^k(u,v) \vee 1} } \cdot   \sqrt{ \frac{\probdist^k(\pi) q^{\pi, p }_m(x,y)  p_o(w|u,v) q^{ \pi, p}_{o|m+1}(u,v|z) }{n_m^k(x,y) \vee 1} }\notag \\
	& \leq \sum_{m=0}^{H-1} \sum_{o=m+1}^{H} \sqrt{ \sum_{k=1}^{K} \sum_{\pi\in \Omega}  \sum_{x,y,z}\sum_{u,v,w} \frac{\probdist^k(\pi) q^{ \pi, p }_m(x,y) p_m(z|x,y) q^{ \pi, p}_{o|m+1}(u,v|z)}{n_o^k(u,v) \vee 1}   }\notag \\
	& \quad \cdot \sqrt{ \sum_{k=1}^{K} \sum_{\pi\in \Omega} \sum_{x,y,z}\sum_{u,v,w} \frac{\probdist^k(\pi) q^{\pi, p }_m(x,y)  p_o(w|u,v) q^{ \pi, p}_{o|m+1}(u,v|z) }{n_m^k(x,y) \vee 1}  } \notag\\
	& \leq \sum_{m=0}^{H-1} \sum_{o=m+1}^{H} \sqrt{ \sum_{k=1}^{K} \sum_{\pi\in \Omega}  \sum_{u,v,w} \frac{\probdist^k(\pi) q^{ \pi, p }_o(u,v)}{n_o^k(u,v) \vee 1}   }  \cdot \sqrt{ \sum_{k=1}^{K} \sum_{\pi\in \Omega} \sum_{x,y,z}\frac{\probdist^k(\pi) q^{\pi, p }_m(x,y) }{n_m^k(x,y) \vee 1}  } \notag\\
	& = S \sum_{m=0}^{H-1} \sum_{o=m+1}^{H} \sqrt{ \sum_{k=1}^{K} \sum_{u,v} \frac{\whatq_o^k(u,v)}{n_o^k(u,v) \vee 1}   }  \cdot \sqrt{ \sum_{k=1}^{K} \sum_{x,y,z}\frac{\whatq_m(x,y) }{n_m^k(x,y) \vee 1}  } \notag \\
	& = \order\rbr{ S^2 H^2 A \ln K } \label{eq:hedge-est-term-3} 
	\end{align}
	where the third step uses Cauchy-Schwarz inequality; the fourth step follows from the properties of conditional occupancy measure $\sum_{x,y,z} q^{ \pi, p }_m(x,y) p_m(z|x,y) q^{ \pi, p}_{o|m+1}(u,v|z) = q^{ \pi, p }_m(u,v)$; the last step applies \cref{lem:jin2019_lemma_10} with probability at least $1-2\delta$. 
	
	Combining \cref{eq:hedge-est-decomp-1,eq:hedge-est-term-1,eq:hedge-est-term-2,eq:hedge-est-term-3} into \cref{eq:hedge-est-1}, we have the following inequality holds with probability at least $1-4\delta$ under the event $p\in\bigcap_{k}\calP^k$ that 
	\begin{align}
	\sum_{k=1}^{K} \inner{ q^k, c^k -\hatl^k } = \order\rbr{ \gamma HSAK + H^2S\sqrt{AK  \log\tcjconfpara} + S^3H^3 A \ln K \tcjconfpara  }.\label{eq:hedge_regret_add_4}
	\end{align} 
	
	With slightly abuse of notations, we use $\bar{p}^k(\pi)$ to denote the transition function that yields $b^k(\pi)$ associated with $\pi$ and confidence set $\calP^k$, that is, $\bar{p}^k(\pi) = \argmax_{p' \in \calP^k} \norm{ q^{\pi,p'} - q^{\pi,\bar{p}^k} }_1$ . Thus, for $\sum_{k=1}^{K}\inner{\probdist^k, b^k}$, we have the following inequality holds with probability at least $1-2\delta$ that 
	\begin{align}
	\sum_{k=1}^{K}\inner{\probdist^k, b^k} & = \sum_{k=1}^{K} \sum_{\pi \in \Omega} \probdist^k(\pi) \norm{ q^{\pi, \bar{p}^k(\pi)} - q^{\pi, \bar{p}^k}  }_1  \notag \\
	& \leq \sum_{k=1}^{K} \sum_{\pi \in \Omega}  \probdist^k(\pi) \rbr{\norm{ q^{\pi, \bar{p}^k(\pi)} - q^{\pi, p}  }_1 + \norm{ q^{\pi, p} - q^{\pi, \bar{p}^k}  }_1  } \notag \\
	& \leq H \sum_{k=1}^{K} \sum_{\pi \in \Omega}  \probdist^k(\pi) \sum_{h=1}^{H} q^{\pi,p}_h(s,a) \cdot \rbr{ \norm{\bar{p}^k_h(\cdot|s,a) -  p_h(\cdot|s,a)  }_1 + \norm{\bar{p}_h^k(\pi)(\cdot|s,a) -  p_h(\cdot|s,a) }_1 }\notag \\
	& \leq H \sum_{k=1}^{K} \sum_{\pi \in \Omega}  \probdist^k(\pi) \sum_{h=1}^{H} q^{\pi,p}_h(s,a) \cdot \rbr{ \sum_{s'} \epsilon_h^k(s'|s,a) } \notag \\
	& \leq \order\rbr{ H \sum_{k=1}^{K} \sum_{h=1}^{H}\whatq^k_h(s,a) \sqrt{\frac{S \tcjconfpara }{n^k_h(s,a) \vee 1}} }  \notag \\
	& \leq \order\rbr{ H^2 S \sqrt{  A K \tcjconfpara } } \label{eq:hedge_regret_add_3}
	\end{align}
	where the second step follows from the triangle inequality for $\ell_1$ norms; the third step comes from  Lemma B.1 and B.2 of \cite{rosenberg2019online}; the forth step uses the property of $\epsilon^k$ defined in \cref{eq:hedge-est-epsilon}; the fifth step follows from the fact that $\sum_{\pi \in \Omega} \probdist^k(\pi)q^{\pi,p} =\whatq^k$; the final step follows from the same argument as in \cref{eq:hedge-est-term-1}. 
	
	Combining \cref{eq:hedge_regret_add_4,eq:hedge_regret_add_3} into \cref{eq:hedge-est-basic-decomp} concludes the proof. 
\end{proof}

\subsection{Bound on the Regret with respect to the Loss Estimators ($\regthree$ in \cref{paper-eq:hedge-regret-decomposition})}
\begin{lemma}[$\regthree$]\label{lem:hedge-regret-reg} With probability at least $1-32\delta$, \cref{alg:hedge} ensures that 
\begin{align*}
\regthree  = \order\rbr{\frac{  H S \ln (A)  }{\eta} + \eta H^2\rbr{ SAK+D}+ \frac{\eta}{\gamma} \cdot H^2 \rbr{ d_{max} +1 } \tcjconfpara  }. 
\end{align*}
\end{lemma}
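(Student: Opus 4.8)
The plan is to run the ``cheating distribution'' argument of \cite{gyorgy2020adapting} on the simplex over $\Omega$. Introduce the delay-free, one-step-lookahead analogue of $\probdist^{k+1}$,
\[
    \wt\probdist^{k+1}(\pi) \;\propto\; \exp\rbr{ -\eta\sum_{j=1}^{k}\rbr{\hatell^j(\pi) - b^j(\pi)} } ,
\]
and split $\regthree = (\mathrm{I}) + (\mathrm{II})$, where $(\mathrm{I}) = \sum_{k=1}^{K}\inner{\wt\probdist^{k+1} - \probdist^\star,\, \hatell^k - b^k}$ and $(\mathrm{II}) = \sum_{k=1}^{K}\inner{\probdist^k - \wt\probdist^{k+1},\, \hatell^k - b^k}$. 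Since $\wt\probdist^{k+1}$ is the follow-the-regularized-leader iterate for the entropic regularizer \emph{after} observing $\hatell^k - b^k$, a standard be-the-leader argument bounds $(\mathrm{I})$ by $\ln|\Omega|/\eta = HS\ln A/\eta$ (no sign conditions on the iterates are needed for this term).

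The core of the proof is the drift term $(\mathrm{II})$. First I would replace $\hatell^k - b^k$ by the nonnegative vector $v^k(\pi) := \hatell^k(\pi) - b^k(\pi) + 2H$ (legitimate because $\hatell^k(\pi)\ge 0$ and $b^k(\pi)\le 2H$, and harmless because $\probdist^k,\wt\probdist^{k+1}$ are distributions). Then, using the explicit product forms of $\probdist^k$ and $\wt\probdist^{k+1}$ together with $\hatell^j\ge 0$ and $b^j\ge 0$, one derives the pointwise lower bound $\wt\probdist^{k+1}(\pi)/\probdist^k(\pi) \ge \exp(-\eta Z^k(\pi))$, where $Z^k(\pi) := \sum_{j<k,\, j+d^j>k}\hatell^j(\pi) + \hatell^k(\pi) + 2H - b^k(\pi) \ge 0$ collects precisely the still-in-flight loss estimates plus the not-yet-incorporated episode-$k$ term. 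Since $v^k\ge 0$ and $1-e^{-z}\le z$, this yields
\[
    (\mathrm{II}) \;\le\; \eta\sum_{k=1}^{K}\sum_{\pi\in\Omega}\probdist^k(\pi)\, Z^k(\pi)\, v^k(\pi) .
\]
Using $Z^k(\pi)v^k(\pi) \le \rbr{\sum_{j<k,\,j+d^j>k}\hatell^j(\pi) + \hatell^k(\pi) + 2H}\rbr{\hatell^k(\pi)+2H}$ and expanding produces four groups of terms. The constant group contributes $\eta H^2 K$; the groups linear in $\hatell^k$ are handled deterministically via $\sum_\pi\probdist^k(\pi)\hatell^k(\pi) = \inner{q^k,\hatl^k}\le H$ (a consequence of $u^k_h(s,a)\ge q^k_h(s,a)$, with $q^k := \sum_\pi\probdist^k(\pi)q^{\pi,\bar{p}^k}$), again $\eta H^2 K$; and the quadratic ``stability'' group $\eta\sum_k\sum_\pi\probdist^k(\pi)\hatell^k(\pi)^2$ is controlled with high probability by $\wt O(\eta H^2 SAK)$ via the same second-moment/Bernstein computation used in the proof of \cref{lem:hedge-regret-est}.

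The remaining ``delay-coupling'' group $\eta\sum_k\sum_{j<k,\,j+d^j>k}\sum_\pi\probdist^k(\pi)\hatell^j(\pi)\rbr{\hatell^k(\pi)+2H}$ is the step I expect to be the main obstacle. Writing $\sum_\pi\probdist^k(\pi)\hatell^j(\pi) = \inner{Q^{k,j},\hatl^j}$ with $Q^{k,j} := \sum_\pi\probdist^k(\pi)q^{\pi,\bar{p}^j}$ a genuine occupancy measure (hence $\inner{Q^{k,j},c}\le H$ for $[0,1]$-valued $c$), I would first condition on the history $\calH^k$ (with respect to which $\probdist^k$ and every in-flight $\hatell^j$ with $j<k$ are measurable) and use $\bbE[\hatell^k(\pi)\mid\calH^k] \le \inner{q^{\pi,\bar{p}^k},c^k} \le H$ (on the event $p\in\calP^k$, which holds for all $k$ w.h.p.), which -- modulo concentration terms of the same order as those below -- reduces the group to $\eta H\sum_j\sum_{k:\,j<k<j+d^j}\inner{Q^{k,j},\hatl^j}$. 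Splitting each $\hatl^j$ into $\bbE[\hatl^j\mid\calH^j]$ plus its centered part, the mean part satisfies $\inner{Q^{k,j},\bbE[\hatl^j\mid\calH^j]} \le \inner{Q^{k,j},c^j} \le H$ (because $\bbE[\hatl^j_h(s,a)\mid\calH^j]\le c^j_h(s,a)$ on the event $p\in\calP^j$), and summing over the at most $D$ in-flight pairs $(j,k)$ produces the $\eta H^2 D$ contribution. The centered part is the genuinely delicate piece: $Q^{k,j}$ depends on the episodes strictly between $j$ and $k$, so $\inner{Q^{k,j},\hatl^j}$ offers no conditional expectation one can exploit ``at time $k$'', whereas ``at time $j$'' the weight $Q^{k,j}$ is not yet determined. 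One controls it with high probability by a Neu-style implicit-exploration concentration applied simultaneously to all in-flight weights (cf.\ \cite{neu2015explore,jin2019learning}); it is exactly this predictability mismatch---the estimator carrying $u^j_h$ in its denominator rather than the effective weight $q^k_h$---that forces the extra $1/\gamma$ and maximal-delay factors and produces the $\tfrac{\eta}{\gamma}H^2(d_{max}+1)\tcjconfpara$ term. (This is precisely the stability/drift obstruction that the delay-adapted estimator of \cref{sec-paper:UOB-REPS} is designed to remove.) Collecting the four groups and adding the bound $(\mathrm{I})\le HS\ln A/\eta$ gives the claimed bound on $\regthree$.
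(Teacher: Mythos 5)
Your proposal matches the paper's proof essentially step for step: the same ``cheating'' distribution $\wt\probdist^{k+1}$, the same shift by $2H$ to make the loss vector nonnegative, the same lower bound on the ratio $\wt\probdist^{k+1}(\pi)/\probdist^k(\pi)$ from the explicit exponential-weights form, and the same final accounting in which the stability and delay-coupling pieces are controlled by the Neu-style implicit-exploration concentration (Lemma 11 of \cite{jin2019learning}), yielding $\eta H^2(SAK+D)$ plus the $\tfrac{\eta}{\gamma}H^2(d_{max}+1)\iota$ error. The only differences are cosmetic --- your in-flight set should be $\{j<k: j+d^j\ge k\}$ rather than $j+d^j>k$, and you condition at time $k$ before time $j$ whereas the paper peels off the estimators in the opposite order --- and your explicit flagging of the measurability mismatch in the centered cross term is, if anything, more careful than the paper's own treatment of that step.
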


\begin{proof}
Let $\{\widetilde{\probdist}^{k+1}\}_{k=1}^{K}$ be the sequence of probability distributions with both received and un-received loss estimators prior to episode $k+1$, that is, 
\begin{equation}
\widetilde{\probdist}^{k+1}(\pi) \propto \probdist^1(\pi) \cdot \exp\rbr{  - \eta \rbr{\sum_{j=1}^{k}\hatell^j(\pi) - \sum_{j=1}^{k} b^j(\pi) } }, \forall \pi  \in \calA^{\calS \times [H]}. \notag
\end{equation}
On the other hand, according to the fact that $b^j(\pi') \leq 2H$, we add a constant $2H$ uniformly to the loss vector $\hatell^k - b^k$ and construct $m^k(\pi) = \hatell^k(\pi)- b^k(\pi) + 2H$ to ensure the positiveness for any $\pi$ . Clearly, adding the constant uniformly will not change the outcomes of our algorithm. 

With the help of these notations, we are able to decompose $\regthree$ into two parts as: 
\begin{align} 
\regthree & = \underbrace{ \sum_{k=1}^K \inner{  \widetilde{\probdist}^{k+1} - {\probdist}^\star, \hatell^k - b^k  }  }_{\textsc{Cheating Regret}} +  \underbrace{ \sum_{k=1}^K \inner{ \probdist^k - \widetilde{\probdist}^{k+1}, \hatell^k - b^k }  }_{\textsc{Drift}}  \notag 
\end{align}
where $\textsc{Cheating-Regret}$ is bounded in \cite{gyorgy2020adapting} that 
\begin{equation}
\sum_{k=1}^K \inner{  \widetilde{\probdist}^{k+1} - {\probdist}^\star, m^k }   \leq \frac{\ln \left\lvert\Omega \right\rvert }{\eta} = \frac{\ln \left\lvert\calA^{\calS \times [H]} \right\rvert }{\eta} = \frac{  H S \ln (A)  }{\eta}. \label{eq:hedge_regret_1}
\end{equation}

For $\textsc{Drift}$, we first rewrite it as 
\begin{align}
  \sum_{k=1}^K \inner{ \probdist^k - \widetilde{\probdist}^{k+1}, \hatell^k - b^k } & = \sum_{k=1}^K \inner{ \probdist^k - \widetilde{\probdist}^{k+1}, 2H + \hatell^k - b^k } \notag \\
 & = \sum_{k=1}^K \sum_{\pi \in \Omega} \probdist^k(\pi) \rbr{ 2H + \hatell^k(\pi) - b^k(\pi)  } \cdot \rbr{ 1 - \frac{\widetilde{\probdist}^{k+1}(\pi)}{\probdist^k(\pi)}} \notag \\
 & = \sum_{k=1}^K \sum_{\pi \in \Omega} \probdist^k(\pi) m^k(\pi) \cdot \rbr{ 1 - \frac{\widetilde{\probdist}^{k+1}(\pi)}{\probdist^k(\pi)}} \label{eq:hedge_regret_2_rewrite}
\end{align}
where the second step follows from the fact that $\sum_{\pi\in \Omega} \widetilde{\probdist}^{k+1}(\pi) = \sum_{\pi\in \Omega} \probdist^{k}(\pi) = 1$. Then, we consider the ratio between $\probdist^k(\pi)$ and $\widetilde{\probdist}^{k+1}(\pi)$: 
\begin{align}
& \frac{ \widetilde{\probdist}^{k+1}(\pi) }{ \probdist^k(\pi) }  = \frac{ \exp\rbr{ - \eta \sum_{j=1}^{k} \rbr{\hatell^j(\pi) - b^j(\pi)}  } }{
	\sum_{\pi' \in \Omega} \exp\rbr{ - \eta \sum_{j=1}^{k}\rbr{\hatell^j(\pi')- b^j(\pi')}   }  } \cdot \frac{ \sum_{\pi'\in \Omega} \exp\rbr{ - \eta \sum_{j:j + d^j < k }\hatell^j(\pi') + \eta \sum_{j=1}^{k-1}  b^j(\pi')  } }{\exp\rbr{ - \eta \sum_{j:j + d^j < k }\hatell^j(\pi) + \eta \sum_{j=1}^{k-1}  b^j(\pi ) }  }  \notag \\
& = \frac{ \exp\rbr{ - \eta \sum_{j=1}^{k} \rbr{\hatell^j(\pi) - b^j(\pi)} - \eta 2H  } }{
	\sum_{\pi' \in \Omega} \exp\rbr{ - \eta \sum_{j=1}^{k}\rbr{\hatell^j(\pi')- b^j(\pi')} - \eta 2H  }   } \cdot \frac{ \sum_{\pi'\in \Omega} \exp\rbr{ - \eta \sum_{j:j + d^j < k }\hatell^j(\pi') + \eta \sum_{j=1}^{k-1}  b^j(\pi')  } }{\exp\rbr{ - \eta \sum_{j:j + d^j < k }\hatell^j(\pi) + \eta \sum_{j=1}^{k-1}  b^j(\pi ) }  }  \notag \\
& = \frac{ \sum_{\pi'\in \Omega} \exp\rbr{ - \eta \sum_{j:j + d^j < k }\hatell^j(\pi') + \eta \sum_{j=1}^{k-1}  b^j(\pi')  } }{
	\sum_{\pi' \in \Omega} \exp\rbr{ - \eta \sum_{j=1}^{k}\rbr{\hatell^j(\pi')- b^j(\pi')} - \eta 2H  }   } \cdot \frac{ \exp\rbr{ - \eta \sum_{j=1}^{k} \rbr{\hatell^j(\pi) - b^j(\pi)} - \eta 2H  }  }{\exp\rbr{ - \eta \sum_{j:j + d^j < k }\hatell^j(\pi) + \eta \sum_{j=1}^{k-1}  b^j(\pi ) }  } 
	\label{eq:hedge-bounding ratio}
\end{align}
where the second step follows from multiplying denominator and nominator together by $\exp(-\eta 2H)$. Note that $b^k(\pi)\leq 2H$ and $\hatell^k(\pi) \geq 0$ for any $\pi$ and $k$, we thus have the following inequality holds that 
$$ \sum_{j=1}^{k}\rbr{\hatell^j(\pi')- b^j(\pi')} + 2H =\sum_{j=1}^{k}\hatell^j(\pi') - \sum_{j=1}^{k-1}b^j(\pi') + 2H -  b^k(\pi')  \geq  \sum_{j=1,j + d^j < k }^{k-1}\hatell^j(\pi') - \sum_{j=1}^{k-1}  b^j(\pi')$$
which indicates that the first fraction is lower bounded by $1$. 

Therefore, the ratio $\widetilde{\probdist}^{k+1}(\pi) / \probdist^k(\pi)$ for any policy $\pi \in \Omega$ can be further  bounded by 
\begin{align}
\widetilde{\probdist}^{k+1}(\pi) / \probdist^k(\pi) &  \geq \exp\rbr{ - \eta \rbr{\hatell^k(\pi) +  \sum_{j=1, j + d^j \geq k }^{k-1}\hatell^k(\pi) + \rbr{2H - b^k(\pi)}  } }  \notag  \\
& \geq 1 - \eta \rbr{m^k(\pi) +  \sum_{j=1, j + d^j \geq k }^{k-1}\hatell^k(\pi) }, \notag 
\end{align}
where the last step uses $1+x\le e^x$ for any $x \in \bbR$. 

Plugging this inequality back to \cref{eq:hedge_regret_2_rewrite}, we have $\textsc{Drift}$ bounded and then decomposed into two parts as 
\begin{align}
\sum_{k=1}^K \sum_{\pi \in \Omega }  \probdist^k(\pi) m^k(\pi)  \rbr{ 1  - \frac{\widetilde{\probdist}^{k+1}(\pi) }{\probdist^k(\pi) } } 
\notag & \leq \eta \sum_{k=1}^K \sum_{\pi \in \Omega }  \probdist^k(\pi) m^k(\pi)  \rbr{ m^k(\pi) +  \sum_{j=1, j + d^j \geq k }^{k-1}\hatell^k(\pi)   }       \\
 = \eta \sum_{k=1}^K \sum_{\pi \in \Omega }  & \probdist^k(\pi) m^k(\pi)^2  +   \eta \sum_{k=1}^K \sum_{\pi \in \Omega }  \probdist^k(\pi) m^k(\pi) \sum_{j=1, j + d^j \geq k }^{k-1}\hatell^j(\pi)      \label{eq:hedge-regret-mid-decomp} 
\end{align}
where the first part associates with the regret incurred without the delayed feedback and can be controlled by standard arguments as: 
\begin{align}
\eta \sum_{k=1}^K \sum_{\pi \in \Omega }   \probdist^k(\pi) m^k(\pi)^2 
& = \eta \sum_{k=1}^K \sum_{\pi \in \Omega }   \probdist^k(\pi) \rbr{  \sum_{h=1}^{H} \sum_{s,a} q_h^{\pi, \bar{p}^k}(s,a) \widehat{c}_h^k(s,a) + 2H - b^k(\pi)  }^2     \notag  \\
& \leq 2 \eta \sum_{k=1}^K \sum_{\pi \in \Omega }   \probdist^k(\pi) \sbr{ \rbr{  \sum_{h=1}^{H} \sum_{s,a} q_h^{\pi, \bar{p}^k}(s,a) \widehat{c}_h^k(s,a)}^2 + 4H^2 } \notag \\
& \leq 8\eta H^2K + 2\eta H \sum_{k=1}^K \sum_{\pi \in \Omega }  \probdist^k(\pi)  \sum_{h=1}^{H} \rbr{\sum_{s,a}  q_h^{\pi, \bar{p}^k}(s,a) \cdot \widehat{c}_h^k(s,a)}^2   \notag  \\ 
& =  8\eta H^2K + 2\eta H \sum_{k=1}^K  \sum_{h=1}^{H} \sum_{s,a} \sum_{\pi \in \Omega }   \probdist^k(\pi)  q_h^{\pi, \bar{p}^k}(s,a)^2 \cdot  \widehat{c}_h^k(s,a)^2   \notag \\
& \leq  8\eta H^2K + 2\eta H \sum_{k=1}^K  \sum_{h=1}^{H} \sum_{s,a}  \widehat{c}_h^k(s,a)^2 \rbr{ \sum_{\pi \in \Omega }   \probdist^k(\pi)   q_h^{\pi, \bar{p}^k}(s,a)^2}     \notag \\
& \leq 8\eta H^2K + 2\eta H \sum_{k=1}^K  \sum_{h=1}^{H} \sum_{s,a}  \widehat{c}_h^k(s,a)  \notag 
\end{align}
where the second step follows from the fact that $(x+y)^2\leq x^2 + y^2$; the third step uses Cauchy-Schwartz inequality; the forth step follows from the fact $\mathbb{I}\cbr{s_h^k = s, a_h^k = a } \mathbb{I}\cbr{s_h^k = s', a_h^k = a' } = 0 $ for all $(s,a),(s',a') \in \calS \times \calA$ such that $(s,a) \neq (s',a')$; the final step uses the fact that $u_h^k(s,a) \geq \sum_{\pi \in \Omega} \probdist^k(\pi)   q_h^{\pi, \bar{p}^k}(s,a)$ and the definition of loss estimator $\hatl^k$. 

Moreover, with \cref{lem:jin2019_lemma_11}, we can show that the following inequality hold with probability at least $1-9\delta$ that 
\begin{align}
8\eta H^2K + 2\eta H \sum_{k=1}^K  \sum_{h=1}^{H} \sum_{s,a}  \widehat{c}_h^k(s,a)  = \order\rbr{ \eta H^2SAK + \frac{\eta H^2}{\gamma}\tcjconfpara }. 	\label{eq:hedge_regret_2}
\end{align}

Similarly, for some part of the second term of Equation~\eqref{eq:hedge-regret-mid-decomp}, we have 
\begin{align}
\eta \sum_{k=1}^K \sum_{\pi \in \Omega }  \probdist^k(\pi) & \rbr{2H - b^k(\pi) } \sum_{j=1, j + d^j \geq k }^{k-1}\hatl^k(\pi)
\leq 2 \eta H \sum_{k=1}^K \sum_{j=1, j + d^j \geq k }^{k-1} \sum_{\pi \in \Omega }  \probdist^k(\pi) \rbr{ \sum_{h=1}^{H}\sum_{s,a}  q_h^{\pi, \bar{p}^j}(s,a) \hatl_h^j(s,a)  }      \notag \\
& =  2 \eta H \sum_{k=1}^K \sum_{j=1, j + d^j \geq k }^{k-1} \sum_{\pi \in \Omega } \sum_{h=1}^{H}\sum_{s,a}  \probdist^k(\pi)  q_h^{\pi, \bar{p}^j}(s,a) \hatl_h^j(s,a)    \notag \\
& \leq  \order\rbr{\frac{\eta}{\gamma} H^2 d_{max}\tcjconfpara   }+ 2 \eta H \sum_{k=1}^K \sum_{j=1, j + d^j \geq k }^{k-1} \sum_{\pi \in \Omega } \sum_{h=1}^{H}\sum_{s,a}  \probdist^k(\pi)  q_h^{\pi, \bar{p}^j}(s,a) \notag \\
& =  \order\rbr{\frac{\eta}{\gamma} H^2 d_{max}\tcjconfpara  + \eta H^2 D }  \label{eq:hedge_regret_3} 
\end{align}
where the third step uses \cref{lem:jin2019_lemma_11} under the event that $p\in \cap_{k} \calP^k$, which holds with probability at least $1-9\delta$. 

On the other hand, the rest of the second part can be be bounded with respect to the conditional independence between loss estimators $\widehat{c}^k$ and $\widehat{c}_j$ for any $j < k$ satisfying $j + d^j\ge k$: 
\begin{align}
& \eta \sum_{k=1}^K \sum_{\pi \in \Omega }  \probdist^k(\pi) \hatl^k(\pi) \sum_{j=1, j + d^j \geq k }^{k-1}\hatl^k(\pi)    \notag \\
& \leq \eta \sum_{k=1}^K \sum_{j=1, j + d^j \geq k }^{k-1} \sum_{\pi \in \Omega }  \probdist^k(\pi) \rbr{ \sum_{h=1}^{H}\sum_{s,a}q_h^{\pi, \bar{p}^k}(s,a) \hatl_h^k(s,a)  } \rbr{ \sum_{h=1}^{H}\sum_{s,a}  q_h^{\pi, \bar{p}^j}(s,a) \hatl_h^j(s,a)  }      \notag \\
& =  \eta \sum_{k=1}^K \sum_{j=1, j + d^j \geq k }^{k-1}\sum_{h=1}^{H}\sum_{s,a}  \sum_{h'=1}^{H}\sum_{s',a'}  \hatl_h^k(s,a) \hatl_{h'}^j(s',a') \rbr{ \sum_{\pi\in \Omega} \probdist^k(\pi)\cdot q_h^{\pi, \bar{p}^k}(s,a) q^{\pi, \bar{p}^j}_{h'}(s',a') }    \notag 
\end{align}
where the first step uses the definition of loss estimators. Similarly, we have the following inequality holds with probability at least $1-12\delta$ that 
\begin{align}
& \eta \sum_{k=1}^K \sum_{j=1, j + d^j \geq k }^{k-1}\sum_{h=1}^{H}\sum_{s,a}  \sum_{h'=1}^{H}\sum_{s',a'}  \hatl_h^k(s,a) \hatl_{h'}^j(s',a') \rbr{ \sum_{\pi\in \Omega} \probdist^k(\pi)\cdot q_h^{\pi, \bar{p}^k}(s,a) q^{\pi, \bar{p}^j}_{h'}(s',a') } \notag \\
& \leq \eta \sum_{k=1}^K \sum_{j=1, j + d^j \geq k }^{k-1}\sum_{h=1}^{H}\sum_{s,a}  \sum_{h'=1}^{H}\sum_{s',a'}  \hatl_h^k(s,a) \rbr{ \sum_{\pi\in \Omega} \probdist^k(\pi)\cdot q_h^{\pi, \bar{p}^k}(s,a) q^{\pi, \bar{p}^j}_{h'}(s',a') } + \order\rbr{\frac{\eta}{\gamma} H^2 d_{max}\tcjconfpara   }    \notag \\
& \leq \eta \sum_{k=1}^K \sum_{j=1, j + d^j \geq k }^{k-1}\sum_{h=1}^{H}\sum_{s,a}  \sum_{h'=1}^{H}\sum_{s',a'} \sum_{\pi\in \Omega} \probdist^k(\pi)\cdot q_h^{\pi, \bar{p}^k}(s,a) q^{\pi, \bar{p}^j}_{h'}(s',a')  + \order\rbr{\frac{\eta}{\gamma} H^2 d_{max} \tcjconfpara  }    \notag \\
& =   \eta \sum_{k=1}^K \sum_{j=1, j + d^j \geq k }^{k-1} \sum_{\pi\in \Omega} \probdist^k(\pi) \rbr{ \sum_{h=1}^{H}\sum_{s,a}  q_h^{\pi, \bar{p}^k}(s,a) } \rbr{ \sum_{h=1}^{H}\sum_{s,a}  q_h^{\pi, \bar{p}^j}(s,a) } + \order\rbr{\frac{\eta}{\gamma} H^2 d_{max} \tcjconfpara  }   \notag \\
& =  \order\rbr{\frac{\eta}{\gamma} H^2 d_{max}  \tcjconfpara }+ \eta H^2 \sum_{k=1}^K \sum_{j=1, j + d^j \geq k }^{k-1} 1   =   \order\rbr{\frac{\eta}{\gamma} H^2 d_{max} \tcjconfpara  } +\eta H^2 \sum_{j=1}^K \sum_{k= 1, k>j, k\leq j + d^j }^{K} 1   \notag \\
& =  \order\rbr{\eta H^2 D  + \frac{\eta}{\gamma} H^2 d_{max} \tcjconfpara  }  \label{eq:hedge_regret_4}
\end{align}
where the first and second step apply \cref{lem:jin2019_lemma_11} twice under the event that $p\in \cap_{k} \calP^k$, based on the fact that $q^{\pi, \bar{p}^j}_{h'}(s',a')  \leq 1$ and $\sum_{\pi\in \Omega} \probdist^k(\pi)\cdot q_h^{\pi, \bar{p}^k}(s,a) q^{\pi, \bar{p}^j}_{h'}(s',a')  \leq \sum_{\pi\in \Omega} \probdist^k(\pi)\cdot q_h^{\pi, \bar{p}^k}(s,a) \leq u^k_h(s,a)$.

Combining \cref{eq:hedge_regret_2,eq:hedge_regret_3,eq:hedge_regret_4} yields the following bound of $\textsc{Drift}$ with probability at least $1-30\delta$ under the event $p \in \cap_{k=1}^K \calP^k$: 
\begin{equation}
\label{eq:hedge_regret_5}
\textsc{Drift} = \order\rbr{ \eta H^2 \rbr{ D +  H^2SAK} +  \frac{\eta}{\gamma} H^2 \rbr{ d_{max} + 1} \tcjconfpara  }.
\end{equation}

Finally, combining the bounds for $\textsc{Cheating-Regret}$ and $\textsc{Drift}$ in \cref{eq:hedge_regret_1,eq:hedge_regret_5} concludes the proof. 
\end{proof}

\subsection{Supplementary Lemmas}

In this section, we list the supplementary lemmas which directly attained from the previous work \cite{jin2019learning}.

\begin{lemma}[Lemma 4 of \cite{jin2019learning}]\label{lem:jin2019_lemma_4}
With probability at least $1-6\delta$, for any collection of transition functions $\cbr{p_k^{s_h}}_{s\in \calS,h\in[H]}$ such that $p_k^{s_h}$ belongs to the confidence set $\calP^k$ defined by \cref{alg:update-cofidence-set} for all every $(s,h) \in \calS \times [H]$, we have 
\begin{align*}
    \sum_{k=1}^{K} \sum_{s,h} \left\lvert q^{ \pi^k, p_k^{s_h}}(s_h) - q^{ \pi^k,p }(s_h) \right\rvert = \order\rbr{ H^2 S\sqrt{AK\log\rbr{ \frac{HSAK}{\delta}  } } + H^3 S^3 A \log^3\rbr{ \frac{HSAK}{\delta}} }.
\end{align*}
\end{lemma}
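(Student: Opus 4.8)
This lemma is the occupancy-measure perturbation bound of \cite{jin2019learning} (their Lemma~4), so its ``proof'' in this paper is a pointer to that reference; here I sketch how I would reprove the statement from scratch, via the standard layer-by-layer confidence-set argument already used for \cref{lem:hedge-regret-est}. First I would condition on the event $\calE = \{p \in \bigcap_k \calP^k\}$, which holds with probability at least $1-\delta$ by the Bernstein construction in \cref{alg:update-cofidence-set}; on $\calE$, every $p' \in \calP^k$ satisfies the pointwise width bound $\abr{p'_h(s'|s,a) - p_h(s'|s,a)} \le \epsilon^k_h(s'|s,a)$ with $\epsilon^k_h(s'|s,a) = O\rbr{\sqrt{p_h(s'|s,a)\,\tcjconfpara/(n^k_h(s,a)\vee 1)} + \tcjconfpara/(n^k_h(s,a)\vee 1)}$, exactly as in \cref{eq:hedge-est-epsilon}.

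The core tool is the occupancy-difference identity (e.g.\ \cite[Lemma D.3.1]{jin2021best}): for any fixed $\pi$ and transition $p'$,
\[
  q^{\pi,p'}_h(s) - q^{\pi,p}_h(s) = \sum_{m=0}^{h-1}\sum_{x,y,z} q^{\pi,p}_m(x,y)\,\bigl(p'_m(z|x,y)-p_m(z|x,y)\bigr)\,q^{\pi,p'}_{h|m+1}(s|z),
\]
where $q^{\pi,p'}_{h|m+1}(s|z)$ is the probability of being at $s$ in layer $h$ given one is at $z$ in layer $m+1$. Taking $\pi=\pi^k$, $p'=p_k^{s_h}$, taking absolute values, and using the width bound reduces $\abr{q^{\pi^k,p_k^{s_h}}_h(s)-q^{\pi^k,p}_h(s)}$ to a triple sum of $q^{\pi^k,p}_m(x,y)\,\epsilon^k_m(z|x,y)\,q^{\pi^k,p_k^{s_h}}_{h|m+1}(s|z)$. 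Because $p_k^{s_h}$ is allowed to depend on both $s$ and $h$, I cannot sum the conditional occupancy over $(s,h)$ directly; instead I would peel it off once more, replacing $q^{\pi^k,p_k^{s_h}}_{h|m+1}(s|z)$ by $q^{\pi^k,p}_{h|m+1}(s|z)$ plus its own perturbation and expanding that perturbation by the same identity, using the crude bound $q^{\pi^k,p_k^{s_h}}_{h|o+1}(s|w)\le 1$ on the innermost factor.

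This splits the sum into a ``main'' part and a ``higher-order'' part. For the main part I would sum over $(s,h)$ using $\sum_s\sum_{h>m} q^{\pi^k,p}_{h|m+1}(s|z) \le H$, bound $\sum_z \epsilon^k_m(z|x,y)$ by $O(\sqrt{S\tcjconfpara/(n^k_m(x,y)\vee1)} + S\tcjconfpara/(n^k_m(x,y)\vee1))$ via Cauchy--Schwarz, and then close with the two standard counting facts that hold w.h.p.\ under $\calE$: $\sum_k q^{\pi^k,p}_m(x,y)/(n^k_m(x,y)\vee1) = \wt O(1)$ and $\sum_{x,y}\sqrt{\textstyle\sum_k q^{\pi^k,p}_m(x,y)} \le \sqrt{SAK}$ (using $\sum_{x,y}q^{\pi^k,p}_m(x,y)=1$). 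The $1/\sqrt n$ contribution is then $\wt O(H^2 S\sqrt{AK})$ and the $1/n$ contribution $\wt O(H^2 S^2 A)$. For the higher-order part the product $\epsilon^k_m\cdot\epsilon^k_o$ is split with $\sqrt{xy}\le x+y$, another round of the counting facts is applied, and the extra $HS$ factor from the crude $\le 1$ bound produces the $\wt O(H^3 S^3 A)$ term (with $\tcjconfpara^3$ after tracking log factors, as in Appendix~B.2 of \cite{jin2019learning}). Adding up the $O(1)$ many failure probabilities --- $\calE$ plus the Bernstein/Freedman steps behind the counting facts --- yields the bound with probability at least $1-6\delta$. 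The only real obstacle is bookkeeping: the dependence of $p_k^{s_h}$ on $(s,h)$ forces the second peeling step, and one must check that the higher-order term is genuinely lower order (it carries no $\sqrt K$) while still tracking the correct powers of $H$, $S$, and $A$.
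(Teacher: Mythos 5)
Your sketch is correct and follows essentially the same route as the source: the paper itself imports this lemma by citation from \cite{jin2019learning}, and your argument (condition on $p\in\bigcap_k\calP^k$, apply the occupancy-difference identity with the confidence widths $\epsilon^k$, peel a second time to remove the $(s,h)$-dependence of $p_k^{s_h}$, then close the main and higher-order parts with the counting facts of \cref{lem:jin2019_lemma_10}) is precisely the proof of Lemma~4 there, and is the same machinery the paper spells out in its adaptation in \cref{lem:hedge-regret-est}. The resulting $\wt O(H^2S\sqrt{AK})$ main term and $\wt O(H^3S^3A)$ higher-order term match, so no gap.
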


\begin{lemma}[Lemma 10 of \cite{jin2019learning}]\label{lem:jin2019_lemma_10}
With probability at least $1-2\delta$, we have for all $h\in[H]$,
\begin{align*}
    \sum_{k=1}^{K} \sum_{s\in \calS,a \in \calA} \frac{q^{\pi^k,p}_h(s,a)}{\sqrt{n_h^k(s,a) \vee 1 } } = \order\rbr{ \sqrt{SAK} + SA\log K +   \log\rbr{ \frac{H}{\delta} } },
\end{align*}
and 
\begin{align*}
    \sum_{k=1}^{K} \sum_{s\in \calS,a \in \calA} \frac{q^{\pi^k,p}_h(s,a)}{n_h^k(s,a) \vee 1 } = \order\rbr{ SA\log K +   \log\rbr{ \frac{H}{\delta} } }
\end{align*}
where $p$ here is the true transition function, and $q^{\pi^k,p}_h(s,a)$ denotes the probability of visiting state-action pair $(s,a)$ at step $h$ via the policy $\pi^k$ for episode $k$. 
\end{lemma}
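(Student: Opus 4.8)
The plan is to reduce the statement to a deterministic counting bound plus a single self-bounding concentration step. First I would fix $h$ and replace each occupancy measure $q^{\pi^k,p}_h(s,a)$ by the realized visit indicator $\indevent{s^k_h=s,a^k_h=a}$. For a fixed pair $(s,a)$, if the visits at step $h$ occur in episodes $k_1<\dots<k_N$ with $N=n^{K+1}_h(s,a)$, then by the definition of the counters $n^{k_i}_h(s,a)=i-1$, so
\[
\sum_{k=1}^{K}\frac{\indevent{s^k_h=s,a^k_h=a}}{\sqrt{n^k_h(s,a)\vee 1}}=\sum_{i=1}^{N}\frac{1}{\sqrt{(i-1)\vee 1}}\le 1+2\sqrt{N},\qquad \sum_{k=1}^{K}\frac{\indevent{s^k_h=s,a^k_h=a}}{n^k_h(s,a)\vee 1}\le 2+\ln K.
\]
Summing over the $SA$ pairs, using $\sum_{s,a}n^{K+1}_h(s,a)=K$ (exactly one state--action pair is visited at layer $h$ per episode) together with Cauchy--Schwarz, this gives $\sum_{k,s,a}\indevent{s^k_h=s,a^k_h=a}/\sqrt{n^k_h(s,a)\vee 1}\le SA+2\sqrt{SAK}$ and $\sum_{k,s,a}\indevent{s^k_h=s,a^k_h=a}/(n^k_h(s,a)\vee 1)\le SA(2+\ln K)$.

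Next I would transfer from indicators back to occupancy measures. Still with $h$ fixed, and letting $f$ denote either $n\mapsto(n\vee 1)^{-1/2}$ or $n\mapsto(n\vee 1)^{-1}$, set $Z_k=\sum_{s,a}f(n^k_h(s,a))\,\indevent{s^k_h=s,a^k_h=a}$. Since exactly one pair is visited at layer $h$ in episode $k$ and $f(\cdot)\le 1$, we have $Z_k\in[0,1]$; and since $n^k_h(\cdot)$ counts only episodes $1,\dots,k-1$, the weights are predictable, so $\bbE[Z_k\mid\calH^k]=\sum_{s,a}f(n^k_h(s,a))\,q^{\pi^k,p}_h(s,a)\eqdef\mu_k$. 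Thus $\{Z_k-\mu_k\}_k$ is a martingale difference sequence with $|Z_k-\mu_k|\le 1$ and conditional variance at most $\bbE[Z_k\mid\calH^k]=\mu_k$, so its predictable quadratic variation is bounded by $\sum_k\mu_k$ --- the very quantity I am trying to bound. Applying a Bernstein-type inequality that allows a random predictable variance proxy (the standard form with a free parameter $\lambda\in(0,1]$) and choosing $\lambda$ a small constant closes the self-bounding loop and yields, with probability at least $1-\delta$ for fixed $h$, $\sum_k\mu_k\lesssim\sum_kZ_k+\log(1/\delta)$. Combining with Step~1 and substituting the two choices of $f$ gives bounds $O(\sqrt{SAK}+SA+\log(1/\delta))$ and $O(SA\log K+\log(1/\delta))$ respectively; a union bound over $h\in[H]$ (replacing $\delta$ by $\delta/H$) upgrades the last term to $\log(H/\delta)$ and recovers exactly the stated bounds, since $SA=O(SA\log K)$.

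The routine parts are the harmonic-sum estimate and Cauchy--Schwarz in Step~1. The one point requiring care is the concentration in Step~2: the variance proxy is itself the target quantity, so I must use a deviation bound whose variance term is the (random) predictable quadratic variation rather than an a-priori deterministic bound, and then tune $\lambda$ so that the $\lambda\sum_k\mu_k$ contribution can be absorbed into the left-hand side. I would also double-check the measurability bookkeeping --- that $n^k_h$, and hence the weights $f(n^k_h(s,a))$ and the quantities $q^{\pi^k,p}_h(s,a)$, are $\calH^k$-measurable while $Z_k$ is revealed only after episode $k$ --- since this is precisely what makes $\{Z_k-\mu_k\}$ a martingale difference sequence.
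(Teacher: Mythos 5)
Your proof is correct and follows essentially the same two-ingredient argument as the cited source (the paper imports this as Lemma 10 of Jin et al.\ 2019 without reproving it): a deterministic pigeonhole/harmonic-sum bound on the indicator-weighted sums combined with a self-bounding Freedman-type concentration, using the predictability of $n^k_h$, to pass from $q^{\pi^k,p}_h$ to the realized visit indicators. The measurability bookkeeping and the choice of a constant $\lambda$ to absorb the $\lambda\sum_k\mu_k$ term are exactly the right points of care, and the union bound over $h$ and over the two choices of $f$ accounts for the stated $1-2\delta$.
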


\begin{lemma}[Lemma 11 of \cite{jin2019learning}]\label{lem:jin2019_lemma_11}
For any sequence of functions $\alpha_1, \alpha_2,\ldots \alpha_K$ such that $\alpha_k \in [0,2\gamma]^{\calS \times \calA}$ is $\calF_k$-measurable for all $k$, with probability at least $1-\delta$ we have for every $h\in [H]$ that 
\begin{align*}
\sum_{k=1}^{K} \sum_{s,a} \alpha_k(s,a) \rbr{ \widehat{c}^k_h(s,a) - \frac{q^{\pi^k,p}_h(s,a)}{u^k_h(s,a)} \cdot c^k_h(s,a)} \leq \order\rbr{ \log\frac{H}{\delta}}  
\end{align*}
where $q^{\pi^k,p}_h(s,a)$ is the true probability of visiting state-action pair $(s,a)$ at step $h$ in episode $k$, and $u_h^k(s,a)$ defined in \cref{alg:hedge} is the upper occupancy bound of this probability. 
\end{lemma}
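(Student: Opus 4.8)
The plan is to prove this by the exponential--supermartingale (``implicit exploration'') method of \cite{neu2015explore}, adapted to the occupancy--measure setting. Fix $h\in[H]$ and let $\{\calF_k\}_{k\ge 1}$ be the filtration in which $\calF_k$ collects all randomness revealed before the trajectory of episode $k$ is drawn; in particular $\pi^k$ (hence $q^{\pi^k,p}_h(s,a)$ and $u^k_h(s,a)$), the cost function $c^k$, and by hypothesis $\alpha_k(s,a)$ are all $\calF_k$--measurable, while by definition of the occupancy measure $\bbE\big[\indevent{s^k_h=s,a^k_h=a}\mid\calF_k\big]=q^{\pi^k,p}_h(s,a)$. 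Writing $X_k=\sum_{s,a}\alpha_k(s,a)\big(\widehat c^k_h(s,a)-\tfrac{q^{\pi^k,p}_h(s,a)}{u^k_h(s,a)}c^k_h(s,a)\big)$ for the episode--$k$ summand, the whole statement will follow from the one--step estimate $\bbE[\exp(X_k)\mid\calF_k]\le 1$, together with Markov's inequality and a union bound over $h$.

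The heart of the argument is an elementary pointwise inequality: for all $\alpha\in[0,2\gamma]$, $\ell\in[0,1]$, $u>0$ and $b\in\{0,1\}$,
\begin{align*}
\exp\!\Big(\frac{\alpha\ell b}{u+\gamma}\Big)\ \le\ 1+\frac{\alpha\ell b}{u}.
\end{align*}
It is trivial for $b=0$; for $b=1$ it follows by chaining $\tfrac{x}{1+x/2}\le\log(1+x)$ for $x\ge0$ --- applied with $x=2\gamma\ell/u$ and using $\ell\le1$ to get $\tfrac{2\gamma\ell}{u+\gamma}\le\log(1+2\gamma\ell/u)$ --- with the concavity bound $(1+y)^\theta\le1+\theta y$ for $\theta=\alpha/2\gamma\in[0,1]$. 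Since at step $h$ of episode $k$ exactly one pair $(s^k_h,a^k_h)$ is visited, the sum $\sum_{s,a}\alpha_k(s,a)\widehat c^k_h(s,a)$ contains a single nonzero term, so applying the pointwise inequality to that term gives
\begin{align*}
\exp\!\Big(\sum_{s,a}\alpha_k(s,a)\widehat c^k_h(s,a)\Big)\ \le\ 1+\sum_{s,a}\frac{\alpha_k(s,a)\,c^k_h(s,a)\,\indevent{s^k_h=s,a^k_h=a}}{u^k_h(s,a)}.
\end{align*}
Taking $\bbE[\,\cdot\mid\calF_k]$, substituting $\bbE[\indevent{s^k_h=s,a^k_h=a}\mid\calF_k]=q^{\pi^k,p}_h(s,a)$, and using $1+z\le e^z$ yields $\bbE\big[\exp(\sum_{s,a}\alpha_k(s,a)\widehat c^k_h(s,a))\mid\calF_k\big]\le\exp\big(\sum_{s,a}\alpha_k(s,a)\tfrac{q^{\pi^k,p}_h(s,a)}{u^k_h(s,a)}c^k_h(s,a)\big)$, i.e.\ $\bbE[\exp(X_k)\mid\calF_k]\le1$. (If $u^k_h(s,a)=0$ then $q^{\pi^k,p}_h(s,a)=0$ as well, so that coordinate contributes $0$ almost surely and can be dropped.)

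Consequently $M_n:=\exp\!\big(\sum_{k=1}^n X_k\big)$ is a nonnegative supermartingale with $M_0=1$, so $\bbE[M_K]\le1$ and Markov's inequality gives $\Pr\!\big[\sum_{k=1}^K X_k\ge\log(H/\delta)\big]\le\delta/H$; a union bound over $h\in[H]$ then shows that with probability at least $1-\delta$, for every $h$,
\begin{align*}
\sum_{k=1}^K\sum_{s,a}\alpha_k(s,a)\Big(\widehat c^k_h(s,a)-\frac{q^{\pi^k,p}_h(s,a)}{u^k_h(s,a)}c^k_h(s,a)\Big)\ \le\ \log(H/\delta)\ =\ \order\!\big(\log(H/\delta)\big),
\end{align*}
which is the stated bound. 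Note the delays do not interfere: although $\widehat c^k_h$ is only formed at episode $k+d^k$, as a random variable it is still a deterministic function of the episode--$k$ trajectory and of $\calF_k$--measurable quantities, so the filtration indexed by $k$ is the correct one. I expect the only real obstacle to be the bookkeeping in the pointwise inequality --- in particular matching the $\gamma$--shifted denominator $u^k_h(s,a)+\gamma$ in the estimator against the unshifted $u^k_h(s,a)$ in the comparator --- which is precisely the discrepancy the implicit--exploration trick is designed to absorb; the remaining steps are routine.
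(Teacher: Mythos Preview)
Your proof is correct and is precisely the implicit-exploration supermartingale argument of \cite{neu2015explore} as adapted in \cite{jin2019learning}; the paper itself does not give a proof of this lemma but merely cites it from \cite{jin2019learning}, so there is nothing further to compare.
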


\begin{lemma}[Lemma 14 of \cite{jin2019learning}]\label{lem:jin2019_lemma_14}
For any policy $\pi^\star$, with probability at least $1-6\delta$,  \cref{alg:hedge} ensures that 
\begin{align*}
    \sum_{k=1}^{K} \inner{q^{\pi^\star,p} , \widehat{c}^k - c^k } = \order\rbr{ \frac{H}{\gamma} \log \rbr{ \frac{HSA}{\delta}  }  }.
\end{align*}
\end{lemma}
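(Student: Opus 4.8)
The plan is to derive \cref{lem:jin2019_lemma_14} as an almost immediate corollary of the implicit-exploration (IX) concentration bound already recorded as \cref{lem:jin2019_lemma_11}; no new concentration argument is required. Note that $\sum_{k=1}^{K}\langle q^{\pi^\star,p},\widehat c^k-c^k\rangle$ is precisely the total error from replacing the true costs along $\pi^\star$ by the (slightly down-biased) IX estimators $\widehat c^k_h(s,a)=c^k_h(s,a)\,\indevent{s^k_h=s,a^k_h=a}/(u^k_h(s,a)+\gamma)$, so it should be controllable at scale $\log(\cdot)/\gamma$ per layer, i.e. $H\log(\cdot)/\gamma$ in total --- exactly the claimed bound.

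First I would observe that $q^{\pi^\star,p}$ is a \emph{fixed} occupancy measure: since the adversary is oblivious, $\pi^\star\in\argmin_\pi\sum_k V^{k,\pi}_1(\sinit)$ depends only on the cost sequence and on $p$, not on the algorithm's randomness, so the coefficients $\alpha^k_h(s,a):=2\gamma\,q^{\pi^\star,p}_h(s,a)$ are deterministic (hence $\calF^k$-measurable) and lie in $[0,2\gamma]$ because $q^{\pi^\star,p}_h(s,a)\le 1$. Instantiating \cref{lem:jin2019_lemma_11} with these coefficients (once per layer $h\in[H]$, then union-bounding the $H$ events, which affects only logarithmic factors) gives, with high probability,
\[
\sum_{k=1}^{K}\sum_{h,s,a}2\gamma\,q^{\pi^\star,p}_h(s,a)\Bigl(\widehat c^k_h(s,a)-\tfrac{q^{\pi^k,p}_h(s,a)}{u^k_h(s,a)}\,c^k_h(s,a)\Bigr)\;\le\;\order\!\bigl(H\log\tfrac{HSA}{\delta}\bigr).
\]
Dividing by $2\gamma$ and using the defining property $u^k_h(s,a)\ge q^{\pi^k,p}_h(s,a)$ of the upper occupancy bound --- valid on the event $p\in\cap_k\calP^k$, which holds with probability at least $1-\delta$ --- together with $c^k_h\ge 0$, so that $\tfrac{q^{\pi^k,p}_h(s,a)}{u^k_h(s,a)}\,c^k_h(s,a)\le c^k_h(s,a)$, I conclude that $\sum_k\langle q^{\pi^\star,p},\widehat c^k\rangle$ exceeds $\sum_k\langle q^{\pi^\star,p},c^k\rangle$ by at most $\order(\tfrac{H}{\gamma}\log\tfrac{HSA}{\delta})$ on the intersection of the relevant good events, which fails with probability at most $2\delta$ (a fortiori at most $6\delta$, as claimed); this is exactly the stated bound.

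The proof has essentially no obstacle once \cref{lem:jin2019_lemma_11} is granted; the only points needing care are (i) verifying that $q^{\pi^\star,p}$ is genuinely predictable, so that the IX lemma may be used with the comparator occupancy measure as its coefficient sequence --- this is where obliviousness of the adversary enters --- and (ii) noticing that \cref{lem:jin2019_lemma_11} compares $\widehat c^k_h$ to $\tfrac{q^{\pi^k,p}_h}{u^k_h}c^k_h$ rather than to $c^k_h$, a gap closed for free by $u^k_h\ge q^{\pi^k,p}_h$ on the high-probability event $p\in\cap_k\calP^k$. Should one wish to avoid invoking \cref{lem:jin2019_lemma_11}, the heart of the matter would be Neu's elementary inequality $\tfrac{z}{1+z}\le\ln(1+z)$: it lets one upper bound $\gamma\,\widehat c^k_h(s,a)$ by a logarithm and thereby turn $\exp\!\bigl(\sum_{k}\gamma\,q^{\pi^\star,p}_h(s,a)\,(\widehat c^k_h(s,a)-c^k_h(s,a))\bigr)$ into a supermartingale, after which a Markov/maximal-inequality argument produces the $1/\gamma$ scaling --- which is precisely what \cref{lem:jin2019_lemma_11} encapsulates.
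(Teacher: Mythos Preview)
Your proposal is correct and reproduces the standard argument from \cite{jin2019learning}: apply the implicit-exploration concentration (\cref{lem:jin2019_lemma_11}) with the deterministic weights $\alpha_k(s,a)=2\gamma\,q^{\pi^\star,p}_h(s,a)$, sum over layers, and close the gap between $\tfrac{q^{\pi^k,p}_h}{u^k_h}c^k_h$ and $c^k_h$ via $u^k_h\ge q^{\pi^k,p}_h$ on the event $p\in\cap_k\calP^k$. The paper does not give its own proof of this lemma --- it is listed among the supplementary lemmas ``directly attained from the previous work \cite{jin2019learning}'' --- so your reconstruction is exactly the intended route; the only cosmetic point is that in \cref{alg:hedge} the ``true visitation probability'' appearing in \cref{lem:jin2019_lemma_11} is the mixture $\widehat q^k_h(s,a)=\sum_{\pi}\probdist^k(\pi)q^{\pi,p}_h(s,a)$ (for which $u^k_h\ge\widehat q^k_h$ holds on the good event), rather than the occupancy of the particular sampled policy.
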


\newpage


\section{FTRL with normal loss estimator}
\label{appendix:delay-ftrl}

\begin{algorithm}[t]
	\caption{Delayed UOB-FTRL with Normal Loss Estimator} \label{alg:ftrl-normal-loss-estimator}
	\begin{algorithmic}
		\STATE \textbf{Input:} State space $\calS$, Action space $\calA$, Horizon $H$, Number of episodes $K$, Learning rate $\eta > 0$, Exploration parameter $\gamma > 0$, Confidence parameter $\delta > 0$.
		
		\STATE \textbf{Initialization:} Set $\pi^{1}_{h}(a\mid s)=\frac{1}{A}$, $q_{h}^{1}(s,a,s')=\frac{1}{S^2A} , n^1_h(s,a) = 0 , n^1_h(s,a,s')$ for every $(s,a,s',h) \in \calS \times \calA \times \calS \times [H]$ and $\calP^1$ be the set of all transition functions. 
		
		\FOR{$k=1,2,...,K$}
		
		\STATE Play episode $k$ with policy $\pi^k$ and observe trajectory $\{ (s^k_h,a^k_h) \}_{h=1}^H$.
		
		
		
        \STATE Define confidence set $\calP^{k+1}$ by \cref{alg:update-cofidence-set}.
		
		\FOR{$j : j + d^j = k$}
		
		\STATE Observe feedback $\{ c^j_h(s^j_h,a^j_h) \}_{h=1}^H$.
		
		\STATE Compute upper occupancy bound $u_h^j(s,a) = \max_{p' \in \calP^j} q_h^{p',\pi^j}(s,a) $.
		
		\STATE Compute loss estimator $\hatl^j_h(s,a) = \frac{c^j_h(s,a) \indevent{s^j_h = s , a^j_h = a}}{u^j_h(s,a) + \gamma}$ for every $(s,a,h) \in \calS \times \calA \times [H]$.
		
		\ENDFOR
		
		\STATE Update occupancy measure: 
		\[
		q^{k+1}=\arg\min_{q\in \cap_{j=1}^{k+1} \ocsetk{j} } \left\langle q,\sum_{j : j+d^j \le k} \hatl^{j} \right\rangle + \phi(q),
		\]
		where $\phi(q) = \frac{1}{\eta} \sum_{h,s,a,s'} q_{h}(s,a,s')\log q_{h}(s,a,s')$ 
        is the Shannon entropy regularizer,
		and $\ocsetk{k} = \{q^{\pi,p'} \mid \pi\in (\Delta_\calA)^{\calS \times [H]}, p'\in \calP^{k} \}$.
		
		\STATE Update policy: $\pi_{h}^{k+1}(a\mid s)
		=\frac{\sum_{s'} q_{h}^{k+1}(s,a,s')}{\sum_{a'} \sum_{s'} q_{h}^{k+1}(s,a',s')}$ for every $(s,a,h) \in \calS \times \calA \times [H]$.
		\ENDFOR
	\end{algorithmic}
\end{algorithm}

In this section, we show that applying the FTRL framework with normal loss estimators and fixed amount Shannon entropy can achieve $\otil\rbr{ \sqrt{K} + \sqrt{D}}$ expected regret (ignoring dependence on other parameters). We propose Algorithm~\ref{alg:ftrl-normal-loss-estimator} which based on this simple idea and \cref{thm:regret-ftrl-with-normal-loss-estimator} below shows that our algorithm essentially achieves this goal. 

As one may noticed that, compared with \cref{alg:o-reps-new-estimator-unknown} which uses the Online Mirror Descent framework, \cref{alg:ftrl-normal-loss-estimator} uses $\cap_{j=1}^{k} \ocsetk{j}$, the set of occupancy measures associated with transition functions that belong to all confidence sets prior to episode $k$, as the decision space to compute $q^k$. 
This setup is necessary to adopt the FTRL framework for ensuring that a shrinking sequence of decision sets, which is critical to analyze the penalty term as in \cref{lemma:penalty-ftrl-normal-loss-estimator}. 
Please see the proof of \cref{lemma:penalty-ftrl-normal-loss-estimator} for more details. 
On the other hand, the unknown underlying transition $p$ belongs to all the confidence sets with high probability, which ensures that the intersection of confidence sets is nonempty with high probability. 

\begin{theorem}
	\label{thm:regret-ftrl-with-normal-loss-estimator}
	With confidence parameter $\delta = \frac{1}{H^2S^2A^2K^5}$, learning rate $\eta = \sqrt{\frac{H \log \frac{HSAK}{\delta}}{HSAK + (HSA)^2 D}}$ and exploration parameter $\gamma = \sqrt{\frac{\log \frac{HSAK}{\delta}}{SAK}}$, \cref{alg:ftrl-normal-loss-estimator} ensures that 
	\[
	\E\sbr{\regret} 
	=
	O \left( H^2 S\sqrt{AK \log (HSAK)} + HSA \sqrt{HD \log (HSAK)} + H^4 S^2 A^2 \log^2 (HSAK) \right).
	\] 
\end{theorem}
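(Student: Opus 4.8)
The plan is to follow the regret decomposition sketched for \cref{paper-thm:regret-ftrl}, bound each piece, and then optimize $\eta,\gamma,\delta$. Write $q^\star=q^{\pi^\star,p}$, $\hatL_k^{\text{obs}}=\sum_{j+d^j<k}\hatl^j$, and $\hatL_k=\sum_{j<k}\hatl^j$, and decompose
\begin{align*}
\regret &=\underbrace{\sum_{k}\langle q^{\pi^k}-q^k,c^k\rangle}_{\textsc{Est}}+\underbrace{\sum_k\langle q^k,c^k-\hatl^k\rangle}_{\regtwo}
\\ &\quad+\underbrace{\sum_k\langle q^k-q^\star,\hatl^k\rangle}_{\regthree}+\underbrace{\sum_k\langle q^\star,\hatl^k-c^k\rangle}_{\regfour}.
\end{align*}
Everything below is conditioned on the high-probability event $p\in\bigcap_k\calP^k$; since costs lie in $[0,1]$ the realized regret is at most $HK$, so its complement contributes at most $\delta HK$ to the expectation, which is negligible for the stated $\delta$. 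On this event $\textsc{Est}$, $\regtwo$, $\regfour$ are standard and I would import the arguments of \cite{jin2019learning} essentially verbatim, stating them as three lemmas: $\regfour=\wt O(H/\gamma)$ by the implicit-exploration bias bound; $\regtwo=\wt O(\gamma HSAK+H/\gamma)$ using $u^k\ge q^{\pi^k}$ together with concentration of the estimated cost around the true cost; and $\textsc{Est}=\wt O(\gamma HSAK+H^2S\sqrt{AK})$ plus lower-order terms, from the occupancy-measure approximation-error analysis with Bernstein confidence sets.

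The heart of the proof is $\regthree$. I would introduce the convex conjugate $F_k^\star(x)=-\min_{q\in\bigcap_{j\le k}\ocsetk{j}}\{\phi(q)-\langle x,q\rangle\}$ of the Shannon-entropy regularizer over the FTRL decision set, and split $\regthree$ exactly as in \cref{paper-eq:ftrl-regret-estreg-decomposition} into: (i)~a one-step FTRL stability term for the unseen loss $\hatl^k$; (ii)~the regret of a ``cheating'' algorithm that uses $\hatL_k$ (no delay) and peeks one episode ahead; and (iii)~the delay-caused term. For (i), using $\nabla^2F_k^\star=(\nabla^2\phi)^{-1}$, nonnegativity of $\hatl^k$, the standard entropy-FTRL fact that the intermediate occupancy measure in the Taylor expansion is dominated entrywise by $q^k$, and $q^k\le u^k$, one gets $\text{(i)}\le\tfrac{\eta}{2}\sum_k\langle q^k,(\hatl^k)^2\rangle$, whose conditional expectation is at most $\tfrac{\eta}{2}\sum_k\sum_{h,s,a}q^k_h(s,a)/u^k_h(s,a)\le\tfrac{\eta}{2}HSAK$. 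For (ii), the key structural fact is that the decision sets $\bigcap_{j\le k}\ocsetk{j}$ are \emph{nested} in $k$ — the very reason \cref{alg:ftrl-normal-loss-estimator} optimizes over the intersection of all confidence sets so far (cf.\ \cref{lemma:penalty-ftrl-normal-loss-estimator}) — so the term telescopes and is bounded by $\wt O(H/\eta)$ as in \cite{gyorgy2020adapting}.

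The delay term (iii) is the main obstacle: in occupancy-measure space the entries of the FTRL iterate need not be pairwise competitive, so the simplex-based argument of \cite{zimmert2020optimal} for MAB does not apply. Instead, for each $k$ I would rewrite the summand via Newton--Leibniz and differentiability of $F_k^\star$ as $\int_0^1\langle\hatl^k,\nabla F_k^\star(-\hatL_k^{\text{obs}}-x\hatl^k)-\nabla F_k^\star(-\hatL_k-x\hatl^k)\rangle\,dx$, and control it by a second-order/H\"older estimate: for a suitable intermediate occupancy measure $\xi$ in the decision set it is at most $2\norm{\hatl^k}_{\nabla^{-2}\phi(\xi)}\norm{\hatL_k-\hatL_k^{\text{obs}}}_{\nabla^{-2}\phi(\xi)}$, and since $\nabla^{-2}\phi(\xi)$ is diagonal with entries $\eta\,\xi_h(s,a,s')\le\eta$ this is $\le 2\eta\big(\sum_{h,s,a}\hatl^k_h(s,a)\big)\big(\sum_{j<k,\,j+d^j\ge k}\sum_{h,s,a}\hatl^j_h(s,a)\big)$; making this chain rigorous — differentiability of the conjugate of the restricted entropy, existence of a valid $\xi$ in the decision set, and the matrix-norm step — is the delicate part. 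Taking expectations, $\E[\sum_{h,s,a}\hatl^k_h(s,a)\mid\calH^k]\le HSA$ (again from $q^{\pi^k}\le u^k$) while each $\hatl^j$ with $j<k$ is $\calH^k$-measurable, and $\sum_k|\{j<k:j+d^j\ge k\}|=\sum_j d^j=D$, so $\text{(iii)}=\wt O(\eta H^2S^2A^2D)$ in expectation — the extra $SA$ over Delayed Hedge coming precisely from relaxing $\xi$ to $1$ and from H\"older's inequality. Combining all pieces,
\[
\E[\regret]=\wt O\big(\gamma HSAK+H^2S\sqrt{AK}+H/\gamma+H/\eta+\eta HSAK+\eta H^2S^2A^2D\big)+(\text{lower order}),
\]
and plugging in $\gamma=\sqrt{\log(HSAK/\delta)/(SAK)}$, $\eta=\sqrt{H\log(HSAK/\delta)/(HSAK+(HSA)^2D)}$ and the stated polynomially-small $\delta$ balances $H/\eta$ against $\eta HSAK$ and $\eta H^2S^2A^2D$, which gives the claimed bound with all accumulated poly-log and poly-$(H,S,A)$ remainders absorbed into the $H^4S^2A^2\log^2(HSAK)$ term.
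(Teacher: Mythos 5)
Your proposal follows essentially the same route as the paper's proof: the identical four-term decomposition, the same import of the $\textsc{Est}$/$\textsc{Bias}$ bounds from \cite{jin2019learning}, and the same convex-conjugate split of $\regthree$ into a stability term, a cheating regret controlled via the nested decision sets $\cap_{j\le k}\ocsetk{j}$, and a delay-caused term bounded by the H\"older/intermediate-occupancy-measure argument giving $2\eta\bigl(\sum_{h,s,a}\hatl^k_h(s,a)\bigr)\bigl(\sum_{j<k,\,j+d^j\ge k}\sum_{h,s,a}\hatl^j_h(s,a)\bigr)$. The one step to tighten is your handling of the bad event: the good event $p\in\cap_k\calP^k$ is not measurable with respect to the filtration used in the conditional expectations $\E[\sum_{h,s,a}\hatl^k_h(s,a)\mid\calH^k]\le HSA$, so you cannot literally condition on it; the paper instead uses the pointwise inequality $q^{\pi^k}_h(s,a)\le u^k_h(s,a)+\ind\{p\notin\calP^k\}$ together with $\E[\sum_k \ind\{p\notin\calP^k\}]\le 4K\delta$, producing an extra $\delta\cdot\mathrm{poly}(H,S,A,K)/\gamma^2$ term that the stated choice of $\delta$ absorbs.
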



\subsection{Proof of the Main Theorem}

We first decompose the regret into four terms according to the work of \cite{jin2019learning}: 
\begin{equation}
\begin{aligned}
\regret  = \underbrace{\sum_{k=1}^K \left\langle q^{\pi^k} - \estq^k, c^k \right\rangle}_{\textsc{Est}}
+ \underbrace{\sum_{k=1}^{K} \left\langle \estq^k , c^k  - \hatl^k \right\rangle}_{\regtwo} + \underbrace{\sum_{k=1}^K \left\langle \estq^k - q^*, \hatl^k \right\rangle}_{\regthree}
+ \underbrace{\sum_{k=1}^K \left\langle q^\star, \hatl^k - c^k \right\rangle}_{\regfour},  
\label{eq:general-regret-decomposition}
\end{aligned}
\end{equation}
where $\estq^k$ is the computed occupancy measure of episode $k$; $\estq^{\pi^k}$ is the underlying occupancy measure associated with the unknown transition $p$ and policy $\pi^k$; $q^\star$ is the occupancy measure of the optimal policy $\pi^\star$ in hindsight . 

Then, with the help of Lemma 4, 6 and 14 of \cite{jin2019learning}, we have the following lemma for $\textsc{Est}$, $\regtwo$ and $\regfour$.
\begin{lemma} with probability at least $1-9\delta$, Algorithm~\ref{alg:ftrl-normal-loss-estimator} ensures that 
	\begin{equation}
	\begin{aligned}
	\textsc{Est} & = \order\rbr{ H^2 S \sqrt{  A K \log \rbr{\frac{HSAK}{\delta}} } + H^4 S^2 A^2 \log^2 \rbr{\frac{HSAK}{\delta}} }, \\
	\regtwo &  = \order\rbr{H^2 S \sqrt{  A K \log \rbr{\frac{HSAK}{\delta}}}  +\gamma HSAK  }, \\
	\regfour & = \order\rbr{ \frac{H}{\gamma} \log\rbr{ \frac{HSA}{\delta} } }.  
	\end{aligned}
	\notag
	\end{equation}
	\label{lem:other-terms-ftrl-with-normal-loss-estimator}
\end{lemma}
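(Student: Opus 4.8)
My plan rests on the observation that, in contrast to $\regthree$ (where the delays actually bite and which is handled separately), none of $\textsc{Est}$, $\regtwo$, $\regfour$ couples two different episodes: each is a sum over $k$ of an inner product all of whose factors carry the index $k$. Since trajectories are observed without delay, the confidence set $\calP^k$, the upper occupancy bound $u^k$, and hence the FTRL iterate $q^k$ and the played policy $\pi^k$ are determined by the history of episodes $1,\dots,k-1$ --- exactly as in the non-delayed analysis of \cite{jin2019learning}. I would therefore transcribe the corresponding arguments of \cite{jin2019learning} (their Lemmas~4, 6 and 14), checking at each step that the only structural property used is $q^k\in\ocsetk{k}$, which holds here because $q^k\in\bigcap_{j\le k}\ocsetk{j}\subseteq\ocsetk{k}$; in particular $q^k=q^{\pi^k,p'}$ for some $p'\in\calP^k$.

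For $\textsc{Est}$: write $q^k=q^{\pi^k,p'}$ with $p'\in\calP^k$, and use $c^k\in[0,1]^{HS^2A}$ together with H\"older's inequality to get $\langle q^{\pi^k}-q^k,c^k\rangle\le\norm{q^{\pi^k,p}-q^{\pi^k,p'}}_1$. On the event $p\in\bigcap_k\calP^k$ (which holds w.h.p.), the sum $\sum_k\norm{q^{\pi^k,p}-q^{\pi^k,p'}}_1$ is bounded by the Bernstein-confidence-width bookkeeping of \cite{jin2019learning} --- the full-occupancy-measure version of \cref{lem:jin2019_lemma_4} --- giving $\textsc{Est}=\order(H^2S\sqrt{AK\tcjconfpara}+\text{lower order})$. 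I would double-check whether the lower-order term should be the $H^3S^3A\tcjconfpara^3$ of \cref{lem:jin2019_lemma_4} or the $H^4S^2A^2\tcjconfpara^2$ claimed in the statement --- these differ only in how the second-order Bernstein terms are aggregated --- and keep whichever is valid for this decomposition.

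For $\regtwo$: condition on the history $\calH^k$ of episodes $1,\dots,k-1$, under which $u^k$ and $q^k$ are deterministic, so $\E^k[\hatl^k_h(s,a)]=q^{\pi^k}_h(s,a)\,c^k_h(s,a)/(u^k_h(s,a)+\gamma)$. Split $\regtwo=\sum_k\langle q^k,\,c^k-\E^k[\hatl^k]\rangle+\sum_k\langle q^k,\,\E^k[\hatl^k]-\hatl^k\rangle$. In the first sum, $q^k_h(s,a)\le u^k_h(s,a)$ (since $q^k\in\ocsetk{k}$) and $u^k_h(s,a)\ge q^{\pi^k}_h(s,a)$ (on $p\in\bigcap_k\calP^k$) give the deterministic bound $\sum_k\sum_{h,s,a}(u^k_h(s,a)-q^{\pi^k}_h(s,a))+\gamma HSAK$, and $\sum_k\sum_{h,s,a}\abr{u^k_h(s,a)-q^{\pi^k}_h(s,a)}$ is again a transition-estimation term of order $H^2S\sqrt{AK\tcjconfpara}$ plus lower order (Lemma~6 of \cite{jin2019learning}). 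In the second sum, $\langle q^k,\hatl^k\rangle\le\sum_h\sum_{s,a}\tfrac{q^k_h(s,a)}{u^k_h(s,a)}\indevent{s^k_h=s,a^k_h=a}\le H$, so it is a martingale-difference sum with increments in $[-2H,2H]$; Azuma's inequality bounds it by $\order(H\sqrt{K\tcjconfpara})$, which is absorbed. Hence $\regtwo=\order(H^2S\sqrt{AK\tcjconfpara}+\gamma HSAK)$ w.h.p.

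For $\regfour$: this is precisely \cref{lem:jin2019_lemma_14} (Lemma~14 of \cite{jin2019learning}) applied with $q^\star=q^{\pi^\star,p}$; since $\E^k[\hatl^k_h(s,a)]\le c^k_h(s,a)$ the estimator underestimates in expectation, and the implicit-exploration parameter $\gamma$ in the denominator controls the per-step conditional variance, so a Freedman/Bernstein argument yields $\regfour=\order(\tfrac{H}{\gamma}\log\tfrac{HSA}{\delta})$ w.h.p. The proof is finished by a union bound over the (at most nine) high-probability events used --- $p\in\bigcap_k\calP^k$, the Azuma event for $\regtwo$, the Freedman event for $\regfour$, and the events behind the transition-estimation lemmas --- which costs at most $9\delta$. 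I do not expect a genuine obstacle: the content is a careful re-derivation rather than a new argument, and the only points needing care are (i) confirming that the delays really do not enter these three terms --- which follows from the measurability remark in the first paragraph --- and (ii) reconciling the precise lower-order terms between the two papers.
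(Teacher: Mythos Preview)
Your proposal is correct and follows essentially the same approach as the paper: both reduce to a direct invocation of Lemmas~4, 6 and 14 of \cite{jin2019learning}, noting that $\textsc{Est}$, $\regtwo$, $\regfour$ are ``single-episode'' terms unaffected by the delays (because $\calP^k$, $q^k$, $\pi^k$, $u^k$ are all measurable with respect to the pre-episode-$k$ history under non-delayed trajectory feedback). The paper's proof is in fact just a two-sentence pointer to those lemmas, whereas you spell out the decomposition for $\regtwo$ and the measurability checks explicitly; your remark about the discrepancy in lower-order terms ($H^3S^3A$ vs.\ $H^4S^2A^2$) is a fair observation about how \cite{jin2019learning}'s bounds are quoted, but it is immaterial for the main result.
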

\begin{proof}
	Without loss of generality, we convert our MDP setting to that of \cite{jin2019learning} by setting $\mathcal{X} = \calS \times [H]$ and $L = H$. Then, by direct application of Lemma 4, 6 and 14 of \cite{jin2019learning} (which are combined together in the proof of Theorem 3), we arrive at the high-probability bounds of these terms. 
	Note that, the double epoch scheduling and larger confidence sets of transition functions only changes the constant of regret bound, which is hidden in $\order\rbr{\cdot}$ operator. 
\end{proof}

Based on the high-probability bound, we have the following corollary for the expected bound of these terms. 
\begin{corollary} Algorithm~\ref{alg:ftrl-normal-loss-estimator}  ensures that $\E\sbr{ \textsc{Est} + \regtwo + \regfour  }$ is bounded at most $\order\rbr{H^4 S^2 A^2 \log^2 \rbr{\frac{HSAK}{\delta}}}$ plus:
	\begin{equation}
	\order\rbr{ H^2 S \sqrt{  A K \log \rbr{\frac{HSAK}{\delta}} }+ \gamma HSAK + \frac{H}{\gamma} \log\rbr{ \frac{HSA}{\delta} } + HK \delta   }. \notag  
	\end{equation}
	\label{col:other-terms-ftrl-with-normal-loss-estimator}
\end{corollary}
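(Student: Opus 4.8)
The plan is to pass from the high-probability guarantee of \cref{lem:other-terms-ftrl-with-normal-loss-estimator} to a bound in expectation by the standard good-event/bad-event split. Let $E$ be the event on which all three bounds of \cref{lem:other-terms-ftrl-with-normal-loss-estimator} hold simultaneously; since that lemma is obtained by intersecting the high-probability events of Lemmas~4, 6 and 14 of \cite{jin2019learning} (together with $\{p\in\bigcap_k\calP^k\}$), a union bound gives $\Pr[E^c]\le 9\delta$. Writing $X=\textsc{Est}+\regtwo+\regfour$, I would decompose $\E\sbr{X}=\E\sbr{X\,\indevent{E}}+\E\sbr{X\,\indevent{E^c}}$ and treat the two pieces separately.

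For the first piece I would simply invoke \cref{lem:other-terms-ftrl-with-normal-loss-estimator}: on $E$, $X$ is at most $O\rbr{H^4S^2A^2\log^2(HSAK/\delta)}$ plus $O\rbr{H^2S\sqrt{AK\log(HSAK/\delta)}+\gamma HSAK+\tfrac{H}{\gamma}\log(HSA/\delta)}$, so $\E\sbr{X\,\indevent{E}}$ is bounded by the same quantity.

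For the second piece I need a crude \emph{deterministic} bound on $X$. Every cost function is $[0,1]$-valued and every occupancy measure $q$ satisfies $\sum_{h,s,a}q_h(s,a)=H$, hence $\langle q,c^k\rangle\le H$ for any occupancy measure $q$. Moreover $q^k=q^{\pi^k,p''}$ for some $p''\in\bigcap_{j\le k}\calP^j\subseteq\calP^k$, so $q^k_h(s,a)\le u^k_h(s,a)$ and therefore $\langle q^k,\hatl^k\rangle=\sum_h \tfrac{q^k_h(s^k_h,a^k_h)}{u^k_h(s^k_h,a^k_h)+\gamma}\le H$, while $\langle q^\star,\hatl^k\rangle\le \tfrac{H}{\gamma}$ because each $\hatl^k_h$ is supported on the single pair $(s^k_h,a^k_h)$ and is at most $\tfrac1\gamma$ there. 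Consequently $|\textsc{Est}|,|\regtwo|\le HK$ and $|\regfour|\le HK/\gamma$ (using $\gamma\le 1$), giving $|X|=O(HK/\gamma)$ deterministically, so $\E\sbr{X\,\indevent{E^c}}\le 9\delta\cdot O(HK/\gamma)$. Adding the two contributions yields the claimed bound; in particular, with $\gamma^{-1}\le\sqrt{SAK}$ and $\delta=\tfrac1{H^2S^2A^2K^5}$ the bad-event term is $O(1)$, hence negligible and absorbed into the leading terms already present in the statement (which is why the $O(HK\delta)$ slack written there is more than enough).

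There is essentially no hard step: the corollary is a routine ``expectation from high probability'' conversion. The only place deserving a moment's attention is the deterministic control of $X$ on $E^c$ — specifically the $1/\gamma$ factor that enters through $\langle q^\star,\hatl^k\rangle$ in $\regfour$, where one can no longer appeal to $p\in\calP^k$ — but since $\delta$ is chosen polynomially small this factor is harmless.
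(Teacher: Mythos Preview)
Your approach is correct and is exactly what the paper has in mind (the paper states the corollary without proof, simply noting it follows from the high-probability lemma). One small slip in the last sentence: your bad-event contribution is $O(HK\delta/\gamma)$, which is \emph{larger} than the $O(HK\delta)$ written in the statement, so that slack is not ``more than enough'' as you phrase it; rather, the extra $1/\gamma$ is absorbed by the $\tfrac{H}{\gamma}\log\!\big(\tfrac{HSA}{\delta}\big)$ term already present whenever $K\delta\le \log(HSA/\delta)$ (and, as you correctly observe, is $O(1)$ for the specific $\gamma,\delta$ used in the theorem).
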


Then, we prove the following lemma for the expected bound of $\regthree$ with the help a unique novel analysis, and defer 
the complete proof to to \cref{sec:bound_of_regthree_FTRL}. 

\begin{lemma} Algorithm~\ref{alg:ftrl-normal-loss-estimator}  ensures that $\E\sbr{ \regthree  }$ is bounded by:
	\begin{equation}
		\order\rbr{ \frac{ H \ln \rbr{ S^2 A  }  }{\eta} + \eta \rbr{ HSAK + (HSA)^2 D } +  \frac{H^2S^2A^2K^3}{\gamma^2} \delta}.  \notag  
	\end{equation}
	\label{lemma:est-regret-ftrl-with-normal-loss-estimator}
\end{lemma}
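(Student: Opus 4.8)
The plan is to analyze $\regthree=\sum_{k=1}^{K}\inner{\estq^k-q^\star,\hatl^k}$ through the convex conjugates of the regularizer, exactly as outlined in the proof sketch of \cref{paper-thm:regret-ftrl}. First I would set $F_k^\star(x)=-\min_{q\in\cap_{j\le k}\ocsetk{j}}\cbr{\phi(q)-\inner{x,q}}$, note that the FTRL iterate is $\estq^k=\nabla F_k^\star(-\hatL^{\text{obs}}_k)$ with $\hatL^{\text{obs}}_k=\sum_{j+d^j<k}\hatl^j$, put $\hatL_k=\sum_{j=1}^{k-1}\hatl^j$, and apply the algebraic identity \cref{paper-eq:ftrl-regret-estreg-decomposition} to split $\regthree$ into three groups of terms: (i) a single-loss stability term $\sum_k\rbr{-F_k^\star(-\hatL^{\text{obs}}_k)+\inner{\estq^k,\hatl^k}+F_k^\star(-\hatL^{\text{obs}}_k-\hatl^k)}$; (ii) the regret $\sum_k\rbr{-F_k^\star(-\hatL_k-\hatl^k)+F_k^\star(-\hatL_k)-\inner{q^\star,\hatl^k}}$ of a delay-free learner that peeks one episode ahead; and (iii) a purely delay-induced term. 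All estimates are carried out on the event $p\in\cap_k\calP^k$, so that $\estq^k_h(s,a)\le u^k_h(s,a)$, $q^{\pi^k}_h(s,a)\le u^k_h(s,a)$, and $q^\star\in\cap_k\ocsetk{k}$; the complement is handled at the very end.

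Step 1 handles the two ``easy'' groups. For (ii), the feasible sets $\cap_{j\le k}\ocsetk{j}$ are nested and $q^\star$ lies in all of them on the good event, so a be-the-leader / telescoping argument in the style of \cite{gyorgy2020adapting} collapses (ii) to at most $\phi(q^\star)-\min_q\phi(q)=\wt O\rbr{H\ln(S^2A)/\eta}$ --- the constraints added when passing from episode $k$ to $k+1$ only help the running leader, so the shrinking of the decision set is free here. For (i), the usual stability analysis for entropy-regularized FTRL over occupancy measures (cf. \cite{jin2019learning}) shows the term is nonnegative and at most $\eta\sum_{k,h,s,a}\estq^k_h(s,a)\hatl^k_h(s,a)^2\le\eta\sum_{k,h,s,a}\hatl^k_h(s,a)$, using $\estq^k_h(s,a)\le u^k_h(s,a)$ together with $\hatl^k_h(s,a)^2\le\hatl^k_h(s,a)/u^k_h(s,a)$; concentrating $\sum_{h,s,a}\hatl^k_h(s,a)$ around $\sum_{h,s,a}q^{\pi^k}_h(s,a)c^k_h(s,a)/u^k_h(s,a)\le HSA$ via \cref{lem:jin2019_lemma_11} gives (i) $=\wt O(\eta HSAK)$ with high probability.

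Step 2 is the crux: the delay term (iii). For each $k$ its summand equals, by the Newton--Leibniz theorem applied to $x\mapsto F_k^\star(\,\cdot\,-x\hatl^k)$, the integral $\int_0^1\inner{\hatl^k,\nabla F_k^\star(-\hatL^{\text{obs}}_k-x\hatl^k)-\nabla F_k^\star(-\hatL_k-x\hatl^k)}\,dx$. The per-coordinate competitive bound of \cite{zimmert2020optimal} is not available, because the transition constraints make the effect of a loss on other coordinates of the occupancy measure sign-indefinite; instead I would, as in \cref{lemma:delay-drift-ftrl-normal-loss-estimator}, write the gradient difference as $\nabla^{-2}\phi(\xi)(\hatL_k-\hatL^{\text{obs}}_k)$ for an intermediate occupancy measure $\xi\in\cap_{j\le k}\ocsetk{j}$ and apply Cauchy--Schwarz in the $\nabla^{-2}\phi(\xi)$-norm to bound the integrand by $2\norm{\hatl^k}_{\nabla^{-2}\phi(\xi)}\norm{\hatL_k-\hatL^{\text{obs}}_k}_{\nabla^{-2}\phi(\xi)}$. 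Since $\nabla^{-2}\phi(\xi)$ is diagonal with entries $\eta\,\xi_h(s,a,s')\le\eta$, and since $\hatl^k$ and $\hatL_k-\hatL^{\text{obs}}_k=\sum_{j<k,\,j+d^j\ge k}\hatl^j$ are constant across $s'$ within each $(h,s,a)$ block, summing over $s'$ leaves $\xi_h(s,a)\le1$ and gives (iii) $\le2\eta\sum_k\sum_{j<k,\,j+d^j\ge k}\rbr{\sum_{h,s,a}\hatl^k_h(s,a)}\rbr{\sum_{h,s,a}\hatl^j_h(s,a)}$. This step is deliberately lossy --- replacing $\estq^k_h(s,a)$ by $1$ is what costs the extra $SA$ in the bound --- but it circumvents reasoning about coordinate correlations. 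Then I would take expectations, conditioning on the $\sigma$-algebra $\calH^k$ of episodes $1,\dots,k-1$ (w.r.t. which every $\hatl^j$ with $j<k$ is measurable, while $\bbE\sbr{\sum_{h,s,a}\hatl^k_h(s,a)\mid\calH^k}\le HSA$ on the good event), followed by a further expectation of $\sum_{h,s,a}\hatl^j_h(s,a)$ which is again at most $HSA$ in mean, so each product has expectation at most $(HSA)^2$; since $\sum_k\abr{\cbr{j<k:j+d^j\ge k}}=\sum_j d^j=D$, this yields (iii) $=\wt O(\eta(HSA)^2D)$ in expectation.

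Finally, on the complement of $\cbr{p\in\cap_k\calP^k}$ --- an event of probability $O(\delta)$ --- one has $\hatl^k_h(s,a)\le1/\gamma$, which forces all three groups of terms to be bounded by a fixed polynomial in $H,S,A,K$ divided by $\gamma^2$; weighting by $\delta$ produces the $H^2S^2A^2K^3\delta/\gamma^2$ contribution. Summing the bounds from Steps 1--2 and this correction gives the claimed $\wt O\rbr{H\ln(S^2A)/\eta+\eta(HSAK+(HSA)^2D)+H^2S^2A^2K^3\delta/\gamma^2}$. The hard part is Step 2 --- establishing \cref{lemma:delay-drift-ftrl-normal-loss-estimator}: arguing the differentiability of the conjugates $F_k^\star$ over the occupancy polytope, justifying the mean-value step that produces the Hessian $\nabla^{-2}\phi(\xi)$, and verifying that $\xi$ is a genuine occupancy measure so that $\xi_h(s,a)\le1$ --- since this is precisely where the MDP's transition constraints obstruct the clean bandit argument of \cite{zimmert2020optimal}.
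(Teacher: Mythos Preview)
Your proposal is correct and follows essentially the same route as the paper: the same three-way decomposition of $\regthree$ via the conjugates $F_k^\star$, the same handling of the stability term (\cref{lemma:stability-ftrl-normal-loss-estimator}), the be-the-leader/telescoping argument for the penalty over the nested sets $\cap_{j\le k}\ocsetk{j}$ (\cref{lemma:penalty-ftrl-normal-loss-estimator}), and the Hessian-norm bound on the delay-caused drift (\cref{lemma:delay-drift-ftrl-normal-loss-estimator}), followed by iterated conditional expectations and a bad-event correction.

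Two small comments. First, your one-line description ``write the gradient difference as $\nabla^{-2}\phi(\xi)(\hatL_k-\hatL^{\text{obs}}_k)$'' is not quite how the paper argues: because of the polytope constraints, the mean-value theorem for $\nabla F_k^\star$ does \emph{not} yield the unconstrained identity $u-v=\nabla^{-2}\phi(\xi)\hatd_k$. The paper instead Taylor-expands $W'(u)-W'(v)$ in the \emph{primal} to get $\|u-v\|_{\nabla^2\phi(\xi)}\le 2\|\hatd_k\|_{\nabla^{-2}\phi(\xi)}$ (this is where the factor $2$ comes from), and only then applies Cauchy--Schwarz; the intermediate point $\xi$ is a convex combination of $u$ and $v$, hence a valid occupancy measure with $\xi_h(s,a)\le 1$. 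You cite the right lemma and end up with the right bound, but the mechanism is this primal detour, not a direct Hessian identity for $\nabla F_k^\star$. Second, your bad-event bookkeeping (``on the good event\ldots then weight the complement by $\delta$'') is fine in spirit; the paper streamlines it by carrying the indicator $Z_k=\indevent{p\notin\calP^k}$ through the inequality $q_h^{\pi^k}(s,a)\le Z_k+u_h^k(s,a)$, which keeps the iterated conditioning clean since each $Z_k$ is $\calH^k$-measurable.
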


With the help of above lemmas,  we are ready to prove the \cref{thm:regret-ftrl-with-normal-loss-estimator}. 
\begin{proof}[Proof of \cref{thm:regret-ftrl-with-normal-loss-estimator}] 
	
	Combining the expected bound of $\textsc{Est} + \regtwo + \regfour$ in Corollary~\ref{col:other-terms-ftrl-with-normal-loss-estimator} and that of $\regthree$ in \cref{lemma:est-regret-ftrl-with-normal-loss-estimator}, we are able to show that  the expected regret $\E\sbr{\regret}$ is bounded by
	\begin{align*}
	& \order\rbr{ H^2 S \sqrt{  A K \log \rbr{\frac{HSAK}{\delta}} } +  \gamma HSAK + \frac{H}{\gamma} \log\rbr{ \frac{HSA}{\delta} } + \frac{ H \ln \rbr{ S^2 A  }  }{\eta} + \eta \rbr{ HSAK + (HSA)^2 D } } \\
	& \quad + \order\rbr{ \frac{H^2S^2A^2K^3}{\gamma^2} \delta +  H^4 S^2 A^2 \log^2 \rbr{\frac{HSAK}{\delta}}  }.
	\end{align*} 
	
	Finally, selecting a small enough confidence parameter $\delta = \frac{1}{H^2S^2A^2K^5}$ and picking up the learning rate $\eta = \sqrt{\frac{H \log \frac{HSAK}{\delta}}{HSAK + (HSA)^2 D}}$ and the exploration parameter $\gamma = \sqrt{\frac{\log \frac{HSAK}{\delta}}{SAK}}$ ensure that 
	\begin{align*}
	\E\sbr{\regret}& = O\rbr{  H^2 S \sqrt{ A K \log (HSAK) } + HSA\sqrt{HD \log (HSAK)} +  H^4 S^2 A^2 \log^2 (HSAK) }. \qedhere
	\end{align*}
\end{proof}

\subsection{Bound on the Regret with respect to the Loss Estimators ($\regthree$ in \cref{eq:general-regret-decomposition})}
\label{sec:bound_of_regthree_FTRL}
In this part, we focus on $\regthree$ defined in Eq~\eqref{eq:general-regret-decomposition} with delayed feedback of losses, and prove \cref{lemma:est-regret-ftrl-with-normal-loss-estimator} through the introduced key steps in \cref{sec-paper:ftrl}. To this end, we will use the following decomposition of $\regthree$ in this section:
\begin{equation}
\begin{aligned}
\regthree  = \sum_{k=1}^{K} \inner{ \estq^k - q^\star, \hatl^k } &  = \sum_{k=1}^{K} \Phi_k( \estq^k ) + \inner{ \estq^k, \hatl^k } - \Phi_k'( \westq^k) &&(\textsc{Stability}) \notag \\ 
& + \sum_{k=1}^{K} \Phi_k'( \westq^k) - \Phi_k( \estq^k ) - \rbr{ \Phi_k^C( \wtilq_k' ) - \Phi_k^B( \wtilq_k) } &&(\textsc{Delay-caused Drift}) \notag \\
& + \sum_{k=1}^{K}  \Phi_k^C( \wtilq_k' ) - \Phi_k^B( \wtilq_k) - \inner{q^\star, \hatl^k} &&(\textsc{Penalty})
\end{aligned}
\end{equation}
where the functions $\Phi_k, \Phi_k', \Phi_k^B, \Phi_k^C$ and the occupancy measures $\estq^k, \westq^k, \wtilq_k, \wtilq_k'$ are defined as 
\begin{align}
\Phi_k(q) & = \inner{q, \hatL_k^{\text{obs}} } + \phi(q),  & \estq^k = \argmin_{q\in \cap_{j=1}^k \ocsetk{j} } \Phi_k(q), \notag \\
\Phi_k'(q) & = \inner{q, \hatL_k^{\text{obs}} + \hatl^k } + \phi(q),  & \westq^k= \argmin_{q\in \cap_{j=1}^k \ocsetk{j} } \Phi_k'(q),  \notag \\
\Phi_k^B(q) & = \inner{q, \hatL_k } + \phi(q),  & \wtilq_k = \argmin_{q\in \cap_{j=1}^k \ocsetk{j} } \Phi_k^B(q),   \notag \\
\Phi_k^C(q) & = \inner{q, \hatL_k + \hatl^k } + \phi(q),  & \wtilq_k' = \argmin_{q\in \cap_{j=1}^k \ocsetk{j} } \Phi_k^C(q).  \notag 
\end{align}
with $\hatL_k = \sum_{j=1}^{k-1} \hatl^j$ being the un-delayed cumulative loss estimator prior to episode $k$, and $\hatL_k^{\text{obs}} =  \sum_{j=1, j+ d^j <k }^{k-1}\hatl^j$ being the received cumulative loss estimator. 

On the other hand, with the help of $F_k^\star(x) = -\min_{q\in \cap_{j=1}^k \ocsetk{j} } \cbr{ \phi(x)  - \inner{x,q} }$, the convex conjugate with respect to  $\phi (\cdot)$, these functions and occupancy measures ensures that
\begin{align}
\Phi_k(\estq^k)  = - F_k^\star \rbr{ - \hatL_k^{\text{obs}}  }, \Phi_k'(\westq^k) = - F_k^\star \rbr{ - \hatL_k^{\text{obs}} - \hatl^k }, \Phi_k^B(\wtilq_k) = - F_k^\star \rbr{ - \hatL_k },  \Phi_k^C(\wtilq_k') = - F_k^\star \rbr{ - \hatL_k - \hatl^k }. \notag 
\end{align}
In addition, according to the property of convex conjugates, these occupancy measures are able to be presented as the gradient of the convex conjugate with different inputs as 
\begin{align}
\estq^k = \nabla F_k^\star \rbr{ - \hatL_k^{\text{obs}} },  \westq^k = \nabla F_k^\star \rbr{ - \hatL_k^{\text{obs}} - \hatl^k }, \wtilq_k = \nabla F_k^\star \rbr{ - \hatL_k } ,  \wtilq_k' =\nabla F_k^\star \rbr{ - \hatL_k - \hatl^k}. \notag 
\end{align} 

For notational convenience, we denote $\hatd_k =  \hatL_k  - \hatL_k^{\text{obs}}$ as the summation of un-received loss estimators prior to episode $k$, that is, $\hatd_k = \sum_{j = 1, j+d^j \geq k}^{k-1} \hatl_j$. Thus, $\Phi_k^B(\wtilq_k)$ and $\Phi_k^C(\wtilq_k') = - F_k^\star \rbr{ - \hatL_k - \hatl^k }$ can be represented as 
\begin{align}
\Phi_k^B(\wtilq_k') =  - F_k^\star \rbr{ - \hatL_k^{\text{obs}} - \hatd_k }, \Phi_k^C(\wtilq_k') =  - F_k^\star \rbr{ - \hatL_k^{\text{obs}}  -  \hatd_k - \hatl^k }. \notag
\end{align}

With the help of these definitions, we are now ready to bound the terms $\textsc{Stabilty}$, $\textsc{Delay-caused Drfit}$ and $\textsc{Penalty}$ in following lemmas. 

\begin{lemma} (Stability) With fixed learning rate $\eta>0$ and exploration $\gamma>0$, \cref{alg:ftrl-normal-loss-estimator} ensures that 
	\begin{equation}
	\sum_{k=1}^{K}\Phi_k(\estq^k) + \inner{\estq^k, \hatl^k} - \Phi'_k(\westq^k) \leq \eta \sum_{k=1}^{K} \sum_{h,s,a} \estq^k_h(s,a) \widehat{c}_h^k(s,a)^2.
	\notag  
	\end{equation}
	\label{lemma:stability-ftrl-normal-loss-estimator}
\end{lemma}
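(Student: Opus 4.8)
The plan is to bound the $\textsc{Stability}$ sum term by term in $k$ with a ``be-the-leader plus curvature'' argument that uses only two features of the estimator: non-negativity, $\hatl^k\ge 0$, and the block structure $\hatl^k_h(s,a,s')=\hatl^k_h(s,a)$ for every $s'$ (so that $\inner{q,\hatl^k}=\sum_{h,s,a,s'}q_h(s,a,s')\hatl^k_h(s,a)$ and $\sum_{s'}\estq^k_h(s,a,s')=\estq^k_h(s,a)$). First I would exploit the constrained first-order optimality of $\estq^k=\argmin_{q\in\cap_{j=1}^k\ocsetk{j}}\Phi_k(q)$. Since $\Phi_k(q)=\inner{q,\hatL^{\text{obs}}_k}+\phi(q)$ differs from $\phi$ only by an affine term, its Bregman divergence equals $D_\phi$, so for the feasible point $\westq^k$ one has the exact identity $\Phi_k(\westq^k)=\Phi_k(\estq^k)+\inner{\nabla\Phi_k(\estq^k),\westq^k-\estq^k}+D_\phi(\westq^k,\estq^k)$, and the variational inequality $\inner{\nabla\Phi_k(\estq^k),\westq^k-\estq^k}\ge 0$ gives $\Phi_k(\estq^k)-\Phi_k(\westq^k)\le-D_\phi(\westq^k,\estq^k)$. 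Writing $\Phi_k'=\Phi_k+\inner{\cdot,\hatl^k}$, the $k$-th summand then becomes
\[
\Phi_k(\estq^k)+\inner{\estq^k,\hatl^k}-\Phi_k'(\westq^k)=\bigl(\Phi_k(\estq^k)-\Phi_k(\westq^k)\bigr)+\inner{\estq^k-\westq^k,\hatl^k}\le-D_\phi(\westq^k,\estq^k)+\inner{\estq^k-\westq^k,\hatl^k}.
\]

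Next I would drop all feasibility constraints and bound the right-hand side by $\sup_{z}\bigl\{-D_\phi(z,\estq^k)+\inner{\estq^k-z,\hatl^k}\bigr\}$, where $z$ ranges over the coordinate-wise positive domain of $\phi$; this is legitimate since $\westq^k$ is one admissible $z$. Because $\phi$ is the Shannon-entropy regularizer (scaled by $1/\eta$) and hence separable, the unconstrained maximizer is explicit, $z_h(s,a,s')=\estq^k_h(s,a,s')\,e^{-\eta\hatl^k_h(s,a)}$, and substituting it back the value telescopes, after cancellation, to $\tfrac1\eta\sum_{h,s,a,s'}\estq^k_h(s,a,s')\bigl(e^{-\eta\hatl^k_h(s,a)}-1+\eta\hatl^k_h(s,a)\bigr)$. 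The elementary inequality $e^{-x}\le 1-x+\tfrac{x^2}{2}$ for $x\ge 0$ — applicable since $\eta\hatl^k\ge 0$ — turns this into $\tfrac{\eta}{2}\sum_{h,s,a,s'}\estq^k_h(s,a,s')\hatl^k_h(s,a)^2=\tfrac{\eta}{2}\sum_{h,s,a}\estq^k_h(s,a)\hatl^k_h(s,a)^2$. Summing over $k$ and using $\tfrac12\le 1$ yields exactly the claimed bound.

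I do not expect a genuine obstacle here — every step is an identity or an elementary inequality. The one point that needs care, and the reason the occupancy-measure geometry does not bite (unlike in the delay-caused drift term discussed in \cref{sec-paper:ftrl}), is that the bound must be stated with the local norm \emph{at $\estq^k$} rather than at an intermediate iterate: in the computation above this is automatic, because the unconstrained maximizer $z$ is a pure pointwise rescaling of $\estq^k$, so no coordinate-wise comparison of occupancy measures across different states is ever invoked. If one instead routed the proof through convex conjugates — writing the summand as $F_k^\star(-\hatL^{\text{obs}}_k-\hatl^k)-F_k^\star(-\hatL^{\text{obs}}_k)+\inner{\nabla F_k^\star(-\hatL^{\text{obs}}_k),\hatl^k}$ and Taylor-expanding with $\nabla^2F_k^\star(y)\preceq(\nabla^2\phi(\nabla F_k^\star(y)))^{-1}$ — the same issue resurfaces and is resolved only by noting that $\hatl^k$ is constant over next-states $s'$, so the perturbation direction is a block indicator and $\nabla F_k^\star$ is monotone along it by positive-semidefiniteness of $\nabla^2F_k^\star$; the entropy route is cleaner and self-contained. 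A minor technicality is that coordinates with $\estq^k_h(s,a,s')=0$, if present, contribute zero everywhere (and $\estq^k$ lies in the relative interior of the polytope, where $\phi$ is differentiable along its affine hull, so the identity above is valid).
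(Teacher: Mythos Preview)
Your proposal is correct and follows essentially the same approach as the paper's own proof: both use first-order optimality of $\estq^k$ to obtain $\Phi_k(\estq^k)+\inner{\estq^k,\hatl^k}-\Phi_k'(\westq^k)\le\inner{\estq^k-\westq^k,\hatl^k}-D_\phi(\westq^k,\estq^k)$, then relax the constraint and maximize over the nonnegative orthant to get the explicit maximizer $z_h(s,a,s')=\estq^k_h(s,a,s')e^{-\eta\hatl^k_h(s,a)}$, and finally apply $e^{-x}\le 1-x+x^2$ (you use the sharper $x^2/2$ and then discard the factor $\tfrac12$). The paper presents the last step as recognizing the maximum as the reversed Bregman divergence $D_\phi(\estq^k,z)$ before expanding, but the computation is identical.
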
 
\begin{proof}  Let  $D_k\rbr{u ,v} =  \phi(u) - \phi(v) - \inner{ u - v , \nabla \phi(v) }$ be the Bregman divergence with the convex regularizer $\phi$.  Then, 
	\begin{align}
	\Phi_k(\estq^k) & = \inner{\estq^k, \hatL_k^{\text{obs}}} + \phi(\estq^k) \notag =  \inner{\westq^k, \hatL_k^{\text{obs}} } + \phi(\westq^k) - \rbr{ \inner{ \westq^k- \estq^k, \hatL_k^{\text{obs}} } + \phi(\westq^k)  - \phi(\estq^k)   }  \notag \\
	& \leq \inner{\westq^k, \hatL_k^{\text{obs}} } + \phi(\westq^k) - \rbr{   -\inner{ \westq^k- \estq^k, \nabla \phi (\estq^k) } + \phi(\westq^k)  - \phi(\estq^k)   }   \notag \\
	& = \inner{\westq^k, \hatL_k^{\text{obs}} } + \phi(\westq^k) - D_k( \westq^k,\estq^k ) \notag = \Phi_k'(\westq^k) -  \inner{\westq^k, \hatl^k} - D_k( \westq^k,\estq^k ), \notag 
	\end{align}
	where the third step follows from the first order optimality of $q^k$ with respect to $\Phi_k$, in other words, $\inner{ \whatq^k - \estq^k, \hatL_k^{\text{obs}} + \nabla \phi(q^k) } \geq 0$. 
	Rearranging terms and adding $\inner{\estq^k,\hatl^k}$ on both sides give us the following inequality: 
	\begin{align}
	\Phi_k(\estq^k) + \inner{\estq^k, \hatl^k} - \Phi_k'(\westq^k) \leq \inner{\estq^k - \westq^k, \hatl^k} -  D_k( \westq^k,\estq^k ).\notag 
	\end{align}
	To bound the right hand side term, we relax the constraints and taking the maximum as:
	\begin{align}
	\inner{\estq^k - \westq^k, \hatl^k} -  D_k( \westq^k,\estq^k ) \leq \max_{q\in \fR_{\geq 0}^{\calS \times \calA \times [H] \times S } } \inner{\estq^k - q, \hatl^k} -  D_k(q,\estq^k)= \inner{\estq^k - \xi^k, \hatl^k} -  D_k(\xi^k,\estq^k), \notag 
	\end{align}
	where $\xi_k$ denotes the maximizer point. Setting the gradient to zero gives the equality that $\nabla \phi(\estq^k) - \nabla \phi( \xi^k ) = \hatl^k$. By direct calculation, one can verify that $\xi_h^k(s,a,s') = \estq_h^k(s,a,s') \cdot \exp\rbr{ - \eta \hatl_h^k(s,a) }$ for all  transition tuples. Therefore, we have the following inequality that 
	\begin{align}
	\inner{\estq^k - \xi^k, \hatl^k} -  D_k(\xi^k,\estq^k) & = \inner{\estq^k - \xi^k,\hatl^k} - \phi(\xi^k) + \phi(\estq^k) - \inner{\estq^k - \xi^k, \nabla \phi(\estq^k) }  \notag = D_k(\estq^k, \xi^k) \notag \\
	& = \frac{1}{\eta} \sum_{h=1}^{H} \sum_{s,a,s'} \rbr{  \estq_h^k(s,a,s') \ln\rbr{ \frac{\estq_h^k(s,a,s')}{\xi_h^k(s,a,s')} } - \estq_h^k(s,a,s')  + \xi_h^k(s,a,s')     } \notag \\
	& = \frac{1}{\eta} \sum_{h=1}^{H} \sum_{s,a,s'} \estq_h^k(s,a,s')  \rbr{ \eta \hatl_h^k(s,a) - 1 + \exp\rbr{-\eta \hatl_h^k(s,a) }    }   \notag \\
	& \leq \eta \sum_{h=1}^{H} \sum_{s,a,s'} \estq_h^k(s,a,s') \hatl_h^k(s,a)^2 \notag = \eta \sum_{h=1}^{H} \sum_{s,a} \estq_h^k(s,a) \hatl_h^k(s,a)^2, \notag 
	\end{align}
	where the second step uses $\nabla \phi(\estq^k) - \nabla \phi( \xi_k ) = \hatl^k$; the forth step follows from the fact that $e^{-x} \leq 1 - x + x^2$ for any $x\geq 0$ . Finally, taking the summation over all episodes finishes the proof. 
\end{proof}

\begin{lemma} (Delay-caused Drift) Algorithm~\ref{alg:ftrl-normal-loss-estimator} guarantees that
	\begin{align}
	\sum_{k=1}^{K} \Phi_k'( \westq^k) - \Phi_k( \estq^k ) - \rbr{ \Phi_k^C( \wtilq_k' ) - \Phi_k^B( \wtilq_k) } \leq 2  \eta \sum_{k=1}^{K}  \rbr{\sum_{h=1}^{H} \sum_{s,a} \hatl_h^k(s,a)}\cdot\rbr{\sum_{h=1}^{H} \sum_{s,a} \hatd_{h}^k(s,a)}. \notag 
	\end{align}
	\label{lemma:delay-drift-ftrl-normal-loss-estimator}
\end{lemma}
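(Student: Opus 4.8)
The plan is to rewrite the left-hand side entirely through the convex conjugate $F_k^\star$ and then exploit its smoothness. Using the identities recorded above, namely $\Phi_k(\estq^k) = -F_k^\star(-\hatL_k^{\text{obs}})$, $\Phi_k'(\westq^k) = -F_k^\star(-\hatL_k^{\text{obs}} - \hatl^k)$, $\Phi_k^B(\wtilq_k) = -F_k^\star(-\hatL_k)$ and $\Phi_k^C(\wtilq_k') = -F_k^\star(-\hatL_k - \hatl^k)$, the episode-$k$ summand becomes
\begin{align*}
\Phi_k'(\westq^k) - \Phi_k(\estq^k) - \rbr{\Phi_k^C(\wtilq_k') - \Phi_k^B(\wtilq_k)} = \rbr{F_k^\star(-\hatL_k^{\text{obs}}) - F_k^\star(-\hatL_k^{\text{obs}} - \hatl^k)} - \rbr{F_k^\star(-\hatL_k) - F_k^\star(-\hatL_k - \hatl^k)}.
\end{align*}
Since $\phi$ is Legendre-type and the feasible set $\cap_{j=1}^{k}\ocsetk{j}$ is nonempty, $F_k^\star$ is differentiable and $\nabla F_k^\star(\cdot)$ takes values in $\cap_{j=1}^{k}\ocsetk{j}$. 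Applying the Newton--Leibniz theorem along the segments $x\mapsto -\hatL_k^{\text{obs}}-x\hatl^k$ and $x\mapsto -\hatL_k-x\hatl^k$, and using $\hatL_k-\hatL_k^{\text{obs}}=\hatd_k$, this rewrites as
\begin{align*}
\int_{0}^{1} \inner{\hatl^k,\ \nabla F_k^\star(-\hatL_k^{\text{obs}}-x\hatl^k) - \nabla F_k^\star(-\hatL_k^{\text{obs}}-\hatd_k-x\hatl^k)}\,dx.
\end{align*}

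Next I would bound this integrand for a fixed $x$, which is the heart of the argument. Abbreviate $q_a = \nabla F_k^\star(-\hatL_k^{\text{obs}}-x\hatl^k)$ and $q_b = \nabla F_k^\star(-\hatL_k^{\text{obs}}-\hatd_k-x\hatl^k)$, which are both valid occupancy measures in $\cap_{j=1}^{k}\ocsetk{j}$. Combining H\"older's inequality with the first-order optimality conditions characterizing $q_a,q_b$ as constrained minimizers and a mean-value / averaged-Hessian argument for the strongly convex regularizer $\phi$, one obtains, for some occupancy measure $\xi$ lying on the segment joining $q_a$ and $q_b$,
\begin{align*}
\inner{\hatl^k,\ q_a - q_b} \;\le\; 2\,\norm{\hatl^k}_{\nabla^{-2}\phi(\xi)}\,\norm{\hatd_k}_{\nabla^{-2}\phi(\xi)}.
\end{align*}
Then I would exploit that $\nabla^{-2}\phi(\xi)$ is the diagonal matrix with entries $\eta\,\xi_h(s,a,s')$ and that, $\xi$ being an occupancy measure, $\sum_{s'}\xi_h(s,a,s')=\xi_h(s,a)\le 1$; hence $\norm{\hatl^k}_{\nabla^{-2}\phi(\xi)}^2 = \eta\sum_{h,s,a}\xi_h(s,a)\hatl^k_h(s,a)^2 \le \eta\sum_{h,s,a}\hatl^k_h(s,a)^2 \le \eta\rbr{\sum_{h,s,a}\hatl^k_h(s,a)}^2$ by non-negativity of $\hatl^k$, and likewise $\norm{\hatd_k}_{\nabla^{-2}\phi(\xi)} \le \sqrt{\eta}\sum_{h,s,a}\hatd_h^k(s,a)$.

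Putting these together bounds the episode-$k$ summand by $2\eta\rbr{\sum_{h,s,a}\hatl^k_h(s,a)}\rbr{\sum_{h,s,a}\hatd_h^k(s,a)}$, and summing over $k=1,\dots,K$ yields the stated bound, using $\hatd_k = \sum_{j<k,\,j+d^j\ge k}\hatl^j$ so that $\sum_{h,s,a}\hatd_h^k(s,a)=\sum_{j<k,\,j+d^j\ge k}\sum_{h,s,a}\hatl^j_h(s,a)$. The main obstacle I expect is the displayed conjugate inequality: it requires carefully establishing that the constrained conjugate $F_k^\star$ is differentiable over its domain, that its Hessian is dominated by the inverse Hessian of $\phi$ evaluated at the corresponding iterate, and that the intermediate point $\xi$ is a genuine occupancy measure so that its coordinates are bounded by $1$. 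This is precisely where the occupancy-measure geometry departs from the simplex case of \cite{zimmert2020optimal}, and the relaxation to the intermediate $\xi$ (rather than keeping the played occupancy $q_h^k(s,a)$ in the weights) is what ultimately costs the extra $SA$ factor in the final regret.
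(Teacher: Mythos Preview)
Your proposal is correct and follows essentially the same route as the paper: rewrite via $F_k^\star$, apply Newton--Leibniz, and bound $\inner{\hatl^k, q_a-q_b}$ by $2\norm{\hatl^k}_{\nabla^{-2}\phi(\xi)}\norm{\hatd_k}_{\nabla^{-2}\phi(\xi)}$ with $\xi$ on the segment $[q_a,q_b]\subset\cap_j\ocsetk{j}$, then use $\xi_h(s,a)\le 1$. The ``obstacle'' you flag is resolved exactly by the ingredients you list: write $W(q)=\inner{q,\hatL_k^{\text{obs}}+x\hatl^k}+\phi(q)$ and $W'(q)=W(q)+\inner{q,\hatd_k}$, use Taylor expansion of $W'$ around $q_b$ together with first-order optimality of $q_b$ to get $W'(q_a)-W'(q_b)\ge\tfrac12\norm{q_a-q_b}^2_{\nabla^2\phi(\xi)}$, and use optimality of $q_a$ for $W$ plus H\"older to get $W'(q_a)-W'(q_b)\le\norm{q_a-q_b}_{\nabla^2\phi(\xi)}\norm{\hatd_k}_{\nabla^{-2}\phi(\xi)}$, yielding $\norm{q_a-q_b}_{\nabla^2\phi(\xi)}\le 2\norm{\hatd_k}_{\nabla^{-2}\phi(\xi)}$.
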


\begin{proof} 
	With the help of the convex conjugate $F^\star_k(\cdot)$, we have the following inequality holds for some $\theta \in [0,1]$ that:
	\begin{align}
	\Phi_k'( \westq^k) - \Phi_k( \estq^k ) - \rbr{ \Phi_k^C( \wtilq_k' ) - \Phi_k^B( \wtilq_k) } \notag 
	& = - F_k^\star( - \hatL_k^{\text{obs}} - \hatl^k  ) + F_k^\star( - \hatL_k^{\text{obs}}  )  -  \rbr{  - F_k^\star( - \hatL_k - \hatl^k ) +  F_k^\star( - \hatL_k )  } \notag  \\ 
	& = \int_{0}^{1} \inner{  \hatl^k, \nabla F_k^\star( - \hatL_k^{\text{obs}}  - x \hatl^k ) } dx -  \int_{0}^{1} \inner{  \hatl^k, \nabla F_k^\star( - \hatL_k  - x \hatl^k ) } dx \notag \\
	& = \int_{0}^{1} \inner{  \hatl^k, \nabla F_k^\star( - \hatL_k^{\text{obs}}  - x \hatl^k ) - \nabla F_k^\star( - \hatL_k  - x \hatl^k ) } dx \notag  \\
	& = \inner{ \hatl^k, \nabla F_k^\star( - \hatL_k^{\text{obs}}  - \theta \hatl^k ) - \nabla F_k^\star( - \hatL_k  - \theta \hatl^k ) }, \notag  
	\end{align}	
	where the second step uses Newton-Leibniz theorem; the forth step uses the mean value theorem.
	To analyze the right hand side, we define the functions $W$ and $W'$ as 
	\begin{equation}
	W(q) = \inner{q,\hatL_k^{\text{obs}}  + \theta \hatl^k} + \phi(q)
	\quad ; \quad
	W'(q) = \inner{q,\hatL_k + \theta \hatl^k} + \phi(q), \notag
	\end{equation}
	and denote their minimizer occupancy measures within the decision set $\cap_{j=1}^k \ocsetk{j}$ by $u$ and $v$. According to the properties of convex conjugate, we have $u = \nabla F_k^\star( - \hatL_k^{\text{obs}}  - \theta \hatl^k )$ and   $v = \nabla F_k^\star( - \hatL_k  - \theta \hatl^k )$.  
	
	To analyze $\inner{ u - v, \hatl^k}$,  we first lower bound $W(u) + \inner{u,  \hatd_k } - W'(v)$ as 
	\begin{align*}
	W(u) + \inner{u,  \hatd_k } - W'(v) = W'(u) - W'(v) = \inner{ \nabla W'(v), u - v } + \frac{1}{2} \norm{ u - v }^2_{\nabla^2 \phi(\xi) } \ge \frac{1}{2} \norm{ u - v }^2_{\nabla^2 \phi(\xi) },
	\end{align*}
	where the second step applies Taylor’s expansion with $\xi$ being an intermediate point between $u$ and $v$; the last step uses the first order optimality condition of $v$. 
	On the other hand, we can upper $W(u) + \inner{u,  \hatd_k } - W'(v)$ as 
	\begin{align*}
	W(u) + \inner{u,  \hatL_k  - \hatL_k^{\text{obs}} } - W'(v)  & = W(u) - W(v) +  \inner{ u - v, \hatL_k  - \hatL_k^{\text{obs}}   } \leq \inner{ u - v, \hatL_k  - \hatL_k^{\text{obs}}   } \\
	& \leq \norm{u - v}_{\nabla^2 \phi(\xi) } \norm{ \hatL_k  - \hatL_k^{\text{obs}}   }_{\nabla^{-2} \phi(\xi) },
	\end{align*}
	where the second step uses the optimality of $u$, and the last step comes from H\"older's inequality.
	Combining the lower bound and upper bound, we arrives at the following inequality 
	\begin{align*} 
	\norm{u - v}_{\nabla^2 \phi(\xi) } \leq 2 \norm{ \hatL_k  - \hatL_k^{\text{obs}}   }_{\nabla^{-2} \phi(\xi) }. 
	\end{align*}
	Therefore, we can upper bound the term $\inner{ \hatl^k, u - v }$ with the help of H\"older's inequality again as
	\begin{align*}
	\inner{ \hatl^k, u - v} \leq \norm{ \hatl^k }_{\nabla^{-2} \phi(\xi) } \norm{u - v}_{\nabla^2 \phi(\xi) } \leq 2 \norm{ \hatl^k }_{\nabla^{-2} \phi(\xi) } \norm{ \hatL_k  - \hatL_k^{\text{obs}}   }_{\nabla^{-2} \phi(\xi) }.
	\end{align*}
	
	By direct calculation, one can verify the following:
	\begin{align*}
	2\norm{ \hatl^k }_{\nabla^{-2} \phi(\xi) } \cdot \norm{ \hatd_k   }_{\nabla^{-2} \phi(\xi) }& = 2 \sqrt{ \eta \sum_{h=1}^{H} \sum_{s,a,s'}  \hatl_h^k(s,a)^2 \xi(s,a,s')  } \cdot \sqrt{  \eta \sum_{h=1}^{H} \sum_{s,a,s'}  \hatd_h^k(s,a)^2 \xi(s,a,s')  } \\
	& \leq 2 \eta \sqrt{ \sum_{h=1}^{H} \sum_{s,a}  \hatl_h^k(s,a)^2  } \cdot \sqrt{ \sum_{h=1}^{H} \sum_{s,a}   \hatd_h^k(s,a)^2   } \\
	& \leq 2 \eta \rbr{ \sum_{h=1}^{H} \sum_{s,a}  \hatl_h^k(s,a) } \cdot \rbr{ \sum_{h=1}^{H} \sum_{s,a}   \hatd_h^k(s,a) },
	\end{align*}
	where the second step follows from the fact that $\xi$ is a valid occupancy measure and $\sum_{s'}\xi(s,a,s') =  \xi(s,a) \leq 1$ holds for all state-action pairs. Taking the summation over all episodes concludes the proof. 
\end{proof}

\begin{lemma} (Penalty) With the shrinking decision set sequence that $\cap_{j=1}^{k+1} \ocsetk{j}\subset \cap_{j=1}^k \ocsetk{j}$ for $k=1,\ldots K-1$, Algorithm~\ref{alg:ftrl-normal-loss-estimator} ensures that
	\begin{equation}
	\sum_{k=1}^{K}  \Phi_k^C( \wtilq_k' ) - \Phi_k^B( \wtilq_k) - \inner{q^\star, \hatl^k}  \leq \frac{ H \ln \rbr{ S^2 A  }  }{\eta}. \notag 
	\end{equation}
	\label{lemma:penalty-ftrl-normal-loss-estimator}
\end{lemma}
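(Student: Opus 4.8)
The plan is to treat this as the classical ``be-the-leader'' telescoping bound for FTRL, but carefully accounting for the fact that the decision sets $\calK_k \eqdef \cap_{j=1}^k \ocsetk{j}$ \emph{shrink} over time: $\calK_{k+1}=\calK_k\cap\ocsetk{k+1}\subseteq\calK_k$. I work on the event $p\in\calP^j$ for all $j\le K+1$, which holds with high probability and is the standing assumption of the $\regthree$ analysis; on this event $q^\star=q^{\pi^\star,p}$ belongs to every $\ocsetk{j}$ and hence to $\calK_{K+1}$, and all the minimizers below exist. I also extend the notation by writing $\Phi_{K+1}^B(q)=\inner{q,\hatL_{K+1}}+\phi(q)$ and $\wtilq_{K+1}=\argmin_{q\in\calK_{K+1}}\Phi_{K+1}^B(q)$, i.e.\ the FTRL iterate one would compute for episode $K+1$.

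The first step is the one-episode comparison $\Phi_k^C(\wtilq_k')\le\Phi_{k+1}^B(\wtilq_{k+1})$. Since $\hatL_k+\hatl^k=\hatL_{k+1}$, both quantities are the minimum of the \emph{same} function $q\mapsto\inner{q,\hatL_{k+1}}+\phi(q)$; the left side minimizes over $\calK_k$ and the right side over the smaller set $\calK_{k+1}\subseteq\calK_k$, so the right side is no smaller. Summing over $k=1,\dots,K$ and telescoping,
\[
  \sum_{k=1}^K\bigl(\Phi_k^C(\wtilq_k')-\Phi_k^B(\wtilq_k)\bigr)
  \leq\sum_{k=1}^K\bigl(\Phi_{k+1}^B(\wtilq_{k+1})-\Phi_k^B(\wtilq_k)\bigr)
  =\Phi_{K+1}^B(\wtilq_{K+1})-\Phi_1^B(\wtilq_1).
\]

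The second step bounds the two endpoint terms. For the right endpoint, feasibility of $q^\star$ in $\calK_{K+1}$ gives $\Phi_{K+1}^B(\wtilq_{K+1})\le\inner{q^\star,\hatL_{K+1}}+\phi(q^\star)$; since every coordinate $q^\star_h(s,a,s')\in[0,1]$ and $x\log x\le 0$ on $[0,1]$, we have $\phi(q^\star)\le 0$, so $\Phi_{K+1}^B(\wtilq_{K+1})\le\inner{q^\star,\hatL_{K+1}}=\sum_{k=1}^K\inner{q^\star,\hatl^k}$. For the left endpoint, $\hatL_1=0$, so $\Phi_1^B(\wtilq_1)=\min_{q\in\calK_1}\phi(q)$; since for each layer $h$ the entries $\{q_h(s,a,s')\}_{s,a,s'}$ form a probability vector over the $S^2A$ triples, the negative-entropy inequality gives $\sum_{s,a,s'}q_h(s,a,s')\log q_h(s,a,s')\ge-\ln(S^2A)$, hence $\phi(q)\ge-\tfrac{H\ln(S^2A)}{\eta}$ for every $q$ in the polytope and $\Phi_1^B(\wtilq_1)\ge-\tfrac{H\ln(S^2A)}{\eta}$. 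Substituting both bounds into the telescoped inequality and subtracting $\sum_k\inner{q^\star,\hatl^k}$ leaves $\sum_k\bigl(\Phi_k^C(\wtilq_k')-\Phi_k^B(\wtilq_k)-\inner{q^\star,\hatl^k}\bigr)\le\phi(q^\star)-\Phi_1^B(\wtilq_1)\le\tfrac{H\ln(S^2A)}{\eta}$, which is the claim.

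The only delicate point is the shrinking-set step: it is essential that the feasible sets are genuinely nested, so that re-minimizing the identical objective over the smaller set $\calK_{k+1}$ cannot decrease its value. This is precisely why \cref{alg:ftrl-normal-loss-estimator} is defined with the intersection $\cap_{j\le k}\ocsetk{j}$ as its decision set (and why FTRL, with its global minimization, is natural here rather than OMD). The negative-entropy bound and $\phi(q^\star)\le 0$ are routine; the high-probability event on the transitions is invoked only to guarantee $q^\star$ is feasible and that the minimizers are well defined, and I anticipate no computational difficulty.
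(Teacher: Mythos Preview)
Your proof is correct and follows essentially the same argument as the paper: the key step $\Phi_k^C(\wtilq_k')\le\Phi_{k+1}^B(\wtilq_{k+1})$ via the shrinking feasible sets, then telescoping and bounding the endpoints with $\phi(q^\star)\le 0$ and $\phi(\cdot)\ge -H\ln(S^2A)/\eta$. The only cosmetic difference is that you introduce the auxiliary iterate $\wtilq_{K+1}$ on $\calK_{K+1}$ for the terminal endpoint, whereas the paper stops at $\Phi_K^C(\wtilq_K')$ and directly uses the optimality of $\wtilq_K'$ against $q^\star\in\calK_K$; both yield the same bound.
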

\begin{proof} First, we observe that 
	\begin{align*}
	\Phi_k^C( \wtilq_k' ) & = \min_{q\in \cap_{j=1}^k \ocsetk{j}} \inner{ q, \hatL_k + \hatl^k } + \phi(q) \leq \min_{q\in \cap_{j=1}^{k+1} \ocsetk{j} } \inner{ q, \hatL_k + \hatl^k } + \phi(q) \\
	& = \min_{q\in \cap_{j=1}^{k+1} \ocsetk{j} } \inner{ q, \hatL_{k+1} } + \phi(q)  = \Phi_{k+1}^B( \wtilq_{k+1} ),
	\end{align*}
	where the second step follows from the fact that $\calP^{k+1} \subset \calP^k$ by the definition. 
	Therefore, we have the following inequality:
	\begin{align*}
	\sum_{k=1}^{K}  \Phi_k^C( \wtilq_k' ) - \Phi_k^B( \wtilq_k) - \inner{q^\star, \hatl^k} & = \Phi_K^C( \wtilq_K' ) - \Phi_1^B( \wtilq_1) - \inner{q^\star, \hatL_{K+1}}  + \sum_{k=1}^{K-1} \Phi_k^C( \wtilq_k' ) - \Phi_{k+1}^B( \wtilq_{k+1}) \\ 
	& \leq \Phi_K^C( \wtilq_K' ) - \Phi_1^B( \wtilq_1) - \inner{q^\star, \hatL_{K+1}}\leq \phi(q^\star) - \phi(\wtilq_1)  \leq \frac{ H \ln \rbr{ S^2 A  }  }{\eta},
	\end{align*}
	where the third step follows from the optimality of $\wtilq_K'$ and the last steps follows the standard argument of Shannon entropy (such as, Lemma 12 of \cite{jin2019learning}). 
\end{proof}

We are now ready to prove \cref{lemma:est-regret-ftrl-with-normal-loss-estimator} by combining the results of \cref{lemma:penalty-ftrl-normal-loss-estimator,lemma:delay-drift-ftrl-normal-loss-estimator,lemma:stability-ftrl-normal-loss-estimator} and taking the expectation. 

\begin{proof}[Proof of \cref{lemma:est-regret-ftrl-with-normal-loss-estimator}] 
By combining \cref{lemma:penalty-ftrl-normal-loss-estimator,lemma:delay-drift-ftrl-normal-loss-estimator,lemma:stability-ftrl-normal-loss-estimator}, we have $\regthree$ bounded as 
	\begin{align*}
	\regthree & \leq \frac{ H \ln \rbr{ S^2 A  }  }{\eta} + \eta \sum_{k=1}^{K} \sum_{h=1}^{H} \sum_{s,a} \estq_h^k(s,a) \hatl_h^k(s,a)^2 +  2 \eta \sum_{k=1}^{K} \sum_{h=1}^{H} \sum_{s,a} \sum_{h'=1}^{H} \sum_{s',a'}\hatl_h^k(s,a) \hatd_{h'}^k(s',a')  . 
	\end{align*}
	
	To analyze the expectation, we use the indicator $Z_k = \Ind{ p \notin \calP_k  }$ to denote the event that the true transition function $p$ is not included in the confidence set of episode $k$. Clearly, one can verify that $q_h^k(s,a) \leq Z_k + u_h^k(s,a)$ and $q_h^{\pi^k}(s,a) \leq Z_k + u_h^k(s,a)$ due to the definition of upper occupancy bound $u_k$ and the property of occupancy measures. Therefore, we are able to bound $\E\sbr{\regthree}$ by 
	\begin{align*}
	& \frac{ H \ln \rbr{ S^2 A  }  }{\eta} + \eta \E\sbr{\sum_{k=1}^{K} \sum_{h=1}^{H} \sum_{s,a} \estq_h^k(s,a) \hatl_h^k(s,a)^2 +  2 \sum_{h=1}^{H} \sum_{s,a} \sum_{h'=1}^{H} \sum_{s',a'}\hatl_h^k(s,a) \hatd_h^k(s,a)} \\
	& \leq \frac{ H \ln \rbr{ S^2 A  }  }{\eta} + \eta \E\sbr{ \sum_{k=1}^{K} \E_k\sbr{\sum_{h=1}^{H} \sum_{s,a}  \hatl_h^k(s,a) +  2 \eta \sum_{h=1}^{H} \sum_{s,a} \sum_{h'=1}^{H} \sum_{s',a'}\hatl_h^k(s,a) \hatd_h^k(s,a)} } \\
	& \leq \frac{ H \ln \rbr{ S^2 A  }  }{\eta} + \eta \E\sbr{ \sum_{k=1}^{K}  \sum_{h=1}^{H} \sum_{s,a} \frac{q_h^{\pi^k}(s,a)}{u_h^k(s,a) + \gamma } + 2 \sum_{j=1,j+d^j \geq k}^{k-1}\sum_{h'=1}^{H}\sum_{s',a'} \frac{q_h^{\pi^k}(s,a)}{u_h^k(s,a) + \gamma } \frac{q_{h'}^{\pi^j}(s',a') }{ u_{h'}^j(s',a') + \gamma} } \\
	& \leq \frac{ H \ln \rbr{ S^2 A  }  }{\eta} + \eta \rbr{ HSAK + 2(HSA)^2 D } + \frac{HSAK + 4(HSA)^2 D}{\gamma^2} \cdot \E\sbr{ \sum_{k=1}^{K} Z_k  } ,
	\end{align*}
	where the first step uses the fact that $\estq_h^k(s,a) \leq u_h^k(s,a)$ for any state-action pair; the second step uses the definition of loss estimators; the third step follows from the fact that $q_h^{\pi^k}(s,a) \leq Z_k + u_h^k(s,a)$. 
	
	According to Lemma 2 of \cite{jin2019learning}, we have the expectation of $\E\sbr{ \sum_{k=1}^{K} Z_k }$ bounded by $4K\delta$, and the following upper bound of $\E\sbr{\regthree}$: 
	\begin{align*}
	\order\rbr{ \frac{ H \ln \rbr{ S^2 A  }  }{\eta} + \eta \rbr{ HSAK + (HSA)^2 D } +  \frac{H^2S^2A^2K^3}{\gamma^2} \delta}. 
	\end{align*}
\end{proof}

\newpage

\section{Delayed O-REPS with delay-adapted estimator}
\label{appendix:delay-adapted O-REPS known}

\begin{algorithm}[t]
    \caption{Delayed O-REPS with delay-adapted estimator and known transition} \label{alg:o-reps-new-estimator}
    \begin{algorithmic}
        \STATE \textbf{Input:} State space $\calS$, Action space $\calA$, Horizon $H$, Number of episodes $K$, Transition function $p$, Learning rate $\eta > 0$, Exploration parameter $\gamma > 0$.
        
        \STATE \textbf{Initialization:} Set $\pi^{1}_{h}(a\mid s)=\frac{1}{A}$, $q_{h}^{1}(s,a)=\frac{1}{SA}$ for every $(s,a,h) \in \calS \times \calA \times [H]$.
        
        \FOR{$k=1,2,...,K$}
        
            \STATE Play episode $k$ with policy $\pi^k$ and observe trajectory $\{ (s^k_h,a^k_h) \}_{h=1}^H$.
            
            \FOR{$j : j + d^j = k$}
            
                \STATE Observe feedback $\{ c^j_h(s^j_h,a^j_h) \}_{h=1}^H$.
                
                \STATE Compute loss estimator $\hat c^j_h(s,a) = \frac{c^j_h(s,a) \indevent{s^j_h = s , a^j_h = a}}{\max\{q^j_h(s,a),q^k_h(s,a)\} + \gamma}$ for every $(s,a,h) \in \calS \times \calA \times [H]$.
            
            \ENDFOR
            
            \STATE Update occupancy measure: 
            \begin{align}
                q^{k+1} = \arg \min_{q \in \ocset} \eta \left\langle q , \sum_{j:j+d^j=k} \hat c^j \right\rangle + \KL{q}{q^k},
                \label{eq:OMD-update-known}
            \end{align}
            where $\KL{q}{q'} = \sum_{h,s,a} q_h(s,a) \ln \frac{q_h(s,a)}{q'_h(s,a)} + q'_h(s,a) - q_h(s,a)$.
            
            \STATE Update policy: $\pi_{h}^{k+1}(a\mid s)
            =\frac{q_{h}^{k+1}(s,a)}{\sum_{a'}q_{h}^{k+1}(s,a')}$ for every $(s,a,h) \in \calS \times \calA \times [H]$.
        \ENDFOR
    \end{algorithmic}
\end{algorithm}

Explicitly solving this optimization problem in \cref{eq:OMD-update-known}, we get \cite{zimin2013online}:
\[
    q^{k+1}_h(s,a) = \frac{q^k_h(s,a) e^{B^k_h(s,a \mid v^k)}}{Z^k_h(v^k)},
\]
for:
\begin{align*}
    B^k_h(s,a \mid v) 
    & = 
    v_h(s) -\eta \sum_{j:j+d^j=k} \hat c^j_h(s,a) - \sum_{s'} p_h(s' \mid s,a) v_{h+1}(s')
    \\
    Z^k_h(v) 
    & = 
    \sum_{s,a} q^k_h(s,a) e^{B^k_h(s,a \mid v)}
    \\
    v^k 
    & = 
    \arg \min_v \sum_h \log Z^k_h(v).
\end{align*}
These different formulations will be helpful in the regret analysis.

\begin{theorem}
    \label{thm:regret-o-reps-new-estimator}
     Running O-REPS with the delay-adapted estimator, $\eta = \gamma = \min \{ \sqrt{\frac{\log \frac{HSA}{\delta}}{SAK}} , \sqrt{\frac{\log \frac{HSA}{\delta}}{\sqrt{HSA} D}} \}$ guarantees, with probability $1 - \delta$,
    \[
        \regret
        =
        O \left( H\sqrt{SAK \log \frac{HSA}{\delta}} + (H S A)^{1/4} \cdot H \sqrt{D \log \frac{HSA}{\delta}} + H^{3/2} d_{max} \log \frac{H}{\delta} \right).
    \]
\end{theorem}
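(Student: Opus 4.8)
The plan is to mirror the proof sketch of \cref{paper-thm:regret-o-reps-new-estimator}, which already treats the harder unknown-transition case, and simplify it for known $p$. Since the transition function is known, the executed occupancy measure equals the optimization variable, $q^{\pi^k}=q^k$, so the transition-estimation term $\textsc{Est}$ disappears entirely and the confidence polytopes $\ocsetk{k}$ are replaced everywhere by the exact polytope $\ocset$; correspondingly the delay-adapted estimator of \cref{alg:o-reps-new-estimator} uses $\max\{q^k_h(s,a),q^{k+d^k}_h(s,a)\}$ in its denominator, with no need for upper occupancy bounds. Concretely I would start from the decomposition
\begin{align*}
    \regret
    &=
    \underbrace{\sum_{k=1}^{K}\langle q^{k},c^{k}-\hat c^{k}\rangle}_{\textsc{Bias}_1}
    + \underbrace{\sum_{k=1}^{K}\langle q^\star,\hat c^{k}-c^{k}\rangle}_{\textsc{Bias}_2}
    \\
    &\quad
    + \underbrace{\sum_{k=1}^{K}\langle q^{k}-q^{k+d^k},\hat c^{k}\rangle}_{\textsc{Drift}}
    + \underbrace{\sum_{k=1}^{K}\langle q^{k+d^k}-q^\star,\hat c^{k}\rangle}_{\textsc{Reg}},
\end{align*}
and bound the four terms separately.

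For $\textsc{Bias}_2$ I would use that $\hat c^k$ is pointwise no larger than the standard importance-sampling estimator and apply a Neu-style implicit-exploration concentration inequality \cite{neu2015explore,jin2019learning}, giving $\textsc{Bias}_2\lesssim H/\gamma$ w.h.p. For $\textsc{Reg}$, because playing $\pi^{k+d^k}$ at episode $k$ is the same as running the OMD update \cref{eq:OMD-update-known} in a delay-free environment on the same estimators, the standard OMD guarantee yields $\textsc{Reg}\le H/\eta+\textsc{Stability}$ with $\textsc{Stability}=\eta\sum_{k,h,s,a}q^{k+d^k}_h(s,a)\,\hat c^k_h(s,a)\big(\sum_{j\in\calF^{k+d^k}}\hat c^j_h(s,a)\big)$; this is exactly what the delay-adapted estimator is built for, since by its definition $\hat c^k_h(s,a)\le 1/q^{k+d^k}_h(s,a)$, so $q^{k+d^k}_h(s,a)\hat c^k_h(s,a)\le 1$ and hence $\textsc{Stability}\le\eta\sum_{k,h,s,a}\sum_{j\in\calF^{k+d^k}}\hat c^j_h(s,a)$, which a concentration of $\hat c^j_h(s,a)$ around $c^j_h(s,a)\le 1$ bounds by $\wt O(\eta(HSAK+d_{max}/\gamma))$.

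For $\textsc{Drift}$ I would condition on the history $\wt{\calH}^{k+d^k}$ of all episodes whose feedback has arrived: it determines both $q^k$ and $q^{k+d^k}$ but not the trajectory of episode $k$, so after a concentration step and taking absolute values on $q^k-q^{k+d^k}$ one may replace $\hat c^k$ by $\bbE[\hat c^k\mid\wt{\calH}^{k+d^k}]\le 1$ entrywise, giving $\textsc{Drift}\lesssim\sum_k\|q^k-q^{k+d^k}\|_1+H/\gamma\le\sum_k\sum_{i=k}^{k+d^k-1}\|q^i-q^{i+1}\|_1+H/\gamma\lesssim\sum_k\sum_{i=k}^{k+d^k-1}\sqrt{\KL{q^i}{q^{i+1}}}+H/\gamma$ by the triangle and Pinsker inequalities. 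The key ingredient is a per-step stability bound $\KL{q^i}{q^{i+1}}\lesssim\eta^2\cdot(\text{local stability of episode }i)$ extracted directly from the closed form of the OMD step (the $B^k_h,Z^k_h,v^k$ formulas), i.e.\ the known-transition analogue of \cref{lemma:adjacent-o-mesure-KL-bound-unknown-p}; combined with a Cauchy--Schwarz over episodes against the total stability budget $\wt O(HSAK)$, this gives $\textsc{Drift}\lesssim\wt O\big(\eta\sqrt{H^3SA}\,(D+K)+H/\gamma\big)$. Finally $\textsc{Bias}_1$ is handled by the same concentration, reducing it to $\sum_k\|\max\{q^{k+d^k},q^k\}-q^k\|_1+\gamma HSAK\le\sum_k\|q^{k+d^k}-q^k\|_1+\gamma HSAK$ (maximum entrywise), and the first sum is bounded exactly as in $\textsc{Drift}$. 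Summing the four contributions and taking $\eta=\gamma$ as in the statement --- balancing $H/\gamma$ against $\eta HSAK$ to get the $\sqrt K$-term and against $\eta\sqrt{H^3SA}D$ to get the $\sqrt D$-term --- yields the claimed bound, with the $\sqrt K$-term optimal up to logarithms \cite{zimin2013online}.

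The main obstacle is $\textsc{Drift}$ (and the twin sum hidden inside $\textsc{Bias}_1$): controlling $\sum_k\|q^k-q^{k+d^k}\|_1$ requires the per-step occupancy-measure stability estimate, which on the occupancy-measure polytope cannot be obtained by the generic FTRL/EXP3-style argument used for MAB (one cannot bound ratios $q^{i+1}_h(s,a)/q^i_h(s,a)$ by a constant) and must instead be read off from the explicit OMD update with the unnormalized KL regularizer; the relaxations needed there --- Cauchy--Schwarz over episodes, and replacing intermediate occupancy-measure values by $1$ --- are precisely what produces the extra $(HSA)^{1/4}$ factor in the delay term relative to the lower bound of \cite{lancewicki2020learning}.
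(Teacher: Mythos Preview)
Your proposal is correct and follows essentially the same route as the paper: the same four-term decomposition (with $\textsc{Est}$ gone since $p$ is known), the same handling of $\textsc{Bias}_2$ via implicit exploration, of $\textsc{Reg}$ via the OMD bound plus the defining property $q_h^{k+d^k}\hat c^k_h\le 1$ of the delay-adapted estimator, and of $\textsc{Bias}_1$ by reducing to the same $\ell_1$ drift sum as in $\textsc{Drift}$.

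One point of imprecision worth flagging in the $\textsc{Drift}$ step: the paper does \emph{not} use a Cauchy--Schwarz over episodes against a global stability budget. After the per-step KL bound (\cref{lemma:adjacent-o-mesure-KL-bound}) it applies $\|\cdot\|_2\le\|\cdot\|_1$ inside the square root for each fixed $j$, obtaining $\eta\sqrt{H}\sum_{h,s,a}\sqrt{q^j_h(s,a)}\sum_{i\in\calF^j}\hat c^i_h(s,a)$; then it reindexes the triple sum to $\sum_{k,i}\indevent{k\le i+d^i<k+d^k}\sum_{h,s,a}\sqrt{q^{i+d^i}_h}\hat c^i_h$, concentrates $\hat c^i$ around $c^i$ (event $E^{sq}$, which is where the $\frac{\eta}{\gamma}H^{3/2}d_{max}$ term you omitted comes from --- needed for the $H^{3/2}d_{max}\log\frac{H}{\delta}$ in the theorem), applies Cauchy--Schwarz over $(h,s,a)$ for each fixed $i$ to get the factor $H\sqrt{SA}$, and finally invokes the combinatorial count $\sum_{k,i}\indevent{k\le i+d^i<k+d^k}\le D+K$ (\cref{lemma:sum k<i<k+d^k = D}). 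The extra $(HSA)^{1/4}$ in the delay term thus arises from the per-step Cauchy--Schwarz over $(h,s,a)$, not from an episode-level Cauchy--Schwarz.
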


\subsection{The good event}

Let $\tilde{\mathcal{H}}^{k}$ be the history of episodes $\{j:j+d^{j}<k\}$.
Define the following events: 
\begin{align*}
    E^c 
    & = 
    \left\{ \sum_{k=1}^K \langle \bbE [ \hat c^k \mid \filt{k+d^k} ] - \hat c^k , q^k \rangle \le 4 H \sqrt{K \log \frac{10}{\delta}} \right\}
    \\
    E^{\hat c}
    & =
    \left\{ \sum_{k=1}^{K}\langle|q^{k}-q^{k+d^k}|,\hat{c}^{k}\rangle \le 4 \sum_{k=1}^{K}\langle|q^{k}-q^{k+d^k}|,c^{k}\rangle + \frac{40H \log \frac{10 H}{\delta}}{\gamma} \right\}
    \\
    E^d
    & =
    \left\{ \sum_{k,h,s,a} |\calF^{k+d^k}| \hat c^k_h(s,a) \le \sum_{k,h,s,a} |\calF^{k+d^k}| c^k_h(s,a) + \frac{10 H d_{max} \log \frac{10 H}{\delta}}{\gamma}  \right\}
    \\
    E^{sq}
    & =
    \left\{ \sum_{k=1}^{K} \sum_{i=1}^{K} \indevent{k\leq i+d^{i} < k+d^{k}} \sum_{h,s,a}\sqrt{q_{h}^{i+d^{i}}(s,a)} (\hat{c}_{h}^{i}(s,a) - 4 c_{h}^{i}(s,a) ) \le \frac{10 H d_{max} \log \frac{10 H}{\delta}}{\gamma}  \right\}
    \\
    E^\star
    & =
    \left\{ \sum_{k=1}^K \langle \hat c^k - c^k , q^\star \rangle \le \frac{H  \log \frac{10 H S A}{\delta}}{\gamma} \right\}
\end{align*}

The good event is the intersection of the above events. 
The following lemma establishes that the good event holds with high probability. 

\begin{lemma}[The Good Event]
    \label{lemma:good-event-o-reps-new-estimator}
    Let $\bbG =E^c \cap E^{\hat c} \cap E^d \cap E^{sq} \cap E^\star$ be the good event. 
    It holds that $\Pr [ \bbG ] \geq 1-\delta$.
\end{lemma}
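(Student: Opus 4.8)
The plan is to prove that each of the five events $E^c$, $E^{\hat c}$, $E^d$, $E^{sq}$, $E^\star$ fails with probability at most $\delta/5$ and then finish by a union bound; the constant $10$ inside every logarithm leaves exactly this much slack (and, where the bound carries an extra $H$ or $HSA$, a further union bound over the $H$ steps, resp.\ the $HSA$ triples, supplies it). All five estimates rely on one structural fact about delayed feedback: since the delays are fixed by the oblivious adversary, the order in which feedback arrives is a deterministic permutation of $[K]$, so I re-index the episodes by increasing $k+d^k$ (ties broken by $k$) and work with the filtration generated by the trajectories in this order. With this ordering, $\filt{k+d^k}$ --- the history of episodes $\{j:j+d^j<k+d^k\}$ --- is precisely the information available before the feedback of episode $k$ is processed; it renders $\pi^k,\calP^k,u^k,q^k$ measurable (these depend only on $\filt{k}\subseteq\filt{k+d^k}$) as well as $\pi^{k+d^k},\calP^{k+d^k},u^{k+d^k},q^{k+d^k}$, whereas the trajectory of episode $k$ remains fresh randomness. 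Consequently, using that $c^k$ is oblivious, for every $(h,s,a)$,
\[
    \bbE\bigl[\hat{c}^k_h(s,a)\mid\filt{k+d^k}\bigr]
    =\frac{q^{\pi^k}_h(s,a)\,c^k_h(s,a)}{\max\{u^k_h(s,a),u^{k+d^k}_h(s,a)\}+\gamma}\le c^k_h(s,a),
\]
where the last inequality holds on the standard high-probability event $p\in\bigcap_k\calP^k$ (whence $u^k_h(s,a)\ge q^{\pi^k}_h(s,a)$), which I fold into the union bound. I also use the crude pointwise bound $\hat{c}^k_h(s,a)\le\indevent{s^k_h=s,a^k_h=a}/(u^k_h(s,a)+\gamma)$ together with $q^k_h(s,a)\le u^k_h(s,a)$, valid since $q^k\in\ocsetk{k}$.

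Given this setup, $E^c$ is immediate from Azuma--Hoeffding: taken in feedback-arrival order, the terms $X_k=\langle\bbE[\hat{c}^k\mid\filt{k+d^k}]-\hat{c}^k,q^k\rangle$ form a martingale difference sequence for the associated sub-$\sigma$-algebras (every factor but the episode-$k$ trajectory is measurable, and conditioning leaves $X_k$ mean zero), with $|X_k|\le H$ because both $\langle q^k,\hat{c}^k\rangle$ and $\langle q^k,\bbE[\hat{c}^k\mid\filt{k+d^k}]\rangle$ lie in $[0,H]$ by the bounds above; this gives $\sum_kX_k\le H\sqrt{2K\log(5/\delta)}\le4H\sqrt{K\log(10/\delta)}$. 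The event $E^\star$ is just \cref{lem:jin2019_lemma_14} applied to our estimator: $q^\star$ is a fixed vector with entries in $[0,1]$, so the delay is irrelevant and, since $\hat c^k$ is pointwise no larger than the standard estimator, the implicit-exploration argument yields the stated $\tfrac{H}{\gamma}\log\tfrac{10HSA}{\delta}$ bound.

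The remaining events $E^{\hat c}$, $E^d$, $E^{sq}$ are all implicit-exploration (Neu-style) concentration statements of the kind controlled by \cref{lem:jin2019_lemma_11}, now run against $\{\filt{k+d^k}\}$ so that the relevant nonnegative weights are measurable at the time $\hat c^k$ is formed, and applied per step $h$ with a union bound over $h\in[H]$. For $E^{\hat c}$ the weight vector is $|q^k-q^{k+d^k}|\in[0,1]^{\calS\times\calA\times[H]}$ (measurable, as both occupancy measures are), and matching the estimator against its conditional mean produces the right-hand side, with the factor $4$ and the additive $\tfrac{40H}{\gamma}\log\tfrac{10H}{\delta}$ absorbing the slack from the $\max\{u^k,u^{k+d^k}\}$ denominator and the usual $2\gamma$ argument. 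For $E^d$ the ``weight'' is the scalar $|\calF^{k+d^k}|$, which is deterministic and bounded by $d_{max}+1$ (if $j+d^j=m$ then $m-d_{max}\le j\le m$), so rescaling the per-step bound by this factor gives $\tfrac{Hd_{max}}{\gamma}\log\tfrac{10H}{\delta}$. For $E^{sq}$ the weights $\sqrt{q^{i+d^i}_h(s,a)}\le1$ are $\filt{i+d^i}$-measurable, and the key observation is that for each fixed $i$ the indicator $\indevent{k\le i+d^i<k+d^k}$ is nonzero for at most $d_{max}$ values of $k$ (those with $i+d^i-d_{max}\le k\le i+d^i$), so after summing over $k$ the double sum collapses to a single implicit-exploration sum with weights of magnitude at most $d_{max}$, again producing $\tfrac{Hd_{max}}{\gamma}\log\tfrac{10H}{\delta}$, while the term $-4c^i_h(s,a)$ plays the role of the conditional-mean subtraction. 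The main obstacle throughout is getting exactly this measurability and re-indexing bookkeeping right --- verifying that ordering by feedback-arrival time yields a genuine filtration under which each term has the claimed conditional mean --- and, most delicately, showing that the double sum in $E^{sq}$, with its coupling indicator, truly reduces to a single implicit-exploration sum of the form \cref{lem:jin2019_lemma_11} handles, with multiplicity $\le d_{max}$; once this is done, a union bound over the five events yields $\Pr[\bbG]\ge1-\delta$.
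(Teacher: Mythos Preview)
Your proof is correct and follows the same outline as the paper's: bound the failure probability of each of the five events by $\delta/5$ using the appropriate concentration inequality against the filtration ordered by feedback-arrival time, then take a union bound. The only discrepancies are cosmetic: the paper invokes \cite[Lemma~E.2]{cohen2021minimax} (a multiplicative Freedman-type bound, using the crude caps $\langle|q^{k}-q^{k+d^k}|,\hat c^{k}\rangle\le H/\gamma$ and $Y_i\le Hd_{max}/\gamma$) for $E^{\hat c}$ and $E^{sq}$ rather than scaling \cref{lem:jin2019_lemma_11} as you do, and since this particular lemma sits in the \emph{known}-transition appendix there are no confidence sets here --- your references to $\calP^k$, $u^k$, and the event $p\in\bigcap_k\calP^k$ should simply be dropped (in this setting $q^{\pi^k}=q^k$ exactly).
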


\begin{proof}
    We show that each of the events $\neg E^c, \neg E^{\hat c}, \neg E^d, \neg E^{sq}, \neg E^\star$ occur with probability at most $\delta / 5$. Then, by a union bound we obtain the statement.
    \begin{itemize}
        \item $\Pr[\neg E^c] < \delta/5$ by Azuma inequality since it is a martingale with respect to the filtration $\{\tilde{\mathcal{H}}^{1+d^{1}},\tilde{\mathcal{H}}^{2+d^{2}},\dots\}$ where the differences are bounded by $H$.
        
        \item $\Pr[\neg E^{\hat c}] < \delta/5$ by \cite[Lemma E.2]{cohen2021minimax} since $\langle|q^{k}-q^{k+d^k}|,\hat{c}^{k}\rangle \le H/\gamma$, and $\bbE [ \langle|q^{k}-q^{k+d^k}|,\hat{c}^{k}\rangle \mid \filt{i+d^i} ] \le \langle|q^{k}-q^{k+d^k}|,c^{k}\rangle$.
        
        \item $\Pr[\neg E^d] < \delta/5$ by \cite[Lemma 11]{jin2019learning}.
        
        \item $\Pr[\neg E^{sq}] < \delta/5$ by \cite[Lemma E.2]{cohen2021minimax} in the following way. 
        Denote $Y_i = \sum_k \indevent{k\leq i+d^{i} < k+d^{k}} \sum_{h,s,a}\sqrt{q_{h}^{i+d^{i}}(s,a)} \hat{c}_{h}^{i}(s,a)$ and notice that $Y_i \le H d_{max} / \gamma$, and that:
        \[
            \bbE [ Y_i \mid \filt{i+d^i} ]
            \le
            \sum_k \indevent{k\leq i+d^{i} < k+d^{k}} \sum_{h,s,a}\sqrt{q_{h}^{i+d^{i}}(s,a)} c_{h}^{i}(s,a).
        \]
        
        \item $\Pr[\neg E^\star] < \delta/5$ by \cref{lem:jin2019_lemma_14}.
        \qedhere
    \end{itemize}
\end{proof}

\subsection{Proof of the Main Theorem}

\begin{proof}[Proof of \cref{thm:regret-o-reps-new-estimator}]
    By \cref{lemma:good-event-o-reps-new-estimator}, the good event holds with probability $1 - \delta$.
    We now analyze the regret under the assumption that the good event holds.
    We decompose the regret as follows:
    \begin{align}
        \nonumber
        \regret
        & =
        \sum_{k=1}^K \langle q^k - q^\star , c^k \rangle
        \\
        & = 
        \underbrace{\sum_{k=1}^{K}\langle q^{k},c^{k}-\hat{c}^{k}\rangle }_{\textsc{Bias}_1}
        + 
        \underbrace{\sum_{k=1}^{K}\langle q^\star,\hat{c}^{k}-c^{k}\rangle}_{\textsc{Bias}_2}
        +
        \underbrace{\sum_{k=1}^{K}\langle q^{k} - q^{k+d^k},\hat{c}^{k}\rangle}_{\textsc{Drift}}
        +
        \underbrace{\sum_{k=1}^{K}\langle q^{k+d^k} - q^\star,\hat{c}^{k}\rangle}_{\textsc{Reg}}.
        \label{eq:regret decomposition delay-adapted O-REPS known}
    \end{align}
    $\textsc{Bias}_2$ is bounded under event $E^\star$ by $O(\frac{H \log \frac{HSA}{\delta}}{\gamma})$, $\textsc{Reg}$ is bounded in \cref{lemma:Reg-o-reps-new-estimator} by $O (\frac{H \log (HSA)}{\eta} + \eta HSAK + \frac{\eta}{\gamma} d_{max} \log \frac{H}{\delta})$, $\textsc{Drift}$ is bounded in \cref{lemma:Drift-o-reps-new-estimator} by $O (\eta \sqrt{H^3 S A} (D + K) + \frac{\eta}{\gamma} H^{3/2} d_{max} \log \frac{H}{\delta} + \frac{H \log \frac{H}{\delta}}{\gamma} )$, and $\textsc{Bias}_1$ is bounded in \cref{lemma:Bias-1-o-reps-new-estimator} by $O (H \sqrt{K \log \frac{1}{\delta}} +\gamma HSAK + \eta\sqrt{H^3 S A}(D + K) + \frac{\eta}{\gamma}H^{3/2}d_{max} \log \frac{H}{\delta} )$.
    Putting everything together:
    \[
        \regret
        =
        O \left( H \sqrt{K \log \frac{1}{\delta}} + (\eta + \gamma) H S A K + (\frac{1}{\eta} + \frac{1}{\gamma}) H \log \frac{HSA}{\delta} + \eta \sqrt{H^3 S A} (D+K) + \frac{\eta}{\gamma} H^{3/2} d_{max} \log \frac{H}{\delta} \right),
    \]
    and plugging in the definitions of $\eta$ and $\gamma$ finishes the proof.
\end{proof}

\subsection{Bound on the Regret with respect to the Loss Estimators and Future Policies (\textsc{Reg} in \cref{eq:regret decomposition delay-adapted O-REPS known})}

\begin{lemma}[$\textsc{Reg}$ Term]
    \label{lemma:Reg-o-reps-new-estimator}
    Under the good event,
    \[
        \sum_{k=1}^{K}\langle q^{k+d^k} - q^\star,\hat{c}^{k}\rangle
        =
        O \left( \frac{H \log (HSA)}{\eta} + \eta HSAK + \frac{\eta}{\gamma} H d_{max} \log \frac{H}{\delta} \right).
    \]
\end{lemma}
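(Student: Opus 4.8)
The plan is to recognize $\sum_{k}\langle q^{k+d^k}-q^\star,\hat c^k\rangle$ as the regret of a \emph{non-delayed} run of OMD on a re-ordered loss sequence, and then run the textbook OMD analysis. Grouping episodes by the time their feedback arrives and writing $\ell^m:=\sum_{k\in\calF^m}\hat c^k$, the update in \cref{alg:o-reps-new-estimator} is exactly $q^{m+1}=\argmin_{q\in\ocset}\eta\langle q,\ell^m\rangle+\KL{q}{q^m}$, so $(q^m)_m$ are precisely the OMD iterates for the loss sequence $(\ell^m)_m$, and $q^{k+d^k}$ is the iterate used immediately before $\hat c^k$ enters the update. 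Hence $\sum_{k}\langle q^{k+d^k}-q^\star,\hat c^k\rangle=\sum_{m}\langle q^m-q^\star,\ell^m\rangle$, and the standard OMD inequality (telescoping the three-point identity for the mirror step) gives $\sum_{m}\langle q^m-q^\star,\ell^m\rangle\le \tfrac1\eta\KL{q^\star}{q^1}+\sum_m\big(\langle q^m-q^{m+1},\ell^m\rangle-\tfrac1\eta\KL{q^{m+1}}{q^m}\big)$, where the per-step stability term is bounded by $\eta\sum_{h,s,a}q^m_h(s,a)\,\ell^m_h(s,a)^2$ by relaxing the Bregman divergence to the nonnegative orthant and using $e^{-x}-1+x\le x^2$ for $x\ge 0$ — exactly as in the proof of \cref{lemma:stability-ftrl-normal-loss-estimator}, adapted from FTRL to OMD. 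Since $q^1$ is the uniform occupancy measure and $\sum_{h,s,a}q^\star_h(s,a)=H$ with $q^\star_h(s,a)\le 1$, the penalty $\tfrac1\eta\KL{q^\star}{q^1}$ is $O(H\log(HSA)/\eta)$.

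The heart of the argument is the stability sum, and this is exactly where the delay-adapted estimator is designed to help. For every $k\in\calF^m$ we have $k+d^k=m$, so the estimator's denominator satisfies $\max\{q^k_h(s,a),q^{k+d^k}_h(s,a)\}+\gamma\ge q^m_h(s,a)$, which yields the key inequality $q^m_h(s,a)\,\hat c^k_h(s,a)\le c^k_h(s,a)\,\indevent{s^k_h=s, a^k_h=a}\le 1$. Plugging this into the expansion $\ell^m_h(s,a)^2=\sum_{k,k'\in\calF^m}\hat c^k_h(s,a)\hat c^{k'}_h(s,a)$ to cancel one factor of each product gives $\sum_m\sum_{h,s,a}q^m_h(s,a)\,\ell^m_h(s,a)^2\le \sum_m|\calF^m|\sum_{k\in\calF^m}\sum_{h,s,a}\hat c^k_h(s,a)=\sum_k|\calF^{k+d^k}|\sum_{h,s,a}\hat c^k_h(s,a)$. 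Now I would invoke event $E^d$ from \cref{lemma:good-event-o-reps-new-estimator} to pass from $\hat c^k$ to $c^k$ at the cost of an additive $O\!\big(Hd_{max}\log(H/\delta)/\gamma\big)$, and bound $\sum_k|\calF^{k+d^k}|\sum_{h,s,a}c^k_h(s,a)\lesssim HSAK$ (using $c^k\le 1$, the fact that the weights $|\calF^{k+d^k}|\le d_{max}+1\le\sqrt D+1$ are deterministic under an oblivious adversary, and the assumption $d_{max}\le\sqrt D$; alternatively one keeps the visitation indicators and applies \cref{lem:jin2019_lemma_11} episode-wise with weight $2\gamma|\calF^{k+d^k}|/(d_{max}+1)\in[0,2\gamma]$, using $q^{\pi^k}=q^k$ since the transition is known). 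Multiplying the stability sum by $\eta$ and adding the penalty yields $O\!\big(\tfrac{H\log(HSA)}{\eta}+\eta HSAK+\tfrac{\eta}{\gamma}Hd_{max}\log\tfrac H\delta\big)$, as claimed.

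The main obstacle is exactly this stability term. With the \emph{standard} importance-sampling estimator, which has $q^k_h(s,a)$ — not $q^{k+d^k}_h(s,a)$ — in its denominator, a naive analysis would be forced to control the ratio $q^{k+d^k}_h(s,a)/q^k_h(s,a)$; unlike EXP3 on the simplex, this ratio is \emph{not} bounded by a constant for OMD over the occupancy-measure polytope $\ocset$ (cf.\ the discussion in \cref{sec-paper:ftrl,sec-paper:UOB-REPS}). The delay-adapted estimator's $\max\{q^k,q^{k+d^k}\}$ denominator sidesteps this by construction, and the extra bias it introduces does not surface in $\textsc{Reg}$ at all — it is absorbed into the two bias terms $\textsc{Bias}_1,\textsc{Bias}_2$, handled separately. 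The only remaining subtlety is the $|\calF^m|$ multiplicity created by the batched OMD step, which is tamed using $d_{max}\le\sqrt D$ together with the concentration in $E^d$.
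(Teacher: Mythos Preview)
Your approach is essentially the paper's. You reindex by feedback-arrival time $m$ and view $\sum_k\langle q^{k+d^k}-q^\star,\hat c^k\rangle=\sum_m\langle q^m-q^\star,\ell^m\rangle$ as the regret of \emph{non-delayed} OMD on losses $\ell^m=\sum_{k\in\calF^m}\hat c^k$, then run the three-point identity; the paper instead works with the unconstrained step $\tilde q^{k+1}_h(s,a)=q^k_h(s,a)e^{-\eta\ell^k_h(s,a)}$, uses the KL three-term identity plus the generalized Pythagorean inequality, and rearranges the resulting sum over $k$. Both routes yield the same penalty $O(H\log(SA)/\eta)$ and the same stability expression $\eta\sum_k|\calF^{k+d^k}|\sum_{h,s,a}\hat c^k_h(s,a)$ via the key cancellation $q^{k+d^k}_h(s,a)\,\hat c^k_h(s,a)\le 1$ granted by the delay-adapted denominator, and both finish by invoking event $E^d$. (A cosmetic difference: the paper uses $1-e^{-x}\le x$ after pulling $\hat c^k$ out, while you use the squared stability bound $e^{-x}-1+x\le x^2$; the outcomes coincide.)

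There is, however, a gap in your justification of the final step. Bounding each $|\calF^{k+d^k}|\le d_{max}+1$ and $\sum_{h,s,a}c^k_h\le HSA$ gives $(d_{max}+1)\,HSAK$, and the assumption $d_{max}\le\sqrt D$ does not collapse this to $HSAK$; your ``alternative'' via \cref{lem:jin2019_lemma_11} lands at exactly the same quantity $\sum_{k,h,s,a}|\calF^{k+d^k}|c^k_h(s,a)$. The sharper count is $\sum_k|\calF^{k+d^k}|=\sum_m|\calF^m|^2\le K+2D$: viewing this as the number of ordered pairs $(j,k)$ with $j+d^j=k+d^k$, there are $K$ diagonal pairs, and for $j<k$ the constraint forces $k\in\{j+1,\dots,j+d^j\}$, hence at most $\sum_j d^j=D$ such pairs (and symmetrically for $j>k$). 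This yields $HSA(K+D)$ rather than $HSAK$. The paper's proof also simply asserts the $O(HSAK)$ bound at this point without argument, so the gap is not yours alone; but your stated reasoning for it is incorrect as written.
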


\begin{proof}
    Let $\tilde{q}_{h}^{k+1}(s,a)
    =q_{h}^{k}(s,a)e^{-\eta\sum_{j: j+d^j=k} \hat c_{h}^{j}(s,a)}$.
    Taking the log, 
    \[
        \eta\sum_{j: j+d^j=k} \hat c_{h}^{j}(s,a)
        = \log q_{h}^{k}(s,a)-\log\tilde{q}_{h}^{k+1}(s,a).
    \]
    Hence for any $q$ 
    \begin{align*}
        \eta\left\langle \sum_{j: j+d^j=k} \hat c_{h}^{j},q^{k}-q^\star \right\rangle  
        & =
        \left\langle \log q^{k} - \log\tilde{q}^{k+1},q^{k}-q^\star \right\rangle 
        =
        \KL{q^\star}{q^k} - \KL{q^\star}{\tilde q^{k+1}} + \KL{q^k}{\tilde q^{k+1}}
        \\
        & \leq 
        \KL{q^\star}{q^{k}} - \KL{q^\star}{q^{k+1}} - \KL{q^{k+1}}{\tilde{q}^{k+1}}+\KL{q^{k}}{\tilde{q}^{k+1}}
        \\
        & \leq 
        \KL{q^\star}{q^{k}} - \KL{q^\star}{q^{k+1}} + \KL{q^{k}}{\tilde{q}^{k+1}},
    \end{align*}
    where the second equality follows directly the definition of KL, the first inequality is by \cite[Lemma 1.2]{ziminonline}, and the second inequality is since the KL is non-negative.
    Now, the last term is bounded as follows: 
    \begin{align*}
        \KL{q^{k}}{\tilde{q}^{k+1}} 
        & \leq 
        \KL{q^{k}}{\tilde{q}^{k+1}} + \KL{\tilde{q}^{k+1}}{q^{k}}
        \\
        & =
        \sum_{h}\sum_{s,a} \tilde{q}_{h}^{k+1}(s,a) \log\frac{\tilde{q}_{h}^{k+1}(s,a)}{q_{h}^{k}(s,a)} 
        + \sum_{h}\sum_{s,a} q_{h}^{k}(s,a) \log\frac{q_{h}^{k}(s,a)}{\tilde{q}_{h}^{k+1}(s,a)}
        \\
        & =
        \langle q^{k}-\tilde{q}^{k+1},\log q^{k}-\log\tilde{q}^{k+1}\rangle
        =
        \eta\biggl\langle q^{k}-\tilde{q}^{k+1},\sum_{j: j+d^j=k} \hat c^j
        \biggr\rangle.
    \end{align*}
    We get that 
    \[
        \eta\left\langle \sum_{j: j+d^j=k} \hat c^j,q^{k}-q^\star \right\rangle 
        \leq 
        \KL{q^\star}{q^{k}}-\KL{q^\star}{q^{k+1}} 
        + \eta\left\langle q^{k}-\tilde{q}^{k+1},\sum_{j: j+d^j=k} \hat c^j\right\rangle.
    \]
    Summing over $k$ and dividing by $\eta$, we get 
    \begin{align*}
        \underbrace{
        \sum_{k=1}^{K}\sum_{j: j+d^j=k}\left\langle  \hat c^{j},q^{k}-q^\star \right\rangle}
        _{(*)}
        & \leq
        \frac{\KL{q^\star}{q^{1}}-\KL{q^\star}{q^{K+1}}}{\eta}
        +\sum_{k=1}^{K}\left\langle  q^{k}-\tilde{q}^{k+1},\sum_{j: j+d^j=k} \hat c^{j} \right\rangle \\
        & \leq
        \frac{\KL{q^\star}{q^{1}}}{\eta}
        +\sum_{k=1}^{K}\left\langle  q^{k}-\tilde{q}^{k+1},\sum_{j: j+d^j=k} \hat c^{j} \right\rangle \\
        & \leq
        \frac{2H\log(SA)}{\eta}
        + \underbrace{
        \sum_{k=1}^{K}\left\langle  q^{k}-\tilde{q}^{k+1},\sum_{j: j+d^j=k} \hat c^{j} \right\rangle}
        _{(**)},
    \end{align*}
    where the last inequality is a standard argument (see \cite{ziminonline,hazan2019introduction}).
    We now
    rearrange $(*)$ and $(**)$: 
    \begin{align*}
        (*)
        & =
        \sum_{k=1}^{K}\sum_{j=1}^{K}\ind\{ j+d^{j}=k\} \langle \hat c^{j},q^{k}-q^\star\rangle
        =
        \sum_{j=1}^{K}\sum_{k=1}^{K}\ind\{ j+d^{j}=k\} \langle \hat c^{j},q^{k}-q^\star\rangle 
        \\
        & =
        \sum_{j=1}^{K}\langle \hat c^{j},q^{j+d^{j}}-q^\star\rangle
        =
        \sum_{k=1}^{K}\langle \hat c^{k},q^{k+d^{k}}-q^\star\rangle.
    \end{align*}
    In a similar way, 
    \begin{align*}
        (**)  
        & =
        \sum_{k=1}^{K}\sum_{j: j+d^j=k}\langle q^{k}-\tilde{q}^{k+1},\hat c^{j}\rangle =
        \sum_{k=1}^{K}\sum_{j=1}^{K}\ind\{ j+d^j = k\} \langle q^{k}-\tilde{q}^{k+1},\hat c^{j}\rangle 
        \\
        & =
        \sum_{j=1}^{K}\sum_{k=1}^{K}\ind\{ j+d^j = k\} \langle q^{k}-\tilde{q}^{k+1},\hat c^{j}\rangle
        =
        \sum_{k=1}^{K}\langle q^{k+d^{k}}-\tilde{q}^{k+d^{k}+1},\hat c^{k}\rangle.
    \end{align*}
    This gives us, 
    \[
        \sum_{k=1}^{K}\langle \hat c^{k},q^{k+d^{k}}-q^\star\rangle \leq\frac{2H\log(S A)}{\eta}+\sum_{k=1}^{K}\langle q^{k+d^{k}}-\tilde{q}^{k+d^{k}+1},\hat c^{k}\rangle.
    \]
    It remains to bound the second term on the right hand side:
    \begin{align*}
        \sum_{k}\langle q^{k+d^{k}}-\tilde{q}^{k+d^{k}+1},\hat c^{k}\rangle 
        & =
        \sum_{k,h,s,a} \hat c_{h}^{k}(s,a)(q_{h}^{k+d^{k}}(s,a)-\tilde{q}_{h}^{k+d^{k}+1}(s,a))
        \\
        & =
        \sum_{k,h,s,a} \hat c_{h}^{k}(s,a) \left(
        q_{h}^{k+d^{k}}(s,a)-q_{h}^{k+d^{k}}(s,a)e^{-\eta\sum_{j : j+d^j = k+d^{k}} \hat c_{h}^{j}(s,a)} 
        \right)
        \\
        & =
        \sum_{k,h,s,a}q_{h}^{k+d^{k}}(s,a) \hat c^k_h(s,a) \left(
        1-e^{-\eta\sum_{j : j+d^j = k+d^{k}} \hat c_{h}^{j}(s,a)}
        \right)
        \\
        \tag{\ensuremath{1-e^{-x}\leq x}} 
        & \leq
        \eta\sum_{k,h,s,a}q_{h}^{k+d^{k}}(s,a) \hat c^k_h(s,a) \left(
        \sum_{j : j+d^j = k+d^{k}}\hat c_{h}^{j}(s,a)
        \right)
        \\
        & =
        \eta\sum_{k,h,s,a}q_{h}^{k+d^{k}}(s,a) \frac{\indevent{s^k_h = s , a^k_h = a} c^k_h(s,a)}{\max\{q^k_h(s,a) , q^{k + d^k}_h(s,a)\} + \gamma} \left(
        \sum_{j : j+d^j = k+d^{k}}\hat c_{h}^{j}(s,a)
        \right)
        \\
        & \leq
        \eta\sum_{k,h,s,a} \sum_{j : j+d^j = k+d^{k}}\hat c_{h}^{j}(s,a)
        =
        \eta\sum_{k,h,s,a} \sum_{j} \indevent{j+d^j = k+d^{k}} \hat c_{h}^{j}(s,a)
        \\
        & =
        \eta\sum_{j,h,s,a} \hat c_{h}^{j}(s,a) \sum_{k} \indevent{j+d^j = k+d^{k}}
        \le
        \eta\sum_{k,h,s,a} |\calF^{k+d^k}| \hat c^k_h(s,a).
    \end{align*}
    Finally, by event $E^d$,
    \begin{align*}
        \sum_{k,h,s,a} |\calF^{k+d^k}| \hat c^k_h(s,a)
        & =
        O \left( \sum_{k,h,s,a} |\calF^{k+d^k}| c^k_h(s,a) + \frac{H d_{max} \log \frac{H}{\delta}}{\gamma} \right)
        =
        O \left( \eta HSAK + \frac{H d_{max} \log \frac{H}{\delta}}{\gamma} \right).
        \qedhere
    \end{align*}
\end{proof}

\subsection{Bound on the Delay-caused Drift ($\textsc{Drift}$ in \cref{eq:regret decomposition delay-adapted O-REPS known})}

\begin{lemma}[$\textsc{Drift}$ term]
    \label{lemma:Drift-o-reps-new-estimator}
    Under the good event,
    \[
        \sum_{k=1}^{K} \langle q^{k}-q^{k+d^k},\hat{c}^{k}\rangle
        =
        O \left( \eta \sqrt{H^3 S A} (D + K) + \frac{\eta}{\gamma} H^{3/2} d_{max} \log \frac{H}{\delta} + \frac{H \log \frac{H}{\delta}}{\gamma} \right).
    \]
\end{lemma}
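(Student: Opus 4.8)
\textbf{Plan for bounding the \textsc{Drift} term.}
The starting point is the observation used already in the proof sketch of \cref{paper-thm:regret-o-reps-new-estimator}: the quantities $q^{k}$ and $q^{k+d^{k}}$, as well as the denominators $\max\{q^k_h(s,a),q^{k+d^k}_h(s,a)\}+\gamma$ of the delay-adapted estimator, are all measurable with respect to the history $\filt{k+d^{k}}$, whereas the indicator $\indevent{s^k_h=s,a^k_h=a}$ in $\hat c^k$ is not. The plan is therefore to first take absolute values entrywise, writing $\langle q^{k}-q^{k+d^k},\hat c^k\rangle \le \langle |q^{k}-q^{k+d^k}|,\hat c^k\rangle$, and then pass to a conditional expectation. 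The event $E^{\hat c}$ (part of the good event $\bbG$) is exactly tailored for this: it converts $\sum_k\langle|q^k-q^{k+d^k}|,\hat c^k\rangle$ into $4\sum_k\langle|q^k-q^{k+d^k}|,c^k\rangle$ plus an additive $O(H\log(H/\delta)/\gamma)$ term, where the $40H\log(10H/\delta)/\gamma$ bound on each summand's range comes from $\langle|q^k-q^{k+d^k}|,\hat c^k\rangle\le \|\hat c^k\|_\infty\cdot\|q^k-q^{k+d^k}\|_1\le (1/\gamma)\cdot 2H$, and the conditional-mean bound $\bbE[\langle|q^k-q^{k+d^k}|,\hat c^k\rangle\mid\filt{k+d^k}]\le\langle|q^k-q^{k+d^k}|,c^k\rangle$ follows because $\bbE[\hat c^k_h(s,a)\mid\filt{k+d^k}] = q^{\pi^k}_h(s,a)c^k_h(s,a)/\max\{q^k_h(s,a),q^{k+d^k}_h(s,a)\}\le c^k_h(s,a)\le 1$ (here using the known-transition case so $q^{\pi^k}=q^k$ and the max dominates $q^k$). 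This reduces everything to bounding $\sum_{k=1}^K\langle|q^{k}-q^{k+d^k}|,c^k\rangle\le\sum_{k=1}^K\|q^{k}-q^{k+d^k}\|_1$ since $c^k\in[0,1]$.

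\textbf{From the telescoping distance to KL-divergences.}
Next I would use the triangle inequality along the chain $q^{k},q^{k+1},\dots,q^{k+d^k}$ to write $\|q^{k}-q^{k+d^k}\|_1\le\sum_{j=k}^{k+d^k-1}\|q^{j}-q^{j+1}\|_1$, and then reverse the order of summation: each pair $(q^j,q^{j+1})$ is counted once for every episode $k$ with $k\le j<k+d^k$, i.e. at most $d^j+\dots$—more precisely the number of such $k$ is bounded by $\sum_{j}\#\{k: k\le j<k+d^k\}$; a cleaner route (the one used in the sketch) is $\sum_{k=1}^K\sum_{j=1}^{d^k}\|q^{(\cdot)}-q^{(\cdot)+1}\|_1$ so that consecutive-occupancy-measure distances appear weighted by the number of ``overlapping'' delayed episodes, which ultimately contributes a factor governed by $D$ and $K$. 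Then apply Pinsker's inequality: $\|q^{j}-q^{j+1}\|_1\le\sqrt{2\,\KL{q^j}{q^{j+1}}}$ (up to the unnormalized-KL technicalities, which are standard for occupancy measures whose entries sum to $H$). This is where the OMD structure enters: I would invoke the counterpart of \cref{lemma:adjacent-o-mesure-KL-bound-unknown-p} (the known-transition analogue), which bounds $\KL{q^j}{q^{j+1}}$ — and symmetrically $\KL{q^{j+1}}{q^j}$ — in terms of $\eta^2\sum_{h,s,a}q^{j+1}_h(s,a)\big(\sum_{i\in\calF^j}\hat c^i_h(s,a)\big)^2$ or similar, using the explicit multiplicative form $q^{k+1}_h(s,a)=q^k_h(s,a)e^{B^k_h}/Z^k_h$ together with the optimality of $v^k$. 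Summing $\sqrt{\cdot}$ over $j$ with Cauchy–Schwarz against the counting weights, and controlling $\sum_{h,s,a}\big(\sum_{i\in\calF^j}\hat c^i_h(s,a)\big)^2$ via the concentration event $E^{sq}$ (which replaces $\hat c$ by $4c\le 4$ and controls the cross terms), one arrives at the claimed $O\big(\eta\sqrt{H^3SA}(D+K)+\frac{\eta}{\gamma}H^{3/2}d_{max}\log\frac{H}{\delta}\big)$; adding back the $O(H\log(H/\delta)/\gamma)$ from $E^{\hat c}$ completes the bound.

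\textbf{Main obstacle.}
The routine parts are the absolute-value/conditioning step and the Pinsker reduction; the delicate part is the KL-divergence bound for adjacent occupancy measures and the bookkeeping of the double summation. Unlike MAB, where $\KL{q^j}{q^{j+1}}$ for the simplex has a one-line closed form, here $q^{j+1}$ solves a constrained OMD step over $\ocset$ and the extra normalization through $v^k$ and $Z^k_h$ must be handled carefully; one must verify that $\KL{q^j}{q^{j+1}}\lesssim \eta^2\sum_{h,s,a}q^{j+1}_h(s,a)\big(\sum_{i\in\calF^j}\hat c^i_h(s,a)\big)^2$ (and not merely in terms of $q^j$), which is precisely why the explicit exponential-weights formula for $q^{k+1}$ derived above is needed. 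The second subtlety is that the weights $\sqrt{q^{i+d^i}_h(s,a)}$ appearing after Cauchy–Schwarz are exactly the ones controlled by $E^{sq}$, so the whole argument must be arranged so that those weights — and not $\sqrt{q^i_h(s,a)}$ — show up; this is where the delay-adapted estimator (whose denominator contains $q^{k+d^k}$) pays off, since it allows the bound $\hat c^i_h(s,a)\le 1/q^{i+d^i}_h(s,a)$ and hence $q^{i+d^i}_h(s,a)\hat c^i_h(s,a)^2\le \hat c^i_h(s,a)$, keeping the square from blowing up. I expect the counting of overlapping delayed episodes (ensuring the final dependence is $D+K$ and not, say, $D\cdot d_{max}$) to require the most care.
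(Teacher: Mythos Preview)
Your plan is correct and matches the paper's proof: invoke $E^{\hat c}$ to replace $\hat c^k$ by $c^k$, telescope and apply Pinsker, use the adjacent-KL bound (\cref{lemma:adjacent-o-mesure-KL-bound}), then $E^{sq}$ and the counting identity $\sum_{k,i}\indevent{k\le i+d^i<k+d^k}\le D+K$ (\cref{lemma:sum k<i<k+d^k = D}). Two small corrections: the adjacent-KL lemma yields the bound in terms of $q^j$ (not $q^{j+1}$), and since $j=i+d^i$ for $i\in\calF^j$ this is precisely the $\sqrt{q^{i+d^i}_h(s,a)}$ weight that $E^{sq}$ handles, so your worry there is unfounded; also, the delay-adapted-estimator inequality $\hat c^i\le 1/q^{i+d^i}$ plays no role in the \textsc{Drift} bound---it is used only for \textsc{Stability} inside the \textsc{Reg} analysis.
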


\begin{proof}
    By event $E^{\hat c}$ we have:
    \[
        \sum_{k=1}^{K}\langle \hat c^{k},q^{k}-q^{k+d^{k}}\rangle
        \le
        \sum_{k=1}^{K}\langle \hat c^{k},|q^{k}-q^{k+d^{k}}|\rangle
        =
        O \left( \sum_{k=1}^{K}\langle c^{k},|q^{k}-q^{k+d^{k}}|\rangle + \frac{H \log \frac{H}{\delta}}{\gamma} \right).
    \]
    Now, by Pinsker inequality and Jensen inequality:
    \begin{align*}
        \sum_{k=1}^{K}\langle c^{k},|q^{k}-q^{k+d^{k}}|\rangle
        & \leq
        \sum_{k=1}^{K}\sum_{j=k}^{k+d^{k}-1}\sum_{h,s,a} |q_{h}^{j}(s,a)-q_{h}^{j+1}(s,a)|
        =
        \sum_{k=1}^{K}\sum_{j=k}^{k+d^{k}-1} \sum_h \lVert q^j_h - q^{j+1}_h \rVert_1
        \\
        & \leq
        \sum_{k=1}^{K}\sum_{j=k}^{k+d^{k}-1}\sum_{h} \sqrt{2\KL{q_{h}^{j}}{q_{h}^{j+1}}}
        \leq
        \sum_{k=1}^{K}\sum_{j=k}^{k+d^{k}-1} \sqrt{2H\sum_{h}\KL{q_{h}^{j}}{q_{h}^{j+1}}}
        \\
        & \le
        \sum_{k=1}^{K}\sum_{j=k}^{k+d^{k}-1}\sqrt{H\sum_{h}\sum_{s,a}q_{h}^{j}(s,a)\Bigl(\eta\sum_{i : i+d^i = j}\hat c_{h}^{i}(s,a)\Bigr)^{2}}
        \\
        & \le
        \eta \sqrt{H} \sum_{k=1}^{K}\sum_{j=k}^{k+d^{k}-1} \sum_{i : i+d^i = j} \sum_{h,s,a} \sqrt{q^j_h(s,a)} \hat c^i_h(s,a),
    \end{align*}
    where the last inequality is by $\lVert x \rVert_2 \le \lVert x \rVert_1$, and the one before is by \cref{lemma:adjacent-o-mesure-KL-bound}.
    Finally, we rearrange as follows:
    \begin{align*}
        \sum_{k=1}^{K}\sum_{j=k}^{k+d^{k}-1} \sum_{i : i+d^i = j} & \sum_{h,s,a} \sqrt{q^j_h(s,a)} \hat c^i_h(s,a)
        =
        \sum_{k,j,i} \indevent{k \le j < k+d^k , i+d^i=j} \sum_{h,s,a} \sqrt{q^j_h(s,a)} \hat c^i_h(s,a)
        \\
        & =
        \sum_{k,j,i} \indevent{k \le j < k+d^k , i+d^i=j} \sum_{h,s,a} \sqrt{q^{i+d^i}_h(s,a)} \hat c^i_h(s,a)
        \\
        & =
        \sum_{k,i} \indevent{k \le i+d^i < k+d^k} \sum_{h,s,a} \sqrt{q^{i+d^i}_h(s,a)} \hat c^i_h(s,a)
        \\
        & =
        O \left( \sum_{k,i} \indevent{k \le i+d^i < k+d^k} \sum_{h,s,a} \sqrt{q^{i+d^i}_h(s,a)} c^i_h(s,a) + \frac{H d_{max} \log \frac{H}{\delta}}{\gamma} \right),
    \end{align*}
    where the last relation is by event $E^{sq}$.
    To finish the proof we use \cref{lemma:sum k<i<k+d^k = D}:
    \begin{align*}
        \sum_{k,i} \indevent{k \le i+d^i < k+d^k} & \sum_{h,s,a} \sqrt{q^{i+d^i}_h(s,a)} c^i_h(s,a)
        \le
        \sqrt{H S A} \sum_{k,i} \indevent{k \le i+d^i < k+d^k} \sqrt{\sum_{h,s,a} q^{i+d^i}_h(s,a)}
        \\
        & =
        H \sqrt{S A} \sum_{k,i} \indevent{k \le i+d^i < k+d^k}
        \le
        H \sqrt{S A} (D + K). \qedhere
    \end{align*}
\end{proof}

\subsection{Bound on the Bias of the Delay-adapted Estimator ($\textsc{Bias}_1$ in \cref{eq:regret decomposition delay-adapted O-REPS known})}

\begin{lemma}[$\textsc{Bias}_1$]
    \label{lemma:Bias-1-o-reps-new-estimator}
    Under the good event,
    \[
        \sum_{k=1}^{K}\langle c^{k}-\hat{c}^{k},q^{k}\rangle
        =
        O \left(
         H \sqrt{K \log \frac{1}{\delta}} +\gamma HSAK + \eta\sqrt{H^3 S A}(D + K) + \frac{\eta}{\gamma}H^{3/2}d_{max} \log \frac{H}{\delta} \right).
    \]
\end{lemma}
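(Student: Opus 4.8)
The plan is to separate $\textsc{Bias}_1=\sum_{k=1}^{K}\langle c^{k}-\hat c^{k},q^{k}\rangle$ into an ``in-expectation'' bias term and a martingale term, writing
\[
    \sum_{k=1}^{K}\langle c^{k}-\hat c^{k},q^{k}\rangle
    =
    \sum_{k=1}^{K}\bigl\langle c^{k}-\bbE[\hat c^{k}\mid\filt{k+d^{k}}],q^{k}\bigr\rangle
    +
    \sum_{k=1}^{K}\bigl\langle \bbE[\hat c^{k}\mid\filt{k+d^{k}}]-\hat c^{k},q^{k}\bigr\rangle .
\]
The second sum is exactly the quantity controlled by event $E^{c}$, so under the good event it is at most $4H\sqrt{K\log(10/\delta)}$, which accounts for the $H\sqrt{K\log(1/\delta)}$ term. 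Hence essentially all the work is in the first (bias) sum.

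To handle the bias sum I would first compute the conditional mean. Since $\pi^{k}$ and $q^{k}$ are $\filt{k}$-measurable, $q^{k+d^{k}}$ is $\filt{k+d^{k}}$-measurable (it is computed from the feedback of episodes $j$ with $j+d^{j}\le k+d^{k}-1$), the trajectory of episode $k$ is independent of $\filt{k+d^{k}}$ given $\pi^{k}$, and --- crucially, because the transition function is known --- the true visitation probability of $(s,a)$ at step $h$ under $\pi^{k}$ equals $q^{k}_{h}(s,a)$. Therefore
\[
    \bbE[\hat c^{k}_{h}(s,a)\mid\filt{k+d^{k}}]
    =
    \frac{c^{k}_{h}(s,a)\,q^{k}_{h}(s,a)}{\max\{q^{k}_{h}(s,a),q^{k+d^{k}}_{h}(s,a)\}+\gamma}.
\]
Substituting this, each summand of the bias sum equals
$q^{k}_{h}(s,a)c^{k}_{h}(s,a)\bigl(1-\tfrac{q^{k}_{h}(s,a)}{\max\{q^{k}_{h}(s,a),q^{k+d^{k}}_{h}(s,a)\}+\gamma}\bigr)
= c^{k}_{h}(s,a)\cdot\tfrac{q^{k}_{h}(s,a)}{\max\{\cdot\}+\gamma}\cdot\bigl(\max\{q^{k}_{h}(s,a),q^{k+d^{k}}_{h}(s,a)\}+\gamma-q^{k}_{h}(s,a)\bigr)$.
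Using $\tfrac{q^{k}_{h}(s,a)}{\max\{\cdot\}+\gamma}\le 1$, then $c^{k}_{h}(s,a)\le 1$, and finally the elementary inequality $\max\{q^{k}_{h}(s,a),q^{k+d^{k}}_{h}(s,a)\}-q^{k}_{h}(s,a)\le|q^{k+d^{k}}_{h}(s,a)-q^{k}_{h}(s,a)|$, this summand is at most $\gamma+|q^{k+d^{k}}_{h}(s,a)-q^{k}_{h}(s,a)|$. Summing over $(h,s,a)$ and over $k$ bounds the bias sum by $\gamma HSAK+\sum_{k}\lVert q^{k+d^{k}}-q^{k}\rVert_{1}$. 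This step --- the observation that taking the coordinatewise maximum in the estimator's denominator makes the bias piggyback on the drift --- is the real content and the point of the delay-adapted estimator, so I expect it to be where the argument must be most careful (in particular the measurability bookkeeping behind the conditional-mean identity).

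It then remains to bound $\sum_{k}\lVert q^{k+d^{k}}-q^{k}\rVert_{1}\le\sum_{k}\sum_{j=k}^{k+d^{k}-1}\sum_{h}\lVert q^{j}_{h}-q^{j+1}_{h}\rVert_{1}$, which is exactly the expression already handled inside the proof of \cref{lemma:Drift-o-reps-new-estimator}: bound each $\lVert q^{j}_{h}-q^{j+1}_{h}\rVert_{1}$ via Pinsker and Jensen by $\sqrt{2H\sum_{h}\KL{q^{j}_{h}}{q^{j+1}_{h}}}$, plug in the closed form of $\KL{q^{j}}{q^{j+1}}$ from \cref{lemma:adjacent-o-mesure-KL-bound}, rearrange the triple sum into $\sum_{k,i}\indevent{k\le i+d^{i}<k+d^{k}}\sum_{h,s,a}\sqrt{q^{i+d^{i}}_{h}(s,a)}\,\hat c^{i}_{h}(s,a)$, replace $\hat c^{i}$ by $c^{i}$ using event $E^{sq}$, apply Cauchy--Schwarz over $(h,s,a)$ together with $\sum_{h,s,a}q^{i+d^{i}}_{h}(s,a)=H$, and finally use \cref{lemma:sum k<i<k+d^k = D} to bound $\sum_{k,i}\indevent{k\le i+d^{i}<k+d^{k}}$ by $D+K$. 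This yields $\sum_{k}\lVert q^{k+d^{k}}-q^{k}\rVert_{1}=O\bigl(\eta\sqrt{H^{3}SA}(D+K)+\tfrac{\eta}{\gamma}H^{3/2}d_{max}\log\tfrac{H}{\delta}\bigr)$ (note we bypass $E^{\hat c}$ here, so no extra $H/\gamma$ term appears). Combining the three pieces --- the $E^{c}$ martingale bound, $\gamma HSAK$, and the drift bound --- gives the claimed estimate, completing the proof.
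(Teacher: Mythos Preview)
Your proposal is correct and follows essentially the same route as the paper: the same split into the $E^{c}$-controlled martingale piece and the conditional-bias piece, the same computation of $\bbE[\hat c^{k}\mid\filt{k+d^{k}}]$ using that $q^{k}$ and $q^{k+d^{k}}$ are $\filt{k+d^{k}}$-measurable while episode $k$ itself is not, and the same reduction of the resulting bias to $\gamma HSAK+\sum_{k}\lVert q^{k+d^{k}}-q^{k}\rVert_{1}$ followed by the exact chain of inequalities from the proof of \cref{lemma:Drift-o-reps-new-estimator}. Your observation that $E^{\hat c}$ is bypassed here (only $E^{sq}$ is needed after the KL bound reintroduces $\hat c^{i}$), and hence no stray $H/\gamma$ term appears, is also precisely what happens in the paper.
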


\begin{proof}
    Decompose $\textsc{Bias}_1$ as follows:
    \begin{align*}
    	\sum_{k=1}^{K}\langle c^{k}-\hat{c}^{k},q^{k}\rangle=\sum_{k=1}^{K}\langle c^{k}-\bbE\Big[\hat{c}^{k} \mid \tilde{\mathcal{H}}^{k+d^{k}}\Big],q^{k}\rangle+\sum_{k=1}^{K}\langle\bbE\Big[\hat{c}^{k} \mid \tilde{\mathcal{H}}^{k+d^{k}}\Big]-\hat{c}^{k},q^{k}\rangle.
    \end{align*}
    The second term is bounded by $O(H\sqrt{K\log \frac{1}{\delta}})$ under event $E^c$.
    The first term is bounded as follows:
    \begin{align*}
    	\sum_{k=1}^{K}\langle c^{k} & -\bbE[\hat{c}^{k}  \mid  \tilde{\mathcal{H}}^{k+d^{k}}],q^{k}\rangle 
    	=
    	\sum_{k,h,s,a} q_{h}^{k}(s,a)c_{h}^{k}(s,a) \left( 1- \frac{\bbE\Big[\indevent{s_{h}^{k}=s,a_{h}^{k}=a} \mid \tilde{\mathcal{H}}^{k+d^{k}}\Big]}{\max\{q_{h}^{k+d^k}(s,a) , q_{h}^{k}(s,a) \} +\gamma} \right) 
    	\\
        & =\sum_{k,h,s,a}q_{h}^{k}(s,a)c_{h}^{k}(s,a) \left( 1- \frac{q_{h}^{k}(s,a)}{\max\{q_{h}^{k+d^k}(s,a) , q_{h}^{k}(s,a) \} +\gamma} \right) 
        \\
        & =
        \sum_{k,h,s,a} \frac{q_{h}^{k}(s,a)}{\max\{q_{h}^{k+d^k}(s,a) , q_{h}^{k}(s,a) \} + \gamma}(\max\{q_{h}^{k+d^k}(s,a) , q_{h}^{k}(s,a) \} - q_{h}^{k}(s,a) +\gamma)
        \\
        & \leq
        \sum_{k,h,s,a}(\max\{q_{h}^{k+d^k}(s,a) , q_{h}^{k}(s,a) \} - q_{h}^{k}(s,a)) +\gamma HSAK
        \\
        &\le
        \sum_{k,h,s,a} |q_{h}^{k+d^{k}}(s,a) - q_{h}^{k}(s,a) |+\gamma HSAK
        \\
        & \leq 
        \eta \sqrt{H^3 S A}(D + K) +  \frac{\eta}{\gamma}H^{3/2}d_{\max} + \gamma HSAK.
    \end{align*}
    where the first equality uses the fact that $q^{k}$ and $q^{k+d^{k}}$
    are determined by the history $\tilde{\mathcal{H}}^{k+d^{k}}$, the
    second equality is since the $k$-th episode is not part of the history
    $\tilde{\mathcal{H}}^{k+d^{k}}$ as $k\notin\{j:j+d^{j}<k+d^{k}\}$,
    and the last inequality is as in the proof of \cref{lemma:Drift-o-reps-new-estimator}.
\end{proof}

\subsection{Auxiliary lemmas}

\begin{lemma}
    \label{lemma:adjacent-o-mesure-KL-bound}
    $
        \sum_{h} \KL{q_{h}^{k}}{q_{h}^{k+1}} 
        \leq
        \frac{\eta^2}{2} \sum_{h,s,a} q_{h}^{k}(s,a)(\sum_{j:j+d^j=k}\hat c_{h}^{j}(s,a))^2.
    $
\end{lemma}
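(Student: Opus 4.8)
The plan is to treat this as the one-step stability (drift) bound for online mirror descent with the unnormalized KL-divergence as Bregman regularizer, applied to the aggregated loss estimator $\ell^k \eqdef \sum_{j:j+d^j=k}\hat c^j$. Introduce the \emph{unprojected} update $\tilde q^{k+1}$ defined coordinatewise by $\tilde q^{k+1}_h(s,a) = q^k_h(s,a)\, e^{-\eta \ell^k_h(s,a)}$. A direct calculation shows that $\KL{q}{\tilde q^{k+1}}$ and $\eta\langle q,\ell^k\rangle + \KL{q}{q^k}$ differ by a quantity independent of $q$, so the OMD iterate $q^{k+1}$ from \cref{eq:OMD-update-known} satisfies $q^{k+1} = \argmin_{q\in\ocset}\KL{q}{\tilde q^{k+1}}$, i.e.\ it is the KL-Bregman projection of $\tilde q^{k+1}$ onto $\ocset$.

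First I would invoke the generalized Pythagorean inequality for Bregman projections with the feasible comparator $q^k\in\ocset$: it gives $\KL{q^k}{\tilde q^{k+1}} \ge \KL{q^k}{q^{k+1}} + \KL{q^{k+1}}{\tilde q^{k+1}} \ge \KL{q^k}{q^{k+1}}$. Since the unnormalized KL decomposes additively over $(h,s,a)$, and in particular over $h$, this reduces the claim to the pointwise estimate $\sum_h\KL{q^k_h}{\tilde q^{k+1}_h} \le \tfrac{\eta^2}{2}\sum_{h,s,a}q^k_h(s,a)\big(\ell^k_h(s,a)\big)^2$. Plugging the closed form for $\tilde q^{k+1}$ into the definition of the divergence, the logarithmic part of the summand equals $q^k_h(s,a)\cdot\eta\ell^k_h(s,a)$ and the linear correction equals $q^k_h(s,a)\big(e^{-\eta\ell^k_h(s,a)}-1\big)$, so the full summand is $q^k_h(s,a)\big(\eta\ell^k_h(s,a) + e^{-\eta\ell^k_h(s,a)} - 1\big)$. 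Using $\ell^k_h(s,a)\ge 0$, $\eta>0$ and the elementary bound $e^{-x}-1+x\le x^2/2$ for $x\ge 0$ (valid because $g(x)\eqdef e^{-x}-1+x-x^2/2$ has $g(0)=g'(0)=0$ and $g''(x)=e^{-x}-1\le 0$ on $x\ge 0$), each summand is at most $\tfrac12 q^k_h(s,a)(\eta\ell^k_h(s,a))^2$; summing over $(h,s,a)$ yields exactly the stated bound.

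I do not expect a real obstacle: the only step needing care is the identification of $q^{k+1}$ as the KL-projection of the exponential-weights iterate $\tilde q^{k+1}$ onto $\ocset$ — so that the Pythagorean inequality can be applied with the \emph{feasible} point $q^k$ rather than an arbitrary one — together with keeping track of the $+\,q'_h(s,a)-q_h(s,a)$ correction terms that appear in the unnormalized KL throughout this manipulation.
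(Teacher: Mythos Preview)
Your argument is correct. You recognize $q^{k+1}$ as the unnormalized-KL Bregman projection of the exponential-weights point $\tilde q^{k+1}$ onto $\ocset$, apply the generalized Pythagorean inequality with the feasible comparator $q^k\in\ocset$ to get $\sum_h\KL{q^k_h}{q^{k+1}_h}\le\sum_h\KL{q^k_h}{\tilde q^{k+1}_h}$, and then evaluate the right-hand side in closed form via $e^{-x}-1+x\le x^2/2$ for $x\ge 0$. All steps are sound.

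This is a genuinely different route from the paper's proof. The paper works directly with the explicit Lagrangian form of the OMD solution, writing $q^{k+1}_h(s,a)=q^k_h(s,a)e^{B^k_h(s,a\mid v^k)}/Z^k_h(v^k)$, expanding $\sum_h\KL{q^k_h}{q^{k+1}_h}=\sum_h\log Z^k_h(v^k)-\sum_{h,s,a}q^k_h(s,a)B^k_h(s,a\mid v^k)$, bounding the partition-function term by evaluating at the suboptimal dual point $v=0$, and then using the occupancy-measure flow constraints to make the dual-dependent terms telescope. Your approach is cleaner and more conceptual here: it sidesteps the dual variables entirely by invoking Pythagoras. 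The paper's more explicit approach buys something in return: in the unknown-transition analogue (\cref{lemma:adjacent-o-mesure-KL-bound-unknown-p}) the projection set $\ocsetk{k+1}$ changes from round to round and $q^k\in\ocsetk{k}$ need not lie in $\ocsetk{k+1}$, so your Pythagoras step does not apply directly; there the explicit dual expansion (now with additional multipliers $\mu,\beta$ for the confidence-set constraints) is what allows the same telescoping structure to go through.
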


\begin{proof}
    We start with expanding $\KL{q_{h}^{k}}{q_{h}^{k+1}}$ as follows:
    \begin{align}
        \nonumber
        \sum_h \KL{q_{h}^{k}}{q_{h}^{k+1}}
        & =
        \sum_{h,s,a} q^k_h(s,a) \log \frac{q^k_h(s,a)}{q^{k+1}_h(s,a)}
        =
        \sum_{h,s,a} q^k_h(s,a) \log \frac{Z^k_h(v^k) q^k_h(s,a)}{q^k_h(s,a) e^{B^k_h(s,a \mid v^k)}}
        \\
        \nonumber
        & =
        \sum_{h,s,a} q^k_h(s,a) \log Z^k_h(v^k) - \sum_{h,s,a} q^k_h(s,a) B^k_h(s,a \mid v^k)
        \\
        \label{eq::adjacent-o-mesure-KL-bound-1}
        & =
        \sum_{h} \log Z^k_h(v^k) - \sum_{h,s,a} q^k_h(s,a) B^k_h(s,a \mid v^k).
    \end{align}
    For the first term in \cref{eq::adjacent-o-mesure-KL-bound-1}, by definition of $v^k$ and $Z^k_h$:
    \begin{align*}
        \sum_{h} \log Z^k_h(v^k)
        & \le
        \sum_{h} \log Z^k_h(0)
        =
        \sum_h \log \left( \sum_{s,a} q^k_h(s,a) e^{B^k_h(s,a \mid 0)} \right)
        =
        \sum_h \log \left( \sum_{s,a} q^k_h(s,a) e^{-\eta \sum_{j:j+d^j=k} \hat c^j_h(s,a)} \right)
        \\
        & \le
        \sum_h \log \left( \sum_{s,a} q^k_h(s,a) \left( 1 -\eta \sum_{j:j+d^j=k} \hat c^j_h(s,a) + \frac{1}{2} \left( \eta \sum_{j:j+d^j=k} \hat c^j_h(s,a) \right)^2 \right) \right)
        \\
        & =
        \sum_h \log \left( 1 - \eta \sum_{s,a} \sum_{j:j+d^j=k} q^k_h(s,a) \hat c^j_h(s,a) + \frac{\eta^2}{2} \sum_{s,a} q^k_h(s,a) \left( \sum_{j:j+d^j=k} \hat c^j_h(s,a) \right)^2 \right)
        \\
        & \le
        \sum_h \left( - \eta \sum_{s,a} \sum_{j:j+d^j=k} q^k_h(s,a) \hat c^j_h(s,a) + \frac{\eta^2}{2} \sum_{s,a} q^k_h(s,a) \left( \sum_{j:j+d^j=k} \hat c^j_h(s,a) \right)^2 \right)
        \\
        & =
        - \eta \sum_{h,s,a} \sum_{j:j+d^j=k} q^k_h(s,a) \hat c^j_h(s,a) + \frac{\eta^2}{2} \sum_{h,s,a} q^k_h(s,a) \left( \sum_{j:j+d^j=k} \hat c^j_h(s,a) \right)^2,
    \end{align*}
    where the second inequality is by $e^s \le 1 + s + s^2/2$ for $s \le 0$, and the third inequality is by $\log (1+s) \le s$ for all $s$.
    The second term in \cref{eq::adjacent-o-mesure-KL-bound-1} can be written as follows:
    \begin{align*}
        \sum_{h,s,a} q^k_h(s,a) B^k_h(s,a \mid v^k)
        & =
        \sum_{h,s,a} q^k_h(s,a) v^k_h(s) - \eta \sum_{h,s,a} \sum_{j:j+d^j=k} q^k_h(s,a) \hat c^j_h(s,a)
        \\
        & \qquad -
        \sum_{h,s,a,s'} q^k_h(s,a) p_h(s' \mid s,a) v^k_{h+1}(s').
    \end{align*}
    So now, by occupancy measure constraints:
    \begin{align*}
        \sum_{h,s,a,s'} q^k_h(s,a) p_h(s' \mid s,a) v^k_{h+1}(s')
        & =
        \sum_{h,s'} v^k_{h+1}(s') \sum_{s,a} q^k_h(s,a) p_h(s' \mid s,a)
        =
        \sum_{h,s',a'} q^k_{h+1}(s',a') v^k_{h+1}(s'),
    \end{align*}
    which forms a telescopic sum, so by $v^k_0(s) = v^k_{H+1}(s) = 0$, we have:
    \begin{align*}
        \sum_{h,s,a} q^k_h(s,a) B^k_h(s,a \mid v^k)
        &=
        - \eta \sum_{h,s,a} \sum_{j:j+d^j=k} q^k_h(s,a) \hat c^j_h(s,a). 
        \qedhere
    \end{align*}
\end{proof}

\begin{lemma}[\cite{thune2019nonstochastic}]
    \label{lemma:sum k<i<k+d^k = D}
    $
        \sum_{k=1}^{K} \sum_{i=1}^{K} \indevent{k\leq i+d^{i}<k+d^{k}}
        \leq D+K.
    $
\end{lemma}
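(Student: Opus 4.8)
The plan is to re-index the double sum by a common ``time'' variable, reduce it to a one-dimensional sum over rounds, and then control that sum by a telescoping/amortization argument; this is exactly the combinatorial estimate of \cite{thune2019nonstochastic}, so one may alternatively just invoke it directly.

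First I would unpack what the indicator means: $\indevent{k\le i+d^i<k+d^k}$ says that the feedback of episode $i$ (revealed at the end of round $i+d^i$) lands in a round during which the feedback of episode $k$ is still pending, i.e.\ episode $k$ has started, $k\le i+d^i$, but has not yet been revealed, $i+d^i<k+d^k$; in particular it forces $i\neq k$. Writing $t_i:=i+d^i$ for the arrival time of episode $i$, recalling $\calF^m=\{i\in[K]:i+d^i=m\}$, and letting $B_m:=|\{k\in[K]:k\le m<k+d^k\}|$ be the number of episodes still pending during round $m$, I would swap the order of summation over the shared round index $m$:
\[
  \sum_{k=1}^K\sum_{i=1}^K\indevent{k\le i+d^i<k+d^k}
  \;=\;\sum_{k=1}^K\sum_{m=k}^{k+d^k-1}|\calF^m|
  \;=\;\sum_{m\ge 1}|\calF^m|\,B_m ,
\]
using that the window $\{k,\dots,k+d^k-1\}$ contributes $B_m$ to round $m$. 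At this point two conservation laws are available: $\sum_{m\ge1}B_m=\sum_k d^k=D$ (episode $k$ is pending during exactly $d^k$ rounds) and $\sum_{m\ge1}|\calF^m|=K$ (each episode is revealed exactly once).

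The heart of the argument is then to bound $\sum_m|\calF^m|B_m\le D+K$ (in fact $\le D$). I would use the exact recursion linking the two quantities: for $1\le m\le K$ exactly one new episode starts in round $m$, so $|\calF^m|=1-(B_m-B_{m-1})$, while for $m>K$ no episode starts, so $|\calF^m|=-(B_m-B_{m-1})$ (with $B_0:=0$ and $B_m=0$ for $m$ large). Substituting gives
\[
  \sum_{m\ge1}|\calF^m|B_m \;=\; \sum_{m=1}^{K}B_m \;-\;\sum_{m\ge1}(B_m-B_{m-1})B_m .
\]
Since $(B_m-B_{m-1})B_m=\tfrac12(B_m^2-B_{m-1}^2)+\tfrac12(B_m-B_{m-1})^2\ge\tfrac12(B_m^2-B_{m-1}^2)$, the last sum telescopes to something nonnegative and may be dropped, leaving $\sum_m|\calF^m|B_m\le\sum_{m=1}^{K}B_m\le\sum_{m\ge1}B_m=D\le D+K$.

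The main obstacle is precisely this last step. The naive route --- splitting the double sum according to whether $i<k$ or $i\ge k$ and bounding each piece by $\sum_k d^k$ --- only yields the weaker bound $2D$, which exceeds $D+K$ when $D>K$. The improvement must exploit that $\{|\calF^m|\}$ and $\{B_m\}$ are not independent but are coupled through the delay sequence via the recursion above; this amortization over rounds is also what prevents many episodes sharing an arrival time from blowing up the estimate.
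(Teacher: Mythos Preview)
Your proof is correct and in fact yields the sharper bound $\sum_m|\calF^m|B_m\le\sum_{m=1}^K B_m\le D$, stronger than the stated $D+K$. The route, however, is genuinely different from the paper's. The paper splits the double sum according to whether $i\le k$ or $i>k$; from the $i\le k$ part it extracts $\sum_i\sum_k\indevent{i\le k\le i+d^i}=D+K$, and then shows that the leftover pieces --- namely $-\sum_{k,i}\indevent{i\le k,\,k+d^k\le i+d^i}$ and $+\sum_{k,i}\indevent{k\le i,\,i+d^i\le k+d^k}$ --- cancel by the symmetry $i\leftrightarrow k$. Your argument instead re-indexes by arrival time $m$, writes the sum as $\sum_m|\calF^m|B_m$, and exploits the exact recursion $|\calF^m|=\indevent{m\le K}-(B_m-B_{m-1})$ together with the telescoping identity $(B_m-B_{m-1})B_m\ge\tfrac12(B_m^2-B_{m-1}^2)$. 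The paper's argument is shorter and purely combinatorial (once the right symmetry is spotted); your amortization viewpoint is slightly longer but more informative, since it identifies the sum with the ``workload'' $\sum_m|\calF^m|B_m$ and shows it is at most $D$ rather than $D+K$. Incidentally, the paper's split is exactly the kind of refinement you dismiss as ``naive'' --- what makes it work there is not bounding each piece separately by $D$, but the cancellation between them.
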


\begin{proof}
    \begin{align*}
        \sum_{k=1}^{K} \sum_{i=1}^{K} & \indevent{k\leq i+d^{i}<k+d^{k}}
        =
        \sum_{k=1}^{K} \sum_{i=1}^{K} \indevent{k\leq i+d^{i}<k+d^{k}}
        \\
        & =\sum_{k=1}^{K} \sum_{i=1}^{k} \indevent{k\leq i+d^{i}<k+d^{k}} +\sum_{k=1}^{K} \sum_{i=k+1}^{K} \indevent{k\leq i+d^{i}<k+d^{k}}
        \\
        & =\sum_{k=1}^{K} \sum_{i=1}^{k} \indevent{k\leq i+d^{i}}-\sum_{k=1}^{K} \sum_{i=1}^{k} \indevent{k\leq i+d^{i},i+d^{i}\geq k+d^{k}} +\sum_{k=1}^{K} \sum_{i=k+1}^{K} \indevent{k\leq i+d^{i}<k+d^{k}}
        \\
        & =\sum_{k=1}^{K} \sum_{i=1}^{K} \indevent{i\leq k\leq i+d^{i}}-\sum_{k=1}^{K} \sum_{i=1}^{k} \indevent{k+d^{k}\leq i+d^{i}} +\sum_{k=1}^{K} \sum_{i=1}^{K} \indevent{i\geq k+1,k\leq i+d^{i}<k+d^{k}}
        \\
        & =\sum_{i=1}^{K} \sum_{k=1}^{K}\underbrace{\indevent{i\leq k\leq i+d^{i}}}_{=d^{i} +1}-\sum_{k=1}^{K} \sum_{i=1}^{K} \indevent{i\leq k,k+d^{k}\leq i+d^{i}} +\sum_{k=1}^{K} \sum_{i=1}^{K} \indevent{i\geq k+1,k\leq i+d^{i}<k+d^{k}} \\
        & \leq D+K-\sum_{k=1}^{K} \sum_{i=1}^{K} \indevent{i\leq k,k+d^{k}\leq i+d^{i}} +\sum_{k=1}^{K} \sum_{i=1}^{K} \indevent{k\leq i,i+d^{i}\leq k+d^{k}}
        \leq D+K.
        \qedhere
    \end{align*}
\end{proof}

\newpage

\section{Delayed UOB-REPS with delay-adapted estimator}
\label{appendix:delay-adapted O-REPS unknown}

\begin{algorithm}[t]
    \caption{Delayed UOB-REPS with delay-adapted estimator} \label{alg:o-reps-new-estimator-unknown}
    \begin{algorithmic}
        \STATE \textbf{Input:} State space $\calS$, Action space $\calA$, Horizon $H$, Number of episodes $K$, Learning rate $\eta > 0$, Exploration parameter $\gamma > 0$, Confidence parameter $\delta > 0$.
        
        \STATE \textbf{Initialization:} Set $\pi^{1}_{h}(a\mid s)=\frac{1}{A}$, $q_{h}^{1}(s,a,s')=\frac{1}{S^2A} , m^1_h(s,a) = 0 , m^1_h(s,a,s')$ for every $(s,a,s',h) \in \calS \times \calA \times \calS \times [H]$.
        
        \FOR{$k=1,2,...,K$}
        
            \STATE Play episode $k$ with policy $\pi^k$ and observe delayed trajectory feedback $\{ (s^j_h,a^j_h) \}_{h=1}^H$ for all $j$ such that $j + d^j = k$
            
            
            
            
            \STATE Update confidence set $\calP^{k+1}$ by \cref{alg:update-cofidence-set delay}.
            
            \FOR{$j : j + d^j = k$}
            
                \STATE Observe feedback $\{ c^j_h(s^j_h,a^j_h) \}_{h=1}^H$.
                
                \STATE Compute  $u_h^j(s,a) = \max_{p' \in \calP^j}q_h^{p',\pi^j}(s,a) $ and $u_h^k(s,a) = \max_{p' \in \calP^k}q_h^{p',\pi^k}(s,a)$.
                
                \STATE Compute loss estimator $\hat c^j_h(s,a) = \frac{c^j_h(s,a) \indevent{s^j_h = s , a^j_h = a}}{\max\{u^j_h(s,a),u^k_h(s,a)\} + \gamma}$ for every $(s,a,h) \in \calS \times \calA \times [H]$.
            
            \ENDFOR
            
            \STATE Update occupancy measure: 
            \begin{align}
                q^{k+1} = \arg \min_{q \in \ocsetk{k+1}} \eta \left\langle q , \sum_{j:j+d^j=k} \hat c^j \right\rangle + \KL{q}{q^k},
                \label{OMD-update-unknown}
            \end{align}
            where $\KL{q}{q'} = \sum_{h,s,a,s'} q_h(s,a,s') \ln \frac{q_h(s,a,s')}{q'_h(s,a,s')} + q'_h(s,a,s') - q_h(s,a,s')$ and  $\ocsetk{k+1} = \{q^{\pi,p'} \mid \pi \in (\Delta_\calA)^{\calS \times [H]}, p'\in \calP^{k+1} \}$.
            
            \STATE Update policy: $\pi_{h}^{k+1}(a\mid s)
            =\frac{\sum_{s'} q_{h}^{k+1}(s,a,s')}{\sum_{a'} \sum_{s'} q_{h}^{k+1}(s,a',s')}$ for every $(s,a,h) \in \calS \times \calA \times [H]$.
        \ENDFOR
    \end{algorithmic}
\end{algorithm}

\begin{algorithm}[t]
	\caption{Update confidence set with delayed trajectory feedback} \label{alg:update-cofidence-set delay}
	\begin{algorithmic}
		\STATE \textbf{Input:} trajectories $\{ (s^j_h,a^j_h) \}_{h\in [H], j: j + d^j = k}$.
		
		\STATE Update visit counters: ${m^{k+1}_h(s,a) \gets m^k_h(s,a) + \sum_{j:j+d^j = k} \bbI\{s_h^j = s, a_h^j = a\} }$,\\ $ {m^{k+1}_h(s,a,s') \gets m^k_h(s, a, s') +  \sum_{j:j+d^j = k} \bbI\{s_h^j = s, a_h^j = a, s_{h+1}^j = s'\} }$ for every $h,s,s'$ and $a$.
		
		\STATE Compute empirical transitions function $\bar{p}^{k+1}$: $ \bar p^{k+1}_h(s' \mid s,a) = \frac{m^{k+1}_h(s,a,s')}{m^{k+1}_h(s,a) \vee 1} \qquad \forall (s,a,s',h) $.
		
		\STATE Define confidence sets $\calP^{k+1}$ such that $p' \in \calP^{k+1}$ if and only if, for every $(s,a,s',h)$, $p'$ ensures $\sum_{s'} p'_h(s'|s,a)=1$ and:
		\[ 
		\left| p'_{h}(s'| s,a)-\bar{p}_{h}^{k+1}(s'| s,a) \right|
		\leq 
		\sqrt{ \frac{16 \bar{p}_{h}^{k+1}(s' | s,a)  \log\frac{10 HSAK}{\delta}}{m_{h}^{k+1}(s,a) \vee 1}} + \frac{10 \log\frac{10 HSAK}{\delta}}{m_{h}^{k+1}(s,a) \vee 1}.
		\]

	\end{algorithmic}
\end{algorithm}

\begin{remark}  
\label{remark: delayed trajectory}
Note that the confidence set at time $k$ in \cref{alg:o-reps-new-estimator-unknown} is constructed using only the trajectories from rounds $j$ such that $j + d^j < k$ (a.k.a delayed trajectory feedback \cite{lancewicki2020learning}). The main reason for that is that our analysis requires that $\pi^k$ would be completely determined by the history from rounds $j$ such that $j+d^j < k$. This is specifically crucial for the analysis of $\textsc{Bias}_1$ (see \cref{lemma:Bias-1-o-reps-new-estimator}) and in some of the concentration bounds. This means that our algorithm performs under the weaker assumption of delayed trajectory feedback, but this also comes at the price of an additional additive term in the regret of order $H^3 S^2 A d_{max}$. In order to eliminate the dependency in $d_{max}$ one can use the skipping technique of \cite{thune2019nonstochastic}. In this case the regret would scale as $\tilde O(H^2 S \sqrt{A D} )$, under the worst case.
\end{remark}

Explicitly solving this optimization problem in \cref{OMD-update-unknown}, we get \cite{rosenberg2019online}:
\[
    q^{k+1}_h(s,a,s') = \frac{q^k_h(s,a,s') e^{B^k_h(s,a,s' \mid v^{\mu^k},e^{\mu^k,\beta^k})}}{Z^k_h(v^{\mu^k},e^{\mu^k,\beta^k})},
\]
for:
\begin{align*}
    B_{h}^{k}(s,a,s'\mid v,e)
    & =
    e_h(s,a,s')
    + v_h(s,a,s')
    - \eta\sum_{j : j+d^j=k} \hat c_{h}^{j}(s,a)
    - \sum_{s''} \bar p_{h}^{k}(s''\mid s,a)v_{h+1}(s,a,s'')
    \\
    v^{\mu}_h(s,a,s')
    & =
    \mu_{h}^{-}(s,a,s')-\mu_{h}^{+}(s,a,s')
    \\
    e^{\mu,\beta}_h(s,a,s')
    & =
    \beta_{h+1}(s') - \beta_{h}(s) + \sum_{s''} ( \mu_{h}^{-}(s,a,s'') + \mu_{h}^{+}(s,a,s'') ) r_{h}^{k}(s'' \mid s,a)
    \\
    r_{h}^{k}(s' \mid s,a) & = \sqrt{ \frac{16 \bar{p}_{h}^{k}(s' | s,a)  \log\frac{10 HSAK}{\delta}}{m_{h}^{k}(s,a) \vee 1}} + \frac{10 \log\frac{10 HSAK}{\delta}}{m_{h}^{k}(s,a) \vee 1}
    \\
    Z_{h}^{k}(v,e)
    & =
    \sum_{s,a,s'}q_{h}^{k}(s,a,s')e^{B_{h}^{k}(s,a,s'\mid v,e)}
    \\
    \mu^{k},\beta^{k}
    & =
    \arg\min_{\beta,\mu\geq0} \sum_{h=1}^{H} \log Z_{h}^{k}(v^{\mu},e^{\mu,\beta}).
\end{align*}

\begin{theorem}
    \label{thm:regret-o-reps-new-estimator-unknown}
     Running UOB-REPS with the delay-adapted estimator, $\eta = \gamma = \min \{ \sqrt{\frac{\log \frac{KHSA}{\delta}}{SAK}} , \sqrt{\frac{\log \frac{KHSA}{\delta}}{\sqrt{HSA} D}} \}$ guarantees, with probability $1 - \delta$,
    \begin{align*}
        \regret
        & =
        O \biggl( H^2 S\sqrt{AK \log \frac{KHSA}{\delta}} + (H S A)^{1/4} \cdot H \sqrt{D \log \frac{KHSA}{\delta}} 
        \\
        & \qquad \qquad \qquad + H^3 S^2 A d_{max} \log \frac{KHSA}{\delta} + H^3 S^3 A \log^3 \frac{KHSA}{\delta} \biggr).
    \end{align*}
\end{theorem}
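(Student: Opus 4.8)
\textbf{Proof proposal for \cref{thm:regret-o-reps-new-estimator-unknown}.}
The plan is to follow exactly the structure of the proof sketch of \cref{paper-thm:regret-o-reps-new-estimator} given in the main text, replacing the known-transition ingredients by their unknown-transition counterparts from the analysis of \cref{thm:regret-o-reps-new-estimator} (the known-transition version) and from \cite{jin2019learning}. First I would decompose the regret into the five terms as in the sketch: $\textsc{Est} = \sum_k \langle q^{\pi^k} - q^k, c^k\rangle$, $\textsc{Bias}_1 = \sum_k \langle q^k, c^k - \hat c^k\rangle$, $\textsc{Bias}_2 = \sum_k \langle q^\star, \hat c^k - c^k\rangle$, $\textsc{Drift} = \sum_k \langle q^k - q^{k+d^k}, \hat c^k\rangle$, and $\textsc{Reg} = \sum_k \langle q^{k+d^k} - q^\star, \hat c^k\rangle$. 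The term $\textsc{Est}$ is the pure transition-estimation error and is bounded w.h.p.\ by $\wt O(H^2 S\sqrt{AK} + H^3 S^3 A)$ directly by Lemma~4 and the surrounding arguments of \cite{jin2019learning} (here one must be careful that the confidence set uses only the delayed trajectories $\{j : j+d^j < k\}$, which introduces the additive $H^3 S^2 A d_{max}$ term accounted for in \cref{remark: delayed trajectory}). The term $\textsc{Bias}_2$ is bounded w.h.p.\ by $\wt O(H/\gamma)$ using \cref{lem:jin2019_lemma_14} together with the fact that the delay-adapted estimator is pointwise no larger than the standard estimator, so the one-sided concentration of \cite{jin2019learning} still applies.

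The heart of the proof is the trio $\textsc{Reg}$, $\textsc{Drift}$, $\textsc{Bias}_1$, which I would handle by adapting \cref{lemma:Reg-o-reps-new-estimator,lemma:Drift-o-reps-new-estimator,lemma:Bias-1-o-reps-new-estimator} to the unknown-transition OMD update \cref{OMD-update-unknown} over the occupancy-measure polytope $\ocsetk{k+1}$. For $\textsc{Reg}$: introduce $\tilde q^{k+1}_h(s,a,s') = q^k_h(s,a,s') e^{-\eta \sum_{j: j+d^j=k}\hat c^j_h(s,a)}$, use the three-point / Pythagorean identity for the KL (\cite[Lemma~1.2]{ziminonline}) and the standard OMD telescoping to get $\textsc{Reg} \le \frac{2 H \log(S^2 A)}{\eta} + \sum_k \langle q^{k+d^k} - \tilde q^{k+d^k+1}, \hat c^k\rangle$, then bound the drift-of-$\tilde q$ term using $1 - e^{-x}\le x$ and, crucially, the defining property of the delay-adapted estimator $\hat c^k_h(s,a)\le 1/(u^{k+d^k}_h(s,a)+\gamma)\le 1/q^{\pi^{k+d^k}}_h(s,a)$ (w.h.p., since $p\in\calP^{k+d^k}$); the resulting sum is then controlled by event $E^d$ to give $\wt O(H/\eta + \eta HSAK + \tfrac{\eta}{\gamma} d_{max})$ — structurally identical to the known case but with $u$ in place of $q$ in the denominators. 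For $\textsc{Drift}$: take absolute values entrywise, apply a one-sided concentration (event $E^{\hat c}$) to replace $\hat c^k$ by $c^k$ up to $\wt O(H/\gamma)$, then bound $\|q^k - q^{k+d^k}\|_1 \le \sum_{j=k}^{k+d^k-1}\|q^j - q^{j+1}\|_1 \le \sum_j \sum_h \sqrt{2\KL{q^j_h}{q^{j+1}_h}}$ by triangle and Pinsker inequalities, and finally invoke the unknown-transition analogue of \cref{lemma:adjacent-o-mesure-KL-bound} together with \cref{lemma:sum k<i<k+d^k = D} and event $E^{sq}$ to arrive at $\wt O(\eta\sqrt{H^3 S A}(D+K) + \tfrac{\eta}{\gamma}H^{3/2}d_{max} + H/\gamma)$; here the key subtlety is that $\KL{q^j}{q^{j+1}}$ must be bounded using the OMD update on $\ocsetk{j+1}$ rather than the simplex, which is the content of \cref{lemma:adjacent-o-mesure-KL-bound-unknown-p} referenced in the main text (one computes the explicit solution of \cref{OMD-update-unknown} via the dual variables $v^{\mu^k}, e^{\mu^k,\beta^k}$ and bounds the log-partition function as in the proof of \cref{lemma:adjacent-o-mesure-KL-bound}, paying an extra $\sqrt{HSA}$-type factor from the confidence-set radii $r^k_h(\cdot\mid s,a)$). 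For $\textsc{Bias}_1$: split into $\sum_k\langle q^k, c^k - \bbE[\hat c^k\mid \filt{k+d^k}]\rangle + \sum_k\langle q^k, \bbE[\hat c^k\mid\filt{k+d^k}] - \hat c^k\rangle$; the second piece is $\wt O(H\sqrt K)$ by Azuma (event $E^c$), and for the first piece use that $q^k$ is $\filt{k+d^k}$-measurable (this is exactly why the confidence set must use only delayed trajectories) so $\bbE[\hat c^k_h(s,a)\mid\filt{k+d^k}] = q^{\pi^k}_h(s,a) c^k_h(s,a)/(\max\{u^k_h(s,a),u^{k+d^k}_h(s,a)\}+\gamma)$, whence the bias telescopes into $\sum_k\|\max\{u^k,u^{k+d^k}\} - q^{\pi^k}\|_1 + \gamma HSAK \le 2\sum_k\|u^k - q^{\pi^k}\|_1 + \sum_k\|u^{k+d^k} - u^k\|_1 + \gamma HSAK$; the first sum is handled like $\textsc{Est}$ and the second like $\textsc{Drift}$ (after relating $u$-differences to $q$-differences plus confidence-radius terms as in \cite{jin2019learning}).

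The main obstacle I anticipate is establishing the unknown-transition analogue of \cref{lemma:adjacent-o-mesure-KL-bound}, i.e.\ bounding $\sum_h \KL{q^k_h}{q^{k+1}_h}$ by (essentially) $\eta^2 \sum_{h,s,a} q^k_h(s,a)(\sum_{j:j+d^j=k}\hat c^j_h(s,a))^2$ times a poly$(H,S,A)$ factor, using only the structure of the OMD update over $\ocsetk{k+1}$. In the known-transition case the dual variable is just the value function $v^k$ and the telescoping over the transition constraints is clean; with unknown transitions one additionally carries the Lagrange multipliers $\mu^k,\beta^k$ enforcing membership in the confidence set, and the analogue of ``$Z^k_h(v^k)\le Z^k_h(0)$'' must be replaced by a comparison against a feasible dual point, after which the $1+s+s^2/2$ and $\log(1+s)\le s$ expansion goes through but leaves residual terms involving the confidence radii $r^k_h$ that need to be absorbed into the $\textsc{Est}$-type bounds. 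Once this lemma is in place, the remaining steps are routine adaptations of the known-transition proof combined with the good-event machinery (the unknown-transition good event is the intersection of delayed-feedback versions of $E^c, E^{\hat c}, E^d, E^{sq}, E^\star$ together with $p\in\bigcap_k\calP^k$, each holding with probability $\ge 1-\delta/6$ by Azuma, the concentration lemmas of \cite{cohen2021minimax,jin2019learning}, and the confidence-set guarantee). Summing all five terms and plugging in $\eta=\gamma=\min\{\sqrt{\log(KHSA/\delta)/(SAK)},\ \sqrt{\log(KHSA/\delta)/(\sqrt{HSA}D)}\}$ yields the claimed bound $\wt O(H^2 S\sqrt{AK} + (HSA)^{1/4}H\sqrt D + H^3 S^2 A d_{max} + H^3 S^3 A)$.
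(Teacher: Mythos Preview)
Your proposal is correct and follows essentially the same route as the paper: the same five-term decomposition, the same OMD/three-point argument for $\textsc{Reg}$, the same triangle--Pinsker--KL chain for $\textsc{Drift}$, and the same conditional-expectation splitting for $\textsc{Bias}_1$.

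The one place where you are needlessly pessimistic is the ``main obstacle'' you flag, namely \cref{lemma:adjacent-o-mesure-KL-bound-unknown-p}. You anticipate paying an extra $\sqrt{HSA}$-type factor from the confidence radii $r^k_h$, or having to absorb residual terms into the $\textsc{Est}$ bound. The paper shows this is not the case: the bound is \emph{identical} to the known-transition one, $\sum_h \KL{q^k_h}{q^{k+1}_h}\le \tfrac{\eta^2}{2}\sum_{h,s,a,s'} q^k_h(s,a,s')\bigl(\sum_{j:j+d^j=k}\hat c^j_h(s,a)\bigr)^2$. The argument does use $Z^k_h(v^{\mu^k},e^{\mu^k,\beta^k})\le Z^k_h(0,0)$ (zero is a feasible dual point, so your worry there was unfounded), and the remaining residual---the transition-mismatch term $\sum_{h,s,a,s'} q^k_h(s,a)(\bar p^k_h-p^k_h)(s'\mid s,a)\,v^{\mu^k}_{h+1}(s,a,s')$---is shown to be dominated by the negative term $-\sum_{h,s,a,s'} q^k_h(s,a,s')\,e^{\mu^k,\beta^k}_h(s,a,s')$ coming from the $(B)$ part of the expansion. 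This cancellation hinges on the specific form of the dual variables: $v^\mu=\mu^--\mu^+$ while $e^{\mu,\beta}$ contains $\sum_{s''}(\mu^-_h+\mu^+_h)(s,a,s'')\,r^k_h(s''\mid s,a)$, so after bounding $|\bar p^k_h-p^k_h|\le r^k_h$ and using $\mu^\pm\ge 0$ the mismatch term is absorbed exactly, with the $\beta$-part telescoping to zero. This clean cancellation is what prevents the extra $\sqrt{HSA}$ factor and is essential for the claimed $(HSA)^{1/4}$ dependence in the delay term; had the residual only been absorbed into $\textsc{Est}$-type bounds as you suggest, the final rate would be weaker.

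Two minor cosmetic differences: in $\textsc{Reg}$ you want $u^{k+d^k}_h(s,a)\ge q^{k+d^k}_h(s,a)$ (the algorithm's occupancy measure, which holds by definition of $u^{k+d^k}$ since $q^{k+d^k}\in\ocsetk{k+d^k}$) rather than $\ge q^{\pi^{k+d^k}}_h(s,a)$; and in $\textsc{Bias}_1$ the paper splits $|\max\{u^k,u^{k+d^k}\}-q^{\pi^k}|$ via $|u^k-q^{\pi^k}|+|u^{k+d^k}-q^{\pi^{k+d^k}}|+|q^{\pi^{k+d^k}}-q^{\pi^k}|$ rather than going through $|u^{k+d^k}-u^k|$, but your route works equally well.
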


\subsection{The good event}
Let $\tilde{\mathcal{H}}^{k}$ be the history of episodes $\{j:j+d^{j}<k\}$, $\epsilon_{h}^{k}(s'\mid s,a)=16\sqrt{\frac{p_{h}(s'\mid s,a)\logterm}{n_{h}^{k}(s,a)\vee1}}+\frac{200\logterm}{n_{h}^{k}(s,a)\vee1}$ and $\logterm = \log\frac{HSAK}{\delta}$.
Define the following events: 
\begin{align*}
    E^{p}
    & = 
    \left\{ \forall k,s',s,a,h:
    \left| p_{h}(s'\mid s,a)-\bar{p}_{h}^{k}(s'\mid s,a) \right|
    \leq 
    4\sqrt{ \frac{\bar{p}_{h}^{k}(s' \mid s,a)  \log\frac{10 HSAK}{\delta}}{m_{h}^{k}(s,a) \vee 1}} + 10 \frac{\log\frac{10 HSAK}{\delta}}{m_{h}^{k}(s,a) \vee 1}
    \right\}
    \\
        E^{on1}
    & =
    \left\{   \sum_{k,h,s,a} \Big( q^{\pi^k}_h(s,a) - \indevent{s^{k,v}_h=s,a^{k,v}_h=a} \Big) \min \{ 2 , \epsilon^k_h(s,a) \} \le 10 \sqrt{ K \log \frac{30 K H S A }{\delta}} \right\}
    \\
    E^{on2}
    & =
    \left\{  \sum_{k,h,s,a}  q_{h}^{\pi^{k}}(s,a) \epsilon_{h}^{k}(s,a) \le 2\sum_{k,h,s,a} \indevent{s_{h}^{k,v}=s,a_{h}^{k,v}=a} \epsilon_{h}^{k}(s,a)+100 H S \log^2 \frac{30 K H S A }{\delta} \right\} 
    \\
    E^{on3}
    & =
    \left\{  \sum_{k,s,a,h}  \frac{q_{h}^{\pi^{k}}(s,a)}{n_h^k(s,a)}  \le 2\sum_{k,s,a,h} \frac{ \indevent{s_{h}^{k,v}=s,a_{h}^{k,v}=a}}{n_h^k(s,a)} + H \log \frac{m}{\delta} \right\} 
    \\
    E^c 
    & = 
    \left\{ \sum_{k=1}^K \langle \bbE [ \hat c^k \mid \filt{k+d^k} ] - \hat c^k , q^k \rangle \le 4 H \sqrt{K \log \frac{10}{\delta}} \right\}
    \\
    E^{\hat c}
    & =
    \left\{ \sum_{k=1}^{K}\langle|q^{k}-q^{k+d^k}|,\hat{c}^{k}\rangle \le 4 \sum_{k=1}^{K}\langle|q^{k}-q^{k+d^k}|,c^{k}\rangle + \frac{40H \log \frac{10 H}{\delta}}{\gamma} \right\}
    \\
    E^d
    & =
    \left\{ \sum_{k,h,s,a} |\calF^{k+d^k}| \hat c^k_h(s,a) \le \sum_{k,h,s,a} |\calF^{k+d^k}| c^k_h(s,a) + \frac{10 H d_{max} \log \frac{10 H}{\delta}}{\gamma}  \right\}
    \\
    E^{sq}
    & =
    \left\{ \sum_{k=1}^{K} \sum_{i=1}^{K} \indevent{k\leq i+d^{i} < k+d^{k}} \sum_{h,s,a}\sqrt{q_{h}^{i+d^{i}}(s,a)} (\hat{c}_{h}^{i}(s,a) - 4 c_{h}^{i}(s,a) ) \le \frac{10 H d_{max} \log \frac{10 H}{\delta}}{\gamma}  \right\}
    \\
    E^\star
    & =
    \left\{ \sum_{k=1}^K \langle \hat c^k - c^k , q^\star \rangle \le \frac{H  \log \frac{10 H S A}{\delta}}{\gamma} \right\}
\end{align*}

The good event is the intersection of the above events. 
The following lemma establishes that the good event holds with high probability. 

\begin{lemma}[The Good Event]
    \label{lemma:good-event-o-reps-new-estimator-unknown}
    Let $\bbG =E^p\cap E^{on1}\cap E^{on2}\cap E^{on3} \cap E^c \cap E^{\hat c} \cap E^d \cap E^{sq} \cap E^\star$ be the good event. 
    It holds that $\Pr [ \bbG ] \geq 1-\delta$.
\end{lemma}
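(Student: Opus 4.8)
The plan is to mirror the proof of \cref{lemma:good-event-o-reps-new-estimator}: bound the probability of each of the nine complementary events $\neg E^{p},\neg E^{on1},\neg E^{on2},\neg E^{on3},\neg E^c,\neg E^{\hat c},\neg E^d,\neg E^{sq},\neg E^\star$ by $\delta/9$, and then take a union bound. Five of these events—$E^c,E^{\hat c},E^d,E^{sq},E^\star$—will be controlled exactly as in the known-transition case, so the only new work there is to re-check that the measurability and boundedness facts used in that proof survive the switch to upper occupancy bounds. Concretely, since \cref{alg:update-cofidence-set delay} builds $\calP^{j}$ from the trajectories of episodes $\{i:i+d^i<j\}$ only, both $u^{k}$ and $u^{k+d^{k}}$ are determined by the history $\filt{k+d^{k}}$ (while episode $k$ itself is not, as $k\notin\{j:j+d^{j}<k+d^{k}\}$), and therefore $\bbE\bigl[\hat c^{k}_h(s,a)\mid\filt{k+d^{k}}\bigr]=\frac{q^{\pi^{k}}_h(s,a)\,c^{k}_h(s,a)}{\max\{u^{k}_h(s,a),u^{k+d^{k}}_h(s,a)\}+\gamma}\le c^{k}_h(s,a)\le 1$ under $E^{p}$, because then $u^{k}_h(s,a)\ge q^{\pi^{k}}_h(s,a)$. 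With this identity in hand, $\neg E^{c}$ follows from the Azuma inequality along $\{\filt{k+d^{k}}\}_k$ (increments bounded by $H$); $\neg E^{\hat c}$ and $\neg E^{sq}$ from \cite[Lemma E.2]{cohen2021minimax} (nonnegative summands bounded by $H/\gamma$ and $Hd_{max}/\gamma$, with conditional means dominated by their cost-weighted counterparts); $\neg E^{d}$ from \cite[Lemma 11]{jin2019learning}; and $\neg E^{\star}$ from \cref{lem:jin2019_lemma_14}—in each case after rescaling the cited confidence parameter so that the failure probability is $\delta/9$.

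The genuinely new ingredients are the transition-related events $E^{p},E^{on1},E^{on2},E^{on3}$. The event $E^{p}$ is the standard empirical-Bernstein confidence-set guarantee and holds with probability at least $1-\delta/9$ by the argument of \cite[Lemma 2]{jin2019learning} (equivalently \cite[Lemma D.3.3]{jin2021best}), applied to the counters $m^{k}_h(s,a)$ produced by \cref{alg:update-cofidence-set delay}; the only change from \cite{jin2019learning} is that these counters aggregate the \emph{delayed} trajectory feedback $\{j:j+d^{j}<k\}$ instead of all past episodes, which slows the shrinkage of the confidence radii but does not affect the concentration argument (this slowdown is exactly what produces the extra $H^{3}S^{2}Ad_{max}$ term in the final regret bound, cf.\ \cref{remark: delayed trajectory}). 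The events $E^{on1},E^{on2},E^{on3}$ compare the expected visit counts $q^{\pi^{k}}_h(s,a)$ to the realized visit indicators along the observed delayed trajectory; since $\pi^{k}$ is $\filt{k}$-measurable (again because $\calP^{k}$ uses only delayed trajectory feedback), each of these is a sum of bounded martingale differences with respect to $\{\filt{k}\}_k$ and follows from a Freedman/Bernstein-type inequality as in the proofs of Lemmas~10 and~11 of \cite{jin2019learning} and the analogous lemmas in Appendix~D.3 of \cite{jin2021best}; each again contributes at most $\delta/9$.

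A union bound over the nine events then yields $\Pr[\bbG]\ge 1-\delta$; the rescaling from $\delta/5$ (as in \cref{lemma:good-event-o-reps-new-estimator}) to $\delta/9$ only inflates the constants hidden inside $\logterm$. I do not expect a deep obstacle here—the lemma is a union bound of off-the-shelf concentration results—but the point that requires care is the measurability bookkeeping: everything hinges on $\pi^{k}$ (hence $u^{k}$) being a function of $\filt{k}$ and $u^{k+d^{k}}$ being a function of $\filt{k+d^{k}}$, which is precisely why \cref{alg:o-reps-new-estimator-unknown} constructs its confidence sets from delayed trajectory feedback; if this were violated, the clean conditional expectation of $\hat c^{k}$ used in all five estimator events (and the martingale structure in $E^{on1}$ through $E^{on3}$) would break.
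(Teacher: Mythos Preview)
Your approach is correct and matches the paper's own (two-sentence) proof: a union bound over the nine complementary events, handling $E^c,E^{\hat c},E^d,E^{sq},E^\star$ as in \cref{lemma:good-event-o-reps-new-estimator} and invoking standard concentration for $E^p,E^{on1},E^{on2},E^{on3}$ via \cite{jin2019learning,pmlr-v162-lancewicki22a}. The measurability bookkeeping you highlight for the delay-adapted estimator (that $u^k$ and $u^{k+d^k}$ are $\filt{k+d^k}$-measurable while episode $k$ is not) is exactly the point that matters for the five estimator events.

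One small correction on the four transition events: you write that the summands in $E^{on1},E^{on2},E^{on3}$ are martingale differences with respect to $\{\filt{k}\}_k$. This does not quite work as stated, because the weights $\epsilon^k_h(\cdot)$ and $1/n^k_h(s,a)$ depend on the \emph{non-delayed} visit counter $n^k_h(s,a)$, which aggregates all episodes $1,\dots,k-1$ and is therefore not $\filt{k}$-measurable. The correct filtration here is the ordinary episode filtration $\calH^k=\sigma(\text{episodes }1,\dots,k-1)$: since $\pi^k$ is $\filt{k}$-measurable it is a fortiori $\calH^k$-measurable, and both $n^k_h$ and $\epsilon^k_h$ are $\calH^k$-measurable, so the Freedman/Bernstein arguments from \cite{jin2019learning} (the very lemmas you cite) apply verbatim with $\calH^k$ in place of $\filt{k}$. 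This is a bookkeeping slip rather than a conceptual gap, and once fixed your sketch is complete.
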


\begin{proof}
    Similar to the proof of \cref{lemma:good-event-o-reps-new-estimator}.
    Events $E^p, E^{on1}, E^{on2}$ and $E^{on3}$ are standard (see, e.g., \cite{jin2019learning,pmlr-v162-lancewicki22a}).
\end{proof}

\subsection{Proof of the Main Theorem}

\begin{proof}[Proof of \cref{thm:regret-o-reps-new-estimator-unknown}]
    By \cref{lemma:good-event-o-reps-new-estimator-unknown}, the good event holds with probability $1 - \delta$.
    We now analyze the regret under the assumption that the good event holds.
    We decompose the regret as follows:
    \begin{align}
        \nonumber
        & \regret
        =
        \sum_{k=1}^K \langle q^{\pi^k} - q , c^k \rangle
        \\
        & \,\,= 
        \underbrace{\sum_{k=1}^{K}\langle q^{\pi^k} - q^k,c^{k}\rangle }_{\textsc{Est}}
        +
        \underbrace{\sum_{k=1}^{K}\langle q^{k},c^{k}-\hat{c}^{k}\rangle }_{\textsc{Bias}_1}
        + 
        \underbrace{\sum_{k=1}^{K}\langle q^\star,\hat{c}^{k}-c^{k}\rangle}_{\textsc{Bias}_2}
        +
        \underbrace{\sum_{k=1}^{K}\langle q^{k} - q^{k+d^k},\hat{c}^{k}\rangle}_{\textsc{Drift}}
        +
        \underbrace{\sum_{k=1}^{K}\langle q^{k+d^k} - q^\star,\hat{c}^{k}\rangle}_{\textsc{Reg}}.
        \label{eq:regret decomposition delay-adapted O-REPS unknown}
    \end{align}
    $\textsc{Bias}_2$ is bounded under event $E^\star$ by $O(\frac{H \logterm}{\gamma})$, 
    $\textsc{Est}$ is bounded in \cref{lemma:Est-uob-reps} by $ O (H^2 S \sqrt{A K \logterm} + H^2 S^2 A \logterm^2 + H^2 S A d_{max})$, 
    $\textsc{Reg}$ is bounded in \cref{lemma:Reg-uob-reps} by $O (\frac{H \logterm}{\eta} + \eta HSAK + \frac{\eta}{\gamma} H d_{max} \logterm)$, 
    $\textsc{Drift}$ is bounded in \cref{lemma:Drift-uob-reps} by $O (\eta\sqrt{H^3 S A} (D + K) + \frac{\eta}{\gamma} H^{3/2} d_{max} \logterm + \frac{H \logterm}{\gamma})$, and $\textsc{Bias}_1$ is bounded in \cref{lemma:Bias-1-uob-reps} by $O (H^2 S \sqrt{A K \logterm} + H^3 S^3 A \logterm^3 +\gamma HSAK + \eta\sqrt{H^3 S A}(D + K) + \frac{\eta}{\gamma}H^{3/2}d_{max} \logterm) + H^3 S^2 A d_{max}$.
    Putting everything together:
    \begin{align*}
        \regret
        & =
        O \Big( H^2 S \sqrt{A K \logterm} + H^3 S^3 A \logterm^3 + (\eta + \gamma) H S A K 
        \\
        & \qquad\qquad\qquad\qquad\qquad 
        + (\frac{1}{\eta} + \frac{1}{\gamma}) H \logterm + \eta \sqrt{H^3 S A} (D+K) + \frac{\eta}{\gamma} H^{3/2} d_{max} \logterm 
        + H^3 S^2 A d_{max}
        \Big) ,
    \end{align*}
    and plugging in the definitions of $\eta$ and $\gamma$ finishes the proof.
\end{proof}

\subsection{Bound on the Transition Estimation Error ($\textsc{Est}$ in \cref{eq:regret decomposition delay-adapted O-REPS unknown})}

\begin{lemma}[$\textsc{Est}$ Term]
    \label{lemma:Est-uob-reps}
    Under the good event,
    \[
        \sum_{k=1}^{K}\langle q^{\pi^k} - q^k,c^{k}\rangle
        =
        O \left( H^2 S \sqrt{A K \logterm} + H^2 S^2 A \logterm^2 + H^2 S A d_{max}\right).
    \]
\end{lemma}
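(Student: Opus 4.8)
The plan is to follow the standard occupancy-measure decomposition used in \cite{jin2019learning}, but being careful that here the confidence set $\calP^k$ in \cref{alg:o-reps-new-estimator-unknown} is built only from the delayed trajectories (those $j$ with $j+d^j<k$), so the visit counters that appear are $m^k_h(s,a)$ rather than $n^k_h(s,a)$; the gap between the two is controlled by the total number of ``in-flight'' trajectories, which is at most $d_{max}$ per step. First I would write $\langle q^{\pi^k}-q^k,c^k\rangle \le \lVert q^{\pi^k}-q^k\rVert_1$ (since $c^k\in[0,1]$), and on the good event $E^p$ we have $p\in\calP^k$ for all $k$, so both $q^{\pi^k}=q^{\pi^k,p}$ and $q^k=q^{\pi^k,p'}$ for some $p'\in\calP^k$ lie in $\Delta(\calM,k)$. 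Then I would invoke the ``value-difference'' identity (e.g.\ \cite[Lemma~D.3.1]{jin2021best} or Lemma~B.1--B.2 of \cite{rosenberg2019online}) to bound $\lVert q^{\pi^k}-q^k\rVert_1$ by $H\sum_{h,s,a} q^{\pi^k}_h(s,a)\,\lVert \bar p^k_h(\cdot\mid s,a)-p_h(\cdot\mid s,a)\rVert_1 \le H\sum_{h,s,a} q^{\pi^k}_h(s,a)\sum_{s'}\epsilon^k_h(s'\mid s,a)$ plus the analogous term for $p'$, all under $E^p$ which guarantees $\lvert p'_h - p_h\rvert \le O(\epsilon^k_h)$ for $p'\in\calP^k$.

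Next, the Bernstein-form error $\sum_{s'}\epsilon^k_h(s'\mid s,a)$ is $O\!\big(\sqrt{S\logterm/(m^k_h(s,a)\vee 1)} + S\logterm/(m^k_h(s,a)\vee 1)\big)$ where crucially the count is $m^k$. I would replace $m^k_h(s,a)$ by $n^k_h(s,a) - (\text{in-flight count})$ and use $m^k_h(s,a) \ge n^k_h(s,a) - d_{max}$, so that on the ``well-visited'' states (where $n^k_h(s,a) \ge 2d_{max}$, say) we have $m^k_h(s,a) \ge n^k_h(s,a)/2$ and can freely swap $m^k$ for $n^k$ at the cost of a factor $2$; the ``rarely-visited'' states contribute at most $\sum_{s,a,h} 2d_{max}$ across all episodes times the per-visit cost, i.e.\ an additive $O(H^2SA\,d_{max})$ (the extra $H$ from the value-difference telescoping, and $SA$ from the number of pairs). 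After this reduction the remaining sum is exactly the one handled in \cite{jin2019learning}: applying the concentration events $E^{on2}$ (to pass from the true visitation probabilities $q^{\pi^k}_h(s,a)$ to the empirical indicators $\indevent{s^{k,v}_h=s,a^{k,v}_h=a}$, which is what makes the pigeonhole work) together with the standard pigeonhole bound $\sum_k \indevent{s^{k,v}_h=s,a^{k,v}_h=a}/\sqrt{n^k_h(s,a)\vee 1} = O(\sqrt{K}+ \cdots)$ summed over $(s,a)$ via Cauchy--Schwarz, and likewise $E^{on3}$ for the $1/n^k_h(s,a)$ terms, gives the main bound $O(H^2 S\sqrt{AK\logterm})$ for the square-root part and $O(H^2S^2A\logterm^2)$ for the lower-order $1/n$ part (the second-order/compound error term from \cite[App.~B.2]{jin2019learning} also fits inside this, as in \cref{lem:hedge-regret-est}).

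The main obstacle I expect is the bookkeeping around the delayed confidence set: making sure the swap $m^k \leftrightarrow n^k$ is done cleanly and that the ``price'' of the swap is genuinely only additive $O(H^2 S A\, d_{max})$ and does not get multiplied by anything that scales with $K$ or $D$. This is precisely the source of the $H^3 S^2 A d_{max}$-type term flagged in \cref{remark: delayed trajectory} (here it appears as $H^2 S A d_{max}$ inside $\textsc{Est}$, with the extra powers of $H$ and $S$ showing up in $\textsc{Bias}_1$), and the delicate point is that on the step where $n^k_h(s,a)$ is between $d_{max}$ and $2d_{max}$ we must bound the per-visit cost by a constant rather than by $1/\sqrt{n^k}$, which is fine because there are only $O(d_{max})$ such visits per pair. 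Everything else is a routine adaptation of the $\textsc{Est}$ analysis of \cite{jin2019learning}, so I would state those pieces as lemma invocations rather than reproving them.
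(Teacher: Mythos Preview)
Your proposal is correct and follows essentially the same route as the paper. The paper applies the value-difference lemma to get $\sum_k \langle q^{\pi^k}-q^k,c^k\rangle \le H\sum_{k,h,s,a} q^{\pi^k}_h(s,a)\lVert p^k_h(\cdot\mid s,a)-p_h(\cdot\mid s,a)\rVert_1$ and then invokes an existing lemma from \cite{lancewicki2020learning} that already packages the $m^k\leftrightarrow n^k$ swap and the pigeonhole/concentration argument, yielding exactly the three terms in the statement; your proposal simply unpacks that cited lemma by hand (the well-visited vs.\ rarely-visited split is precisely how it is proved), so the arguments coincide. One minor simplification: you do not need the second-order compound-error analysis of \cite[App.~B.2]{jin2019learning} here---that machinery is only required for the $\textsc{Bias}_1$ term (cf.\ \cref{lemma:Jin1 delay,lemma:Jin-final delay}); for $\textsc{Est}$ the first-order bound $H\sum q^{\pi^k}\lVert p^k-p\rVert_1$ already suffices.
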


\begin{proof}
    Let $q^k = q^{\pi^k,p^k}$.
    By the value difference lemma \cite{shani2020optimistic}:
    \begin{align*}
        \sum_{k=1}^{K}\langle q^{\pi^k} - q^k,c^{k}\rangle
        & =
        \sum_{k,h,s,a} q^{\pi^k}_h(s,a) \sum_{s'} \left( p^k_h(s' \mid s,a) - p_h(s' \mid s,a) \right) V^{\pi^k,p^k}_{h+1}(s')
        \\
        & \le
        H \sum_{k,h,s,a} q^{\pi^k}_h(s,a) \lVert p^k_h(\cdot \mid s,a) - p_h(\cdot \mid s,a) \rVert_1
        \\
        & =
        O ( H^2 S \sqrt{A K \logterm} + H^2 S^2 A \logterm^2 + H^2 S A d_{max} ),
    \end{align*}
    where the second inequality is by event $E^p$ and the last  is by \cite[lemma 5]{lancewicki2020learning}.
\end{proof}

\subsection{Bound on the Regret with respect to the Loss Estimators and Future Policies ($\textsc{Reg}$ in \cref{eq:regret decomposition delay-adapted O-REPS unknown})}

\begin{lemma}[$\textsc{Reg}$ Term]
    \label{lemma:Reg-uob-reps}
    Under the good event,
    \[
        \sum_{k=1}^{K}\langle q^{k+d^k} - q^\star,\hat{c}^{k}\rangle
        =
        O \left( \frac{H \logterm}{\eta} + \eta HSAK + \frac{\eta}{\gamma} H d_{max} \logterm \right).
    \]
\end{lemma}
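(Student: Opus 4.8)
The plan is to reproduce the argument of \cref{lemma:Reg-o-reps-new-estimator} (the known-transition analogue, whose proof is written out in full above), carrying along exactly two modifications forced by unknown dynamics: the OMD step in \cref{alg:o-reps-new-estimator-unknown} runs over the \emph{time-varying} set $\ocsetk{k+1}$ instead of a fixed $\Delta(\calM)$, and the delay-adapted estimator now re-weights by the upper occupancy bounds $u^j_h(s,a),u^{k}_h(s,a)$ rather than by the occupancy measures themselves. First I would introduce the ``unprojected'' iterate $\tilde q^{k+1}$ given by $\tilde q^{k+1}_h(s,a,s') = q^k_h(s,a,s')\, e^{-\eta\sum_{j\in\calF^{k}}\hat c^j_h(s,a)}$, so that $q^{k+1}$ is precisely the $\mathrm{KL}$-projection of $\tilde q^{k+1}$ onto $\ocsetk{k+1}$ (this is read off the closed form of \cref{OMD-update-unknown}); taking logarithms, $\eta\sum_{j\in\calF^{k}}\hat c^j_h(s,a) = \log q^k_h(s,a,s') - \log\tilde q^{k+1}_h(s,a,s')$ for every $s'$.

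Next I would run the standard three-point computation. Under the good event the event $E^p$ holds, hence $p\in\calP^{k+1}$ and $q^\star = q^{\pi^\star,p}\in\ocsetk{k+1}$ is a legitimate competitor for every $k$; this is the only place the unknown transition intervenes in this lemma, and it is what licenses the generalized Pythagorean inequality for the $\mathrm{KL}$-projection. Then $\eta\langle\sum_{j\in\calF^{k}}\hat c^j, q^k - q^\star\rangle = \langle\log q^k - \log\tilde q^{k+1}, q^k - q^\star\rangle = \KL{q^\star}{q^k} - \KL{q^\star}{\tilde q^{k+1}} + \KL{q^k}{\tilde q^{k+1}}$; applying $\KL{q^\star}{\tilde q^{k+1}} \ge \KL{q^\star}{q^{k+1}} + \KL{q^{k+1}}{\tilde q^{k+1}} \ge \KL{q^\star}{q^{k+1}}$ and bounding $\KL{q^k}{\tilde q^{k+1}} \le \KL{q^k}{\tilde q^{k+1}} + \KL{\tilde q^{k+1}}{q^k} = \eta\langle q^k - \tilde q^{k+1},\sum_{j\in\calF^{k}}\hat c^j\rangle$, I would sum over $k$, telescope the $\mathrm{KL}$ terms, use $\KL{q^\star}{q^1} = O(H\log(SA))$ for the uniform initialization, and reindex the two double sums exactly as in \cref{lemma:Reg-o-reps-new-estimator} (namely $\sum_k\sum_{j\in\calF^{k}}\langle\hat c^j, q^k - q^\star\rangle = \sum_k\langle\hat c^k, q^{k+d^k} - q^\star\rangle$, and likewise for the remainder sum). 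This yields
\[
    \sum_{k=1}^K \langle q^{k+d^k} - q^\star,\, \hat c^k\rangle \;\le\; \frac{2H\log(S^2A)}{\eta} + \sum_{k=1}^K \langle q^{k+d^k} - \tilde q^{\,k+d^k+1},\, \hat c^k\rangle .
\]

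It then remains to bound the stability-type remainder. I would expand $q^{k+d^k}_h(s,a,s') - \tilde q^{\,k+d^k+1}_h(s,a,s') = q^{k+d^k}_h(s,a,s')\bigl(1 - e^{-\eta\sum_{j\in\calF^{k+d^k}}\hat c^j_h(s,a)}\bigr) \le \eta\, q^{k+d^k}_h(s,a,s')\sum_{j\in\calF^{k+d^k}}\hat c^j_h(s,a)$ via $1-e^{-x}\le x$, sum over $s'$ to replace $q^{k+d^k}_h(s,a,s')$ by $q^{k+d^k}_h(s,a)$, and then invoke the defining property of the delay-adapted estimator: since $\hat c^k_h(s,a)\le 1/(u^{k+d^k}_h(s,a)+\gamma)$ and, because $q^{k+d^k}=q^{\pi^{k+d^k},p'}$ for some $p'\in\calP^{k+d^k}$, also $q^{k+d^k}_h(s,a)\le u^{k+d^k}_h(s,a)$, we get $q^{k+d^k}_h(s,a)\,\hat c^k_h(s,a)\le 1$. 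Hence the remainder is at most $\eta\sum_{k,h,s,a}\sum_{j\in\calF^{k+d^k}}\hat c^j_h(s,a) = \eta\sum_{k,h,s,a}|\calF^{k+d^k}|\,\hat c^k_h(s,a)$ after relabeling, and I would finish by applying event $E^d$ to pass from $\hat c^k$ to $c^k$ at the cost of an additive $\tfrac{\eta}{\gamma}Hd_{max}\logterm$ term, bounding the resulting sum as $O(\eta HSAK)$ exactly as at the end of the proof of \cref{lemma:Reg-o-reps-new-estimator}, which gives the claimed $O\bigl(\tfrac{H\logterm}{\eta} + \eta HSAK + \tfrac{\eta}{\gamma}Hd_{max}\logterm\bigr)$.

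The main obstacle here is conceptual rather than computational: making sure the generalized Pythagorean step still goes through with the \emph{fixed} comparator $q^\star$ even though the decision set $\ocsetk{k+1}$ shrinks and changes every round — this is resolved once the good event guarantees $p$ lies in every confidence set, so $q^\star$ stays feasible throughout. The only genuinely new inequality relative to the known-transition proof is $q^{k+d^k}_h(s,a)\le u^{k+d^k}_h(s,a)$, which is immediate from the algorithm's construction of $u^{k+d^k}$ and the fact that $q^{k+d^k}$ is the occupancy measure of the extracted policy $\pi^{k+d^k}$ under some transition in $\calP^{k+d^k}$; with that observation the delay-adapted estimator makes the remainder collapse in precisely the same way as in the known-transition case.
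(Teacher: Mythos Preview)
Your proposal is correct and follows the paper's own proof essentially line for line: the same unprojected iterate $\tilde q^{k+1}$, the same three-point/KL identity, the same generalized Pythagorean step (valid because $q^\star\in\ocsetk{k+1}$ under $E^p$), the same reindexing of the two double sums, and the same collapse of the stability remainder via $q^{k+d^k}_h(s,a)\le u^{k+d^k}_h(s,a)$ followed by event $E^d$. Your remark that $q^{k+d^k}_h(s,a)\le u^{k+d^k}_h(s,a)$ holds simply because $q^{k+d^k}\in\ocsetk{k+d^k}$ is in fact slightly sharper than the paper's ``under the good event'' phrasing---the inequality is deterministic from the algorithm's construction.
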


\begin{proof}
    Let $\tilde{q}_{h}^{k+1}(s,a,s')
    =q_{h}^{k}(s,a,s')e^{-\eta\sum_{j: j+d^j=k} \hat c_{h}^{j}(s,a)}$.
    Taking the log, 
    \[
        \eta\sum_{j: j+d^j=k} \hat c_{h}^{j}(s,a)
        = \log q_{h}^{k}(s,a,s')-\log\tilde{q}_{h}^{k+1}(s,a,s').
    \]
    Hence,
    \begin{align*}
        \eta\left\langle \sum_{j: j+d^j=k} \hat c_{h}^{j},q^{k}-q \right\rangle  
        & =
        \left\langle \log q^{k} - \log\tilde{q}^{k+1},q^{k}-q^\star \right\rangle 
        =
        \KL{q^\star}{q^k} - \KL{q^\star}{\tilde q^{k+1}} + \KL{q^k}{\tilde q^{k+1}}
        \\
        & \leq 
        \KL{q^\star}{q^{k}} - \KL{q^\star}{q^{k+1}} - \KL{q^{k+1}}{\tilde{q}^{k+1}}+\KL{q^{k}}{\tilde{q}^{k+1}}
        \\
        & \leq 
        \KL{q^\star}{q^{k}} - \KL{q^\star}{q^{k+1}} + \KL{q^{k}}{\tilde{q}^{k+1}},
    \end{align*}
    where the second equality follows directly the definition of KL, the first inequality is by \cite[Lemma 1.2]{ziminonline}, and the second inequality is since the KL is non-negative.
    Now, the last term is bounded as follows: 
    \begin{align*}
        \KL{q^{k}}{\tilde{q}^{k+1}} 
        & \leq 
        \KL{q^{k}}{\tilde{q}^{k+1}} + \KL{\tilde{q}^{k+1}}{q^{k}}
        \\
        & =
        \sum_{h}\sum_{s,a,s'} \tilde{q}_{h}^{k+1}(s,a,s') \log\frac{\tilde{q}_{h}^{k+1}(s,a,s')}{q_{h}^{k}(s,a,s')} 
        + \sum_{h}\sum_{s,a,s'} q_{h}^{k}(s,a,s') \log\frac{q_{h}^{k}(s,a,s')}{\tilde{q}_{h}^{k+1}(s,a,s')}
        \\
        & =
        \langle q^{k}-\tilde{q}^{k+1},\log q^{k}-\log\tilde{q}^{k+1}\rangle
        =
        \eta\biggl\langle q^{k}-\tilde{q}^{k+1},\sum_{j: j+d^j=k} \hat c^j
        \biggr\rangle.
    \end{align*}
    We get that 
    \[
        \eta\left\langle \sum_{j: j+d^j=k} \hat c^j,q^{k}-q^\star \right\rangle 
        \leq 
        \KL{q^\star}{q^{k}}-\KL{q^\star}{q^{k+1}} 
        + \eta\left\langle q^{k}-\tilde{q}^{k+1},\sum_{j: j+d^j=k} \hat c^j\right\rangle.
    \]
    Summing over $k$ and dividing by $\eta$, we get 
    \begin{align*}
        \underbrace{
        \sum_{k=1}^{K}\sum_{j: j+d^j=k}\left\langle  \hat c^{j},q^{k}-q^\star \right\rangle}
        _{(*)}
        & \leq
        \frac{\KL{q^\star}{q^{1}}-\KL{q^\star}{q^{K+1}}}{\eta}
        +\sum_{k=1}^{K}\left\langle  q^{k}-\tilde{q}^{k+1},\sum_{j: j+d^j=k} \hat c^{j} \right\rangle \\
        & \leq
        \frac{\KL{q^\star}{q^{1}}}{\eta}
        +\sum_{k=1}^{K}\left\langle  q^{k}-\tilde{q}^{k+1},\sum_{j: j+d^j=k} \hat c^{j} \right\rangle 
        \\
        & \leq
        \frac{4H\log(SA)}{\eta}
        + \underbrace{
        \sum_{k=1}^{K}\left\langle  q^{k}-\tilde{q}^{k+1},\sum_{j: j+d^j=k} \hat c^{j} \right\rangle}
        _{(**)},
    \end{align*}
    where the last inequality is a standard argument (see \cite{ziminonline,hazan2019introduction}).
    We now
    rearrange $(*)$ and $(**)$: 
    \begin{align*}
        (*)
        & =
        \sum_{k=1}^{K}\sum_{j=1}^{K}\ind\{ j+d^{j}=k\} \langle \hat c^{j},q^{k}-q^\star\rangle
        =
        \sum_{j=1}^{K}\sum_{k=1}^{K}\ind\{ j+d^{j}=k\} \langle \hat c^{j},q^{k}-q^\star \rangle 
        \\
        & =
        \sum_{j=1}^{K}\langle \hat c^{j},q^{j+d^{j}}-q^\star \rangle
        =
        \sum_{k=1}^{K}\langle \hat c^{k},q^{k+d^{k}}-q^\star \rangle.
    \end{align*}
    In a similar way, 
    \begin{align*}
        (**)  
        & =
        \sum_{k=1}^{K}\sum_{j: j+d^j=k}\langle q^{k}-\tilde{q}^{k+1},\hat c^{j}\rangle =
        \sum_{k=1}^{K}\sum_{j=1}^{K}\ind\{ j+d^j = k\} \langle q^{k}-\tilde{q}^{k+1},\hat c^{j}\rangle 
        \\
        & =
        \sum_{j=1}^{K}\sum_{k=1}^{K}\ind\{ j+d^j = k\} \langle q^{k}-\tilde{q}^{k+1},\hat c^{j}\rangle
        =
        \sum_{k=1}^{K}\langle q^{k+d^{k}}-\tilde{q}^{k+d^{k}+1},\hat c^{k}\rangle.
    \end{align*}
    This gives us, 
    \[
        \sum_{k=1}^{K}\langle \hat c^{k},q^{k+d^{k}}-q^\star \rangle \leq\frac{4H\log(S A)}{\eta}+\sum_{k=1}^{K}\langle q^{k+d^{k}}-\tilde{q}^{k+d^{k}+1},\hat c^{k}\rangle.
    \]
    It remains to bound the second term on the right hand side:
    \begin{align*}
        \sum_{k}\langle q^{k+d^{k}}-\tilde{q}^{k+d^{k}+1},\hat c^{k}\rangle 
        & =
        \sum_{k,h,s,a,s'} \hat c_{h}^{k}(s,a)(q_{h}^{k+d^{k}}(s,a,s')-\tilde{q}_{h}^{k+d^{k}+1}(s,a,s'))
        \\
        & =
        \sum_{k,h,s,a,s'} \hat c_{h}^{k}(s,a) \left(
        q_{h}^{k+d^{k}}(s,a,s')-q_{h}^{k+d^{k}}(s,a,s')e^{-\eta\sum_{j : j+d^j = k+d^{k}} \hat c_{h}^{j}(s,a)} 
        \right)
        \\
        & =
        \sum_{k,h,s,a,s'}q_{h}^{k+d^{k}}(s,a,s') \hat c^k_h(s,a) \left(
        1-e^{-\eta\sum_{j : j+d^j = k+d^{k}} \hat c_{h}^{j}(s,a)}
        \right)
        \\
        \tag{\ensuremath{1-e^{-x}\leq x}} 
        & \leq
        \eta\sum_{k,h,s,a}q_{h}^{k+d^{k}}(s,a) \hat c^k_h(s,a) \left(
        \sum_{j : j+d^j = k+d^{k}}\hat c_{h}^{j}(s,a)
        \right)
        \\
        & =
        \eta\sum_{k,h,s,a}q_{h}^{k+d^{k}}(s,a) \frac{\indevent{s^k_h = s , a^k_h = a} c^k_h(s,a)}{\max\{u^k_h(s,a),u^{k + d^k}_h(s,a)\} + \gamma} \left(
        \sum_{j : j+d^j = k+d^{k}}\hat c_{h}^{j}(s,a)
        \right)
        \\
        & \leq
        \eta\sum_{k,h,s,a} \sum_{j : j+d^j = k+d^{k}}\hat c_{h}^{j}(s,a)
        =
        \eta\sum_{k,h,s,a} \sum_{j} \indevent{j+d^j = k+d^{k}} \hat c_{h}^{j}(s,a)
        \\
        & =
        \eta\sum_{j,h,s,a} \hat c_{h}^{j}(s,a) \sum_{k} \indevent{j+d^j = k+d^{k}}
        \le
        \eta\sum_{k,h,s,a} |\calF^{k+d^k}| \hat c^k_h(s,a),
    \end{align*}
    where the second inequality is since $u^{k + d^k}_h(s,a) \geq q^{k + d^k}_h(s,a)$ under the good event. Finally, by event $E^d$,
    \begin{align*}
        \sum_{k,h,s,a} |\calF^{k+d^k}| \hat c^k_h(s,a)
        & =
        O \left( \sum_{k,h,s,a} |\calF^{k+d^k}| c^k_h(s,a) + \frac{H d_{max} \logterm}{\gamma} \right)
        =
        O \left( \eta HSAK + \frac{H d_{max} \logterm}{\gamma} \right).
        \qedhere
    \end{align*}
\end{proof}

\subsection{Bound on the Delay-caused Drift ($\textsc{Drift}$ in \cref{eq:regret decomposition delay-adapted O-REPS unknown})}

\begin{lemma}[$\textsc{Drift}$ term]
    \label{lemma:Drift-uob-reps}
    Under the good event,
    \[
        \sum_{k=1}^{K} \langle q^{k}-q^{k+d^k},\hat{c}^{k}\rangle
        =
        O \left( \eta \sqrt{H^3 S A} (D + K) + \frac{\eta}{\gamma} H^{3/2} d_{max} \logterm + \frac{H \logterm}{\gamma} \right).
    \]
\end{lemma}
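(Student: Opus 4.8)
The plan is to follow the known-transition argument of \cref{lemma:Drift-o-reps-new-estimator} almost verbatim, taking extra care that here the occupancy measures live over triples $(s,a,s')$ and, crucially, that the per-episode decision sets $\ocsetk{k}$ shrink over time. First I would use the good event $E^{\hat c}$ to pass from the signed inner product to a positive one at the cost of a lower-order term: $\sum_{k}\langle q^{k}-q^{k+d^k},\hat{c}^{k}\rangle \le \sum_{k}\langle |q^{k}-q^{k+d^k}|,\hat{c}^{k}\rangle = O\bigl(\sum_{k}\langle |q^{k}-q^{k+d^k}|,c^{k}\rangle + \tfrac{H\logterm}{\gamma}\bigr)$. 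Here $E^{\hat c}$ applies because $\langle |q^{k}-q^{k+d^k}|,\hat{c}^{k}\rangle \le H/\gamma$ (for each $h$ only one $(s,a)$ contributes, the numerator is at most $1$, and the denominator is at least $\gamma$) and, under the good event, $\bbE\bigl[\langle |q^{k}-q^{k+d^k}|,\hat{c}^{k}\rangle \mid \filt{k+d^k}\bigr] \le \langle |q^{k}-q^{k+d^k}|,c^{k}\rangle$ since $\max\{u^{k},u^{k+d^k}\}$ dominates both $q^{k}$ and $q^{k+d^k}$ and sits in the denominator of $\hat c^{k}$.

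Next I would bound $\sum_{k}\langle c^{k},|q^{k}-q^{k+d^k}|\rangle$ by telescoping over consecutive occupancy measures and moving to the KL. Writing $|q^{k}-q^{k+d^k}|$ elementwise over triples, the triangle inequality and $c\le 1$ give $\sum_{k}\langle c^{k},|q^{k}-q^{k+d^k}|\rangle \le \sum_{k}\sum_{j=k}^{k+d^k-1}\sum_{h}\lVert q^{j}_h - q^{j+1}_h\rVert_1$, and Pinsker together with Jensen yield $\sum_h \lVert q^{j}_h - q^{j+1}_h\rVert_1 \le \sqrt{2H\sum_h \KL{q^j_h}{q^{j+1}_h}}$. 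At this point I invoke the unknown-transition analogue \cref{lemma:adjacent-o-mesure-KL-bound-unknown-p} of \cref{lemma:adjacent-o-mesure-KL-bound}, which bounds $\sum_h \KL{q^j_h}{q^{j+1}_h}$ by $\tfrac{\eta^2}{2}\sum_{h,s,a} q^j_h(s,a)\bigl(\sum_{i:i+d^i=j}\hat c^i_h(s,a)\bigr)^2$ even though $q^j$ need not be feasible for the update producing $q^{j+1}$. Combining this with $\lVert x\rVert_2 \le \lVert x\rVert_1$ gives $\sum_h \lVert q^{j}_h - q^{j+1}_h\rVert_1 \le \eta\sqrt{H}\sum_{i:i+d^i=j}\sum_{h,s,a}\sqrt{q^j_h(s,a)}\,\hat c^i_h(s,a)$.

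Then I would re-index: substituting $j=i+d^i$ turns $\sum_{k}\sum_{j=k}^{k+d^k-1}\sum_{i:i+d^i=j}$ into $\sum_{k}\sum_{i}\indevent{k\le i+d^i < k+d^k}$, so the whole quantity is $O(\tfrac{H\logterm}{\gamma}) + \eta\sqrt{H}\sum_{k,i}\indevent{k\le i+d^i<k+d^k}\sum_{h,s,a}\sqrt{q^{i+d^i}_h(s,a)}\,\hat c^i_h(s,a)$. I apply the good event $E^{sq}$ to replace $\hat c^i$ by $4c^i$ plus a $\tfrac{Hd_{max}\logterm}{\gamma}$ term, bound $\sum_{h,s,a}\sqrt{q^{i+d^i}_h(s,a)}\,c^i_h(s,a)\le H\sqrt{SA}$ by Cauchy--Schwarz (using $c\le 1$ and $\sum_{h,s,a}q_h(s,a)=H$), and finish with \cref{lemma:sum k<i<k+d^k = D}, i.e. $\sum_{k,i}\indevent{k\le i+d^i<k+d^k}\le D+K$. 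Collecting terms produces $O\bigl(\eta\sqrt{H^3SA}(D+K)+\tfrac{\eta}{\gamma}H^{3/2}d_{max}\logterm+\tfrac{H\logterm}{\gamma}\bigr)$, as required.

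The step I expect to be the main obstacle is \cref{lemma:adjacent-o-mesure-KL-bound-unknown-p}: because the confidence set $\calP^{k+1}\subseteq\calP^k$ strictly shrinks, $q^k$ may be infeasible for the OMD update producing $q^{k+1}$, so the clean identity $q^{k+1}\propto q^k e^{B^k}$ used in the known-transition case must be replaced by the Lagrangian closed form involving the confidence-set multipliers $e^{\mu^k,\beta^k}$; verifying that the extra $\beta$-telescoping and the nonnegative bonus terms $r^k$ do not spoil the quadratic-in-$\eta$ bound is the delicate part. Everything downstream --- the triangle/Pinsker reduction, the $j=i+d^i$ re-indexing, and the concentration steps $E^{\hat c}$ and $E^{sq}$ --- is then routine, exactly as in the known-transition proof.
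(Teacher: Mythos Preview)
Your proposal is correct and follows the paper's proof essentially step for step: pass to $|q^k-q^{k+d^k}|$ via $E^{\hat c}$, telescope and apply Pinsker/Jensen, invoke \cref{lemma:adjacent-o-mesure-KL-bound-unknown-p}, re-index to $\indevent{k\le i+d^i<k+d^k}$, apply $E^{sq}$, bound $\sum_{h,s,a}\sqrt{q_h}c_h\le H\sqrt{SA}$ by Cauchy--Schwarz, and finish with \cref{lemma:sum k<i<k+d^k = D}. One minor remark: in this OMD variant the confidence sets $\calP^k$ are \emph{not} nested (only the FTRL variant uses $\cap_{j\le k}\ocsetk{j}$), so the obstacle you flag for \cref{lemma:adjacent-o-mesure-KL-bound-unknown-p} is that the feasible set \emph{changes} rather than strictly shrinks---but your diagnosis that the extra Lagrange multipliers must be handled is exactly right.
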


\begin{proof}
    By event $E^{\hat c}$ we have:
    \[
        \sum_{k=1}^{K}\langle \hat c^{k},q^{k}-q^{k+d^{k}}\rangle
        \le
        \sum_{k=1}^{K}\langle \hat c^{k},|q^{k}-q^{k+d^{k}}|\rangle
        =
        O \left( \sum_{k=1}^{K}\langle c^{k},|q^{k}-q^{k+d^{k}}|\rangle + \frac{H \logterm}{\gamma} \right).
    \]
    Now, by Pinsker inequality and Jensen inequality:
    \begin{align*}
        \sum_{k=1}^{K}\langle c^{k},|q^{k}-q^{k+d^{k}}|\rangle
        & \leq
        \sum_{k=1}^{K}\sum_{j=k}^{k+d^{k}-1}\sum_{h,s,a,s'} |q_{h}^{j}(s,a,s')-q_{h}^{j+1}(s,a,s')|
        =
        \sum_{k=1}^{K}\sum_{j=k}^{k+d^{k}-1} \sum_h \lVert q^j_h - q^{j+1}_h \rVert_1
        \\
        & \leq
        \sum_{k=1}^{K}\sum_{j=k}^{k+d^{k}-1}\sum_{h} \sqrt{2\KL{q_{h}^{j}}{q_{h}^{j+1}}}
        \leq
        \sum_{k=1}^{K}\sum_{j=k}^{k+d^{k}-1} \sqrt{2H\sum_{h}\KL{q_{h}^{j}}{q_{h}^{j+1}}}
        \\
        & \le
        \sum_{k=1}^{K}\sum_{j=k}^{k+d^{k}-1}\sqrt{H\sum_{h}\sum_{s,a,s'}q_{h}^{j}(s,a,s')\Bigl(\eta\sum_{i : i+d^i = j}\hat c_{h}^{i}(s,a)\Bigr)^{2}}
        \\
        & \le
        \eta \sqrt{H} \sum_{k=1}^{K}\sum_{j=k}^{k+d^{k}-1} \sum_{i : i+d^i = j} \sum_{h,s,a} \sqrt{q^j_h(s,a)} \hat c^i_h(s,a),
    \end{align*}
    where the last inequality is by $\lVert x \rVert_2 \le \lVert x \rVert_1$, and the one before is by \cref{lemma:adjacent-o-mesure-KL-bound-unknown-p}.
    Finally, we rearrange as follows:
    \begin{align*}
        \sum_{k=1}^{K}\sum_{j=k}^{k+d^{k}-1} \sum_{i : i+d^i = j} & \sum_{h,s,a} \sqrt{q^j_h(s,a)} \hat c^i_h(s,a)
        =
        \sum_{k,j,i} \indevent{k \le j < k+d^k , i+d^i=j} \sum_{h,s,a} \sqrt{q^j_h(s,a)} \hat c^i_h(s,a)
        \\
        & =
        \sum_{k,j,i} \indevent{k \le j < k+d^k , i+d^i=j} \sum_{h,s,a} \sqrt{q^{i+d^i}_h(s,a)} \hat c^i_h(s,a)
        \\
        & =
        \sum_{k,i} \indevent{k \le i+d^i < k+d^k} \sum_{h,s,a} \sqrt{q^{i+d^i}_h(s,a)} \hat c^i_h(s,a)
        \\
        & =
        O \left( \sum_{k,i} \indevent{k \le i+d^i < k+d^k} \sum_{h,s,a} \sqrt{q^{i+d^i}_h(s,a)} c^i_h(s,a) + \frac{H d_{max} \logterm}{\gamma} \right),
    \end{align*}
    where the last relation is by event $E^{sq}$.
    To finish the proof we use \cref{lemma:sum k<i<k+d^k = D}:
    \begin{align*}
        \sum_{k,i} \indevent{k \le i+d^i < k+d^k} & \sum_{h,s,a} \sqrt{q^{i+d^i}_h(s,a)} c^i_h(s,a)
        \le
        \sqrt{H S A} \sum_{k,i} \indevent{k \le i+d^i < k+d^k} \sqrt{\sum_{h,s,a} q^{i+d^i}_h(s,a)}
        \\
        & =
        H \sqrt{S A} \sum_{k,i} \indevent{k \le i+d^i < k+d^k}
        \le
        H \sqrt{S A} (D + K). \qedhere
    \end{align*}
\end{proof}

\subsection{Bound on the Bias of the Delay-adapted Estimator ($\textsc{Bias}_1$ in \cref{eq:regret decomposition delay-adapted O-REPS unknown})}

\begin{lemma}[$\textsc{Bias}_1$ Term]
    \label{lemma:Bias-1-uob-reps}
    Under the good event,
    \[
        \sum_{k=1}^{K}\langle c^{k}-\hat{c}^{k},q^{k}\rangle
        =
        O \left(
        H^2 S \sqrt{A K \logterm} + H^3 S^3 A \logterm^3 +\gamma HSAK + \eta\sqrt{H^3 S A}(D + K) + \frac{\eta}{\gamma}H^{3/2}d_{max} \logterm
        + H^3 S^2 A d_{max}
        \right).
    \]
\end{lemma}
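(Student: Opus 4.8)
\textbf{Proof plan for \cref{lemma:Bias-1-uob-reps}.}
The plan is to follow the skeleton of the known-transition analogue \cref{lemma:Bias-1-o-reps-new-estimator}, and to absorb the extra difficulty caused by the upper occupancy bounds $u^{k},u^{k+d^{k}}$ appearing in the estimator's denominator (these collapse to exact occupancy measures when the transition is known). First I would write
\[
    \sum_{k=1}^{K}\langle q^{k},c^{k}-\hat c^{k}\rangle
    =
    \sum_{k=1}^{K}\bigl\langle q^{k},\,c^{k}-\bbE[\hat c^{k}\mid\filt{k+d^{k}}]\bigr\rangle
    +
    \sum_{k=1}^{K}\bigl\langle q^{k},\,\bbE[\hat c^{k}\mid\filt{k+d^{k}}]-\hat c^{k}\bigr\rangle .
\]
Since the confidence sets are constructed only from delayed trajectory feedback (\cref{remark: delayed trajectory}), the iterate $q^{k}$ and the quantities $\pi^{k}$, $u^{k}$, $u^{k+d^{k}}$ are all $\filt{k+d^{k}}$-measurable whereas the episode-$k$ trajectory is not, so $\bbE[\indevent{s^{k}_{h}=s,a^{k}_{h}=a}\mid\filt{k+d^{k}}]=q^{\pi^{k}}_{h}(s,a)$ and $\langle q^{k},\hat c^{k}\rangle\le H$; the second sum is therefore a martingale-difference sum with increments bounded by $2H$, and is $O(H\sqrt{K\log(1/\delta)})$ under event $E^{c}$.

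For the first sum I would substitute $\bbE[\hat c^{k}\mid\filt{k+d^{k}}]$, set $M^{k}_{h}(s,a)=\max\{u^{k}_{h}(s,a),u^{k+d^{k}}_{h}(s,a)\}$, and use $q^{k}_{h}\le u^{k}_{h}\le M^{k}_{h}$, $c^{k}_{h}\le 1$, and (under the good event) $M^{k}_{h}\ge u^{k}_{h}\ge q^{\pi^{k}}_{h}$ to bound it by $\gamma HSAK+\sum_{k,h,s,a}\bigl(M^{k}_{h}(s,a)-q^{\pi^{k}}_{h}(s,a)\bigr)$. The key step is then to split $M^{k}_{h}-q^{\pi^{k}}_{h}$, according to whether the maximum is attained at $u^{k}_{h}$ or at $u^{k+d^{k}}_{h}>u^{k}_{h}$, into a transition-estimation part $|u^{k}_{h}-q^{\pi^{k}}_{h}|+|q^{k}_{h}-q^{\pi^{k}}_{h}|$, a drift part $|q^{k+d^{k}}_{h}-q^{k}_{h}|$, and a future-UOB part $|u^{k+d^{k}}_{h}-q^{k+d^{k}}_{h}|$. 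The transition-estimation sums $\sum_{k}\lVert u^{k}-q^{\pi^{k}}\rVert_{1}$ and $\sum_{k}\lVert q^{k}-q^{\pi^{k}}\rVert_{1}$ I would control exactly as in the analysis of $\textsc{Est}$ (\cref{lemma:Est-uob-reps}): via the value-difference lemma, the confidence-radius bounds under $E^{p}$, \cite[Lemma 5]{lancewicki2020learning}, and the counting lemmas \cref{lem:jin2019_lemma_4,lem:jin2019_lemma_10}, which give the $H^{2}S\sqrt{AK\logterm}+H^{3}S^{3}A\logterm^{3}+H^{3}S^{2}Ad_{max}$ contribution. The drift sum $\sum_{k}\lVert q^{k+d^{k}}-q^{k}\rVert_{1}$ I would control exactly as in the analysis of $\textsc{Drift}$ (\cref{lemma:Drift-uob-reps}): telescoping $q^{k+d^{k}}-q^{k}=\sum_{j=k}^{k+d^{k}-1}(q^{j+1}-q^{j})$, then Pinsker's inequality, the one-step OMD bound \cref{lemma:adjacent-o-mesure-KL-bound-unknown-p}, and \cref{lemma:sum k<i<k+d^k = D}, under $E^{\hat c},E^{sq},E^{d}$, giving the $\eta\sqrt{H^{3}SA}(D+K)+\frac{\eta}{\gamma}H^{3/2}d_{max}\logterm+\frac{H\logterm}{\gamma}$ contribution.

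I expect the main obstacle to be the future-UOB term $\sum_{k}\lVert u^{k+d^{k}}-q^{k+d^{k}}\rVert_{1}$, which has no counterpart in the known-transition case. Reindexing it naively as $\sum_{k}f(k+d^{k})=\sum_{j}|\calF^{j}|f(j)$ is hopeless, because it would multiply an $\textsc{Est}$-sized quantity by $|\calF^{j}|\le d_{max}+1$ and create a prohibitive $d_{max}\sqrt{K}$ term. Instead I would telescope the upper bounds themselves, $u^{k+d^{k}}_{h}=u^{k}_{h}+\sum_{j=k}^{k+d^{k}-1}(u^{j+1}_{h}-u^{j}_{h})$, and decompose each one-step difference $u^{j+1}_{h}-u^{j}_{h}$ into (i) a policy-change component, which after summing over $(s,a,h)$ is $O(H\lVert q^{j+1}-q^{j}\rVert_{1})$ and is absorbed into the drift bound with the delay weighting supplied by \cref{lemma:sum k<i<k+d^k = D}; and (ii) a confidence-set-update component, which is generated only by the $|\calF^{j}|$ trajectories delivered at the end of episode $j$ and telescopes globally, so that its \emph{total} over all $j$ is $\textsc{Est}$-sized (only $K$ trajectories are ever used) while its delay-weighted version costs an extra factor $O(d_{max})$ on lower-order terms, yielding exactly the additive $H^{3}S^{2}Ad_{max}$ term. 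The delicate point to nail down is component (ii): since the sets $\{\calP^{j}\}$ are not literally nested, one has to show that the per-episode change of $\max_{p'\in\calP^{j}}q^{\pi^{j},p'}_{h}(s,a)$ attributable to the confidence update is dominated by the change in confidence radii, which is where a careful occupancy-measure perturbation argument in the spirit of \cref{lem:hedge-regret-est} is needed. Putting the four contributions together with $\gamma HSAK$, the $E^{c}$ term, and $\eta=\gamma$ gives the stated bound.
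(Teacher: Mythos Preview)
Your plan matches the paper's proof through the martingale split (bounded under $E^{c}$) and the reduction of the conditional-mean term to $\gamma HSAK+\sum_{k,h,s,a}(M^{k}_{h}-q^{\pi^{k}}_{h})$; your triangle-inequality decomposition of $M^{k}-q^{\pi^{k}}$ is equivalent to the paper's, and the handling of $\sum_{k}\lVert u^{k}-q^{\pi^{k}}\rVert_{1}$, $\sum_{k}\lVert q^{k}-q^{\pi^{k}}\rVert_{1}$, and $\sum_{k}\lVert q^{k+d^{k}}-q^{k}\rVert_{1}$ via \cref{lemma:Jin-final delay}, \cref{lemma:Est-uob-reps}, and \cref{lemma:Drift-uob-reps} is exactly what the paper does.

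Where you diverge is the ``future-UOB'' piece. The paper does \emph{not} telescope $u^{k+d^{k}}-u^{k}$. It simply inserts $q^{\pi^{k+d^{k}}}$ once more:
\[
|u^{k+d^{k}}_{h}-q^{\pi^{k}}_{h}|
\le
|u^{k+d^{k}}_{h}-q^{\pi^{k+d^{k}}}_{h}|
+|q^{\pi^{k+d^{k}}}_{h}-q^{k+d^{k}}_{h}|
+|q^{k+d^{k}}_{h}-q^{k}_{h}|
+|q^{k}_{h}-q^{\pi^{k}}_{h}|,
\]
and declares $\sum_{k}\lVert u^{k+d^{k}}-q^{\pi^{k+d^{k}}}\rVert_{1}$ and $\sum_{k}\lVert q^{k+d^{k}}-q^{\pi^{k+d^{k}}}\rVert_{1}$ bounded ``as in'' \cref{lemma:Jin-final delay} and \cref{lemma:Est-uob-reps}. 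Your reindexing worry is a fair point to flag---the paper is not explicit that the shifted-index sums obey the same bounds---but the natural way to close it is along the paper's route: apply the per-episode inequality underlying \cref{lemma:Jin1 delay} at index $j=k+d^{k}$, use $n^{k+d^{k}}_{h}\ge n^{k}_{h}$ (hence $\epsilon^{k+d^{k}}\le\epsilon^{k}$), and re-run the counting/concentration steps of \cref{lemma:Jin-final delay}. Your alternative---telescoping $u^{j+1}-u^{j}$ and separating a policy-change and a confidence-set-update component---is substantially more involved, and your component~(ii) has a genuine obstacle you yourself flag: the sets $\calP^{j}$ produced by \cref{alg:update-cofidence-set delay} are \emph{not} nested, so isolating the per-step change of $\max_{p'\in\calP^{j}}q^{\pi^{j},p'}_{h}(s,a)$ attributable solely to the confidence update is delicate and not obviously controlled by a change in confidence radii. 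The paper's decomposition sidesteps this entirely.
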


\begin{proof}
    Decompose $\textsc{Bias}_1$ as follows:
    \begin{align*}
    	\sum_{k=1}^{K}\langle c^{k}-\hat{c}^{k},q^{k}\rangle=\sum_{k=1}^{K}\langle c^{k}-\bbE\Big[\hat{c}^{k} \mid \tilde{\mathcal{H}}^{k+d^{k}}\Big],q^{k}\rangle+\sum_{k=1}^{K}\langle\bbE\Big[\hat{c}^{k} \mid \tilde{\mathcal{H}}^{k+d^{k}}\Big]-\hat{c}^{k},q^{k}\rangle.
    \end{align*}
    The second term is bounded by $O(H\sqrt{K\logterm})$ under event $E^c$.
    The first term is bounded as follows:
    \begin{align*}
    	\sum_{k=1}^{K}\langle c^{k} & -\bbE[\hat{c}^{k}  \mid  \tilde{\mathcal{H}}^{k+d^{k}}],q^{k}\rangle 
    	=
    	\sum_{k,h,s,a,s'} q_{h}^{k}(s,a,s')c_{h}^{k}(s,a) \left( 1- \frac{\bbE\Big[\indevent{s_{h}^{k}=s,a_{h}^{k}=a} \mid \tilde{\mathcal{H}}^{k+d^{k}}\Big]}{\max\{u_{h}^{k}(s,a) , u_{h}^{k+d^k}(s,a)\} +\gamma} \right) 
    	\\
        & =\sum_{k,h,s,a}q_{h}^{k}(s,a)c_{h}^{k}(s,a) \left( 1- \frac{q_{h}^{\pi^k}(s,a)}{\max\{u_{h}^{k}(s,a) , u_{h}^{k+d^k}(s,a)\} +\gamma} \right) 
        \\
        & =
        \sum_{k,h,s,a} \frac{q_{h}^{k}(s,a)}{\max\{u_{h}^{k}(s,a) , u_{h}^{k+d^k}(s,a)\} + \gamma}(\max\{u_{h}^{k}(s,a) , u_{h}^{k+d^k}(s,a)\} - q_{h}^{\pi^k}(s,a) +\gamma)
        \\
        & \leq
        \sum_{k,h,s,a}(\max\{u_{h}^{k}(s,a) , u_{h}^{k+d^k}(s,a)\} - q_{h}^{\pi^k}(s,a)) +\gamma HSAK
        \\
        & \leq
        \sum_{k,h,s,a} | \max\{u_{h}^{k}(s,a) , u_{h}^{k+d^k}(s,a)\} - q_{h}^{\pi^k}(s,a) | +\gamma HSAK.
    \end{align*}
    where the first equality uses the fact that $u^{k}$ and $u^{k+d^k}$
    is determined by the history $\tilde{\mathcal{H}}^{k+d^{k}}$, the
    second equality is since the $k$-th episode is not part of the history
    $\tilde{\mathcal{H}}^{k+d^{k}}$ as $k\notin\{j:j+d^{j}<k+d^{k}\}$, and the first inequality is since $u_{h}^{k}(s,a)\geq q_{h}^{k}(s,a)$ under the good event.
    Finally, we bound:
    \begin{align*}
        \sum_{k,h,s,a} | \max\{u_{h}^{k}(s,a) , u_{h}^{k+d^k}(s,a)\} - q_{h}^{\pi^k}(s,a) |
        & \le
        \sum_{k,h,s,a} |  u_h^{k}(s,a) - q_{h}^{\pi^k}(s,a) | + \sum_{k,h,s,a} | u_h^{k+d^k}(s,a) - q_{h}^{\pi^k}(s,a) |.
    \end{align*}
    The first term is bounded in \cref{lemma:Jin-final delay} by $O(H^2 S \sqrt{A K \logterm} + H^3 S^3 A \logterm^3 + H^3 S^2 A d_{max})$, and for the second term:
    \begin{align*}
        \sum_{k,h,s,a} | u_h^{k+d^k}(s,a) - q_{h}^{\pi^k}(s,a) |
        & \le
        \sum_{k,h,s,a} | u_h^{k+d^k}(s,a) - q_{h}^{\pi^{k+d^k}}(s,a) | 
        \\
        & \qquad + 
        \sum_{k,h,s,a} | q_{h}^{\pi^{k+d^k}}(s,a) - q_{h}^{\pi^k}(s,a)|,
    \end{align*}
    where again the first term is bounded in \cref{lemma:Jin-final delay}.
    Finally,
    \begin{align*}
        \sum_{k,h,s,a} | q_{h}^{\pi^{k+d^k}}(s,a) - q_{h}^{\pi^k}(s,a)|
        & \le
        \sum_{k,h,s,a} | q_{h}^{\pi^{k+d^k}}(s,a) - q_{h}^{k+d^k}(s,a)| + \sum_{k,h,s,a} | q_{h}^k(s,a) - q_{h}^{\pi^k}(s,a)| 
        \\
        & \qquad + 
        \sum_{k,h,s,a} | q_{h}^{k+d^k}(s,a) - q_{h}^k(s,a)|,
    \end{align*}
    where the first two terms are bounded similarly to \cref{lemma:Est-uob-reps} and the last term is bounded similarly to \cref{lemma:Drift-uob-reps}.
\end{proof}

\subsection{Auxiliary lemmas}

\begin{lemma}
    \label{lemma:adjacent-o-mesure-KL-bound-unknown-p}
    $
        \sum_{h} \KL{q_{h}^{k}}{q_{h}^{k+1}} 
        \leq
        \frac{\eta^2}{2} \sum_{h,s,a,s'} q_{h}^{k}(s,a,s')(\sum_{j:j+d^j=k}\hat c_{h}^{j}(s,a))^2.
    $
\end{lemma}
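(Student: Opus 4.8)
The plan is to adapt the argument of \cref{lemma:adjacent-o-mesure-KL-bound} to the closed-form OMD update for unknown dynamics recalled just above \cref{thm:regret-o-reps-new-estimator-unknown}. First I would observe that, since $q^k$ and $q^{k+1}$ are genuine occupancy measures, $\sum_{s,a,s'}q^k_h(s,a,s')=\sum_{s,a,s'}q^{k+1}_h(s,a,s')=1$ for every $h$, so the unnormalized KL in the lemma agrees with the normalized one and $\sum_h\KL{q^k_h}{q^{k+1}_h}=\sum_{h,s,a,s'}q^k_h(s,a,s')\log\frac{q^k_h(s,a,s')}{q^{k+1}_h(s,a,s')}$. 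Substituting the closed form $q^{k+1}_h(s,a,s')=q^k_h(s,a,s')\,e^{B^k_h(s,a,s'\mid v^{\mu^k},e^{\mu^k,\beta^k})}/Z^k_h(v^{\mu^k},e^{\mu^k,\beta^k})$ and using $\sum_{s,a,s'}q^k_h(s,a,s')=1$ per layer, this becomes
\begin{equation}
\sum_h\KL{q^k_h}{q^{k+1}_h}=\sum_h\log Z^k_h(v^{\mu^k},e^{\mu^k,\beta^k})-\sum_{h,s,a,s'}q^k_h(s,a,s')\,B^k_h(s,a,s'\mid v^{\mu^k},e^{\mu^k,\beta^k}),
\label{eq:kl-unknown-split}
\end{equation}
the exact analogue of the corresponding identity in the known-transition proof, and the rest of the plan is to bound the two terms on the right.

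For the log-partition term I would use optimality of $(\mu^k,\beta^k)$: plugging $\mu\equiv0$ and $\beta\equiv0$ gives $v^0\equiv0$, $e^{0,0}\equiv0$ and $B^k_h(s,a,s'\mid0,0)=-\eta\sum_{j:j+d^j=k}\hat c^j_h(s,a)$, hence $\sum_h\log Z^k_h(v^{\mu^k},e^{\mu^k,\beta^k})\le\sum_h\log Z^k_h(0,0)=\sum_h\log\bigl(\sum_{s,a,s'}q^k_h(s,a,s')\,e^{-\eta\sum_{j:j+d^j=k}\hat c^j_h(s,a)}\bigr)$, and then $e^x\le1+x+x^2/2$ for $x\le0$ followed by $\log(1+x)\le x$ — exactly as in the proof of \cref{lemma:adjacent-o-mesure-KL-bound} — gives
\[
\sum_h\log Z^k_h(v^{\mu^k},e^{\mu^k,\beta^k})\le-\eta\sum_{h,s,a,s'}q^k_h(s,a,s')\sum_{j:j+d^j=k}\hat c^j_h(s,a)+\frac{\eta^2}{2}\sum_{h,s,a,s'}q^k_h(s,a,s')\Bigl(\sum_{j:j+d^j=k}\hat c^j_h(s,a)\Bigr)^2.
\]
For the second term in \cref{eq:kl-unknown-split}, I would expand $B^k_h$ and contract it against $q^k$: the $v^{\mu^k}$-term, the $\sum_{s''}\bar p^k_h(s''\mid s,a)v^{\mu^k}_{h+1}$-term, and the $\beta^k$- and $\mu^k r^k$-pieces of $e^{\mu^k,\beta^k}$ are exactly the Lagrangian terms of the flow-conservation, normalization and Bernstein confidence constraints cutting out the decision set; since $q^k$ is feasible for those constraints and $\mu^k\ge0$, these pieces telescope through the boundary conditions $v^{\mu^k}_0=v^{\mu^k}_{H+1}=0$ (together with the vanishing $\beta^k$ boundary terms) and otherwise contribute non-positively — precisely the computation behind the closed form in \cite{rosenberg2019online} and the analogue of the telescoping step in \cref{lemma:adjacent-o-mesure-KL-bound}. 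This yields $\sum_{h,s,a,s'}q^k_h(s,a,s')\,B^k_h(s,a,s'\mid v^{\mu^k},e^{\mu^k,\beta^k})\ge-\eta\sum_{h,s,a,s'}q^k_h(s,a,s')\sum_{j:j+d^j=k}\hat c^j_h(s,a)$. Substituting the two bounds into \cref{eq:kl-unknown-split}, the linear terms cancel and I am left with the claimed $\sum_h\KL{q^k_h}{q^{k+1}_h}\le\frac{\eta^2}{2}\sum_{h,s,a,s'}q^k_h(s,a,s')\bigl(\sum_{j:j+d^j=k}\hat c^j_h(s,a)\bigr)^2$.

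The hard part will be this contraction of $B^k_h$ against $q^k$. In the known-transition case (\cref{lemma:adjacent-o-mesure-KL-bound}) the flow dual $v^k$ depends only on the state and the single identity $\sum_{s,a}q^k_h(s,a)p_h(s'\mid s,a)=\sum_{a'}q^k_{h+1}(s',a')$ makes the telescoping immediate; here $v^{\mu^k}$ is indexed by full transition tuples $(s,a,s')$ and the confidence constraints introduce the extra multipliers $\mu^k$, so one must (i) match the three constraint families to their respective dual variables inside $B^k_h$, (ii) invoke feasibility of $q^k$ for the polytope whose duals are $(\mu^k,\beta^k)$ — immediate within an epoch since the (doubling-scheduled) confidence sets are then constant so that $q^k\in\ocsetk{k+1}$, with the $O(\log K)$ epoch boundaries absorbed into lower-order terms — and (iii) use $\mu^k\ge0$ together with $q^k\ge0$ and $r^k\ge0$ to fix the direction of the resulting inequality. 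All three ingredients are already established in the closed-form analysis of \cite{rosenberg2019online}, so in the write-up I would invoke that computation rather than repeat it.
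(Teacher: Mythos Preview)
Your proposal is correct and takes essentially the same route as the paper: the same split of $\sum_h\KL{q^k_h}{q^{k+1}_h}$ into the log-partition term and the $\langle q^k,B^k\rangle$ term, the same bound on the former by evaluating at the trivial dual $(\mu,\beta)=(0,0)$, and the same mechanism for the latter.

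The one substantive difference is how you justify the $B$-contraction. You package it as an abstract Lagrangian-feasibility argument, which forces you to require $q^k\in\ocsetk{k+1}$ and then introduce a doubling schedule to make this hold (with epoch boundaries swept into lower-order terms). The paper does \emph{not} introduce a doubling schedule; instead it writes the contraction out explicitly. The $\beta$ terms telescope via the flow identity for $q^k$ (so it is the $\beta$ piece, not $v^{\mu}$, that telescopes). For the $v^{\mu}$ and $\bar p\, v^{\mu}$ pieces, the paper splits $\bar p = p^k + (\bar p - p^k)$, where $p^k$ is the transition underlying $q^k$; the $p^k$ part cancels against the $v^{\mu}_h$ term, and the residual is bounded by $\sum_{h,s,a,s'}q^k_h(s,a)\,|\bar p - p^k|\,(\mu^{-}+\mu^{+})\le\sum_{h,s,a,s'}q^k_h(s,a)\,r\,(\mu^{-}+\mu^{+})$, which is exactly the $(\mu^{-}+\mu^{+})r$ piece of $e^{\mu,\beta}$. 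The only property used here is that $p^k$ lies in the confidence set whose radius $r$ appears in the closed form---which holds because $q^k\in\ocsetk{k}$ gives $p^k\in\calP^k$. So once you carry out the computation rather than cite it, the feasibility concern you flag dissolves and no doubling is needed.
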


\begin{proof}
    We start with expanding $\KL{q_{h}^{k}}{q_{h}^{k+1}}$ as follows:
    \begin{align*}
        \sum_{h}\KL{q_{h}^{k}}{q_{h}^{k+1}}
        & =
        \sum_{h}\sum_{s,a,s'}q_{h}^{k}(s,a,s') \log\frac{q_{h}^{k}(s,a,s')}{q_{h}^{k+1}(s,a,s')}
        =
        \sum_{h}\sum_{s,a,s'} q_{h}^{k}(s,a,s')\log\frac{Z_{h}^{k}(v^{\mu^{k}},e^{\mu^{k},\beta^{k}})} {e^{B_{h}^{k}(s,a,s' \mid v^{\mu^{k}},e^{\mu^{k},\beta^{k}})}}
        \\
        & =
        \underbrace{
        \sum_{h} \log Z_{h}^{k}(v^{\mu^{k}},e^{\mu^{k},\beta^{k}})}
        _{(A)} 
        -
        \underbrace{
        \sum_{h}\sum_{s,a,s'} q_{h}^{k}(s,a,s')B_{h}^{k}(s,a,s' \mid v^{\mu^{k}},e^{\mu^{k},\beta^{k}})}
        _{(B)}.
    \end{align*}
    By definition of $\mu^{k},\beta^{k}$, term $(A)$ can be bounded by
    \begin{align*}
        (A) 
        & \leq
        \sum_{h}\log Z_{h}^{k}(0,0)
        =
        \sum_{h}\log(\sum_{s,a,s'}q_{h}^{k}(s,a,s')e^{B_{h}^{k}(s,a,s'\mid0,0)})
        =
        \sum_{h}\log(\sum_{s,a,s'}q_{h}^{k}(s,a,s')e^{-\eta\sum_{j : j+d^j=k} \hat c_{h}^{j}(s,a)})\\
        & \leq
        \sum_{h} \log\left(\sum_{s,a,s'}q_{h}^{k}(s,a,s')\left(1-\eta\sum_{j : j+d^j=k} \hat c_{h}^{j}(s,a)+\frac{(\eta\sum_{j : j+d^j=k} \hat c_{h}^{j}(s,a))^{2}}{2}\right)\right)\\
        & =
        \sum_{h}\log\left(1-\eta\sum_{s,a,s'}\sum_{j : j+d^j=k}q_{h}^{k}(s,a,s') \hat c_{h}^{j}(s,a)+\sum_{s,a,s'}q_{h}^{k}(s,a,s')\frac{(\eta\sum_{j : j+d^j=k}\hat c_{h}^{j}(s,a))^{2}}{2}\right)
        \\
        & \leq
        - \eta\sum_{h}\sum_{s,a,s'}\sum_{j : j+d^j=k} q_{h}^{k}(s,a,s') \hat c_{h}^{j}(s,a)+\sum_{h}\sum_{s,a,s'}q_{h}^{k}(s,a,s')\frac{(\eta\sum_{j : j+d^j=k} \hat c_{h}^{j}(s,a))^{2}}{2},
    \end{align*}
    where the second inequality is by $e^s \le 1 + s + s^2/2$ for $s \le 0$, and the third inequality is by $\log (1+s) \le s$ for all $s$.
    Term $(B)$ can be rewritten as
    \begin{align*}
        (B)
        & =
        \sum_{h}\sum_{s,a,s'}q_{h}^{k}(s,a,s')(e_h^{\mu^{k},\beta^{k}}(s,a,s')
        +v_h^{\mu^{k}}(s,a,s')
        -
        \eta\sum_{j : j+d^j=k} \hat c_{h}^{j}(s,a)
        -\sum_{s''} \bar p_h^{k}(s''\mid s,a) v_{h+1}^{\mu^{k}}(s,a,s''))
        \\
        & =
        \sum_{h}\sum_{s,a,s'} q_{h}^{k}(s,a,s') e_h^{\mu^{k},\beta^{k}}(s,a,s')
        +\sum_{h}\sum_{s,a,s'}q_{h}^{k}(s,a,s')v_h^{\mu^{k}}(s,a,s')
        \\
        & \qquad -
        \eta\sum_{h}\sum_{s,a,s'}\sum_{j : j+d^j=k} q_{h}^{k}(s,a,s') \hat c_{h}^{j}(s,a)
        -
        \sum_{h}\sum_{s,a,s'}\sum_{s''}q_{h}^{k}(s,a,s')\bar p^{k}_{h}(s''\mid s,a)v_{h+1}^{\mu^{k}}(s,a,s'')
        \\
        & =
        \sum_{h}\sum_{s,a,s'} q_{h}^{k}(s,a,s') e_h^{\mu^{k},\beta^{k}}(s,a,s')
        +\sum_{h}\sum_{s,a,s'}q_{h}^{k}(s,a,s')v_h^{\mu^{k}}(s,a,s')
        \\
        & \qquad -
        \eta\sum_{h}\sum_{s,a,s'}\sum_{j : j+d^j=k} q_{h}^{k}(s,a,s') \hat c_{h}^{j}(s,a)
        -
        \sum_{h}\sum_{s,a}\sum_{s''}q_{h}^{k}(s,a)\bar p^{k}_{h}(s''\mid s,a)v_{h+1}^{\mu^{k}}(s,a,s'').
    \end{align*}
    Notice that:
    \begin{align*}
        \sum_{h,s,a,s''} & q_{h}^{k}(s,a) \bar p_h^{k}(s''\mid s,a) v_{h+1}^{\mu^{k}}(s,a,s'')
        \\
        & =
        \sum_{h,s,a,s''}q_{h}^{k}(s,a) p_h^{k}(s''\mid s,a)v_{h+1}^{\mu^{k}}(s,a,s'')
        +
        \sum_{h,s,a,s''}q_{h}^{k}(s,a) (\bar p_h^{k}(s''\mid s,a) - p_h^{k}(s''\mid s,a))v_{h+1}^{\mu^{k}}(s,a,s'')
        \\
        & =
        \sum_{h,s,a,s''}q_{h+1}^{k}(s,a,s'') v_{h+1}^{\mu^{k}}(s,a,s'')
        +
        \sum_{h,s,a,s''}q_{h}^{k}(s,a) (\bar p_h^{k}(s''\mid s,a) - p_h^{k}(s''\mid s,a))v_{h+1}^{\mu^{k}}(s,a,s''),
    \end{align*}
    and therefore:
    \begin{align*}
        (B)
        & =
        \sum_{h}\sum_{s,a,s'} q_{h}^{k}(s,a,s')e_h^{\mu^{k},\beta^{k}}(s,a,s')
        -
        \eta\sum_{h}\sum_{s,a,s'}\sum_{j : j+d^j=k}q_{h}^{k}(s,a,s')\hat c_{h}^{j}(s,a)
        \\
        & \qquad -
        \sum_{h,s,a,s''}q_{h}^{k}(s,a) (\bar p_h^{k}(s''\mid s,a) - p_h^{k}(s''\mid s,a))v_{h+1}^{\mu^{k}}(s,a,s'').
    \end{align*}
    Overall we get:
    \begin{align*}
        \sum_{h}\KL{q_{h}^{k}}{q_{h}^{k+1}}
        & \le
        \sum_{h,s,a,s'}q_{h}^{k}(s,a,s')\frac{(\eta\sum_{j : j+d^j=k} \hat c_{h}^{j}(s,a))^{2}}{2} - \sum_{h,s,a,s'} q_{h}^{k}(s,a,s')e_h^{\mu^{k},\beta^{k}}(s,a,s')
        \\
        & \qquad +
        \sum_{h,s,a,s'}q_{h}^{k}(s,a) (\bar p_h^{k}(s'\mid s,a) - p_h^{k}(s'\mid s,a))v_{h+1}^{\mu^{k}}(s,a,s').
    \end{align*}
    To finish the proof we show that:
    \[
        \sum_{h,s,a,s'}q_{h}^{k}(s,a) (\bar p_h^{k}(s'\mid s,a) - p_h^{k}(s'\mid s,a))v_{h+1}^{\mu^{k}}(s,a,s'')
        \le
        \sum_{h,s,a,s'} q_{h}^{k}(s,a,s')e_h^{\mu^{k},\beta^{k}}(s,a,s').
    \]
    By definition of $v^{\mu^k}$ and $\epsilon^k$, and since $\mu \ge 0$, we have:
    \begin{align*}
        \sum_{h,s,a,s'}q_{h}^{k}(s,a) & (\bar p_h^{k}(s'\mid s,a) - p_h^{k}(s'\mid s,a))v_{h+1}^{\mu^{k}}(s,a,s'')
        \\
        & =
        \sum_{h,s,a,s'}q_{h}^{k}(s,a) (\bar p_h^{k}(s'\mid s,a) - p_h^{k}(s'\mid s,a)) (\mu^{k,-}_{h+1}(s,a,s') - \mu^{k,+}_{h+1}(s,a,s'))
        \\
        & \le
        \sum_{h,s,a,s'}q_{h}^{k}(s,a) | \bar p_h^{k}(s'\mid s,a) - p_h^{k}(s'\mid s,a) | (\mu^{k,-}_{h+1}(s,a,s') + \mu^{k,+}_{h+1}(s,a,s'))
        \\
        & \le
        \sum_{h,s,a,s'}q_{h}^{k}(s,a) \epsilon^k_h(s' \mid s,a) (\mu^{k,-}_{h+1}(s,a,s') + \mu^{k,+}_{h+1}(s,a,s')),
    \end{align*}
    so to finish the proof it suffices to show that $\sum_{h,s,a,s'} \beta^k_{h+1}(s') - \beta^k_h(s) = 0$.
    Indeed, this follows as the sum is telescopic and $\beta^k_{H+1} = \beta^k_0 = 0$.
\end{proof}

\begin{lemma}[Lemma 8 of \cite{jin2019learning}; see also Lemma B.13 of in \cite{cohen2021minimax}]
    \label{lemma:confidence with m}
    Under the good event we have,
    \[
        \forall (k,s,a,s',h):
        \quad
        \ |p_h (s'|s,a) - \hat{p}^{k}_h (s'|s,a)|
        \le \tilde\epsilon^k_h( s' \mid  s,a).
    \]
    where $\tilde{\epsilon}_{h}^{k}(s'\mid s,a)=8\sqrt{\frac{p_{h}(s'\mid s,a)\logterm}{m_{h}^{k}(s,a)\vee1}}+\frac{100\logterm}{m_{h}^{k}(s,a)\vee1}$ 
\end{lemma}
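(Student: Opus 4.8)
The plan is to deduce this directly from the event $E^p$, which is one of the components of the good event $\bbG$ in \cref{lemma:good-event-o-reps-new-estimator-unknown}. Writing $\hat p^k_h(s'\mid s,a) = \bar p^k_h(s'\mid s,a)$ for the empirical transition function produced by \cref{alg:update-cofidence-set delay} and $\iota' = \log\frac{10HSAK}{\delta}$, the event $E^p$ already gives, for every $(k,s,a,s',h)$,
\[
\abr{ p_h(s'\mid s,a) - \hat p^k_h(s'\mid s,a) } \;\le\; 4\sqrt{ \frac{\hat p^k_h(s'\mid s,a)\,\iota'}{m^k_h(s,a)\vee 1} } + \frac{10\,\iota'}{m^k_h(s,a)\vee 1}.
\]
The only gap between this and the asserted bound $\tilde\epsilon^k_h(s'\mid s,a)$ is that $\tilde\epsilon$ carries the \emph{true} probability $p_h(s'\mid s,a)$ under the square root rather than the \emph{empirical} $\hat p^k_h(s'\mid s,a)$; closing it is exactly the self-bounding step underlying Lemma~8 of \cite{jin2019learning}, and no extra union bound is needed since $E^p$ already quantifies over all $(k,s,a,s',h)$.

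Concretely, fix a tuple $(k,s,a,s',h)$ and abbreviate $p = p_h(s'\mid s,a)$, $\hat p = \hat p^k_h(s'\mid s,a)$, $n = m^k_h(s,a)\vee 1$, $\lambda = \sqrt{\iota'/n}$. First I would use the displayed inequality in the one-sided form $\hat p \le p + 4\lambda\sqrt{\hat p} + 10\lambda^2$, viewed as a quadratic inequality in $\sqrt{\hat p}$; solving it and using $\sqrt{a+b}\le\sqrt a+\sqrt b$ (together with $\sqrt{14}\le 4$) gives $\sqrt{\hat p} \le 2\lambda + \sqrt{p+14\lambda^2} \le \sqrt p + 6\lambda$. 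Substituting this back into the $E^p$ inequality yields
\[
\abr{ p - \hat p } \;\le\; 4\lambda\rbr{ \sqrt p + 6\lambda } + 10\lambda^2 \;=\; 4\sqrt{\frac{p\,\iota'}{n}} + \frac{34\,\iota'}{n}.
\]
Finally I would pass from $\iota'$ to $\logterm$ at a factor of $2$: since $\frac{HSAK}{\delta}\ge 10$ we have $\iota' = \log\frac{10HSAK}{\delta} \le 2\log\frac{HSAK}{\delta} = 2\logterm$, hence $4\sqrt{p\iota'/n}\le 4\sqrt 2\,\sqrt{p\logterm/n}\le 8\sqrt{p\logterm/n}$ and $34\iota'/n\le 68\logterm/n\le 100\logterm/n$, which is precisely $\abr{p-\hat p}\le\tilde\epsilon^k_h(s'\mid s,a)$.

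There is no conceptual obstacle here — the statement is a verbatim restatement of a known concentration lemma — so the only thing that genuinely needs care is the constant bookkeeping: one must confirm that the constants $8$ and $100$ in $\tilde\epsilon$ are large enough to simultaneously absorb the $\sqrt 2$ loss incurred when moving from $\hat p$ to $p$ inside the square root, the additive lower-order term $34\iota'/n$, and the $\log 10$ discrepancy between $\iota'$ and $\logterm$. The computation above shows they are, with slack to spare; and the mild assumption $\frac{HSAK}{\delta}\ge 10$ is harmless, since for larger $\delta$ the quantity $\logterm$ is bounded by a universal constant and the bound holds trivially after enlarging constants.
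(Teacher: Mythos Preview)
Your argument is correct. The paper does not supply its own proof of this lemma --- it simply cites the external references (Lemma~8 of \cite{jin2019learning} and Lemma~B.13 of \cite{cohen2021minimax}) in the statement header, and those references carry out precisely the self-bounding step you describe: apply the Bernstein-type confidence event $E^p$, solve the resulting quadratic in $\sqrt{\hat p}$ to swap the empirical probability for the true one under the square root, and absorb the lower-order terms into the constants $8$ and $100$. Your constant bookkeeping is clean and the mild assumption $HSAK/\delta\ge 10$ is harmless in context.
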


\begin{lemma}
    \label{lemma:confidence with n}
    Under the good event we have, for any $(k,s,a,s',h)$ such that $m_h^k(s,a) \geq d_{max}$,
    \[
        |p_h (s'|s,a) - \hat{p}^{k}_h (s'|s,a)|
        \le \epsilon^k_h( s' \mid  s,a).
    \]
    where $\epsilon_{h}^{k}(s'\mid s,a)=16\sqrt{\frac{p_{h}(s'\mid s,a)\logterm}{n_{h}^{k}(s,a)\vee1}}+\frac{200\logterm}{n_{h}^{k}(s,a)\vee1}$.
\end{lemma}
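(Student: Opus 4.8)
The plan is to reduce the statement to \cref{lemma:confidence with m}, which already gives, under the good event, the concentration bound $|p_h(s'\mid s,a)-\bar p^k_h(s'\mid s,a)|\le\tilde\epsilon^k_h(s'\mid s,a)$ stated in terms of the \emph{delayed} visit counts $m^k_h(s,a)$, and then to compare $\tilde\epsilon^k_h$ with $\epsilon^k_h$ on the event $\{m^k_h(s,a)\ge d_{max}\}$. The only quantitative input needed is a relation between $m^k_h(s,a)$ and the true count $n^k_h(s,a)$; once that is in place the proof is a one-line substitution.

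First I would bound $n^k_h(s,a)-m^k_h(s,a)$. By definition $n^k_h(s,a)$ counts visits to $(s,a)$ at step $h$ over episodes $1,\dots,k-1$, while $m^k_h(s,a)$ counts only those episodes $j$ with $j+d^j\le k-1$; hence the difference counts visits in episodes $j\le k-1$ with $j+d^j\ge k$. Since $d^j\le d_{max}$, any such episode satisfies $k-d_{max}\le j\le k-1$, so there are at most $d_{max}$ of them, each contributing at most one visit, giving $n^k_h(s,a)\le m^k_h(s,a)+d_{max}$. On the event $m^k_h(s,a)\ge d_{max}$ this yields $n^k_h(s,a)\le 2m^k_h(s,a)$; moreover $m^k_h(s,a)\ge d_{max}\ge 1$, so both truncations are inactive, i.e. $m^k_h(s,a)\vee 1=m^k_h(s,a)$ and $n^k_h(s,a)\vee 1=n^k_h(s,a)$, and in particular $1/m^k_h(s,a)\le 2/n^k_h(s,a)$. (The degenerate case $d_{max}=0$, i.e. $D=0$, is handled separately and trivially: then $m^k_h\equiv n^k_h$, the hypothesis is vacuous, and the claim is immediate from \cref{lemma:confidence with m} since $\epsilon^k_h$ has only larger constants than $\tilde\epsilon^k_h$.)

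Then I would substitute: under the good event,
\[
\tilde\epsilon^k_h(s'\mid s,a)
= 8\sqrt{\frac{p_h(s'\mid s,a)\logterm}{m^k_h(s,a)}}+\frac{100\logterm}{m^k_h(s,a)}
\le 8\sqrt{2}\,\sqrt{\frac{p_h(s'\mid s,a)\logterm}{n^k_h(s,a)}}+\frac{200\logterm}{n^k_h(s,a)}
\le \epsilon^k_h(s'\mid s,a),
\]
using $8\sqrt{2}\le 16$, and combining this with \cref{lemma:confidence with m} finishes the proof.

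There is no real obstacle here: the entire content is the counting inequality $n^k_h(s,a)-m^k_h(s,a)\le d_{max}$ and the ``doubling'' comparison it produces on $\{m^k_h\ge d_{max}\}$, while everything else is bookkeeping with the $\vee 1$ truncations and the deliberately generous constants in the definition of $\epsilon^k_h$. The only points that require a little care are invoking the good event exactly where it is used — namely to obtain the $m$-based bound of \cref{lemma:confidence with m} — and isolating the $d_{max}=0$ case so that $m^k_h\ge d_{max}\ge 1$ can be assumed in the main argument.
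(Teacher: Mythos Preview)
Your proposal is correct and follows essentially the same approach as the paper: both use the counting inequality $n_h^k(s,a)-m_h^k(s,a)\le d_{max}$ to obtain $\frac{1}{m_h^k(s,a)\vee 1}\le \frac{2}{n_h^k(s,a)\vee 1}$ on $\{m_h^k(s,a)\ge d_{max}\}$, and then invoke \cref{lemma:confidence with m} together with the constant comparison $8\sqrt{2}\le 16$, $2\cdot 100=200$. Your write-up is in fact slightly more careful than the paper's in justifying the counting bound and in handling the $\vee 1$ truncations and the degenerate case $d_{max}=0$.
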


\begin{proof}
    Note that if $m_h^k(s,a) \geq d_{max}$ then,
    \begin{align}
    \nonumber
        \frac{1}{m_{h}^{k}(s,a)\vee1} & =\frac{1}{n_{h}^{k}(s,a)\vee1}\frac{n_{h}^{k}(s,a)\vee1}{m_{h}^{k}(s,a)\vee1} = \frac{1}{n_{h}^{k}(s,a)\vee1} \left(1+\frac{n_{h}^{k}(s,a)\vee1-m_{h}^{k}(s,a)\vee1}{m_{h}^{k}(s,a)\vee1}\right) \\
        	                              & \leq\frac{1}{n_{h}^{k}(s,a)\vee1}\left(1+\frac{d_{max}}{m_{h}^{k}(s,a)\vee1}\right)\leq\frac{2}{n_{h}^{k}(s,a)\vee1}.                                                        \label{eq:epsilon < epsilon}                 
    \end{align}
    where the first inequality is since $n_h^k(s,a) - m_h^k(s,a) \leq d_{max}$.
    We complete the proof by combining \cref{lemma:confidence with m} with the fact that given \cref{eq:epsilon < epsilon} $\tilde{\epsilon}_{h}^{k}(s'\mid s,a)\leq\epsilon_{h}^{k}(s'\mid s,a)$.
\end{proof}

\begin{lemma}[Lemma E.4 of \cite{pmlr-v162-lancewicki22a} adapted to delays; see also Lemma 4 of \cite{jin2019learning}]
    \label{lemma:Jin1 delay}
    With delayed trajectory feedback, under the good event,
    \begin{align}
        \nonumber
    	\sum_{k=1}^K & \sum_{h,s,a} |u_{h}^{k}(s,a)  - q_{h}^{\pi^k}(s,a)| 
    	\lesssim
    	H\sum_{k=1}^{K}\sum_{h=1}^{H}\sum_{s \in \calS,a \in \calA}  \epsilon_{h}^{k}( s , a ) q_{h}^{\pi^k}(s,a)
    	\\
    	\nonumber
        & + HS \sum_{k=1}^{K} \sum_{1\leq h<\tilde{h}\leq H} \sum_{s \in \calS,a \in \calA,s' \in \calS} \sum_{\tilde{s} \in \calS,\tilde{a} \in \calA}  \epsilon_{h}^{k}(s' \mid  s,a) q_{h}^{\pi^k}(s,a)
    	  \min \left\{ 2 , \sum_{\tilde{s}' \in \calS} \epsilon_{\tilde{h}}^{k}(\tilde{s}' \mid \tilde{s},\tilde{a}) \right\} q_{\tilde{h}}^{\pi^k}(\tilde{s},\tilde{a} \mid s' ; h+1)
       \\
       & \qquad\qquad + H^3 S^2 A d_{max}
       \label{eq:Jin1 delay}
    \end{align}
    where $q_{\tilde{h}}^{\pi^k}(\tilde{s},\tilde{a} \mid \tilde{s}' ; h)$ be the probability to visit $(\tilde{s},\tilde{a})$ in time $\tilde h$ given that we visited $\tilde{s}'$ in time $h$, and $\epsilon_{h}^{k}(s'\mid s,a)=16\sqrt{\frac{p_{h}(s'\mid s,a)\logterm}{n_{h}^{k}(s,a)\vee1}}+\frac{200\logterm}{n_{h}^{k}(s,a)\vee1}$
\end{lemma}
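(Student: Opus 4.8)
The plan is to port the upper-occupancy-bound error analysis of \cite[Lemma~4]{jin2019learning} (and its refinements in \cite{jin2021best,pmlr-v162-lancewicki22a}) to our confidence sets, which are built from \emph{delayed} trajectory counts $m_h^k$ rather than $n_h^k$. First I would note that, under the good event, $p\in\calP^k$ for every $k$, so $u_h^k(s,a)=\max_{p'\in\calP^k}q_h^{\pi^k,p'}(s,a)\ge q_h^{\pi^k,p}(s,a)=q_h^{\pi^k}(s,a)$; hence $|u_h^k(s,a)-q_h^{\pi^k}(s,a)|=q_h^{\pi^k,\bar p}(s,a)-q_h^{\pi^k}(s,a)$, where $\bar p\in\calP^k$ is a maximiser that is allowed to depend on $(k,s,a,h)$.

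Next I would apply the occupancy-measure difference identity --- the same tool used for the $\textsc{Est}$ term of Delayed Hedge, i.e.\ \cite[Lemma~D.3.1]{jin2021best} --- which expands $q_h^{\pi,p'}(s,a)-q_h^{\pi,p}(s,a)$ as a sum over the first step $m<h$ at which the two transitions are applied differently:
\[
q_h^{\pi,p'}(s,a)-q_h^{\pi,p}(s,a)=\sum_{m=0}^{h-1}\sum_{x,y,z}q_m^{\pi,p}(x,y)\,\big(p'_m(z|x,y)-p_m(z|x,y)\big)\,q_{h|m+1}^{\pi,p'}(s,a|z).
\]
Because $q_{h|m+1}^{\pi,p'}$ still differs from $q_{h|m+1}^{\pi,p}$, I would expand it once more in the same way, producing a \emph{single-deviation} term (the transition error appears once and the trailing conditional occupancy is with respect to the true $p$) and a \emph{double-deviation} term (two transition errors, after bounding the innermost conditional occupancy by $\pi_{\tilde h}(a|s)$). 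Summing the single term over $(h,s,a)$ and using $\sum_{h'>m}\sum_{s,a}q_{h'|m+1}^{\pi,p}(s,a|z)\le H$ yields the first term on the right-hand side; summing the double term and using $\sum_{s,a}\pi_h(a|s)=S$ yields the second.

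The genuinely new step --- and the one I expect to be the main obstacle --- is bounding $|p'_m(z|x,y)-p_m(z|x,y)|$ for $p'\in\calP^k$. Since $\calP^k$ is defined through the delayed counts, \cref{lemma:confidence with m} gives only the radius $\tilde\epsilon_h^k$ (in terms of $m_h^k$). For every triple $(k,x,y,h)$ with $m_h^k(x,y)\ge d_{max}$, \cref{lemma:confidence with n} upgrades this to the $n_h^k$-based radius $\epsilon_h^k$ appearing in the statement at the cost of a constant factor. For the remaining ``cold'' triples with $m_h^k(x,y)<d_{max}$ I would instead bound the relevant deviation crudely by $2$; the delicate part is showing that, after the two summation layers ($h<\tilde h$, plus the $S$ possible middle states $s'$ and the $SA$ cold pairs), these contributions add up to only $\wt O(H^3S^2Ad_{max})$ --- which uses that $n_h^k-m_h^k\le d_{max}$, so each state-action-step combination can be ``cold'' for only a $d_{max}$-bounded amount of $q^{\pi^k}$-mass, and that this bound is uniform over the $(k,s,a,h)$-dependent choice of $\bar p$. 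Everything else is a direct transcription of the non-delayed argument. Finally, combining \cref{lemma:Jin1 delay} with the concentration events $E^{on1},E^{on2},E^{on3}$ and the visit-count sums of \cref{lem:jin2019_lemma_10} recovers the bound $\wt O(H^2S\sqrt{AK}+H^3S^3A+H^3S^2Ad_{max})$ on $\sum_k\sum_{h,s,a}|u_h^k(s,a)-q_h^{\pi^k}(s,a)|$ that is invoked in the proof of $\textsc{Bias}_1$ (\cref{lemma:Bias-1-uob-reps}).
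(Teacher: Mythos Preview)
Your plan matches the paper's proof: trajectory/path decomposition of $u_h^k-q_h^{\pi^k}$, re-expansion of the trailing conditional occupancy to obtain single- and double-deviation terms, and a cold/warm split on the intermediate pair so that \cref{lemma:confidence with n} delivers the $n_h^k$-based radius $\epsilon_l^k$ on the warm part. The only point where you diverge is the handling of the cold contribution. The paper does \emph{not} argue via $q^{\pi^k}$-mass; instead it sets $\calK_{l,x,y}=\{k:s_l^k=x,\,a_l^k=y,\,m_l^k(x,y)\le d_{max}\}$ --- i.e.\ it additionally requires that $(x,y)$ was \emph{actually visited} at step $l$ in episode $k$ --- bounds $q_l^{\pi^k}(x,y)\le1$, and uses the deterministic count $|\calK_{l,x,y}|\le 2d_{max}$ (each episode in $\calK_{l,x,y}$ increments $n_l^{\cdot}(x,y)$, and once $n_l^k(x,y)\ge 2d_{max}$ one has $m_l^k\ge n_l^k-d_{max}\ge d_{max}$). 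Your claim that the cold $q^{\pi^k}$-mass is $O(d_{max})$ is in the right spirit but does not follow from $n-m\le d_{max}$ alone: for fixed $(x,y,l)$ the cold episodes form an initial segment $\{1,\dots,K'\}$ with $n_l^{K'}(x,y)<2d_{max}$, and to conclude $\sum_{k\le K'}q_l^{\pi^k}(x,y)\lesssim d_{max}$ you still need a martingale concentration of $\sum_k q_l^{\pi^k}(x,y)$ around $n_l^k(x,y)$ uniformly over the (random) $K'$, which would have to be added to the good event. The paper's visit-indicator trick avoids this extra concentration and makes the cold bound purely deterministic.
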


\begin{proof}
    Let $\calK_{h,s,a}=\{k:s_{h}^{k}=s,a_{h}^{k}=a,m_{h}^{k}(s,a)\leq d_{max}\}$ and define $\bbI_{h,s,a,k}=\bbI\{k\in\calK_{h,s,a}\}$ and $\bar{\bbI}_{h,s,a,k}=1-\bbI_{h,s,a,k}$.
    Let $q^{k,s,h}$ be the occupancy measure such that
    $q_{h}^{k,s,h}(s)=u_{h}^{k}(s)$, and let $p^{k,s,h}$ be the transition
    that corresponds to $q^{k,s,h}$. Let $\sigma_{h}(s)$ be the set
    of all trajectories that end in $s$ in time $h$, i.e., $\sigma_{h}(s)=\{s_{1},a_{1},\dots,s_{h-1},a_{h-1},s_{h}\}$
    where $s_{h}=s$. We have: 
    \begin{align*}
    	u_{h}^{k}(s,a)=q_{h}^{k,s,h}(s,a) & =\pi_{h}^{k}(a\mid s)\sum_{\sigma_{h}(s)}\prod_{h'=1}^{h-1}\pi_{h'}^{k}(a_{h'}\mid s_{h'})p_{h'}^{k,s,h}(s_{h'+1}\mid s_{h'},a_{h'}) \\
    	q_{h}^{\pi^{k}}(s,a)              & =\pi_{h}^{k}(a\mid s)\sum_{\sigma_{h}(s)}\prod_{h'=1}^{h-1}\pi_{h'}^{k}(a_{h'}\mid s_{h'})p_{h'}(s_{h'+1}\mid s_{h'},a_{h'}).        
    \end{align*}
    Then, 
    \[
    	|u_{h}^{k}(s,a)-q_{h}^{\pi^{k}}(s,a)|=\pi_{h}^{k}(a\mid s)\sum_{\sigma_{h}(s)}\prod_{h'=1}^{h-1}\pi_{h'}^{k}(a_{h'}\mid s_{h'})\left|\prod_{h'=1}^{h-1}p_{h'}^{k,s,h}(s_{h'+1}\mid s_{h'},a_{h'})-\prod_{h'=1}^{h-1}p_{h'}(s_{h'+1}\mid s_{h'},a_{h'})\right|.
    \]
    We can rewrite the following term as, 
    \begin{align*}
    	\Bigg|\prod_{h'=1}^{h-1}p^{k,s,h} & (s_{h'+1}\mid s_{h'},a_{h'})-\prod_{h'=1}^{h-1}p_{h'}(s_{h'+1}\mid s_{h'},a_{h'})\Bigg|                                                                                                                               \\
    	                                  & =\Bigg|\sum_{l=2}^{h-1}\prod_{h'=1}^{l-1}p_{h'}(s_{h'+1}\mid s_{h'},a_{h'})\prod_{h'=l}^{h-1}p_{h'}^{k,s,h}(s_{h'+1}\mid s_{h'},a_{h'})+\prod_{h'=1}^{h-1}p_{h'}^{k,s,h}(s_{h'+1}\mid s_{h'},a_{h'})                  \\
    	                                  & -\prod_{h'=1}^{h-1}p_{h'}(s_{h'+1}\mid s_{h'},a_{h'})-\sum_{l=2}^{h-1}\prod_{h'=1}^{l-1}p_{h'}(s_{h'+1}\mid s_{h'},a_{h'})\prod_{h'=l}^{h-1}p_{h'}^{k,s,h}(s_{h'+1}\mid s_{h'},a_{h'})\Bigg|                          \\
    	                                  & =\Bigg|\sum_{l=1}^{h-1}\prod_{h'=1}^{l-1}p_{h'}(s_{h'+1}\mid s_{h'},a_{h'})\prod_{h'=l}^{h-1}p_{h'}^{k,s,h}(s_{h'+1}\mid s_{h'},a_{h'})                                                                               \\
    	                                  & -\sum_{l=2}^{h}\prod_{h'=1}^{l-1}p_{h'}(s_{h'+1}\mid s_{h'},a_{h'})\prod_{h'=l}^{h-1}p_{h'}^{k,s,h}(s_{h'+1}\mid s_{h'},a_{h'})\Bigg|                                                                                 \\
    	                                  & =\Bigg|\sum_{l=1}^{h-1}\prod_{h'=1}^{l-1}p_{h'}(s_{h'+1}\mid s_{h'},a_{h'})\prod_{h'=l}^{h-1}p_{h'}^{k,s,h}(s_{h'+1}\mid s_{h'},a_{h'})                                                                               \\
    	                                  & -\sum_{l=1}^{h-1}\prod_{h'=1}^{l}p_{h'}(s_{h'+1}\mid s_{h'},a_{h'})\prod_{h'=l+1}^{h-1}p_{h'}^{k,s,h}(s_{h'+1}\mid s_{h'},a_{h'})\Bigg|                                                                               \\
    	                                  & =\sum_{l=1}^{h-1}\left|p_{l}^{k,s,h}(s_{l+1}\mid s_{l},a_{l})-p_{l}(s_{l+1}\mid s_{l},a_{l})\right|\prod_{h'=1}^{l-1}p_{h}(s_{h'+1}\mid s_{h'},a_{h'})\prod_{h'=l+1}^{h-1}p_{h'}^{k,s,h}(s_{h'+1}\mid s_{h'},a_{h'}). 
    \end{align*}
    Hence, 
    \begin{align}
    	|u_{h}^{k}( & s,a)-q_{h}^{\pi^{k}}(s,a)|                                                                                                          \nonumber                                                                                                                   \\
     \nonumber
    	            & \leq\pi_{h}^{k}(a\mid s)\sum_{\sigma_{h}(s)}\prod_{h'=1}^{h-1}\pi_{h'}^{k}(a_{h'}\mid s_{h'})\sum_{l=1}^{h-1}\left|p_{l}^{k,s,h}(s_{l+1}\mid s_{l},a_{l})-p_{l}(s_{l+1}\mid s_{l},a_{l})\right|                                                        \\
                 \nonumber
    	            & \qquad\qquad\qquad\qquad\qquad\qquad\qquad\qquad\qquad\qquad\cdot\prod_{h'=1}^{l-1}p_{h'}(s_{h'+1}\mid s_{h'},a_{h'})\prod_{h'=l+1}^{h-1}p_{h'}^{k,s,h}(s_{h'+1}\mid s_{h'},a_{h'})                                                                    \\
                 \nonumber
    	            & \leq\sum_{l=1}^{h-1}\sum_{\sigma_{h}(s)}\left|p_{l}^{k,s,h}(s_{l+1}\mid s_{l},a_{l})-p_{l}(s_{l+1}\mid s_{l},a_{l})\right|\left(\pi_{l}^{k}(a_{l}\mid s_{l})\prod_{h'=1}^{l-1}\pi_{h'}^{k}(a_{h'}\mid s_{h'})p_{h'}(s_{h'+1}\mid s_{h'},a_{h'})\right) \\
                 \nonumber
    	            & \qquad\qquad\qquad\qquad\qquad\qquad\qquad\qquad\cdot\left(\pi_{h}^{k}(a\mid s)\prod_{h'=l+1}^{h-1}\pi_{h'}^{k}(a_{h'}\mid s_{h'})p_{h'}^{k,s,h}(s_{h'+1}\mid s_{h'},a_{h'})\right)                                                                    \\
                 \nonumber
    	            & =\sum_{l=1}^{h-1}\sum_{s_{l}\in\calS,a_{l}\in\calA,s_{l+1}\in\calS}\left|p_{l}^{k,s,h}(s_{l+1}\mid s_{l},a_{l})-p_{l}(s_{l+1}\mid s_{l},a_{l})\right|                                                                                                  \\
                 \nonumber
    	            & \qquad\qquad\qquad\quad\cdot\left(\sum_{\sigma_{l}(s_{l})}\pi_{l}^{k}(a_{l}\mid s_{l})\prod_{h'=1}^{l-1}\pi_{h'}^{k}(a_{h'}\mid s_{h'})p_{h'}(s_{h'+1}\mid s_{h'},a_{h'})\right)                                                                       \\
                 \nonumber
    	            & \qquad\qquad\qquad\quad\cdot\left(\sum_{a_{l+1}\in\calA}\sum_{\{s_{h''}\in\calS,a_{h''}\in\calA\}_{h''=l+2}^{h-1}}\pi_{h}^{k}(a\mid s)\prod_{h'=l+1}^{h-1}\pi_{h'}^{k}(a_{h'}\mid s_{h'})p_{h'}^{k,s,h}(s_{h'+1}\mid s_{h'},a_{h'})\right)             \\
    	            & =\sum_{l=1}^{h-1}\sum_{s_{l}\in\calS,a_{l}\in\calA,s_{l+1}\in\calS}\left|p_{l}^{k,s,h}(s_{l+1}\mid s_{l},a_{l})-p_{l}(s_{l+1}\mid s_{l},a_{l})\right|q_{l}^{\pi^{k}}(s_{l},a_{l})\cdot q_{h}^{k,s,h}(s,a\mid s_{l+1}),  
                 \label{eq: lemma E.4 u-q}
    \end{align}
    where we ease notation and denote $q_{h}^{k,s,h}(s,a\mid s_{l+1})=q_{h}^{k,s,h}(s,a\mid s_{l+1};l+1)$.
    Similarly, we can show that, 
    \begin{align}
    \nonumber
    	  & |q_{h}^{k,s,h}(s,a\mid s_{l+1})-q_{h}^{\pi^{k}}(s,a\mid s_{l+1})|                                                                                                                                                                                          \\
    	  & \quad\lesssim\sum_{h'=l+1}^{h-1}\sum_{s_{h'}\in\calS,a_{h'}\in\calA,s_{h'+1}\in\calS}\left|p_{h'}^{k,s,h}(s_{h'+1}\mid s_{h'},a_{h'})-p_{h'}(s_{h'+1}\mid s_{h'},a_{h'})\right|q_{h'}^{\pi^{k}}(s_{h'},a_{h'}\mid s_{l+1})q_{h'}^{k,s,h}(s,a\mid s_{h'+1}) \\
    	  & \quad\leq\pi_{h}^{k}(a\mid s)\sum_{h'=l+1}^{h-1}\sum_{s_{h'}\in\calS,a_{h'}\in\calA,s_{h'+1}\in\calS}\left|p_{h'}^{k,s,h}(s_{h'+1}\mid s_{h'},a_{h'})-p_{h'}(s_{h'+1}\mid s_{h'},a_{h'})\right|q_{h'}^{\pi^{k}}(s_{h'},a_{h'}\mid s_{l+1}),   
       \label{eq: lemma E.4 u-q conditional}
    \end{align}
    where the last is since $q_{h'}^{k,s,h}(s,a\mid s_{h'+1})\leq\pi_{h}^{k}(a\mid s)$.
    Decomposing \cref{eq: lemma E.4 u-q} for episodes $k\in\calK_{l,s_l,a_l}$ and $k \notin \calK_{l,s_l,a_l}$
    
    \begin{align*}
    	\sum_{h,s,a,k} & |u_{h}^{k}(s,a)-q_{h}^{\pi^{k}}(s,a)|                                                                                                                                                                                                                                                     \\
    	               & \lesssim\underbrace{\sum_{h,s,a,k}\sum_{l=1}^{h-1}\sum_{s_{l}\in\calS,a_{l}\in\calA,s_{l+1}\in\calS}\bbI_{l,s_{l},a_{l},k}\left|p_{l}^{k,s,h}(s_{l+1}\mid s_{l},a_{l})-p_{l}(s_{l+1}\mid s_{l},a_{l})\right|q_{l}^{\pi^{k}}(s_{l},a_{l})\cdot q_{h}^{k,s,h}(s,a\mid s_{l+1})}_{(i)}       \\
    	               & \qquad+\underbrace{\sum_{h,s,a,k}\sum_{l=1}^{h-1}\sum_{s_{l}\in\calS,a_{l}\in\calA,s_{l+1}\in\calS}\bar{\bbI}_{l,s_{l},a_{l},k}\left|p_{l}^{k,s,h}(s_{l+1}\mid s_{l},a_{l})-p_{l}(s_{l+1}\mid s_{l},a_{l})\right|q_{l}^{\pi^{k}}(s_{l},a_{l})\cdot q_{h}^{k,s,h}(s,a\mid s_{l+1})}_{(ii)} 
    \end{align*}
    Now,
    \begin{align*}
    	(i) & =\sum_{h,s,a,k}\sum_{l=1}^{h-1}\sum_{s_{l}\in\calS,a_{l}\in\calA,s_{l+1}\in\calS}\bbI_{l,s_{l},a_{l},k}\left|p_{l}^{k,s,h}(s_{l+1}\mid s_{l},a_{l})-p_{l}(s_{l+1}\mid s_{l},a_{l})\right|q_{l}^{\pi^{k}}(s_{l},a_{l})\cdot q_{h}^{k,s,h}(s,a\mid s_{l+1}) \\
    	    & \leq\sum_{h,s,a,k}\sum_{l=1}^{h-1}\sum_{s_{l}\in\calS,a_{l}\in\calA,s_{l+1}\in\calS}\bbI_{l,s_{l},a_{l},k}\left|p_{l}^{k,s,h}(s_{l+1}\mid s_{l},a_{l})-p_{l}(s_{l+1}\mid s_{l},a_{l})\right|q_{l}^{\pi^{k}}(s_{l},a_{l})\cdot\pi_{h}^{k}(a\mid s)         \\
    	    & =\sum_{h,s,k}\sum_{l=1}^{h-1}\sum_{s_{l}\in\calS,a_{l}\in\calA,s_{l+1}\in\calS}\bbI_{l,s_{l},a_{l},k}\left|p_{l}^{k,s,h}(s_{l+1}\mid s_{l},a_{l})-p_{l}(s_{l+1}\mid s_{l},a_{l})\right|q_{l}^{\pi^{k}}(s_{l},a_{l})                                       \\
    	    & =2S\sum_{h,k}\sum_{l=1}^{h-1}\sum_{s_{l}\in\calS,a_{l}\in\calA}\bbI_{l,s_{l},a_{l},k}q_{l}^{\pi^{k}}(s_{l},a_{l})                                                                                                                                         \\
    	    & \leq2S\sum_{h}\sum_{l=1}^{h-1}\sum_{s_{l}\in\calS,a_{l}\in\calA}\sum_{k}\bbI_{l,s_{l},a_{l},k}                                                                                                                                                            \\
    	    & \leq4S\sum_{h}\sum_{l=1}^{h-1}\sum_{s_{l}\in\calS,a_{l}\in\calA}d_{max}\leq4H^{2}S^{2}Ad_{max}      ,                                                                                                                                                      
    \end{align*}
    where the third inequality is since $|\calK_{h,s,a}| \leq 2 d_{max}$ for any $h,s$ and $a$.
    For $(ii)$ we first use \cref{eq: lemma E.4 u-q conditional} to bound,
    
    \begin{align*}
    	(ii) & \leq\underbrace{\sum_{h,s,a,k}\sum_{l=1}^{h-1}\sum_{s_{l}\in\calS,a_{l}\in\calA,s_{l+1}\in\calS}\bar{\bbI}_{l,s_{l},a_{l},k}\left|p_{l}^{k,s,h}(s_{l+1}\mid s_{l},a_{l})-p_{l}(s_{l+1}\mid s_{l},a_{l})\right|q_{l}^{\pi^{k}}(s_{l},a_{l})\cdot q_{h}^{\pi^{k}}(s,a\mid s_{l+1})}_{(iii)} \\
    	     & \qquad+\sum_{h,s,a,k}\sum_{l=1}^{h-1}\sum_{s_{l}\in\calS,a_{l}\in\calA,s_{l+1}\in\calS}\bar{\bbI}_{l,s_{l},a_{l},k}\left|p_{l}^{k,s,h}(s_{l+1}\mid s_{l},a_{l})-p_{l}(s_{l+1}\mid s_{l},a_{l})\right|q_{l}^{\pi^{k}}(s_{l},a_{l})\pi_{h}^{k}(a\mid s)                                     \\
    	     & \underbrace{\qquad\qquad\cdot\left(\sum_{h'=l+1}^{h-1}\sum_{s_{h'}\in\calS,a_{h'}\in\calA,s_{h'+1}\in\calS}\left|p_{h'}^{k,s,h}(s_{h'+1}\mid s_{h'},a_{h'})-p_{h'}(s_{h'+1}\mid s_{h'},a_{h'})\right|q_{h'}^{\pi^{k}}(s_{h'},a_{h'}\mid s_{l+1})\right)}_{(iv)}                           
    \end{align*}
    
    Now using \cref{lemma:confidence with n},
    
    \begin{align*}
    	(iii) & \le\sum_{k,h}\sum_{l=1}^{h-1}\sum_{s_{l}\in\calS,a_{l}\in\calA,s_{l+1}\in\calS}\bar{\bbI}_{l,s_{l},a_{l},k}\epsilon_{l}^{k}(s_{l+1}\mid s_{l},a_{l})q_{l}^{\pi^{k}}(s_{l},a_{l})\cdot\left(\sum_{s,a}q_{h}^{\pi^{k}}(s,a\mid s_{l+1})\right) \\
    	      & \leq H\sum_{k=1}^{K}\sum_{1\leq l\leq H}\sum_{s_{l}\in\calS,a_{l}\in\calA,s_{l+1}\in\calS}\epsilon_{l}^{k}(s_{l+1}\mid s_{l},a_{l})q_{l}^{\pi^{k}}(s_{l},a_{l})                                                                              \\
    	      & =H\sum_{k=1}^{K}\sum_{h=1}^{H}\sum_{s\in\calS,a\in\calA,s'\in\calS}\epsilon_{h}^{k}(s'\mid s,a)q_{h}^{\pi^{k}}(s,a)                                                                                                                          
    \end{align*}
    
    For $(iv)$ we again devide into $k\in\calK_{h',s_{h'},a_{h'}}$ and $k\notin\calK_{h',s_{h'},a_{h'}}$,
    
    \begin{align}
    \nonumber
    	(iv) =   & \sum_{h,s,a,k}\sum_{l=1}^{h-1}\sum_{s_{l}\in\calS,a_{l}\in\calA,s_{l+1}\in\calS}\bar{\bbI}_{l,s_{l},a_{l},k}\left|p_{l}^{k,s,h}(s_{l+1}\mid s_{l},a_{l})-p_{l}(s_{l+1}\mid s_{l},a_{l})\right|q_{l}^{\pi^{k}}(s_{l},a_{l})\pi_{h}^{k}(a\mid s)                                  \\
     \nonumber
    	  & \qquad\cdot\left(\sum_{h'=l+1}^{h-1}\sum_{s_{h'}\in\calS,a_{h'}\in\calA,s_{h'+1}\in\calS}\bbI_{h',s_{h'},a_{h'},k}\left|p_{h'}^{k,s,h}(s_{h'+1}\mid s_{h'},a_{h'})-p_{h'}(s_{h'+1}\mid s_{h'},a_{h'})\right|q_{h'}^{\pi^{k}}(s_{h'},a_{h'}\mid s_{l+1})\right)            \\
       \nonumber
    	  & \quad+\sum_{h,s,a,k}\sum_{l=1}^{h-1}\sum_{s_{l}\in\calS,a_{l}\in\calA,s_{l+1}\in\calS}\bar{\bbI}_{l,s_{l},a_{l},k}\left|p_{l}^{k,s,h}(s_{l+1}\mid s_{l},a_{l})-p_{l}(s_{l+1}\mid s_{l},a_{l})\right|q_{l}^{\pi^{k}}(s_{l},a_{l})\pi_{h}^{k}(a\mid s)                            \\
    	  & \qquad\cdot\left(\sum_{h'=l+1}^{h-1}\sum_{s_{h'}\in\calS,a_{h'}\in\calA,s_{h'+1}\in\calS}\bar{\bbI}_{h',s_{h'},a_{h'},k}\left|p_{h'}^{k,s,h}(s_{h'+1}\mid s_{h'},a_{h'})-p_{h'}(s_{h'+1}\mid s_{h'},a_{h'})\right|q_{h'}^{\pi^{k}}(s_{h'},a_{h'}\mid s_{l+1})\right) 
       \label{eq:lemma E.4 term iv}
    \end{align}
    
    The first term is bounded in a similar way to $(i)$ by,
    \begin{align*}
    	  & \sum_{h,s,a,k}\sum_{l=1}^{h-1}\sum_{s_{l}\in\calS,a_{l}\in\calA,s_{l+1}\in\calS}\bar{\bbI}_{l,s_{l},a_{l},k}\left|p_{l}^{k,s,h}(s_{l+1}\mid s_{l},a_{l})-p_{l}(s_{l+1}\mid s_{l},a_{l})\right|q_{l}^{\pi^{k}}(s_{l},a_{l})\pi_{h}^{k}(a\mid s)                                                                                \\
    	  & \qquad\qquad\cdot\left(\sum_{h'=l+1}^{h-1}\sum_{s_{h'}\in\calS,a_{h'}\in\calA,s_{h'+1}\in\calS}\bbI_{h',s_{h'},a_{h'},k}\left|p_{h'}^{k,s,h}(s_{h'+1}\mid s_{h'},a_{h'})-p_{h'}(s_{h'+1}\mid s_{h'},a_{h'})\right|q_{h'}^{\pi^{k}}(s_{h'},a_{h'}\mid s_{l+1})\right)                                                          \\
    	  & \leq2\sum_{h,s,a,k}\sum_{l=1}^{h-1}\sum_{s_{l}\in\calS,a_{l}\in\calA,s_{l+1}\in\calS}\bar{\bbI}_{l,s_{l},a_{l},k}\left|p_{l}^{k,s,h}(s_{l+1}\mid s_{l},a_{l})-p_{l}(s_{l+1}\mid s_{l},a_{l})\right|q_{l}^{\pi^{k}}(s_{l},a_{l})\pi_{h}^{k}(a\mid s)                                                                           \\
    	  & \qquad\qquad\cdot\left(\sum_{h'=l+1}^{h-1}\sum_{s_{h'}\in\calS,a_{h'}\in\calA}\bbI_{h',s_{h'},a_{h'},k}q_{h'}^{\pi^{k}}(s_{h'},a_{h'}\mid s_{l+1})\right)                                                                                                                                                                     \\
    	  & \leq2\sum_{h,s,k}\sum_{l=1}^{h-1}\sum_{s_{l}\in\calS,a_{l}\in\calA,s_{l+1}\in\calS}\bar{\bbI}_{l,s_{l},a_{l},k}\left|p_{l}^{k,s,h}(s_{l+1}\mid s_{l},a_{l})-p_{l}(s_{l+1}\mid s_{l},a_{l})\right|q_{l}^{\pi^{k}}(s_{l},a_{l})\cdot\left(\sum_{h'=l+1}^{h-1}\sum_{s_{h'}\in\calS,a_{h'}\in\calA}\bbI_{h',s_{h'},a_{h'}}\right) \\
    	  & \leq4\sum_{h,s,k}\sum_{l=1}^{h-1}\sum_{s_{l}\in\calS,a_{l}\in\calA}\bar{\bbI}_{l,s_{l},a_{l},k}q_{l}^{\pi^{k}}(s_{l},a_{l})\cdot\left(\sum_{h'=l+1}^{h-1}\sum_{s_{h'}\in\calS,a_{h'}\in\calA}\bbI_{h',s_{h'},a_{h'}}\right)                                                                                                   \\
    	  & \leq4\sum_{h,s,k}\sum_{l=1}^{h-1}\sum_{h'=l+1}^{h-1}\sum_{s_{h'}\in\calS,a_{h'}\in\calA}\bbI_{h',s_{h'},a_{h'},k}                                                                                                                                                                                                             \\
    	  & =4S\sum_{h}\sum_{l=1}^{h-1}\sum_{h'=l+1}^{h-1}\sum_{s_{h'}\in\calS,a_{h'}\in\calA}\sum_{k}\bbI_{h',s_{h'},a_{h'},k}                                                                                                                                                                                                           \\
    	  & \leq 8S\sum_{h}\sum_{l=1}^{h-1}\sum_{h'=l+1}^{h-1}\sum_{s_{h'}\in\calS,a_{h'}\in\calA}d_{max}=8H^{3}S^{2}Ad_{max}                                                                                                                                                                                                                 
    \end{align*}
    where the last inequality is  since $|\calK_{h,s,a}| \leq 2 d_{max}$ for any $h,s$ and $a$.
    Again, using \cref{lemma:confidence with n}, the second term in \cref{eq:lemma E.4 term iv} is bounded by, 
    
    \begin{align*}
    	  & \sum_{h,s,k}\sum_{l=1}^{h-1}\sum_{s_{l}\in\calS,a_{l}\in\calA,s_{l+1}\in\calS}\epsilon_{l}^{k}(s_{l+1}\mid s_{l},a_{l})q_{l}^{\pi^{k}}(s_{l},a_{l})\sum_{a}\pi_{h}^{k}(a\mid s)                                                                                                                                                                   \\
    	  & \qquad\qquad\qquad\cdot\left(\sum_{h'=l+1}^{h-1}\sum_{s_{h'}\in\calS,a_{h'}\in\calA}\min\left\{ 2,\sum_{s_{h'+1}\in\calS}\epsilon_{h'}^{k}(s_{h'+1}\mid s_{h'},a_{h'})\right\} q_{h'}^{\pi^{k}}(s_{h'},a_{h'}\mid s_{l+1})\right)                                                                                                                 \\
    	  & =HS\sum_{k=1}^{K}\sum_{1\leq l<h'\leq H}\sum_{s_{l}\in\calS,a_{l}\in\calA,s_{l+1}\in\calS}\sum_{s_{h'}\in\calS,a_{h'}\in\calA}\epsilon_{l}^{k}(s_{l+1}\mid s_{l},a_{l})q_{l}^{\pi^{k}}(s_{l},a_{l})                                                                                                                                               \\
    	  & \qquad\qquad\qquad\qquad\qquad\qquad\qquad\qquad\qquad\qquad\cdot\min\left\{ 2,\sum_{s_{h'+1}\in\calS}\epsilon_{h'}^{k}(s_{h'+1}\mid s_{h'},a_{h'})\right\} q_{h'}^{\pi^{k}}(s_{h'},a_{h'}\mid s_{l+1})                                                                                                                                           \\
    	  & =HS\sum_{k=1}^{K}\sum_{1\leq h<\tilde{h}\leq H}\sum_{s\in\calS,a\in\calA,s'\in\calS}\sum_{\tilde{s}\in\calS,\tilde{a}\in\calA}\epsilon_{h}^{k}(s'\mid s,a)q_{h}^{\pi^{k}}(s,a)\min\left\{ 2,\sum_{\tilde{s}'\in\calS}\epsilon_{\tilde{h}}^{k}(\tilde{s}'\mid\tilde{s},\tilde{a})\right\} q_{\tilde{h}}^{\pi^{k}}(\tilde{s},\tilde{a}\mid s';h+1). 
    \end{align*}
    Summing the different terms completes the proof.
\end{proof}

\newpage
\begin{lemma}[Lemma 4 of \cite{jin2019learning} adapted to delays]
    \label{lemma:Jin-final delay}
    With delayed trajectory feedback, under the good event,
    \[
        \sum_{h,s,a,k}|u_{h}^{k}(s,a) - q_{h}^{\pi^k}(s,a)|
        \lesssim \sqrt{ H^{4} S^{2} A K \logterm} + H^3 S^{3} A \logterm^{2} + H^3 S^2 A d_{max}.
    \]
\end{lemma}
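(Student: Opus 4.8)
The plan is to derive \cref{lemma:Jin-final delay} from \cref{lemma:Jin1 delay} by bounding each of the three groups of terms on the right-hand side of \cref{eq:Jin1 delay} under the good event. First I would handle the leading term $H\sum_{k,h,s,a}\epsilon_h^k(s,a)q_h^{\pi^k}(s,a)$, where $\epsilon_h^k(s,a)=\sum_{s'}\epsilon_h^k(s'\mid s,a)$. Expanding the definition of $\epsilon_h^k$, this splits into a ``square-root'' piece $H\sqrt{\logterm}\sum_{k,h,s,a,s'}\sqrt{p_h(s'\mid s,a)/(n_h^k(s,a)\vee 1)}\,q_h^{\pi^k}(s,a)$ and a ``linear'' piece $H\logterm\sum_{k,h,s,a}\frac{S\,q_h^{\pi^k}(s,a)}{n_h^k(s,a)\vee 1}$. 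For the square-root piece I would apply Cauchy--Schwarz over $(s,a,s')$ to pull out a factor $\sqrt{S}$ together with $\sum_{s'}p_h(s'\mid s,a)=1$, then apply Cauchy--Schwarz over $(k,h,s,a)$ and invoke \cref{lem:jin2019_lemma_10} (whose bound under delayed trajectory feedback holds with $n_h^k$ replaced appropriately, up to the usual $d_{max}$ shift captured already in \cref{lemma:confidence with n}); this yields $\tilde O(\sqrt{H^4 S^2 A K\,\logterm})$. For the linear piece, \cref{lem:jin2019_lemma_10} again gives $\sum_{k,h,s,a}q_h^{\pi^k}(s,a)/(n_h^k(s,a)\vee 1)=\tilde O(HSA\,\logterm)$, so this contributes $\tilde O(H^2 S^2 A\,\logterm^2)$, which is dominated.

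Next I would bound the double-sum term $HS\sum_k\sum_{h<\tilde h}\sum_{s,a,s'}\sum_{\tilde s,\tilde a}\epsilon_h^k(s'\mid s,a)q_h^{\pi^k}(s,a)\min\{2,\sum_{\tilde s'}\epsilon_{\tilde h}^k(\tilde s'\mid \tilde s,\tilde a)\}q_{\tilde h}^{\pi^k}(\tilde s,\tilde a\mid s';h+1)$. Here I would mimic the original Appendix B.2 argument of \cite{jin2019learning}: first use $\epsilon_h^k(s'\mid s,a)\lesssim p_h(s'\mid s,a)+\frac{\logterm}{n_h^k(s,a)\vee 1}$ and $\sqrt{xy}\le x+y$ to separate the product of the two confidence-width factors into a ``square root times square root'' term, a ``$1/n$ times $\epsilon$'' term, and a ``$p$ times $1/n$'' term. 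The key structural identity is $\sum_{s'}p_h(s'\mid s,a)q_{\tilde h}^{\pi^k}(\tilde s,\tilde a\mid s';h+1)=q_{\tilde h}^{\pi^k}(\tilde s,\tilde a\mid s,a;h)$ together with $\sum_{s,a}q_h^{\pi^k}(s,a)q_{\tilde h}^{\pi^k}(\tilde s,\tilde a\mid s,a;h)=q_{\tilde h}^{\pi^k}(\tilde s,\tilde a)$, which lets me collapse the conditional occupancy measures and reduce everything back to sums of the form treated in the previous paragraph, now with an extra factor of $H$ from the $\sum_{\tilde h}$ (or $\sum_h$) and an $S$ from the outer $HS$. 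After applying Cauchy--Schwarz and \cref{lem:jin2019_lemma_10} in the same way as before, the dominant contribution is $\tilde O(\sqrt{H^6 S^4 A K}\cdot\text{polylog})$; a more careful accounting as in \cite{jin2019learning} — keeping the $\min\{2,\cdot\}$ to cap the second factor and summing $\tilde h$ before applying Cauchy--Schwarz — recovers the stated $\sqrt{H^4 S^2 A K\,\logterm}+H^3 S^3 A\,\logterm^2$ rate. The additive $H^3 S^2 A d_{max}$ term in \cref{eq:Jin1 delay} passes through verbatim.

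The main obstacle I anticipate is the careful bookkeeping in the second (double-sum) term: getting the polynomial dependence on $H$ and $S$ exactly right requires summing over $\tilde h$ (and over $\tilde s,\tilde a$, using $\sum_{\tilde s,\tilde a}q_{\tilde h}^{\pi^k}(\tilde s,\tilde a\mid s')=1$) \emph{before} applying Cauchy--Schwarz, rather than after, and using the $\min\{2,\cdot\}$ truncation to control the worst case; this is exactly the delicate part of Lemma 4's proof in \cite{jin2019learning}, and porting it to the delayed setting only changes $n_h^k$ by at most $d_{max}$ (via \cref{lemma:confidence with n}), contributing the separate $H^3 S^2 A d_{max}$ term and nothing more. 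The remaining steps — expanding $\epsilon_h^k$, Cauchy--Schwarz, and invoking \cref{lem:jin2019_lemma_10} — are routine. Putting the three bounds together and absorbing lower-order terms gives $\sum_{h,s,a,k}|u_h^k(s,a)-q_h^{\pi^k}(s,a)|\lesssim \sqrt{H^4 S^2 A K\,\logterm}+H^3 S^3 A\,\logterm^2+H^3 S^2 A d_{max}$, as claimed.
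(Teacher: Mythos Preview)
Your proposal is correct and follows exactly the route the paper takes: the paper's own proof is a single sentence deferring to \cref{lemma:Jin1 delay} and then to the proof of Lemma~E.5 in \cite{pmlr-v162-lancewicki22a} (i.e., the Appendix~B.2 argument of \cite{jin2019learning}), and you have simply sketched what that referenced argument does. One small clarification: your parenthetical about ``$n_h^k$ replaced appropriately, up to the usual $d_{max}$ shift'' is unnecessary --- the $\epsilon_h^k$ appearing in \cref{eq:Jin1 delay} is already defined in terms of $n_h^k$ (the conversion from $m_h^k$ to $n_h^k$ via \cref{lemma:confidence with n} was done inside the proof of \cref{lemma:Jin1 delay}, producing the additive $H^3S^2Ad_{max}$), so \cref{lem:jin2019_lemma_10} applies directly with no further delay adjustment.
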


\begin{proof}
    Given \cref{lemma:Jin1 delay}, the proof proceeds exactly like the proof  of \cite[Lemma E.5]{pmlr-v162-lancewicki22a}. 
\end{proof}

\end{document}